\documentclass{article}

\usepackage[preprint]{neurips_2026}

\usepackage[utf8]{inputenc} 
\usepackage[T1]{fontenc}    
\usepackage{hyperref}       
\usepackage{url}            
\usepackage{booktabs}       
\usepackage{amsmath}
\usepackage{amssymb}
\usepackage{mathtools}
\usepackage{amsthm}
\usepackage{amsfonts}       
\usepackage{nicefrac}       
\usepackage{microtype}      
\usepackage{xcolor}         
\usepackage{graphicx}
\usepackage{svg}
\usepackage{enumitem}
\usepackage{subcaption}
\usepackage{thm-restate}
\usepackage{textgreek}
\usepackage{multirow}
\usepackage{algorithm}
\usepackage{algpseudocode}
\usepackage{xspace}
\usepackage[capitalize,noabbrev]{cleveref}

\usepackage[textsize=tiny]{todonotes}

\makeatletter
\DeclareRobustCommand\onedot{\futurelet\@let@token\@onedot}
\def\@onedot{\ifx\@let@token.\else.\null\fi\xspace}

\def\iid{{i.i.d}\onedot}
 
\def\ie{{i.e}\onedot}

\makeatother

\DeclareUnicodeCharacter{2212}{\ensuremath{-}}

\theoremstyle{plain}

\declaretheorem[name=Corollary]{corollary}
\declaretheorem[name=Lemma]{lemma}
\declaretheorem[name=Definition]{definition}
\declaretheorem[name=Assumption]{assumption}
\theoremstyle{remark}

\title{Matrix Factorization for Practical Continual Mean Estimation
Under User-Level Differential Privacy}

\author{%
  Nikita P.~Kalinin
  \thanks{Correspondence
to: Nikita P. Kalinin \texttt{nikita.kalinin@ist.ac.at}}
  \quad
  Ali Najar \thanks{This work was done while the author was an intern at ISTA.}
  \quad
  Valentin Roth$\;^\dagger$
  \quad
  Christoph H.~Lampert\\ 
  Institute of Science and Technology Austria (ISTA)
}

\begin{document}

\maketitle

\begin{abstract}
  We study continual mean estimation, where data vectors arrive sequentially and the goal is to maintain accurate estimates of the running mean. We address this problem under user-level differential privacy, which protects each user’s entire dataset even when they contribute multiple data points. Previous work on this problem has focused on pure differential privacy. While important, this approach limits applicability, as it leads to overly noisy estimates. In contrast, we analyze the problem under approximate differential privacy, adopting recent advances in the Matrix Factorization mechanism. We introduce a novel mean estimation specific factorization, which is both efficient and accurate, achieving asymptotically lower mean-squared error bounds in continual mean estimation under user-level differential privacy.
\end{abstract}

\section{Introduction}

The mean is one of the most fundamental statistical quantities and a cornerstone for more complex data analysis tasks. In many applications, data is collected from individuals (users), and accurate estimation of the mean provides essential insights into the underlying distribution. For example, medical institutions may track patient measurements across repeated visits, online platforms may monitor behavior over time, and federated learning systems often rely on mean statistics to aggregate model updates. In such settings, data typically arrives sequentially, requiring continual updates of the estimate as the stream evolves. Each user may contribute multiple data points, forming a dataset associated with that individual. Since these datasets often contain sensitive information, it is essential to ensure that estimates can be released without compromising privacy.

Differential Privacy (DP) \citep{dwork2006differential} is a principal framework for protecting privacy. In this work, we focus specifically on user-level privacy~\citep{amin2019bounding}, where the objective is to protect the entire dataset in relation to individual users' contributions, rather than protecting individual data points within a user's data. This scenario is highly relevant in sensitive domains such as collaborations between medical institutions, where direct sharing of raw, individual-level data is often restricted, and privatization is necessary.

User-level differential privacy (ULDP) has been actively studied, with prior work addressing both theoretical bounds and algorithmic design \citep{amin2019bounding, liu2020learning, liu2024user, chua2024mind, kato2024uldp}.
Within this line of research, mean estimation under ULDP has received particular attention, with recent results tightening bounds across a variety of domains \citep{narayanan2022tight, cummings2022mean, acharya2023discrete, pla2024distribution, ghazi2023user, agarwal2025private, du2025privacy}.

Sequential data corresponds to the continual release setting, which has been extensively studied
~\citep{dwork2010differential, bolot2013private, mcmahan2022federated, andersson2023smooth, andersson2024improved, edmonds2020power, cohen2024lower, kalinin2025continual}. 
To protect privacy in this setting, we will use a specific mechanism, the \emph{Matrix Factorization} mechanism \citep{li2015matrix}, which adds correlated Gaussian noise to all released weighted sums, reducing the overall error in the estimation problem.

Continual release with user-level differential privacy has been studied in the context of machine learning, including applications in federated learning where users participate repeatedly \citep{wei2021user, huang2023federated, charles2024fine, zhao2024learning, lowy2024faster, zhang2025personalized, kaiseruser}, as well as in continual histograms \citep{dong2023continual}.

Conceptually, the work closest to ours is Joint Moment Estimation (JME) \citep{kalinin2025continual}, which introduced the problem of mean estimation as a matrix factorization and defined the average workload, but used a suboptimal trivial factorization equivalent to the Gaussian mechanism for mean estimation and item-level differential privacy. A tangentially related approach is presented in \citet{choquette2023correlated}, which introduces a correlation matrix referred to as $\nu$-DP-FTRL but in a different setting, namely mean estimation as an instance of empirical risk minimization using the gradient descent method. We nevertheless found their proposed factorization to be highly competitive in our setting and therefore report a numerical comparison.

We also compare our approach with the study on continual mean release by \citet{george2024continual}, which focused on pure $\epsilon$-differential privacy and relied on modifications of Binary Tree mechanisms \citep{dwork2010differential, chan2011private} that require adding a relatively large amount of noise. In contrast, we build on the Gaussian mechanism in the form of the matrix factorization mechanism, which ensures Gaussian differential privacy and allows for significantly reduced noise, especially in the multi-dimensional setting. The matrix factorization mechanism \citep{denisov2022improved} has been shown to improve accuracy for continual estimation problems, such as model training, where gradients are computed sequentially by design. The problem of private mean estimation with matrix factorization was introduced in \citet{chen2024improved}, where it was studied under a sparsification constraint in the item-level differential privacy setting, making their results not directly comparable to ours.

Matrix factorization has also been studied in the ``multi-participation'' (or multi-epoch or user-level) setting \citep{choquette2023multi,choquette-choo2023amplified, kalinin2024banded, kalinin2026dp} for centralized and federated learning, where users or data points contribute updates repeatedly over time. We adapt this ``multi-participation'' factorization framework to the problem of user-level private mean estimation.

\textbf{Contributions}: 
Our main contribution in this work is to introduce \textbf{an efficient and accurate method for continual mean estimation under user-level differential privacy}, by using the Matrix Factorization mechanism in the ``multi-participation'' setting and tailoring it to the mean estimation problem.
Specifically, we design a new correlation matrix for the Matrix Factorization mechanism, specifically tailored to mean estimation, which yields a lower root mean square error (RMSE) compared to existing approaches. 
We establish theoretical guarantees, including RMSE bounds and high-probability concentration bounds for continual mean estimation in user-level differential privacy.

\section{Background} \label{sec:background}

Let $\mathbf{X} = (\mathbf{x}_1, \mathbf{x}_2, \dots, \mathbf{x}_n)^\top \in \mathbb{R}^{n\times d}$ be a private dataset.
In the continual‐release model, at each round \(t=1,\dots,n\), a new data vector \(\mathbf{x}_t\in\mathbb{R}^d\) arrives, and we must release a private estimate of the running mean
\begin{align}
  \mathbf{y}_t \;=\;\frac1t\sum_{j=1}^t \mathbf{x}_j =(\mathbf{AX})_t, \quad \text{where} \quad \mathbf{A}_{i,j} := 
  \begin{cases}
    \frac{1}{i}, & j \le i,\\
    0,         & j > i.
  \end{cases}
\end{align}
Rewriting the running means in a matrix form as $\mathbf{Y}=(\mathbf{y}_1,\dots,\mathbf{y}_n)\in\mathbb{R}^{n\times d}$, we have $\mathbf{Y} = \mathbf{A}\,\mathbf{X}$.

Let $\mathbf{X},\mathbf{X}'\in \mathbb{R}^{n\times d}$ be neighboring datasets that differ in the contribution of one user. In this work, we consider a ``replace with zero'' notion of neighboring\footnote{The ``replace with zero'' notion is almost semantically equivalent to the add/remove model \citep{ponomareva2023dp}, but it is more technically convenient, as we do not allow the size of the dataset to change.} \citep{erlingsson2020encode, denisov2022improved}, where sets $\mathbf{X}$, $\mathbf{X}'$ differ by the contribution of a single user.  It is further assumed that all  row vectors are bounded as $\|\mathbf{x}_j\|_2 \le \xi$.

Our goal is to make $\mathbf{Y} = \mathbf{AX}$ differentially private. Instead of adding noise to $\mathbf{AX}$ directly, we apply the \textit{Matrix Factorization mechanism} \citep{li2015matrix}. Consider an arbitrary factorization $\mathbf{A} = \mathbf{BC}$ for
$\mathbf{A},\mathbf{B},\mathbf{C} \in \mathbb{R}^{n\times n}$.
To privately estimate $\mathbf{Y}$, we first make the product $\mathbf{CX}$ private by adding Gaussian noise, and then multiply by the matrix $\mathbf{B}$ as a post-processing step. Formally,
\begin{align}
      \widehat{\mathbf{Y}}
      = \mathbf{B}\,(\mathbf{C}\,\mathbf{X} + \mathbf{Z})
      = \mathbf{A}\,\mathbf{X} + \mathbf{B}\,\mathbf{Z}, \label{eq:matrix_factorization_step}
\end{align}
where $\mathbf{Z}\sim \mathcal{N}(0,\sigma^2_{\varepsilon,\delta}\cdot \xi^2 \cdot \mathrm{sens^2(\mathbf{C})})^{n\times d}$ is an appropriately scaled Gaussian noise that makes $\mathbf{CX}$ private. 
The privacy guarantees follow from the Gaussian mechanism (see Lemma~\ref{lem:GaussianMechanism}), which ensures privacy for any multi-dimensional function with bounded $\ell_2$-sensitivity, and can be applied to the product $\mathbf{CX}$ with an appropriate notion of sensitivity.
\begin{restatable}[\citet{dwork2014algorithmic}]{lemma}{GaussianMechanism}\label{lem:GaussianMechanism}
Let $f:\mathcal{X}^n \to \mathbb{R}^d$ have $\ell_2$-sensitivity
$\Delta_2(f) := \sup_{x\sim x'} \|f(x)-f(x')\|_2$, where $x$ and $x'$ differ in one individual's data.
The mechanism
\begin{equation}
\mathcal{M}(x)=f(x)+Z,\qquad Z\sim \mathcal{N}(0,\sigma^2 I_d)
\end{equation}
satisfies $(\varepsilon,\delta)$-differential privacy for $\varepsilon,\delta\in(0,1)$ if
\begin{equation}
\sigma \ge \frac{\Delta_2(f)\,\sqrt{2\ln(1.25/\delta)}}{\varepsilon}\, .
\end{equation}
\end{restatable}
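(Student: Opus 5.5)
The plan is the classical privacy-loss-random-variable argument of \citet{dwork2014algorithmic}. Fix neighbouring inputs $x\sim x'$ and set $v:=f(x)-f(x')$, so $\|v\|_2\le\Delta_2(f)$. Rotational invariance of the isotropic Gaussian reduces the comparison of $\mathcal{N}(f(x),\sigma^2 I_d)$ and $\mathcal{N}(f(x'),\sigma^2 I_d)$ to a one-dimensional problem along the direction of $v$. Write $\Delta:=\|v\|_2$ and assume $\Delta>0$; otherwise the two output distributions coincide and there is nothing to prove. For an output $o=f(x)+Z$, the privacy loss is
\begin{equation*}
L(o)=\ln\frac{p_x(o)}{p_{x'}(o)}=\frac{\|o-f(x')\|_2^2-\|o-f(x)\|_2^2}{2\sigma^2}=\frac{2\langle Z,v\rangle+\Delta^2}{2\sigma^2}.
\end{equation*}
Since $\langle Z,v\rangle\sim\mathcal{N}(0,\sigma^2\Delta^2)$, we get $L\sim\mathcal{N}\bigl(\Delta^2/(2\sigma^2),\,\Delta^2/\sigma^2\bigr)$.

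The next step is the standard reduction from a tail bound to $(\varepsilon,\delta)$-DP. Let $E:=\{o: L(o)\le\varepsilon\}$. For any measurable set $S$,
\begin{equation*}
\Pr[\mathcal{M}(x)\in S]\le \Pr[\mathcal{M}(x)\in S\cap E]+\Pr[L>\varepsilon]\le e^{\varepsilon}\Pr[\mathcal{M}(x')\in S]+\Pr[L>\varepsilon].
\end{equation*}
The second inequality uses the pointwise bound $p_x\le e^\varepsilon p_{x'}$ on $E$. It therefore suffices to show $\Pr[L>\varepsilon]\le\delta$, and by symmetry the same argument covers the swapped pair $(x',x)$.

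To bound this tail, note that $L>\varepsilon$ holds exactly when $\langle Z,v\rangle/(\sigma\Delta)>\sigma\varepsilon/\Delta-\Delta/(2\sigma)$. Write $c:=\sqrt{2\ln(1.25/\delta)}$, so that $\sigma\ge c\Delta/\varepsilon$. Because the threshold is increasing in $\sigma$, it is at least $t:=c-\varepsilon/(2c)$, and I need $\Pr[N(0,1)>t]\le\delta$. I will use the Mills-ratio bound $\Pr[N>t]\le \frac{1}{t\sqrt{2\pi}}e^{-t^2/2}$. Expanding,
\begin{equation*}
t^2/2=\ln(1.25/\delta)-\varepsilon/2+\varepsilon^2/(8c^2),
\end{equation*}
so the bound reads $\delta\cdot\frac{e^{\varepsilon/2-\varepsilon^2/(8c^2)}}{1.25\,t\sqrt{2\pi}}$. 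It remains to check that this factor is at most $1$ for $\varepsilon,\delta\in(0,1)$.

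This final numerical check is the main obstacle, because the constant $1.25$ leaves little slack. First, $\delta<1$ gives $c>\sqrt{2\ln 1.25}\approx0.668$; then $\varepsilon<1$ gives $t>0.668-0.75<0$, so I need a larger lower bound on $t$. I would handle this by cases. When $\delta$ is small enough that $c\ge 3/2$, I have $t\ge 3/2-1/3$, and the factor is at most $e^{1/2}/(1.25\cdot 1.17\cdot 2.5)<1$. For the remaining range I would fall back on the exact Gaussian tail, or accept the regime restriction as Dwork and Roth do. Their proof in fact needs $\delta$ below a modest constant, which they absorb into the statement. Since the lemma is cited from \citet{dwork2014algorithmic}, if that edge range resists a clean argument I will defer to their Theorem A.1 for those parameters.
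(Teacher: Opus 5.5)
The paper gives no proof of this lemma and simply cites Theorem A.1 of Dwork--Roth. Your proposal is that same privacy-loss argument. The following steps are all correct:
\begin{itemize}
\item the derivation $L\sim\mathcal{N}(\Delta^2/(2\sigma^2),\Delta^2/\sigma^2)$;
\item the reduction to the one-sided tail $\Pr[L>\varepsilon]\le\delta$, which is tighter than Dwork--Roth's two-sided requirement of $\delta/2$;
\item the Mills-ratio case $c\ge 3/2$, i.e.\ $\delta\le 1.25e^{-9/8}\approx 0.406$.
\end{itemize}

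The gap is the range $\delta\in(0.406,1)$, which the lemma claims and you leave open. Deferring to Dwork--Roth does not close it, because their proof makes the same assumption $c\ge 3/2$. You would be citing exactly the parameters their argument does not establish.

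Your worry that $1.25$ ``leaves little slack'' is also misplaced. With the one-sided tail, $\Pr[N>t]/\delta$ stays below $0.54$ over the whole range; only the Mills-ratio bound degenerates as $t\to 0$.

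Since $t=c-\varepsilon/(2c)$ is increasing in $c$ and decreasing in $\varepsilon$, two more cases with exact normal tails finish the proof. Here $\Phi$ is the standard normal CDF.
\begin{itemize}
\item If $1\le c<3/2$, then $\delta>0.406$ and $t\ge 1/2$, so $\Pr[N>t]\le\Pr[N>1/2]\approx 0.309<\delta$.
\item If $c<1$, then $\delta>1.25e^{-1/2}\approx 0.758$. Also $\delta<1$ forces $c>\sqrt{2\ln 1.25}\approx 0.668$, so $t>-0.081$ and $\Pr[N>t]<\Phi(0.081)<0.54<\delta$.
\end{itemize}
With these two cases, your argument covers all $\varepsilon,\delta\in(0,1)$.
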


The term $\mathrm{sens}(\mathbf{C})$ is the \textit{global sensitivity} for the Gaussian mechanism with unit norm $\xi=1$ neighboring data $\mathbf{X}\sim \mathbf{X}'$, which is computed as:
\begin{align}
  \mathrm{sens}(\mathbf{C}) 
  &= \max_{\mathbf{X}\sim\mathbf{X}'}\|\mathbf{C}(\mathbf{X} - \mathbf{X}')\|_{F}.
\end{align}
In the case of single participation, or item-level privacy, this reduces to
\begin{align}
     \mathrm{sens}(\mathbf{C})  &= \max_{j} \sup_{\|\mathbf{x}_j\|_2 \le 1} \| \mathbf{C}_{:,j}\|_2 \cdot \|\mathbf{x}_j\|_2
  =  \max_j \| \mathbf{C}_{:,j}\|_2 =: \| \mathbf{C} \|_{1\to2},
\end{align}
\noindent \ie the largest $\ell_2$-column norm of $\mathbf{C}$.

In this work, we allow each user to participate in more than one data entry. However, without further restrictions, the contribution of a single user could be large, making it impossible to obtain meaningful utility guarantees.
Following \citet{choquette-choo2023amplified}, we adopt the $b$-min-separation condition, where any two participations of a single user are at least $b$ steps apart. 
Intuitively, this separation is realistic, since one can simply delay releasing a single user’s data until at least $b$ other samples have been observed, while returning the previous estimate at a new step. We set $k=\lceil \frac{n}{b}\rceil$ which denotes the maximum number of times a user can participate. 
The sensitivity in this case is bounded by \citet{choquette-choo2023amplified} as
\begin{equation}\label{eq:sens_bound}
\mathrm{sens}_{k,b}(\mathbf{C})
\;\le\;
\max_{\pi\in\Pi_{k,b}}
\sqrt{\sum_{i,j\in\pi}\bigl|(\mathbf{C}^\top \mathbf{C})_{i,j}\bigr|}
\end{equation}
where $\Pi_{k,b}$ denotes the family of subsets $\pi \subseteq \{1,\dots,n\}$ with $|\pi|\le k$, such that any two distinct elements $i,j \in \pi$ satisfy $|i-j|\ge b$.  
Moreover, if every entry of $\mathbf{C}^\top \mathbf{C}$ is nonnegative, then the inequality in \eqref{eq:sens_bound} is tight. 
For lower triangular Toeplitz matrices with decreasing non-negative elements, the sensitivity can be computed exactly. 

\begin{restatable}[Theorem~2 from \citet{kalinin2024banded}]{theorem}{sensTheorem}\label{thm:sensTheorem}
Let $\mathbf{C}$ be a lower triangular Toeplitz matrix with decreasing non-negative entries $c_0\ge c_1\ge \cdots \ge c_{n-1} \ge 0$. Then the sensitivity in the setting of $b$-min-separation is 
\begin{align}
    \mathrm{sens}_{k,b} (\mathbf{C}) = \Big\| \sum_{j=0}^{k-1} \mathbf{C}_{:,\,jb + 1} \Big\|_2 
\end{align}
  where $\mathbf{C}_{:,\,jb + 1}$ denotes the $(1+jb)$-th column of $\mathbf{C}$.
\end{restatable}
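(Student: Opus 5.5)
Since the entries $c_0,\dots,c_{n-1}$ are non-negative, every entry of $\mathbf{C}^\top\mathbf{C}$ is non-negative. By the remark after \eqref{eq:sens_bound}, the bound is then tight:
\[
\mathrm{sens}_{k,b}(\mathbf{C})^2=\max_{\pi\in\Pi_{k,b}}\sum_{i,j\in\pi}(\mathbf{C}^\top\mathbf{C})_{i,j}=\max_{\pi\in\Pi_{k,b}}\Big\|\sum_{i\in\pi}\mathbf{C}_{:,i}\Big\|_2^2 .
\]
So the task reduces to a combinatorial claim: among all admissible participation patterns $\pi$, the pattern $\pi^\star=\{1,1+b,\dots,1+(k-1)b\}$ maximizes $\|\sum_{i\in\pi}\mathbf{C}_{:,i}\|_2$. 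Note that $\pi^\star\in\Pi_{k,b}$ because $1+(k-1)b\le n$ when $k=\lceil n/b\rceil$. Hence the claimed value is attained, and it remains to prove the upper bound.

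To compare patterns, I would write the Gram entries explicitly. For a lower triangular Toeplitz matrix, column $i$ is the sequence $c_0,c_1,\dots,c_{n-i}$ shifted down by $i-1$ rows. Therefore
\[
(\mathbf{C}^\top\mathbf{C})_{i,j}=\sum_{r=0}^{n-\max(i,j)}c_r\,c_{r+|i-j|}=:g(\max(i,j),|i-j|).
\]
Since $c\ge0$ is non-increasing, $g$ is non-increasing in its second argument, the gap, because $c_{r+m}$ decreases in $m$ termwise. It is also non-increasing in its first argument, the position, because moving the pair later truncates a sum of non-negative terms.

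Now take any $\pi=\{i_1<\dots<i_m\}$ with $m\le k$ and consecutive gaps at least $b$. I would map it to $\pi^\star$ via $i_\ell\mapsto 1+(\ell-1)b$. This map only moves indices earlier, since $i_\ell\ge 1+(\ell-1)b$. It can only shrink pairwise gaps, since $|i_\ell-i_{\ell'}|\ge|\ell-\ell'|b$. And its image has the same maximum-position-ordering. By the two monotonicity properties of $g$, each term $(\mathbf{C}^\top\mathbf{C})_{i_\ell,i_{\ell'}}$ is at most the corresponding term for $\pi^\star$. Finally, adding the remaining $k-m$ indices of $\pi^\star$ only adds non-negative terms. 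Summing gives $\sum_{i,j\in\pi}\le\sum_{i,j\in\pi^\star}$, which proves the theorem.

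The main obstacle is the simultaneous monotonicity step: a single term's position and gap both change under the map. I would handle this by a two-stage comparison. First shrink the gap while keeping the larger index fixed, using monotonicity in the gap. Then shift both indices earlier by the same amount, using monotonicity in the position. I would check carefully that every intermediate index stays in $\{1,\dots,n\}$, which holds because indices only decrease and remain at least $1+(\ell-1)b\ge1$. The sums must also be handled correctly when a truncated range is empty; in that case the entry is $0$ and the bound is trivial.
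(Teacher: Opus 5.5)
Your proof is correct and follows the standard argument behind the cited result; this paper does not reprove it and simply imports Theorem~2 of \citet{kalinin2024banded}. The argument is tightness of \eqref{eq:sens_bound} for entrywise non-negative $\mathbf{C}^\top\mathbf{C}$, followed by termwise domination of each Gram entry, using that $\langle \mathbf{C}_{:,i},\mathbf{C}_{:,j}\rangle$ is non-increasing in both $\max(i,j)$ and $|i-j|$ (one harmless slip: in your first stage the smaller index moves later, to $i_{\ell'}-(\ell'-\ell)b\in[1+(\ell-1)b,\,i_{\ell'}]$, not earlier, but it still stays in $\{1,\dots,n\}$, so nothing breaks).
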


Given a lower triangular factorization of $\mathbf{A}=\mathbf{BC}$, we quantify the mean expected squared error\footnote{We generalize the RMSE of \citet{choquette2023multi} to an arbitrary intermediate step $t$, and not just the last $n$.
} up to time $t$ as
\begin{equation}
\begin{aligned}
    \mathbb{E}\left[ \frac{1}{t}\|\mathbf{Y}_{:t} - \widehat{\mathbf{Y}}_{:t}\|^2_F \right] &= \mathbb{E}\left[\frac{1}{t}\|\mathbf{B}_{:t, :t}\mathbf{Z}_{:t}\|_F^2\right] =  \frac{1}{t}\|\mathbf{B}_{:t}\|^2_F \cdot \mathrm{sens}^2_{k,b}(\mathbf{C}) \cdot \xi^2 \cdot \sigma^2_{\varepsilon, \delta} \cdot d .
\end{aligned}
\end{equation}
The term $\sigma^2_{\varepsilon, \delta} \cdot \xi^2 \cdot d$ does not depend on the factorization, so to find the factorization we need to make the product $\|\mathbf{B}_{:t}\|^2_F \cdot \mathrm{sens}^2_{k,b}(\mathbf{C})$ as small as possible. Following the line of works of \citet{li2015matrix, choquette2023multi, kalinin2024banded}, we define the error that we aim to minimize as:
\begin{align}
    \mathcal{E}_t(\mathbf{B},\mathbf{C}) := \frac{1}{\sqrt{t}} \|\mathbf{B}_{:t}\|_F \cdot \mathrm{sens}_{k,b}(\mathbf{C})
    .
\end{align}
Thus, we reduce the problem of private continual mean estimation to finding matrices $\mathbf{B},\mathbf{C}\in \mathbb{R}^{n\times n}$ such that $\mathbf{A} = \mathbf{BC}$ to minimize $\mathcal{E}_t(\mathbf{B},\mathbf{C})$.

\vspace{0.5cm}
\section{Private User Level Mean Estimation with Matrix Factorization}

\begin{figure}[t]
    \centering
    \includegraphics[width=0.5\linewidth]{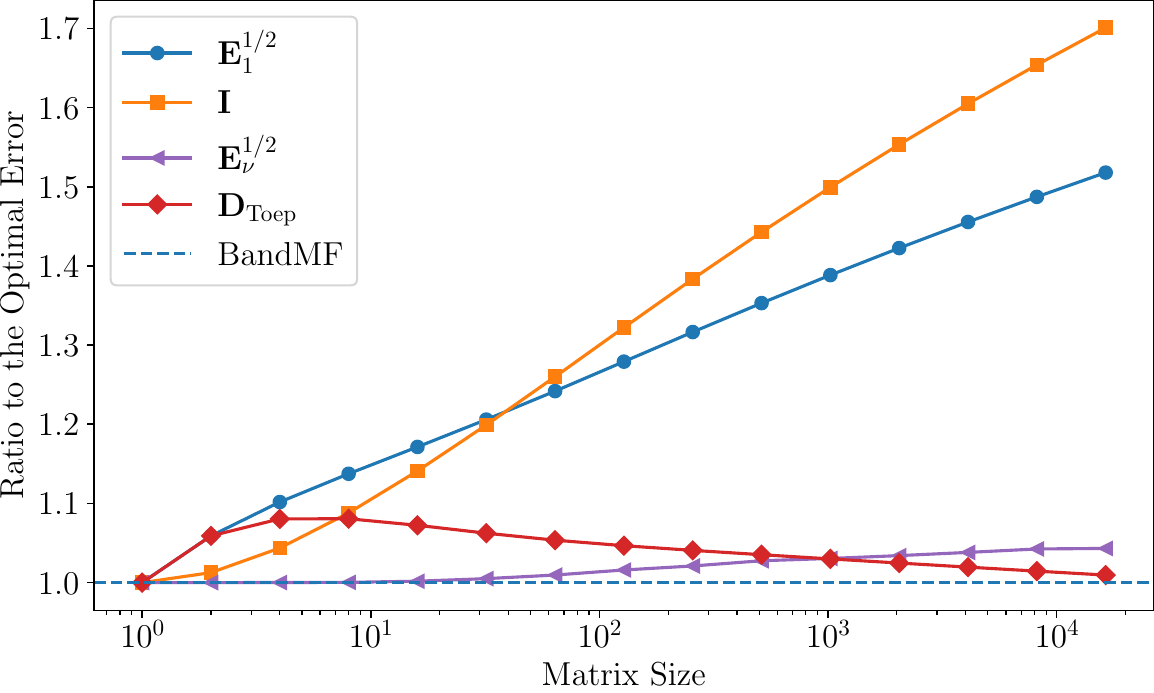}
    \caption{RMSE at step $n$ of different factorizations in the single-participation (item-level privacy) setting, presenting the error ratios of the four best-performing factorizations relative to BandMF \citep{mckenna2024scaling}.}
    \label{fig:single_rmse}
\end{figure}

In this work, we propose novel mean-specific factorizations of $\mathbf{A}$ to minimize the RMSE error. Before we present it, let us define two matrices that we will extensively use in this work. We define the diagonal normalization matrix $\mathbf{D}$ and the prefix-sum matrix $\mathbf{E}_1$ by
\begin{align}
\label{eq:D_E1_matrices}
    D_{i,j} &:= \begin{cases}
        \frac{1}{i}, & j = i, \\
        0, & j \neq i,
    \end{cases}
    &
    (E_1)_{i,j} &:= \begin{cases}
        1, & j \le i, \\
        0, & j > i.
    \end{cases}
\end{align}
The prefix-sum matrix, used together with $\mathbf{D}$, forms the running-averages matrix $\mathbf{A} = \mathbf{D}\mathbf{E}_1$.

In the work of \citet{choquette2023correlated}, the authors introduce an exponentially decayed version of $\mathbf{E}_{1}^{1/2}$, referred to as $\nu$-DP-FTRL with the correlation matrix $\mathbf{E}_{\nu}^{1/2}$, where the matrix $\mathbf{E}_{\nu}$ is given by:
\begin{align}
    (E_\nu)_{i,j} &:= \begin{cases}
        (1 - \nu)^{i - j}, & j\le i, \\
        0, & j>i.
    \end{cases}
\end{align}

For our proposed factorization, we define a lower-triangular Toeplitz matrix $\mathbf{D}_{\mathrm{Toep}}$, whose diagonals are determined by the entries of $\mathbf{D}$:
\begin{align}
\label{eq:D_Toep_matrix}
(D_\mathrm{Toep})_{i,j} &:=
\begin{cases}
\dfrac{1}{i-j+1}, & j \le i, \\
0, & j > i.
\end{cases}
\end{align}
We will use $\mathbf{C} = \mathbf{D}_{\mathrm{Toep}}$ as the correlation matrix for single-participation (item-level) differential privacy. In the more general multi-participation setting, to account for the $b$-min-separation sensitivity~\eqref{eq:sens_bound}, 
we consider a banded-inverse variant of the matrix $\mathbf{D}_{\mathrm{Toep}}$, following \citet{kalinin2025back}. 
Specifically, we set to zero all elements below the $p$-th diagonal in $\mathbf{D}_{\mathrm{Toep}}^{-1}$ 
and then invert the resulting matrix. 
We denote this matrix by $\mathbf{C} = \mathbf{D}_{\mathrm{Toep}}^{p}$, 
where $p$ is a hyperparameter that controls the trade-off between utility and memory consumption.

\begin{definition}[Mean-Aware Matrix Factorization]
Let $\mathbf{D}_{\mathrm{Toep}}$ be the Toeplitz matrix defined in~\eqref{eq:D_Toep_matrix}.
For a bandwidth parameter $p > 0$, the banded-inverse mean-aware factorization $\mathbf{C} = \mathbf{D}_{\mathrm{Toep}}^{p}$ is defined by its inverse:
\begin{equation}
 (\mathbf{D}_{\mathrm{Toep}}^{p})^{-1} = 
\begin{cases}(D_{\mathrm{Toep}}^{-1})_{i,j}, & 0 \le i-j < p,\\[2pt]
0, & \text{otherwise},
\end{cases}
\end{equation}
where $(D_\mathrm{Toep})_{i,j} = \frac{1}{i-j+1}$  if $j \le i$ and $0$ otherwise. 
\end{definition}

We compute the coefficients of the inverse matrix $\mathbf{D}_{\mathrm{Toep}}^{-1}$ in the following lemma:

\begin{restatable}{lemma}{DToepInverse}
\label{lem:dtoep-inverse}
The Lower-Triangular Toeplitz (LTT) matrix
\(\mathbf{D}_{\mathrm{Toep}}^{-1}\) is given by:
\begin{align}
    \mathbf{D}_{\mathrm{Toep}}^{-1}
    = \begin{pmatrix}
        1        & 0        & \cdots & 0 \\
        g_{1}    & 1        & \cdots & 0 \\
        \vdots   & \vdots   & \ddots & \vdots \\
        g_{n-1}  & g_{n-2}  & \cdots & 1
      \end{pmatrix},
\end{align}
where \(g_j=-|G_j|\) for \(j\ge 1\), and \(G_j\) is the \(j\)-th Gregory coefficient \citep{Gregory1841_CollinsLetter}.
\end{restatable}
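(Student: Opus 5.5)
We will work with generating functions: lower-triangular Toeplitz matrices correspond exactly to formal power series, so their products and inverses correspond to products and reciprocals of those series. The first column of $\mathbf{D}_{\mathrm{Toep}}$ is $(1,\tfrac12,\tfrac13,\dots)$, with generating function
\begin{align}
  d(x) \;=\; \sum_{m\ge 0}\frac{x^m}{m+1} \;=\; -\frac{\ln(1-x)}{x}.
\end{align}
Since $d(0)=1\neq 0$, the series is invertible in $\mathbb{R}[[x]]$. Hence $\mathbf{D}_{\mathrm{Toep}}^{-1}$ is again lower-triangular Toeplitz, and its first column consists of the coefficients of
\begin{align}
  \frac{1}{d(x)} \;=\; \frac{-x}{\ln(1-x)} \;=\; \sum_{j\ge 0} g_j x^j, \qquad g_0 = 1.
\end{align}
Truncating to $n\times n$ is harmless, because the top-left $n\times n$ block of a product of lower-triangular Toeplitz matrices depends only on the first $n$ coefficients.

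Next we identify these coefficients with Gregory coefficients. The standard definition is $\frac{x}{\ln(1+x)} = \sum_{j\ge0} G_j x^j$, with $G_0=1$, $G_1=\tfrac12$, $G_2=-\tfrac1{12}$, $G_3=\tfrac1{24}$, and so on. Substituting $x\mapsto -x$ gives
\begin{align}
  \frac{-x}{\ln(1-x)} = \sum_{j\ge 0} G_j (-x)^j,
\end{align}
so $g_j = (-1)^j G_j$. For the claimed form $g_j=-|G_j|$ it then suffices to use the classical sign pattern $G_j = (-1)^{j-1}|G_j|$ for $j\ge1$, which gives $(-1)^j G_j = -|G_j|$. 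As a sanity check, $g_1=-\tfrac12$ matches the direct computation from $1\cdot g_1 + \tfrac12\cdot 1 = 0$.

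The only nontrivial ingredient is this alternating sign of the Gregory coefficients. We expect to cite it, but a self-contained route is available. Write
\begin{align}
  \frac{-x}{\ln(1-x)} = 1-\sum_{j\ge1} a_j x^j \quad\text{with}\quad a_j=\int_0^1 \frac{dt}{\,\ln^2 t+\pi^2\,}\cdot(\text{positive kernel}),
\end{align}
or, more simply, use the integral representation $G_j = \int_0^1 \binom{t}{j}\,dt$. Since $\binom{t}{j}$ has constant sign $(-1)^{j-1}$ for $t\in(0,1)$ and $j\ge1$, this gives $\operatorname{sign}(G_j)=(-1)^{j-1}$ directly. So the main obstacle is this sign fact, and we handle it via $G_j=\int_0^1\binom{t}{j}dt$: the integral representation follows from $\frac{x}{\ln(1+x)}=\int_0^1 (1+x)^t\,dt$ together with the binomial expansion.
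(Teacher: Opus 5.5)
Your proof is correct and takes the same route as the paper: both identify $\mathbf{D}_{\mathrm{Toep}}^{-1}$ with the coefficients of $-x/\ln(1-x)$, the reciprocal of the generating function $-\ln(1-x)/x$. The paper simply asserts $-x/\ln(1-x)=1-\sum_{j\ge1}|G_j|x^j$, whereas you also prove the needed sign pattern, via the substitution $x\mapsto -x$ and $G_j=\int_0^1\binom{t}{j}\,dt$; this is exactly the same-sign property of the $g_j$ that the paper relies on later. The half-written ``positive kernel'' representation is not needed and should be deleted.
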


Using the identities and bounds for Gregory coefficients from \citet{BLAGOUCHINE2016404}, we state that
\begin{align}
\label{eq:greg_first_prop}
    \frac{1}{j\ln^2 j} - \frac{2}{j\ln^3 j} \le |G_j| \le \frac{1}{j\ln^2 j} - \frac{2\gamma}{j\ln^3 j}, \qquad \text{and} \qquad \sum_{j=1}^\infty |G_j|  = 1,
\end{align}
where the first inequality holds for $j \ge 5$ and \(\gamma\) is the Euler-Mascheroni constant.
Thus, for the inverse matrix  $\mathbf{D}^{-1}_{\mathrm{Toep}}$ , we obtain tight bounds on both the coefficients and their partial sums, which is a necessary step to compute the factorization error, allowing us to derive bounds on the RMSE of this factorization.

We summarize the complete mean-estimation procedure, including the computation of the Gregory coefficients, in Algorithm~\ref{alg:continual-mean-estimation}, which is deferred to the appendix for space reasons. In the following subsections, we compare the proposed mean-aware factorization with several baselines and derive the RMSE in both single- and multi-participation settings.

\begingroup
\everymath{\displaystyle}
\begin{table}[t]
  \centering
  \caption{Upper bounds of multi-participation RMSE for selected factorizations at step $t$.}
  \label{tab:multi_part_results}
  \setlength{\tabcolsep}{6pt}
  \renewcommand{\arraystretch}{1.2}
  \begin{tabular}{@{\hspace{5pt}} l @{\hspace{40pt}} l @{\hspace{40pt}} l @{\hspace{0pt}}}
    \toprule
    \multirow{2}{*}{$\mathbf{C}$} &
      \multicolumn{2}{c}{\hspace{-50pt}\textbf{$\mathcal{E}_t(\mathbf{B},\mathbf{C})$}} \\
    & \hspace{25pt}\textbf{Not Banded} & \hspace{25pt}\textbf{Banded Inverse} \\
    \midrule
    $\mathbf{E}_1^{1/2}$
      & $ \Theta\left(\sqrt{\frac{k(\ln n + k)}{t}}\right)$ & $O\left(\sqrt{\frac{k \ln(n) \ln \ln (n/k)}{t\ln (n/k)}} + \frac{k}{\sqrt{nt}}\sqrt{\ln n}\right)$ \\ [13pt]
    $\mathbf{I}$
      & $ \Theta\left(\sqrt{\frac{k\ln t}{t}}\right)$              & $\Theta\left(\sqrt{\frac{k\ln t}{t}}\right)$ \\ [13pt]
    $\mathbf{D}_{\mathrm{Toep}}$
      & $O\left(\sqrt{\frac{k}{t}} + \frac{k}{\sqrt{nt}}\sqrt{\ln k \ln n}\right)$ & $O\left(\sqrt{\frac{k}{t}} + \sqrt{\frac{k \ln k}{t \ln^2 (n/k)}}\right)$ \\ [10pt]
    \bottomrule
  \end{tabular}
\end{table}
\endgroup
\subsection{Single Participation}

We first consider a simple setting in which each user can participate only once, corresponding to item-level differential privacy. We numerically compare the proposed factorization,
$\mathbf{C} = \mathbf{D}_{\mathrm{Toep}}$,
with several explicit factorizations from the literature: input perturbation, given by $\mathbf{C} = \mathbf{I}$; the square-root factorization, $\mathbf{C} = \mathbf{E}_1^{1/2}$~\citep{fichtenberger2023constant, kalinin2025continual}; and $\nu$-DP-FTRL, $\mathbf{C} = \mathbf{E}_{\nu}^{1/2}$~\citep{choquette2023correlated}, where $\nu$ is chosen to minimize the RMSE error.

We also compare against factorizations obtained by numerical optimization. In particular, we use the scalable Toeplitz-banded \textit{BandMF} factorization of \citet{mckenna2024scaling}, which approximates
$\inf\limits_{\mathbf{B}\mathbf{C} = \mathbf{A}} \mathcal{E}_n(\mathbf{B}, \mathbf{C})$
and can be computed efficiently for large matrices.  We use the official implementation from the JAX\_privacy library~\citep{mckenna2026jax} for the experiments. Figure~\ref{fig:single_rmse} reports the RMSE errors as ratios relative to the numerically optimized BandMF error. The results show that, for sufficiently large matrix sizes, the proposed factorization
$\mathbf{C} = \mathbf{D}_{\mathrm{Toep}}$
achieves the lowest error among the explicit factorizations considered.

\subsection{Multi-participation}

We now turn to a more realistic scenario in which each user may participate multiple times. 
Under the $b$-min-separation setting, we extend our analysis beyond the four factorizations considered for single participation. In particular, we investigate the Banded and Banded-Inverse variants of these factorizations, where the correlation matrix is made $p$-banded or its inverse is made $p$-banded, as introduced in \citet{kalinin2024banded} and \citet{kalinin2025back}, respectively.

 The following Theorem~\ref{thm:RMSE_multi} provides error bounds for the factorizations with $\mathbf{C}=\mathbf{I}$, $\mathbf{C}=\mathbf{E}_1^{1/2}$, and $\mathbf{C}=\mathbf{D}_{\mathrm{Toep}}$ in both the non-banded and banded-inverse settings. The bounds are summarized in Table~\ref{tab:multi_part_results}. We see that $\mathbf{D}_{\mathrm{Toep}}$ provably achieves a better asymptotic error, and in the regime of large $k$ the bound can be further improved by adopting the banded-inverse factorization. In the proof, we also observe that the optimal bandwidth for the proposed factorization is equal to the separation parameter $b$, whereas for the prefix-sum–based factorization $\mathbf{E}_1$ it grows logarithmically in $b$. In practice, however, we find that $\mathbf{D}_{\mathrm{Toep}}$ can be used with much smaller bandwidth.

\begin{restatable}{theorem}{RMSEmulti}\label{thm:RMSE_multi}
    The RMSE at step $t$ under the $b$-min-separation condition for $b = \lceil \frac{n}{k}\rceil$ is bounded as shown in Table~\ref{tab:multi_part_results}. 
    For the banded inverse $\mathbf{E}_1^{1/2}$, the bandwidth is $p=\lceil\log_2 b\rceil$, 
    while for the other factorizations it is $p=b$.
\end{restatable}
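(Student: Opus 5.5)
For each $\mathbf{C}$ we take $\mathbf{B}=\mathbf{A}\mathbf{C}^{-1}=\mathbf{D}\mathbf{E}_1\mathbf{C}^{-1}$. We then bound the two factors of $\mathcal{E}_t(\mathbf{B},\mathbf{C})$ separately: $\frac1t\|\mathbf{B}_{:t}\|_F^2$, and $\mathrm{sens}_{k,b}(\mathbf{C})$.

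\textbf{Sensitivity.} Every $\mathbf{C}$ considered is lower triangular Toeplitz with nonnegative, nonincreasing entries. For $\mathbf{E}_1^{1/2}$ the coefficients are $\binom{2j}{j}4^{-j}\asymp j^{-1/2}$. For $\mathbf{D}_{\mathrm{Toep}}$ they are $1/(j+1)$. For the banded-inverse variants this has to be checked. For $\mathbf{D}^p_{\mathrm{Toep}}$ the inverse has first column $1,g_1,\dots,g_{p-1}$ with $g_j=-|G_j|<0$, and $\sum|G_j|\le 1$. The inverse is then a renewal-type sequence, so its coefficients $c_m$ satisfy $c_m=\sum_{j<p}|G_j|c_{m-j}$. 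Kaluza-type arguments give nonnegativity and monotone decrease, and the same reasoning applies to $\mathbf{E}_1^{1/2}$ following \citet{kalinin2025back}.

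Theorem~\ref{thm:sensTheorem} then reduces the sensitivity to $\|\sum_{j<k}\mathbf{C}_{:,jb+1}\|_2^2$. We expand this as a diagonal part, $k\|c\|_2^2$, plus cross terms $\sum_{j\neq j'}\langle \mathbf{C}_{:,jb+1},\mathbf{C}_{:,j'b+1}\rangle$.
\begin{itemize}
\item For $\mathbf{C}=\mathbf{I}$ this gives exactly $k$.
\item For $\mathbf{D}_{\mathrm{Toep}}$ we have $\|c\|_2^2\le\pi^2/6$. Each cross term at lag $mb$ is $\sum_i \frac{1}{i(i+mb)}\lesssim \frac{\ln(mb)}{mb}$, and summing over pairs gives about $k\cdot\frac{\ln n\ln k}{b}=\frac{k^2\ln k\ln n}{n}$. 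This yields $\mathrm{sens}^2=O(k+k^2\ln k\ln n/n)$.
\item For $\mathbf{E}_1^{1/2}$ the diagonal part is $\Theta(k\ln n)$ and the cross terms give $\Theta(k^2)$. The nonnegativity of the entries makes these estimates two-sided.
\end{itemize}
For the banded-inverse versions with $p=b$, we show that $c_m$ beyond lag $b$ decays like the tail of the renewal sequence. For $\mathbf{D}_{\mathrm{Toep}}^b$ this tail behaves as roughly $1/\ln^2 b$, coming from $\sum_{j\ge b}|G_j|\approx 1/\ln b$. This makes each cross term $O(\ln k/(b\ln^2 b))$ in aggregate, which produces the $\frac{k\ln k}{\ln^2(n/k)}$ term.

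\textbf{The $\mathbf{B}$ side.} We have $\mathbf{B}=\mathbf{D}\,(\mathbf{E}_1\mathbf{C}^{-1})$, and $\mathbf{E}_1\mathbf{C}^{-1}$ is again lower triangular Toeplitz. Its coefficients are the partial sums $s_m=\sum_{j\le m}(\mathbf{C}^{-1})_{j}$, so row $i$ of $\mathbf{B}$ has squared norm $\frac1{i^2}\sum_{m<i}s_m^2$.
\begin{itemize}
\item For $\mathbf{C}=\mathbf{I}$, $s_m=1$, so $\frac1t\sum_{i\le t}\frac1i=\Theta(\ln t/t)$.
\item For $\mathbf{D}_{\mathrm{Toep}}$, $s_m=1-\sum_{j\le m}|G_j|=\sum_{j>m}|G_j|\asymp 1/\ln m$ by \eqref{eq:greg_first_prop}. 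Hence $\sum_{m<i}s_m^2\lesssim i/\ln^2 i$, the row norms are $O(1/(i\ln^2 i))$, and the sum over $i\le t$ converges. This gives $\frac1t\|\mathbf{B}_{:t}\|_F^2=O(1/t)$, which is exactly the $\sqrt{k/t}$ leading term.
\item For $\mathbf{E}_1^{1/2}$, $s_m\asymp\sqrt m$. The row norms are then $\Theta(1)$ with a $\ln$ correction, which yields the stated $\Theta$ bound for the non-banded column.
\end{itemize}
In the banded-inverse cases the partial sums are truncated at $p$. For $\mathbf{D}^b_{\mathrm{Toep}}$ we get $s_m=s_{p-1}\asymp 1/\ln b$ for $m\ge p$, so the Frobenius term becomes $O(1/t+1/\ln^2 b)$-type. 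Multiplying by the sensitivity and balancing recovers the table entries. For $\mathbf{E}_1^{1/2}$ the choice $p=\lceil\log_2 b\rceil$ arises from this same trade-off.

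\textbf{Expected obstacle.} The main difficulty is the banded-inverse analysis. It requires
\begin{itemize}
\item proving that $(\mathbf{D}^p_{\mathrm{Toep}})$ keeps nonnegative, decreasing coefficients, so that Theorem~\ref{thm:sensTheorem} applies, and
\item obtaining sharp asymptotics for its coefficients, which come from a renewal equation with truncated Gregory weights.
\end{itemize}
I would try a generating-function approach: $\sum c_m x^m=1/(1-\sum_{j<p}|G_j|x^j)$, with a renewal theorem giving $c_m\to 1/\sum_{j<p}j|G_j|\asymp 1/\ln p$ up to the $\ln$ factors. I would then bound the cross inner products via this limit plus an $O(1/m)$ transient for $m<p$.
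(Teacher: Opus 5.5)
\textbf{Main gap: the banded-inverse $\mathbf{D}_{\mathrm{Toep}}$ entry.} Your plan to control the coefficients of $\mathbf{C}^p=\mathbf{D}^p_{\mathrm{Toep}}$ with a renewal theorem giving $c_m\to 1/\sum_{j<p}j|G_j|$ does not work. The truncated weights have total mass $\sum_{j<p}|G_j|=1-\sum_{j\ge p}|G_j|\le 1-\frac{1}{\ln p}+\frac{1}{\ln^2 p}<1$. So $c_m=\sum_{j<p}|G_j|c_{m-j}$ is a \emph{defective} renewal equation, and $c_m\to 0$ geometrically. (Also, $\sum_{j<p}j|G_j|$ is of order $p/\ln^2 p$, so the limit you quote is not $\asymp 1/\ln p$ anyway.)

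That geometric decay is exactly what the sensitivity bound needs. If $c_m$ tended to some $c_\infty>0$, each cross term $\langle\mathbf{C}_{:,ib+1},\mathbf{C}_{:,jb+1}\rangle$ would be of order $(n-jb)c_\infty^2$. Then $\mathrm{sens}_{k,b}$ would grow with $n$ instead of being $O(\sqrt k)$.

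The paper proves $c_\ell\le c_{p-1}\alpha^{\ell-p}$ with $\alpha=1-\frac{1}{4p}$ by induction. The key inequality is $\sum_{j<p}|G_j|\alpha^{-j}\le 1$, which follows from the bound on $\sum_{j<p}|G_j|$ above together with $\sum_{j<p}j|G_j|\le 3p/\ln^2 p$. With $p=b$, the cross terms are then geometric in $\alpha^{(j-i)b}$ with $\alpha^b\le e^{-1/4}$, so $\mathrm{sens}^2_{k,b}(\mathbf{C}^b)=O(k)$.

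Hence the $\frac{k\ln k}{\ln^2(n/k)}$ term does not come from the sensitivity, as you claim. It comes from $\mathbf{B}^p$. For rows $m>p$ the partial sums are frozen at $s_{p-1}\lesssim 1/\ln p$, so the squared row norms are $\lesssim\frac{m-p}{m^2\ln^2 p}$. These sum to $\frac{\ln(t/p)}{\ln^2 p}=O\bigl(\frac{\ln k}{\ln^2(n/k)}\bigr)$, giving $\frac1t\|\mathbf{B}^p_{:t}\|_F^2=O\bigl(\frac1t\bigl(1+\frac{\ln k}{\ln^2(n/k)}\bigr)\bigr)$. Your $O(1/t+1/\ln^2 b)$ drops both the $1/t$ factor and the $\ln(t/b)$ factor on the second term. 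Multiplied by a sensitivity of order at least $k$, it does not reproduce the table entry.

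\textbf{Non-banded $\mathbf{E}_1^{1/2}$: a computational error.} The coefficients of $\mathbf{E}_1\mathbf{C}^{-1}=\mathbf{E}_1\mathbf{E}_1^{-1/2}=\mathbf{E}_1^{1/2}$ are $s_m=r_m\asymp m^{-1/2}$, not $\sqrt m$; you summed the coefficients of $\mathbf{C}$ rather than of $\mathbf{C}^{-1}$. With $s_m\asymp\sqrt m$ the row norms would be $\Theta(1)$, so $\frac1t\|\mathbf{B}_{:t}\|_F^2=\Theta(1)$, with no $1/\sqrt t$ decay at all. That contradicts the entry you are trying to prove. The correct computation gives squared row norms $\asymp\frac{\ln i}{i^2}$, hence $\|\mathbf{D}\mathbf{E}_1^{1/2}\|_F=\Theta(1)$. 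Combined with $\mathrm{sens}^2_{k,b}(\mathbf{E}_1^{1/2})=\Theta(k\ln n+k^2)$, this yields the stated $\Theta$ bound.

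\textbf{Banded-inverse $\mathbf{E}_1^{1/2}$: not argued.} Your proposal gives no actual bounds for this entry. The paper proceeds in three steps:
\begin{enumerate}
\item It bounds $\|\mathbf{B}^p_{:t}\|_F^2\le\frac{\pi^2}{6}+\frac{12}{\pi}+\frac{4}{\pi(p-1)}\ln\frac tp$, using the inverse coefficients $\tilde r_j=-r_j/(2j-1)$ and the tail bound $\sum_{i>j}|\tilde r_i|\le 2/\sqrt{\pi j}$.
\item It imports $\mathrm{sens}^2_{k,b}(\mathbf{C}^p)\le k(2+\ln p+54p/b)$ from \citet{kalinin2025back}.
\item It sets $p=\Theta(\ln b)$.
\end{enumerate}

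\textbf{What is fine.} Your treatment of $\mathbf{I}$ and of non-banded $\mathbf{D}_{\mathrm{Toep}}$ is sound. In the latter, your estimate $\sum_i\frac{1}{i(i+L)}\le H_L/L$ is a simpler route than the paper's three summation lemmas. It works because the extra harmonic terms $H_{n-jb}-H_{n-ib}$ there are nonpositive.
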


The following theorem establishes a general lower bound on the $\mathrm{RMSE}$.

\begin{restatable}[RMSE Lower Bound]{theorem}{OptimalityMultiPart}\label{thm:OptimalityMultiPart}
    For any factorization of the matrix $\mathbf{A} = \mathbf{BC}$, the RMSE at step $n$ under the $b$-min-separation condition in the multi-participation setting is lower bounded by
    \begin{align}
        \mathcal{E}_n(\mathbf{B},\mathbf{C}) = \Omega\left(\frac{k}{n} + \frac{1}{\sqrt{n}}\right).
    \end{align}
Under the restriction that the matrix $\mathbf{C}$ is lower triangular and either (a) Toeplitz with positive coefficients, (b) Toeplitz and $b$-banded, or (c) column-normalized and $b$-banded, the following stronger lower bound holds:
    \begin{align}
        \mathcal{E}_n(\mathbf{B},\mathbf{C}) = \Omega\left(\sqrt{\frac{k}{n}}\right).
    \end{align}
\end{restatable}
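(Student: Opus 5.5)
The plan is to lower bound $\|\mathbf{B}\|_F\cdot\mathrm{sens}_{k,b}(\mathbf{C})$ directly, rather than going through trace norms, by pairing rows of $\mathbf{A}=\mathbf{BC}$ with vectors of the form $\mathbf{C}\mathbf{v}$. The basic tool is a Cauchy–Schwarz step. Take any admissible participation pattern $\pi\in\Pi_{k,b}$ and let $\mathbf{v}=\mathbf{1}_\pi$ be its indicator. A user contributing the same unit vector at all positions of $\pi$ is a valid neighbor. Hence $\|\mathbf{C}\mathbf{v}\|_2\le \mathrm{sens}_{k,b}(\mathbf{C})$, and the same holds for $\mathbf{v}=\mathbf{e}_j$ for every $j$. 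Therefore, for every row $i$,
\begin{equation*}
|(\mathbf{A}\mathbf{v})_i| = |\mathbf{B}_{i,:}\,\mathbf{C}\mathbf{v}| \le \|\mathbf{B}_{i,:}\|_2\,\mathrm{sens}_{k,b}(\mathbf{C}).
\end{equation*}
Squaring and summing over $i$ gives $\|\mathbf{B}\|_F^2\,\mathrm{sens}_{k,b}^2(\mathbf{C})\ge \|\mathbf{A}\mathbf{v}\|_2^2$. It remains to choose $\mathbf{v}$ well.

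For the general bound $\Omega(k/n+1/\sqrt n)$ I will use two choices. First, $\mathbf{v}=\mathbf{e}_1$: then $\|\mathbf{A}\mathbf{e}_1\|_2^2=\sum_i 1/i^2\ge 1$, so $\mathcal{E}_n\ge 1/\sqrt n$. Second, $\pi=\{1,b+1,2b+1,\dots\}$ with $k$ elements: then $(\mathbf{A}\mathbf{v})_i = |\pi\cap[1,i]|/i \ge \lceil i/b\rceil/i \ge 1/b$. This gives $\|\mathbf{A}\mathbf{v}\|_2^2\ge n/b^2$, and so
\begin{equation*}
\mathcal{E}_n \ge \frac{1}{\sqrt n}\cdot\frac{\sqrt n}{b} = \frac1b = \Omega\!\left(\frac{k}{n}\right),
\end{equation*}
using $b=\lceil n/k\rceil$. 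Taking the maximum of the two bounds gives $\Omega(k/n+1/\sqrt n)$.

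For the structured classes I will split the error as $\|\mathbf{B}\|_F\ge |B_{11}|$ times $\mathrm{sens}_{k,b}(\mathbf{C})\gtrsim\sqrt k\cdot(\text{scale of }\mathbf{C})$, and make the two scales cancel. Since both matrices are lower triangular, $A_{11}=1=B_{11}C_{11}$.

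In case (a), $\mathbf{C}$ is Toeplitz with $c_0=C_{11}$. With positive coefficients every entry of $\mathbf{C}^\top\mathbf{C}$ is nonnegative, so the bound \eqref{eq:sens_bound} is tight. Dropping the off-diagonal terms for the pattern $\pi$ above gives
\begin{equation*}
\mathrm{sens}_{k,b}^2(\mathbf{C}) \ge \sum_{j\in\pi}\|\mathbf{C}_{:,j}\|_2^2 \ge k\,c_0^2,
\end{equation*}
because every column contains the diagonal entry $c_0$.

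In case (b), the $b$-banded structure means columns $j$ and $j'$ with $|j-j'|\ge b$ have disjoint supports. Then $\|\mathbf{C}\mathbf{1}_\pi\|_2^2=\sum_{j\in\pi}\|\mathbf{C}_{:,j}\|_2^2\ge k c_0^2$ exactly, and this quantity is realized by a neighbor. In both cases,
\begin{equation*}
\mathcal{E}_n^2\ge \frac1n\, B_{11}^2\, k\, c_0^2 = \frac kn.
\end{equation*}

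In case (c), write $\|\mathbf{C}_{:,j}\|_2=\gamma$ for all $j$. Bandedness again makes the chosen columns orthogonal, so $\mathrm{sens}^2_{k,b}\ge k\gamma^2$. Moreover $|C_{11}|\le\gamma$ forces $|B_{11}|\ge 1/\gamma$, which gives the same $k/n$ bound.

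The main obstacle I expect is the bookkeeping around sensitivity. I must make sure that the pattern $\{1,b+1,\dots\}$ really has $k$ elements within $[n]$ for $b=\lceil n/k\rceil$; if not, I will use $k-1$ elements, which only changes constants. I must also make sure the Cauchy–Schwarz step uses a genuinely realizable neighbor, rather than only the upper bound \eqref{eq:sens_bound}. In cases (b) and (c) I need the lower bound on sensitivity, so I will argue it directly through the disjoint-support identity. In case (a) I will invoke the tightness statement for nonnegative $\mathbf{C}^\top\mathbf{C}$.
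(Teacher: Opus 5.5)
Your proposal is correct and follows essentially the paper's proof. Your row-wise Cauchy--Schwarz step $\|\mathbf{B}\|_F\,\mathrm{sens}_{k,b}(\mathbf{C})\ge\|\mathbf{A}\mathbf{1}_\pi\|_2$ is exactly the inequality the paper imports from prior work, evaluated at the same pattern $\pi=\{1,b+1,2b+1,\dots\}$, and your structured cases use the same pairing of $\|\mathbf{B}\|_F\ge|B_{11}|$ with $\mathrm{sens}_{k,b}(\mathbf{C})\ge\|\mathbf{C}\mathbf{1}_\pi\|_2\ge\sqrt{k}\,|C_{11}|$ (the paper normalizes $c_0=1$ rather than carrying $c_0$), and your handling of case (c) via $|B_{11}|\ge 1/\gamma$ is, if anything, more careful than the paper's assertion that $B_{11}=1$ there.
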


In the single-participation regime ($k=1$) the theorem shows that the proposed factorization with $\mathbf{C}=\mathbf{D}_{\mathrm{Toep}}$ is asymptotically optimal among all factorizations. For the multi-participation setting, obtaining a tight bound is more challenging. We therefore focus on classes of factorizations that arise in practice, including lower-triangular positive Toeplitz factorizations, which contain our proposed factorization as well as the banded and banded-inverse versions of $\mathbf{E}_1^{1/2}$; Toeplitz-banded factorizations, corresponding to the BandMF factorization; and column-normalized banded factorizations~\citep{choquette-choo2023amplified}, which, however, do not scale to the matrix sizes considered here. Extending the tight lower bound to general factorizations appears substantially harder, as it would require a general characterization of the multi-participation sensitivity, which was shown to be NP-hard to compute~\citep{choquette-choo2023amplified}.

\paragraph{Remark on unbounded streams.}
Our proposed factorization $\mathbf{C} = \mathbf{D}_{\mathrm{Toep}}$ has bounded single-participation sensitivity, even as the matrix size $n$ tends to infinity. The recent work of \citet{jacobsen2025private} shows that infinite matrices with bounded sensitivity can be applied to unbounded streams when the noise is calibrated to the limiting sensitivity. However, no analogous theory has yet been developed for the multi-participation setting, which we view as a promising direction for future work.

\begin{figure}[t!]
    \centering
\begin{subfigure}[t]{0.3\textwidth}
    \centering
    \includegraphics[width=\linewidth]{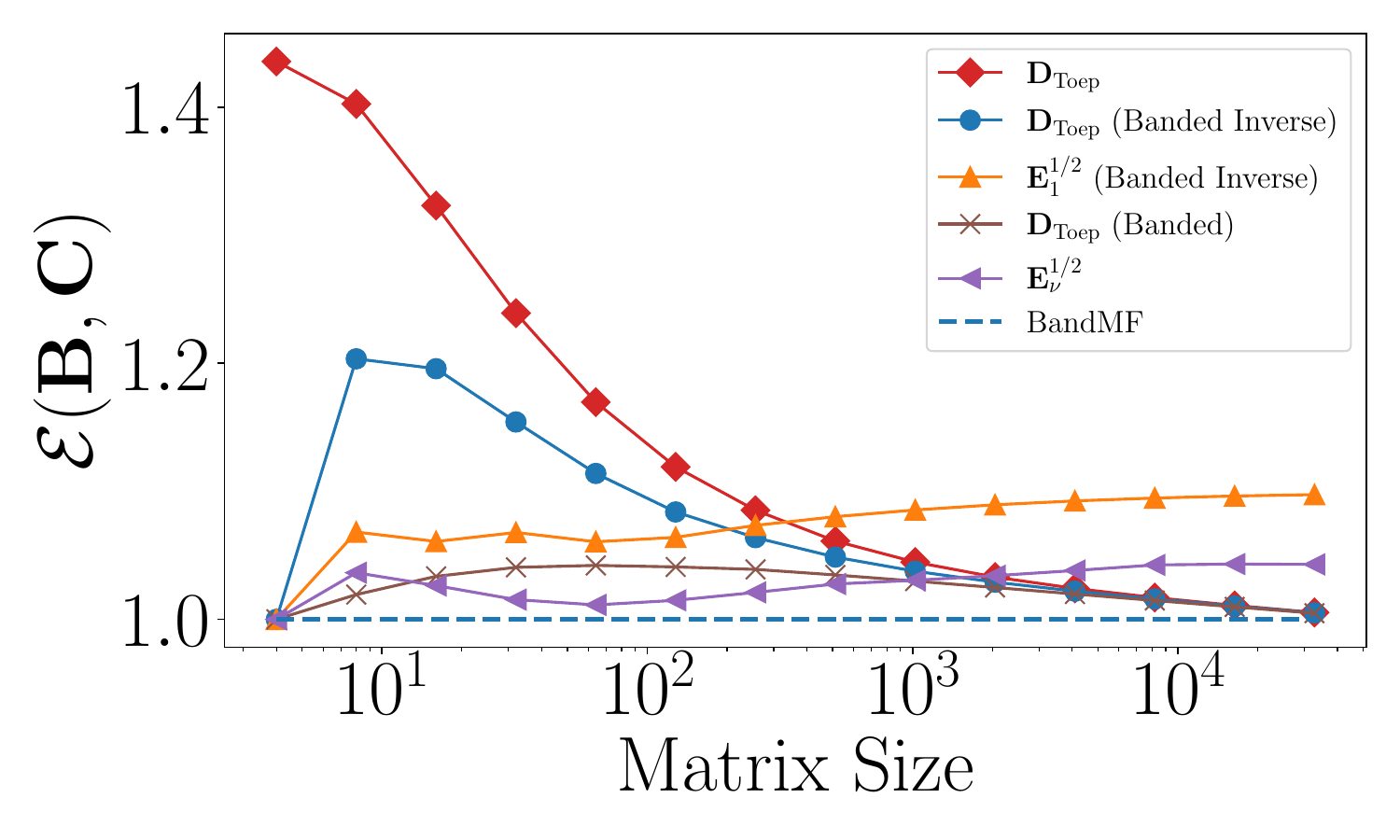}
    \caption{$k=4$}
    \label{fig:multi_k4}
  \end{subfigure}
\begin{subfigure}[t]{0.3\textwidth}
    \centering
    \includegraphics[width=\linewidth]{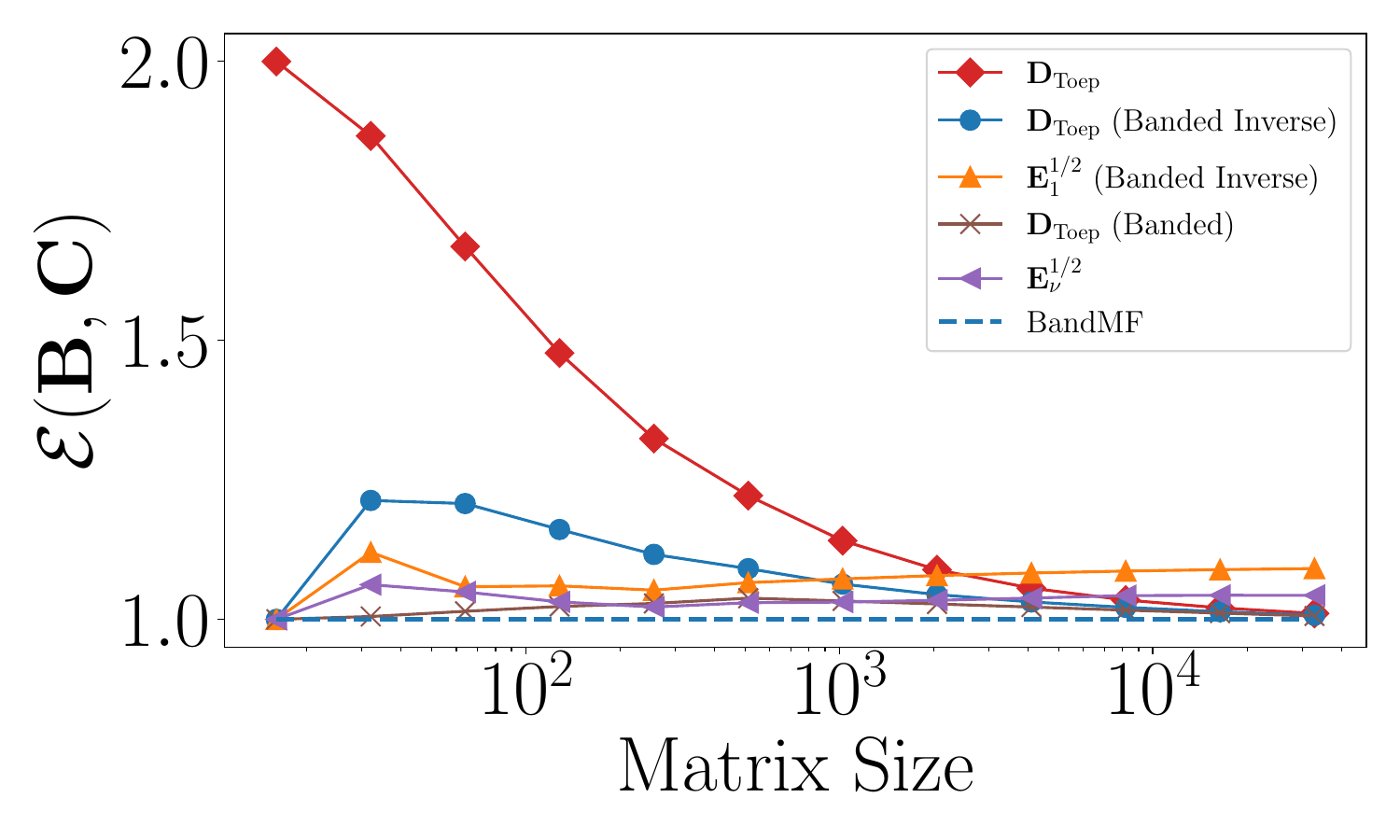}
    \caption{$k=16$}
    \label{fig:multi_k16}
\end{subfigure}
\begin{subfigure}[t]{0.3\textwidth}
    \centering
    \includegraphics[width=\linewidth]{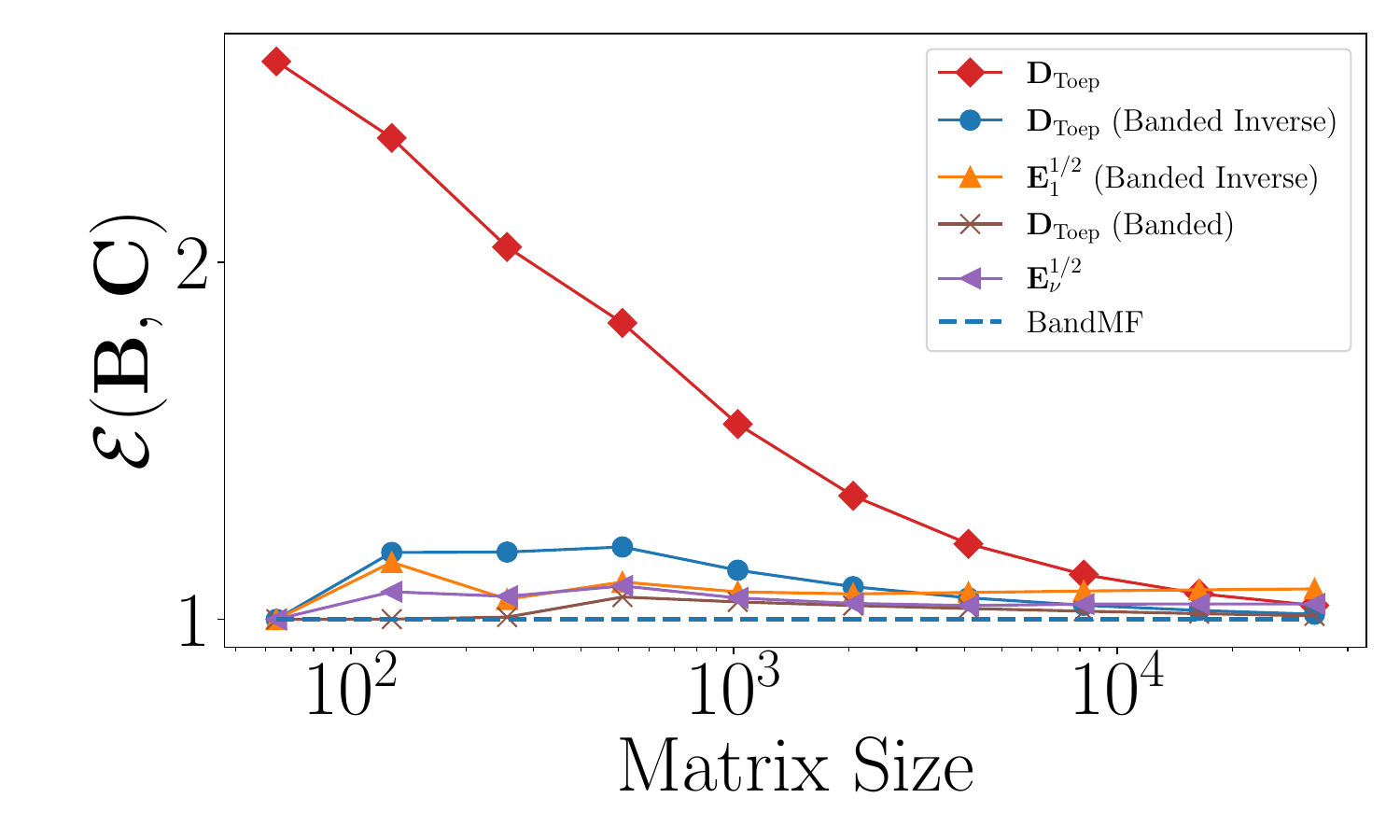}
    \caption{$k=64$}
    \label{fig:multi_k64}
\end{subfigure}
    \caption{RMSE at step $n$ of our proposed factorization and the prefix sum based factorization divided by the RMSE of the numerically optimized BandMF factorization, in the multi-participation setting. The bandwidth for $\mathbf{E}_1^{1/2}$ is set to $p=\lceil\log_2 b\rceil$ and for $\mathbf{D}_{\mathrm{Toep}}$ it is set to $p= b$. The banded version of $\mathbf{D}_{\mathrm{Toep}}$ shows a slight benefit over the proposed banded inverse, with the difference diminishing as the matrix size grows.
}
    \label{fig:rmse_multi}
\end{figure}

\section{Experiments}

Figure~\ref{fig:rmse_multi} plots the error ratios of different factorizations relative to the numerically optimized BandMF. We compare the non-banded, banded, and banded-inverse versions of our proposed factorization with the best-performing banded inverse prefix-sum–based factorization $\mathbf{E}_{1}^{1/2}$ and $\nu$-DP-FTRL ($\mathbf{E}_{\nu}^{1/2}$). The results show that, for sufficiently large matrix size $n$, all variants of $\mathbf{D}_{\mathrm{Toep}}$ achieve the lowest error among the considered factorizations and asymptotically dominate the prefix-sum–based factorization $\mathbf{E}_{1}^{1/2}$ and $\nu$-DP-FTRL $\mathbf{E}_{\nu}^{1/2}$, despite the latter being numerically optimized over the choice of $\nu$.

Table~\ref{tab:factorizations_multi} in the appendix reports numerical results for the expected error $\mathcal{E}_n(\mathbf{B},\mathbf{C})$ across different factorizations in the non-banded, banded, and banded-inverse settings. We evaluate the performance under varying participation numbers $k \in \{4, 16, 64\}$. Our proposed factorization with $\mathbf{C}= \mathbf{D}_{\mathrm{Toep}}$ performs substantially better than the prefix-sum–based factorization $\mathbf{C}=\mathbf{E}_1^{1/2}$. Moreover, applying banded and banded-inverse modifications further improves performance in the multi-participation setting. While the banded variant offers a slight empirical advantage over the banded-inverse variant for our mean-aware factorization, it provides weaker guarantees for the prefix-sum factorization. For $\nu$-DP-FTRL, we did not observe any additional benefit from optimizing over the choice of bandwidth. We argue that optimization over $\nu$ can effectively mimic the same process by automatically downweighting lower subdiagonals.

We further numerically validate our proposed mechanism by comparing it with \citep{george2024continual}; see Figure~\ref{fig:bin_mech_vs_mat_fact_mech}. Their approach is based on pure DP and the Binary Tree mechanism.
To achieve improved bounds for continual mean estimation, they introduce two techniques: \emph{exponential withholding} and \emph{concentration}. The former aggregates multiple samples from users before releasing them, while the latter exploits concentration of Bernoulli random variables to reduce the sensitivity of the Laplace mechanism. While their contribution is truly original, the procedure is inherently tailored to pure DP, Binary Tree mechanisms, and the Bernoulli setting, and it involves large hidden constants in the asymptotics, which limits its practical applicability. Empirically, we find that our method is more practical and significantly outperforms \citep{george2024continual} in numerical experiments, even on Bernoulli data. In Section~\ref{sec:distr_assumptions}, we generalize their bounds to the more general setting of approximate DP, allowing multidimensional sub-Gaussian distributions with unbounded support and arbitrary matrix mechanisms, yielding improved asymptotic rates.

In Figure~\ref{fig:bin_mech_vs_mat_fact_mech}, we plot both the prefix-sum factorization $\mathbf{E}_1^{1/2}$ and the mean-specific factorization $\mathbf{D}_{\text{Toep}}$. As theory predicts, prefix-sum factorization achieves a better asymptotic convergence rate. At the same time, $\mathbf{D}_{\text{Toep}}$ has lower error in the initial steps, and this early advantage dominates, leading to a smaller overall RMSE.
Choosing the optimal method is nontrivial, since we care about the entire error trajectory rather than just the final error. Moreover, no error curve uniformly dominates the others: at any point $t$, one could, in principle, spend the entire privacy budget to compute almost exact average at that step while returning pure noise to all other steps. In practice, however, the choice depends on the application, and one of these factorizations may prove more appealing than the other.

\paragraph{\textbf{Experiment with unknown value $b$.}}
We further evaluate our method on more realistic data; see Figure~\ref{fig:real_dataset}. For this, we use the Credit Card Transactions dataset \citep{kagglecreditcard2024}, which contains spending records from about 1,000 users over six months. This dataset enables us to study continual mean estimation under user-level differential privacy. We set the separation parameter to $b=500$. Ideally, we would use $b=1000$, but insufficient contributions from some users would halt the estimation process. Reducing $b$ increases update frequency but also amplifies noise, as more frequent updates require stronger privacy protection in the worst case\footnote{In general, the choice of the $b$-minimum-separation parameter depends on the experimental setting and on prior knowledge of the number of active users. The associated risks can, however, be mitigated by choosing a smaller value of $b$.}.
We use a generous clipping value of $\xi=1000$, assuming that transactions exceeding this amount are rare. With privacy parameters $\epsilon=10$ and $\delta=5\times 10^{-6}$, we obtain qualitatively accurate estimates of the running mean under both proposed factorizations.

\begin{figure}[t!]
    \centering

    \begin{minipage}[t]{0.48\textwidth}
        \centering
        \includegraphics[width=\linewidth]{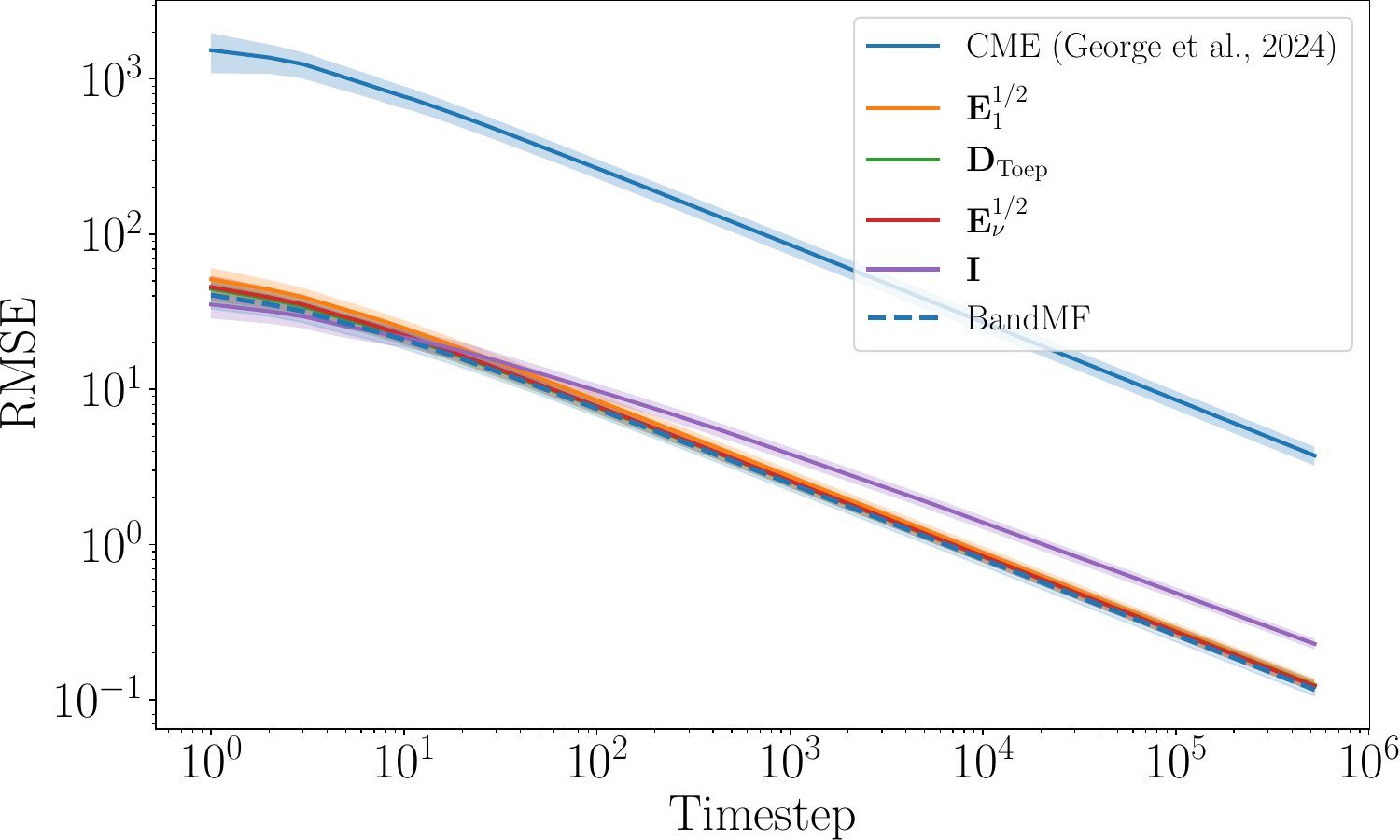}
        \caption{The error of mean estimation over time for $\varepsilon=1$ and $\delta=10^{-6}$ with $n=2^{19}$ and $k = 128$ participations measured in root averaged mean squared error. We used the banded inverse versions of $\mathbf{E}_1^{1/2}$ and $\mathbf{D}_{\mathrm{Toep}}$ with $p=16$ and $\mathbf{E}^{1/2}_{\nu}$ with $\nu=0.5$. 
        }
        \label{fig:bin_mech_vs_mat_fact_mech}
    \end{minipage}
    \hfill
    \begin{minipage}[t]{0.48\textwidth}
        \centering
        \includegraphics[width=\linewidth]{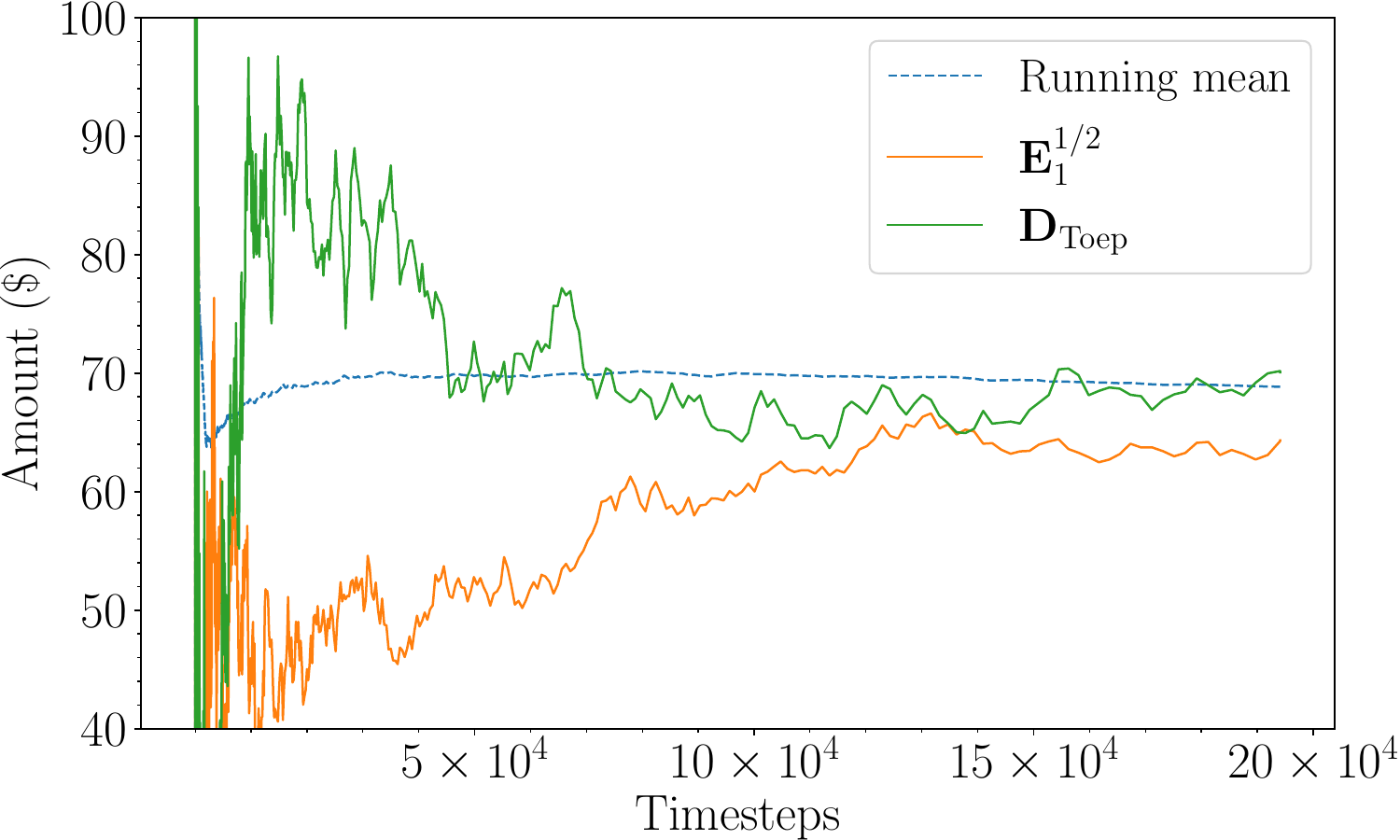}
        \caption{The error of mean estimation for the credit card transactions dataset for $\delta=5\times 10^{-6}$ and $\varepsilon=10$. We set $b=500$ and clip the stream of data by $\xi = 1000$ for the private mean.}
        \label{fig:real_dataset}
    \end{minipage}

\end{figure}
\section{Distributional Assumptions}
\label{sec:distr_assumptions}

So far, our theoretical analysis does not impose a statistical model on the data stream $\mathbf{X}$. We now introduce the following distributional assumption.

\begin{assumption}
    \label{ass:iidbounded}
    Let $\mathbf{X} := (\mathbf{x}_1,...,\mathbf{x}_n)^{\top}\in \mathbb{R}^{n \times d}$ have \iid{} rows with $\mathbb{E}[\mathbf{x}_i] = \boldsymbol{\mu} \in \mathbb{R}^d$ and $\|\mathbf{x}_i\|_\infty \leq \zeta$.
    
\end{assumption}

This assumption lets us compare not only the privacy overhead but also the statistical performance of the estimators introduced above. Beyond the finite-sample RMSE bounds stated below, we further discuss concentration-based refinements in Appendix \ref{sec:concentration}.

\begin{restatable}{lemma}{AlgOneRootSquaredError}
    \label{lm:averagemse}    
    Let $\mathbf{X}$ fulfill Assumption \ref{ass:iidbounded} and $\mathbf{A} = \mathbf{B} \mathbf{C}$ be any factorization. With probability at least $1-2\beta$, 
    \begin{align*}
        \frac{1}{\tau} \sum_{t = 1}^\tau \|\widehat{\boldsymbol{\mu}}_t - \boldsymbol{\mu}\|_2^2
        &\leq \tilde O\left(\frac{d\zeta^2}{\tau} + d^2\zeta^2 \cdot \sigma^2_{\varepsilon,\delta} \cdot \mathcal E_\tau^2(\mathbf{B}, \mathbf{C})\right).  
    \end{align*}
\end{restatable}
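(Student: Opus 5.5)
We split the error at each step into a sampling term and a privacy term. Writing $\widehat{\boldsymbol{\mu}}_t := \widehat{\mathbf{y}}_t = \mathbf{y}_t + (\mathbf{B}\mathbf{Z})_t$, we have
\begin{align*}
\|\widehat{\boldsymbol{\mu}}_t - \boldsymbol{\mu}\|_2^2 \le 2\|\mathbf{y}_t - \boldsymbol{\mu}\|_2^2 + 2\|(\mathbf{B}\mathbf{Z})_t\|_2^2 .
\end{align*}
We bound the average over $t\le\tau$ of each term separately, each with probability at least $1-\beta$, and conclude with a union bound.

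\textbf{Sampling term.} Fix a coordinate $\ell\in[d]$. The entries $x_{j,\ell}-\mu_\ell$ are i.i.d., mean zero, and bounded by $2\zeta$. Hoeffding's inequality gives
\begin{align*}
|y_{t,\ell}-\mu_\ell| \le 2\zeta\sqrt{2\ln(2d\tau/\beta)/t}
\end{align*}
simultaneously for all $t\le\tau$ and all $\ell\le d$, with probability at least $1-\beta$; the union bound is over $d\tau$ events. Summing over coordinates gives $\|\mathbf{y}_t-\boldsymbol{\mu}\|_2^2\le 8d\zeta^2\ln(2d\tau/\beta)/t$. Averaging over $t$ and using $\sum_{t\le\tau}1/t\le 1+\ln\tau$, this term is at most $\tilde O(d\zeta^2/\tau)$.

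\textbf{Privacy term.} By lower triangularity, $\sum_{t\le\tau}\|(\mathbf{B}\mathbf{Z})_t\|_2^2 = \|\mathbf{B}_{:\tau,:\tau}\mathbf{Z}_{:\tau}\|_F^2$. Each column of $\mathbf{Z}_{:\tau}$ is $\mathcal{N}(0,s^2 I_\tau)$ with $s = \sigma_{\varepsilon,\delta}\,\xi\,\mathrm{sens}_{k,b}(\mathbf{C})$. Here $\xi$ is the $\ell_2$ clipping bound, and under Assumption~\ref{ass:iidbounded} we may take $\xi=\sqrt d\,\zeta$; this is where the extra factor of $d$ comes from. The quantity $\|\mathbf{B}_{:\tau,:\tau}\mathbf{Z}_{:\tau}\|_F^2$ is a Gaussian quadratic form with expectation $d\,s^2\|\mathbf{B}_{:\tau}\|_F^2$. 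The Laurent--Massart (Hanson--Wright) inequality bounds it by
\begin{align*}
s^2\big(d\|\mathbf{B}_{:\tau}\|_F^2 + 2\sqrt{d\ln(1/\beta)}\,\|\mathbf{B}_{:\tau}^\top\mathbf{B}_{:\tau}\|_F + 2\|\mathbf{B}_{:\tau}\|_2^2\ln(1/\beta)\big) \le 5\,d\,s^2\|\mathbf{B}_{:\tau}\|_F^2\ln(1/\beta)
\end{align*}
with probability at least $1-\beta$. Dividing by $\tau$ and substituting $\xi^2=d\zeta^2$ gives $O(d^2\zeta^2\sigma^2_{\varepsilon,\delta}\mathcal{E}_\tau^2(\mathbf{B},\mathbf{C})\ln(1/\beta))$.

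The main point requiring care is the privacy term. The noise is correlated through $\mathbf{B}$, so it must be handled as a quadratic form rather than with per-step scalar Gaussian tails. The argument also needs to apply to any factorization, which it does because it uses only $\|\mathbf{B}_{:\tau}\|_2\le\|\mathbf{B}_{:\tau}\|_F$. Combining the two terms with the union bound yields the claimed bound with probability at least $1-2\beta$, with logarithmic factors absorbed into $\tilde O$.
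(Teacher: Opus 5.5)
Your proof is correct and has the same skeleton as the paper's. You split off the sampling error, take $\xi=\sqrt d\,\zeta$ so the noise scale is $s^2=d\zeta^2\sigma^2_{\varepsilon,\delta}\mathrm{sens}^2_{k,b}(\mathbf{C})$, control each part with probability $1-\beta$, and union bound. The paper applies the triangle inequality to $\|\widehat{\mathbf{Y}}_{:\tau}-\mathbf{1}_\tau\boldsymbol{\mu}^\top\|_F$ and squares at the end, which is equivalent to your per-step $(a+b)^2\le 2a^2+2b^2$.

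The substantive difference is the concentration tool for the privacy term. The paper does not work with the quadratic form. It notes that $\mathbf{Z}\mapsto\|\mathbf{B}_{:\tau}\mathbf{Z}\|_F$ is $\|\mathbf{B}_{:\tau}\|_{op}$-Lipschitz, bounds its mean by $s\sqrt d\,\|\mathbf{B}_{:\tau}\|_F$ via Jensen, and invokes Gaussian Lipschitz concentration. This gives $\|\mathbf{B}_{:\tau}\mathbf{Z}\|_F\le s(\sqrt d\,\|\mathbf{B}_{:\tau}\|_F+\|\mathbf{B}_{:\tau}\|_{op}\sqrt{2\ln(1/\beta)})$ with no spectral bookkeeping. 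You instead apply Laurent--Massart to the weighted chi-square obtained by diagonalizing $\mathbf{B}_{:\tau}^\top\mathbf{B}_{:\tau}$ (its eigenvalues, each repeated $d$ times). That requires the trace, Frobenius and operator norms of the form, but it gives explicit constants and reuses the tool the paper employs for the per-step bound in Theorem~\ref{thm:StatErrorBound}. Before loosening, the two bounds agree up to constants, since $2\sqrt{d\ln(1/\beta)}\,\|\mathbf{B}_{:\tau}^\top\mathbf{B}_{:\tau}\|_F\le 2\sqrt{d\ln(1/\beta)}\,\|\mathbf{B}_{:\tau}\|_{op}\|\mathbf{B}_{:\tau}\|_F\le d\|\mathbf{B}_{:\tau}\|_F^2+\|\mathbf{B}_{:\tau}\|_{op}^2\ln(1/\beta)$.

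For the sampling term, per-coordinate Hoeffding is the bounded-variable form of the paper's Corollary~\ref{cor:concsubgaussmean}. Your allocation of $\beta/(d\tau)$ per coordinate and step actually delivers probability $1-\beta$. The paper's $\beta/t$ at step $t$ sums to $\beta H_\tau$ instead.

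Two small remarks. First, you do not need lower triangularity: $\sum_{t\le\tau}\|(\mathbf{B}\mathbf{Z})_t\|_2^2=\|\mathbf{B}_{:\tau}\mathbf{Z}\|_F^2$ holds with $\mathbf{B}_{:\tau}$ the first $\tau$ rows of any $\mathbf{B}$, and that is what justifies ``any factorization''. Second, collapsing to $5ds^2\|\mathbf{B}_{:\tau}\|_F^2\ln(1/\beta)$ requires $\ln(1/\beta)\ge 1$ (otherwise use $\max\{1,\ln(1/\beta)\}$); the paper relies on this implicitly as well.
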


Applying this lemma with $\mathbf{B} = \mathbf{A}\mathbf{D}^{-1}_{\mathrm{Toep}}$ and $\mathbf{C} = \mathbf{D}_{\mathrm{Toep}}$ yields the following corollary as an immediate consequence. 

\begin{corollary}
    \label{cor:averagemse1}
    Under the same assumptions as in Lemma \ref{lm:averagemse} and $\mathbf{C}=\mathbf{D}_{\mathrm{Toep}}$, with probability at least $1-2\beta$, 
    \begin{align*}
        \frac{1}{\tau} \sum_{t = 1}^\tau \|\widehat{\boldsymbol{\mu}}_t - \boldsymbol{\mu}\|_2^2
        &\leq \tilde O\left(\frac{d\zeta^2}{\tau} + \frac{d^2\zeta^2k}{\tau\varepsilon^2} \right).  
    \end{align*}
\end{corollary}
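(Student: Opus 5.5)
The plan is to plug the factorization $\mathbf{B}=\mathbf{A}\mathbf{D}_{\mathrm{Toep}}^{-1}$, $\mathbf{C}=\mathbf{D}_{\mathrm{Toep}}$ into Lemma~\ref{lm:averagemse}. The statistical term $d\zeta^2/\tau$ is then inherited verbatim, so all the work goes into the privacy term. It remains to show
\[
d^2\zeta^2\,\sigma^2_{\varepsilon,\delta}\,\mathcal{E}_\tau^2(\mathbf{B},\mathbf{C}) = \tilde O\!\left(\frac{d^2\zeta^2 k}{\tau\varepsilon^2}\right).
\]
By Lemma~\ref{lem:GaussianMechanism}, $\sigma^2_{\varepsilon,\delta}=2\ln(1.25/\delta)/\varepsilon^2$. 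The factor $\ln(1/\delta)$ is absorbed into $\tilde O$, so it suffices to prove $\mathcal{E}_\tau^2(\mathbf{B},\mathbf{C})=\tilde O(k/\tau)$.

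For this I use the non-banded $\mathbf{D}_{\mathrm{Toep}}$ row of Table~\ref{tab:multi_part_results}, established in Theorem~\ref{thm:RMSE_multi}. It gives
\[
\mathcal{E}_\tau^2 = O\!\left(\frac{k}{\tau} + \frac{k^2\ln k\ln n}{n\tau}\right).
\]
Since $k\le n$, the second term is at most $k\ln k\ln n/\tau$, which is $\tilde O(k/\tau)$. Substituting back gives the corollary, with the same probability $1-2\beta$ inherited from the lemma; the $\ln(1/\beta)$ factors are also hidden in $\tilde O$.

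If a self-contained argument is preferred, I would bound the two factors of $\mathcal{E}_\tau$ separately.

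\emph{Sensitivity.} $\mathbf{D}_{\mathrm{Toep}}$ is lower-triangular Toeplitz with decreasing nonnegative entries $1/(m+1)$, so Theorem~\ref{thm:sensTheorem} applies. Expanding the squared norm of the sum of the $k$ shifted columns gives diagonal terms $k\sum_m 1/(m+1)^2=O(k)$. The cross terms are $\sum_{i<j}\sum_m \frac{1}{(m+1)(m+1+(j-i)b)}$, each of size $O(\ln(\cdot)/((j-i)b))$. Summing over pairs yields $O(k^2\ln k\ln n/n)$, using $b\approx n/k$.

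\emph{Frobenius norm of $\mathbf{B}$.} Write $\mathbf{B}=\mathbf{D}\mathbf{E}_1\mathbf{D}_{\mathrm{Toep}}^{-1}$. By Lemma~\ref{lem:dtoep-inverse}, $(\mathbf{E}_1\mathbf{D}_{\mathrm{Toep}}^{-1})_{i,j}=1-\sum_{m=1}^{i-j}|G_m|=\sum_{m>i-j}|G_m|$. By \eqref{eq:greg_first_prop} this tail is $\Theta(1/\ln(i-j+2))$ and in particular bounded by $1$. Hence row $i$ of $\mathbf{B}$ has squared norm at most $\frac{1}{i^2}\cdot i=\frac{1}{i}$, and $\|\mathbf{B}_{:\tau}\|_F^2\le\sum_{i\le\tau}1/i=O(\ln\tau)$. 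Dividing by $\tau$ gives $O(\ln\tau/\tau)$, and this logarithm is absorbed into $\tilde O$.

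The main obstacle is this cross-term sensitivity estimate, which requires the careful harmonic-sum bookkeeping. Relying on Theorem~\ref{thm:RMSE_multi} avoids it, and the corollary is then immediate.
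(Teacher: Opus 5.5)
Your proposal is correct and follows the paper's route. The paper obtains the corollary in the same way: it substitutes $\mathbf{B}=\mathbf{A}\mathbf{D}_{\mathrm{Toep}}^{-1}$, $\mathbf{C}=\mathbf{D}_{\mathrm{Toep}}$ into Lemma~\ref{lm:averagemse}, uses the non-banded $\mathbf{D}_{\mathrm{Toep}}$ bound on $\mathcal{E}_\tau$ from Lemma~\ref{lem:ADtpInvMulti} (your $k^2/n\le k$ step absorbs the second term into $\tilde O(k/\tau)$), and hides the $\ln(1/\delta)$ coming from $\sigma^2_{\varepsilon,\delta}$. Your self-contained variant reproduces that lemma's sensitivity computation, with a cruder $O(\ln\tau)$ Frobenius bound in place of the paper's $O(1)$ via Lemma~\ref{lem:InvLogSqrBound}, which is harmless inside $\tilde O$.
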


The distributional assumption enables us also to exploit concentration of $\mathbf{x}_1,\ldots,\mathbf{x}_n$ to reduce the privacy error for bounded observations, and it also extends to unbounded (e.g., sub-Gaussian) observations. To this end, we combine the Matrix Factorization mechanism with the withhold--release scheme of \citet{george2024continual}. The scheme runs $L+1$ private mechanisms with $L = \lfloor \log_2(k) \rfloor$ and withholds observations in an exponential schedule so that, for each $l \in \{0,...,L\}$, it aggregates $2^{l-1}$ observations before privatizing. These averages concentrate in $\ell_2$-balls of radius $\tilde O\!\big(\sqrt{d\zeta^2/2^{l-1}}\big)$ around $\boldsymbol{\mu}$, reducing the noise needed after projecting to that required for such a ball. Since $\boldsymbol{\mu}$ is unknown, \citet{george2024continual} use a two-stage procedure \citep{smith2011privacy, levy2021uldp} that first computes a crude mean estimate with $\ell_2$-error $\tilde O\!\big(\sqrt{d\zeta^2/2^{l-1}}\big)$ and then uses it to center the projection balls. Their crude estimator relies on an $(\varepsilon,0)$-DP Exponential mechanism; because we allow approximate DP, we replace it with an improved $(\varepsilon,\delta)$-DP estimator based on private histograms \citep{karwa2018hist, kamath2020meanest, agarwal2025private, avellamedina2025meanest}, which in turn supports unbounded observations with sub-Gaussian entries.

We present the resulting estimator relying on $L+1$ Matrix Factorization mechanisms with prefix-sum matrices $\mathbf{E}_1 = \mathbf{B} \mathbf{C} \in \mathbb{R}^{b \times b}$ in the appendix (Algorithm \ref{alg:continualestimator}). There, we also extensively discuss Assumption \ref{ass:diversity} -- a "diversity condition" as in \citep{george2024continual} that the estimator's theoretical analysis puts on the stream of observations $\mathbf{X}$. In the item-level setting, the crude estimators above require a minimum number $K_c$ of observations to achieve $\ell_2$-error $O(\sqrt{d\zeta^2})$ with constant probability. As in \citet{george2024continual}, we apply them to averages of $2^{l-1}$ observations. The diversity condition ensures that one can form at least $K_c$ such averages without using observations from the same user in multiple ones, so that changing one user's data affects at most one average, thereby preserving user-level privacy.

To compare Algorithm \ref{alg:continualestimator} and Algorithm \ref{alg:continual-mean-estimation}, we specialize to an arrival pattern that satisfies both the $b$-min-separation and diversity conditions. Concretely, we assume a round-robin pattern in which $b$ users each contribute $k$ observations, i.e., user $u\in[b]$ provides $\mathbf{x}_{b+u},\mathbf{x}_{2b+u},\ldots,\mathbf{x}_{(k-1)b+u}$. 

\begin{restatable}{lemma}{AlgTwoRootSquaredErrorSimplified}
    \label{lm:averagemse2}
    Let $\mathbf{X}$ fulfill Assumption \ref{ass:iidbounded}, $n = kb$ and users contribute in a round-robin pattern. Let $K_c$ be as in Assumption \ref{ass:sufficientdiversity} and 
    $b \geq 2K_c$. Let $\mathbf{E}_1 \in \mathbb{R}^{b \times b}$ be a prefix-sum matrix and $\mathbf{B} = \mathbf{C} = \mathbf{E}_1^{1/2}$. With probability at least $1-5\beta$,
    \begin{align*}
        \frac{1}{\tau} \sum_{t = 1}^\tau \|\tilde{\boldsymbol{\mu}}_t - \boldsymbol{\mu}\|_2^2
        &\leq \tilde O\left(\frac{d\zeta^2}{\tau} + \frac{d^2\zeta^2}{\tau\varepsilon^2}\right). 
    \end{align*}
    
\end{restatable}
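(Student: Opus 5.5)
The plan is to split the error of the withhold--release estimator $\tilde{\boldsymbol{\mu}}_t$ into a statistical part and a privacy part, and bound each one level $l\in\{0,\dots,L\}$ at a time. At step $t$, Algorithm~\ref{alg:continualestimator} outputs a weighted combination of the privatized running averages from the levels that are active up to $t$. Level $l$ releases block averages of $2^{l-1}$ observations, and these are fed into a prefix-sum factorization $\mathbf{E}_1=\mathbf{E}_1^{1/2}\mathbf{E}_1^{1/2}$ of size $b\times b$. Write $\tilde{\boldsymbol{\mu}}_t-\boldsymbol{\mu}=(\bar{\mathbf{x}}_t-\boldsymbol{\mu})+(\text{projection bias})+(\text{noise})$.

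\textbf{Statistical term.} The first term is handled exactly as in Lemma~\ref{lm:averagemse}. A vector Hoeffding bound with a union bound over $t\le\tau$ gives $\|\bar{\mathbf{x}}_t-\boldsymbol{\mu}\|_2^2\le\tilde O(d\zeta^2/t)$ with probability $1-\beta$. Averaging over $t$ yields $\tilde O(d\zeta^2/\tau)$, since $\frac1\tau\sum_t 1/t=O(\ln\tau/\tau)$.

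\textbf{Crude centers and no clipping.} For the second term I would invoke the guarantee of the histogram-based crude estimator. The condition $b\ge 2K_c$, together with the round-robin pattern, lets us form at least $K_c$ disjoint averages of $2^{l-1}$ observations from distinct users, so Assumption~\ref{ass:sufficientdiversity} holds. With probability $1-\beta$, each center is then within $\tilde O(\sqrt{d\zeta^2/2^{l-1}})$ of $\boldsymbol{\mu}$. By concentration, every block average lies within $\tilde O(\sqrt{d\zeta^2/2^{l-1}})$ of its center, again with probability $1-\beta$. Hence the projection is the identity, and there is no bias.

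\textbf{Privacy term.} This is the main step. At level $l$ a single user affects at most $k/2^{l-1}$ block averages, roughly one per block in the round-robin pattern. The projection radius is $r_l=\tilde O(\sqrt{d\zeta^2/2^{l-1}})$.

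1. For $\mathbf{C}=\mathbf{E}_1^{1/2}$ of size $b\times b$, the multi-participation sensitivity with $k_l=\min(k/2^{l-1},b)$ participations is $O(\sqrt{k_l(\ln b+k_l)})$, from the non-banded row of Table~\ref{tab:multi_part_results} in Theorem~\ref{thm:RMSE_multi}.
2. Scaling by $r_l$ and by $\sigma_{\varepsilon,\delta}\sim \sqrt{L}/\varepsilon$ (splitting the budget over $L+1$ levels by composition) gives the noise level.
3. The noise in the released prefix sum at a step using $m$ blocks is divided by the count $m2^{l-1}$ of observations.
4. Gaussian concentration with a union bound (probability $1-\beta$) then bounds the squared noise contribution per step by roughly
\[
\tilde O\!\left(\frac{d^2\zeta^2}{\varepsilon^2}\cdot\frac{k_l(\ln b+k_l)\,m}{2^{l-1}(m2^{l-1})^2}\right).
\]
Here the factor $d$ appears once from the radius and once from the dimension.

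The key cancellation is that at the dominant level $2^{l-1}\approx t/b$ the factor $k_l^2/2^{3(l-1)}$ collapses. Summed over $t\le\tau$ and divided by $\tau$, this should give $\tilde O(d^2\zeta^2/(\tau\varepsilon^2))$ with no $k$ dependence, because the larger withholding exactly compensates for the repeated participation. I expect this bookkeeping to be the main obstacle: matching which level serves step $t$, and checking that the $k_l^2$ term is absorbed. If the naive count fails, I would use that under round-robin each level effectively sees single-participation after withholding, which gives $k_l=O(1)$ and the bound directly.

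A final union bound over the failure events gives total probability $1-5\beta$ and completes the argument.
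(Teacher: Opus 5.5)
Your privacy step rests on a misreading of Algorithm~\ref{alg:continualestimator}. A user does not contribute $k/2^{\ell-1}$ block averages to level $\ell$. The algorithm forms, once per user, the single average $\mathbf{z}_{\ell,u}$ of that user's observations with indices $\mathbb{I}_\ell=\{2^{\ell-1}+1,\dots,2^\ell\}$. So each of the $L+1$ mechanisms receives exactly one row per user, for every arrival pattern, not because of round-robin. Each mechanism is therefore a single-participation $b\times b$ mechanism with $\operatorname{sens}(\mathbf{C})=\|\mathbf{E}_1^{1/2}\|_{1\to2}=O(\sqrt{\ln b})$; this is what Lemma~\ref{lm:dpcontinualestimator} and the paper's bound use. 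With your multi-participation sensitivity $k_l(\ln b+k_l)$, the claimed cancellation does not occur. In your own per-step expression, a step served by blocks of size one ($2^{l-1}=1$, $k_l=\min(k,b)$) with $m=1$ already contributes at least $k_l^2\,d^2\zeta^2/\varepsilon^2$ up to logarithms. Nothing at later levels offsets this, so you end with a factor $k_l^2$ rather than a $k$-free bound. Your fallback names the right fact, but it is the whole argument rather than a back-up, and the bookkeeping it needs is missing.

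Even with single participation, you still need the following.
\begin{enumerate}[label=(\roman*)]
\item The estimator $\tilde{\boldsymbol{\mu}}_t$ combines all active levels with weights $|\mathbb{I}_\ell|$. It divides by the total released count $N_t=\sum_{\ell\in\mathbb{A}}|\mathbb{I}_\ell||\mathbb{M}_{t,\ell}|\ge t/2$ (Lemma~\ref{lm:activeobservations}). No single level ``serves'' step $t$, and there is no per-level division by $m2^{l-1}$.
\item Since $\sigma_\ell\propto\sqrt{d\zeta^2/|\mathbb{I}_\ell|}$, the level-$\ell$ noise $|\mathbb{I}_\ell|\,\|\mathbf{\Xi}_\ell^\top\mathbf{b}_{|\mathbb{M}_{t,\ell}|}\|_2/N_t$ is of order $\sqrt{|\mathbb{I}_\ell|}\,d\zeta\,\sigma_{\varepsilon',\delta'}/t$ up to logarithms. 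Using $|\mathbb{I}_\ell|\le k_t$ and $|\mathbb{A}|\le L+1$, the squared privacy error at step $t$ is $\tilde O(k_t d^2\zeta^2/(t^2\varepsilon^2))$. The absence of $k$ then comes from $\sum_{t\le\tau}k_t/t^2\le H_\tau/b+\pi^2/6$ with $k_t=\lceil t/b\rceil$, not from a cancellation between $k_l^2$ and $2^{3(l-1)}$.
\item Your statistical term must compare with the mean of the $N_t$ released i.i.d.\ observations, not $\bar{\mathbf{x}}_t$, because withheld observations do not enter $\tilde{\boldsymbol{\mu}}_t$. Again $N_t\ge t/2$ gives $\tilde O(d\zeta^2/t)$.
\item The diversity condition fails for $t<2K_c$, since level $0$ activates only once $2K_c$ distinct users have arrived. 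A sum starting at $t=1$ therefore needs a separate bound on these warm-up steps. The paper activates levels $0$ and $1$ at initialization with $\hat\Pi_\ell=[\pm\zeta]^d$, which is where boundedness of the data enters.
\end{enumerate}
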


Note that Appendix \ref{sec:concentration} provides qualitatively similar bounds for unbounded observations and permits arbitrary factorizations $\mathbf{E}_1=\mathbf{B}\mathbf{C}$. Up to constants and logarithmic factors, the privacy cost in Lemma \ref{lm:averagemse2} is strictly smaller than in Corollary \ref{cor:averagemse1} whenever $k>1$: its privacy term is smaller by a factor $k$. Consequently, in the large-$k$ multi-participation regime, the exponential withhold--release scheme can outperform Algorithm \ref{alg:continual-mean-estimation}. Yet, for small $k$ constants and logarithmic terms become consequential, as reflected in Figure \ref{fig:bin_mech_vs_mat_fact_mech} and in the refined bounds in the appendix (e.g., Theorem \ref{thm:rmse}). Thus, while exponential withhold--release is asymptotically preferable for large $k$, Algorithm \ref{alg:continual-mean-estimation} remains an attractive default due to its simplicity and better performance once constants and logarithmic terms are taken into account.

\section{Conclusion and Future Directions}
We studied the problem of continual mean estimation under user-level differential privacy. Our approach builds on the Matrix Factorization mechanism, for which we propose a new factorization tailored to mean estimation. This design is both efficient and accurate compared to prior work. In particular, our factorization achieves lower RMSE than alternative explicit factorizations, both asymptotically and in numerical experiments. By combining matrix factorization with exponential withholding and sub-Gaussian concentration, we obtain improved asymptotic bounds, generalizing the continual mean estimation results of \citet{george2024continual} from pure DP to approximate DP, from the Binary Tree mechanism to arbitrary Matrix Factorization mechanisms, and from scalar Bernoulli observations to multidimensional sub-Gaussian samples with unbounded support.

One interesting direction for future research is data heterogeneity. Although our algorithm does not impose restrictions on the samples, our concentration bounds rely on the assumption that they are i.i.d.\ sub-Gaussian. Extending the analysis to more general heterogeneous distributions, following recent work on heterogeneous mean estimation~\citep{cummings2022mean}, would be a valuable next step.

\begin{ack}
We thank Jalaj Upadhyay and Joel Andersson for their valuable feedback on the early version of the paper.

Nikita Kalinin: This work is
supported in part by the Austrian Science Fund (FWF) [10.55776/COE12]. A part of this work was done while visiting  University of Copenhagen. 
\end{ack}

\bibliographystyle{plainnat}
\bibliography{bibliography}


\appendix
\onecolumn
\appendix

\section{Additional Experiments and the Full Algorithm}

\begin{algorithm}[h]
  \caption{User-Level DP Mean Estimation with Mean Aware Matrix Factorization}
  \label{alg:continual-mean-estimation}
  \begin{algorithmic}[1]
    \Require  Stream of data $\mathbf{X}=(\mathbf{x}_1,\dots,\mathbf{x}_n)^\top \in \mathbb{R}^{n\times d}$,
    privacy parameters $(\varepsilon,\delta)$, clipping norm $\xi$, bandwidth $p$, separation parameter $b$.

    \State $g_0, c^{p}_0 \gets 1, 1$
      \State $g_i \gets \displaystyle\sum_{j = 0}^{i - 1} g_{j}\,\frac{-1}{i + 1 - j} \qquad \forall i \in [1, p-1]$  \Comment{ Compute $i$-th Gregory coefficient.}

      \State $c_i^{p} \gets \displaystyle\!\!\!\! \sum_{j = 1}^{\min(p - 1, i)} \!\!\!\!\!\!
      (-g_{j})c_{i - j}^{p} \qquad \forall i \in [1, n - 1]$ \Comment{Compute $i$-th diagonal coefficient of $\mathbf{C}_{\text{Toep}}^{p}$.}
    \State $S \gets \displaystyle \sqrt{\sum_{i = 0}^{n - 1}\left(\sum_{j = 0}^{\left\lceil \frac{i + 1}{b} \right\rceil - 1} 
    \!\!\!\!\!\!
    c^{p}_{i - jb}\right)^2}$ \Comment{Compute $b$-min-separation sensitivity.}

    \State $\sigma \gets \sigma_{\varepsilon,\delta}\cdot \xi \cdot S$

    \For{$t = 1$ to $n$}
      \State Observe $\mathbf{x}_t \in \mathbb{R}^d$ with $\|\mathbf{x}_t\|_2 \le \xi$ and sample $\mathbf{z}_t \sim \mathcal{N}(\mathbf{0},\,\sigma^2\mathbf{I}_{d})$
      \State $\displaystyle\mathbf{u}_t \gets \mathbf{x}_t + \!\!\!\!\!\!\sum_{j=0}^{\min(p - 1, t - 1)}\!\!\!\!\!\! g_{j}\,\mathbf{z}_{t - j}$ \Comment{Compute the $t$-th row of $\mathbf{X}+\mathbf{C}^{-1}\mathbf{Z}$.}
      
      \State $\widehat{\boldsymbol{\mu}}_t \gets \frac{1}{t}\displaystyle\sum_{i=1}^{t} \mathbf{u}_i$
      \State \textbf{Release:} $\widehat{\boldsymbol{\mu}}_t$
    \EndFor
  \end{algorithmic}
\end{algorithm}

\begin{table}[h!]
  \centering
  \caption{$\mathcal{E}_n(\mathbf{B},\mathbf{C})$ for various factorizations at \(n=8192\). For the Banded and Banded Inverse factorizations, we set $p = b$ for 
$\mathbf{D}_{\mathrm{Toep}}$ and $\mathbf{E}_{\nu}^{1/2}$, and 
$p = \lceil \log_2 b \rceil$ for $\mathbf{E}_1^{1/2}$. We find that our choice of bandwidth is close to numerically optimal. Using the numerically optimized BandMF with optimal bandwidth $p=b$ as a lower bound, the proposed factorization achieves a near-optimal factorization error without incurring the computational burden.
}
  \label{tab:factorizations_multi}
  \begin{tabular}{c l c c c}
    \toprule
    \textbf{} & $\mathbf{C}$ & \({k=4}\) & \({k=16}\) & \({k=64}\) \\
    \midrule
    \multirow{4}{*}{Not Banded}
      & $\mathbf{I}$
        & 0.068 & 0.137 & 0.274 \\
      & $\mathbf{E}_1^{1/2}$
        & 0.072 & 0.221 & 0.813 \\
      & $\mathbf{D}_{\mathrm{Toep}}$
        &\textbf{0.042} & 0.086 & 0.186 \\
      & $\mathbf{E}_{\nu}^{1/2}$
        & 0.043 & 0.086 & 0.172 \\
    \midrule
    \multirow{3}{*}{Banded}
      & $\mathbf{E}_{1}^{1/2}$ 
        & 0.047 & 0.094 & 0.196 \\
      &  $\mathbf{D}_{\mathrm{Toep}}$ 
        & \textbf{0.042} & \textbf{0.084} & \textbf{0.169} \\
      & $\mathbf{E}_{\nu}^{1/2}$ 
        & 0.043 & 0.086 & 0.172 \\
      & BandMF & \textcolor{red}{0.041} & \textcolor{red}{0.081} & \textcolor{red}{0.164}\\
     
    \midrule
    \multirow{3}{*}{Banded Inverse}
      &  $\mathbf{E}_{1}^{1/2}$ 
        & 0.045 & 0.089 & 0.179 \\
      &  $\mathbf{D}_{\mathrm{Toep}}$ 
        & \textbf{0.042} & 0.085 & 0.172 \\
      & $\mathbf{E}_{\nu}^{1/2}$ 
        & 0.043 & 0.086 & 0.172 \\
    \bottomrule
  \end{tabular}
\end{table}
\clearpage

\section{Concentration Bounds for the Mean}

\label{sec:concentration}

\begingroup
\everymath{\displaystyle}
\begin{table*}[t]
  \centering
  \caption{Concentration bounds of $\|\widehat{\boldsymbol{\mu}}_t - \boldsymbol{\mu}\|_2$ for selected factorizations and $\mathrm{Bernoulli}(\boldsymbol{\mu})$ data. All bounds hold with probability $1-2\beta$.}
  \label{tab:concentration_multi}
\makebox[\textwidth][c]{%
  \begin{tabular}{c l c}
    \toprule
    \textbf{} & $\mathbf{C}$ & \(\|\widehat{\boldsymbol{\mu}}_t - \boldsymbol{\mu}\|_2\) \\
    \midrule
    \multirow{2}{*}{\rotatebox[origin=c]{90}{Not Banded}}
      & $\mathbf{D}_{\mathrm{Toep}}$
        & $O\left(\sqrt{\frac{d\zeta^2}{t}\ln\frac{d}{\beta}} +\frac{\sqrt{d\ln \frac{1}{\delta}}}{\varepsilon \sqrt{t}\ln t}\left(\sqrt{k} +  \frac{k}{\sqrt{n}}\sqrt{\ln k \ln n}\right) \left(\sqrt{d} + \sqrt{\ln\tfrac{1}{\beta}}\right)\right)$ \\
      & $\mathbf{E}_1^{1/2}$
        & $O\left(\sqrt{\frac{d\zeta^2}{t}\ln\frac{d}{\beta}} +\frac{\sqrt{d\ln t \ln \tfrac{1}{\delta}}\left(\sqrt{k\ln n}+k\right) }{\varepsilon t}\left(\sqrt{d} + \sqrt{\ln\tfrac{1}{\beta}}\right)\right)$ \\
      
    \midrule
    \multirow{2}{*}[\dimexpr2.5ex]{\rotatebox[origin=c]{90}{Banded Inverse}}
      &  $\mathbf{D}_{\mathrm{Toep}}$ 
        & $O\left(\sqrt{\frac{d\zeta^2}{t}\ln\frac{d}{\beta}} +\frac{\sqrt{dk\ln \frac{1}{\delta}}}{\varepsilon \sqrt{t \ln (n/k)}} \left(1 + \sqrt{\frac{n}{kt}}\right)\left(\sqrt{d} + \sqrt{\ln\tfrac{1}{\beta}}\right)\right)$ \\
        & $\mathbf{E}_{1}^{1/2}$ 
        & $\!\!\!O\left(\sqrt{\frac{d\zeta^2}{t}\ln\frac{d}{\beta}} +\frac{\sqrt{dk}}{\varepsilon\sqrt{t}}\sqrt{\ln \frac{1}{\delta}}\left(\sqrt{\ln\ln(\tfrac{n}{k})} + \sqrt{\frac{k\ln (\tfrac{n}{k})}{n}} \right) \left(\frac{\sqrt{k}}{\sqrt{\ln (\tfrac{n}{k})}}+\frac{\sqrt{\ln\ln (\tfrac{n}{k})}}{\sqrt{t}}\right)\left(\sqrt{d} + \sqrt{\ln\tfrac{1}{\beta}}\right)\right)$ \\
    \bottomrule
  \end{tabular}
  }
\end{table*}
\endgroup

The RMSE analysis in the main body of the paper quantified the average performance of our estimator. However, RMSE alone does not capture how the estimation error fluctuates around its expected value. To address this, we now establish high-probability concentration bounds for the private running mean, providing a finer characterization of its accuracy over time. Specifically, we derive high-probability concentration bounds on the error of the private running mean $\widehat{\boldsymbol{\mu}}_t$, relative to the true mean $\boldsymbol{\mu} = \mathbb{E}[\mathbf{x}_k]$ and the non-private empirical mean $\boldsymbol{\mu}_t = \tfrac{1}{t}\sum_{k=1}^t \mathbf{x}_k$ where $\mathbf{x}_1,\dots,\mathbf{x}_t$ are the private data. We first state Theorem~\ref{thm:StatErrorBound} which gives a bound on the value of $\|\widehat{\boldsymbol{\mu}}_t - \boldsymbol{\mu}_t\|_2$ for a general factorization $\mathbf{A} = \mathbf{BC}$.

\begin{restatable}{theorem}{StatErrorBound}\label{thm:StatErrorBound}

    Let $\mathbf{X} := (\mathbf{x}_1,...,\mathbf{x}_n)^{\top}\in \mathbb{R}^{n \times d}$ have $\|\mathbf{x}_i\|_\infty \leq \zeta$ and let $\mathbf{A}=\mathbf{BC}$ be any factorization  with $\mathbf{A}, \mathbf{B}, \mathbf{C} \in \mathbb{R}^{n\times n}$. Further, let $\mathbf{b}_t$ be the $t$-th row of matrix $\mathbf{B}$ and $\mathrm{sens}_{k,b}(\mathbf{C})$ be a b-min-separation sensitivity in  multi participation. Then, with probability at least $1-\beta$
    \begin{align}
        \|\widehat{\boldsymbol{\mu}}_t - \boldsymbol{\mu}_t\|_2 = O\left(\sqrt{d\zeta^2} \ \sigma_{\varepsilon,\delta} \ \mathrm{sens}_{k,b}(\mathbf{C}) \|\mathbf{b}_t\|_2\left(\sqrt{d}+ \sqrt{\ln \tfrac{1}{\beta}}\right)\right). 
    \end{align}

\end{restatable}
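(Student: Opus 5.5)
The error $\widehat{\boldsymbol{\mu}}_t - \boldsymbol{\mu}_t$ is exactly the $t$-th row of the noise term in~\eqref{eq:matrix_factorization_step}. Since $\widehat{\mathbf{Y}} = \mathbf{A}\mathbf{X} + \mathbf{B}\mathbf{Z}$ and $(\mathbf{A}\mathbf{X})_t = \boldsymbol{\mu}_t$, we have $\widehat{\boldsymbol{\mu}}_t - \boldsymbol{\mu}_t = \mathbf{b}_t^\top \mathbf{Z} \in \mathbb{R}^d$. The first step is to identify the noise scale. Because $\|\mathbf{x}_i\|_\infty \le \zeta$, every row satisfies $\|\mathbf{x}_i\|_2 \le \sqrt{d}\,\zeta$, so we may take the clipping norm $\xi = \sqrt{d\zeta^2}$ in the calibration. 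The entries of $\mathbf{Z}$ are then i.i.d.\ $\mathcal{N}(0, s^2)$ with $s = \sigma_{\varepsilon,\delta}\,\sqrt{d\zeta^2}\,\mathrm{sens}_{k,b}(\mathbf{C})$. This is where the factor $\sqrt{d\zeta^2}$ in the statement comes from.

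Next we identify the law of the error. Each coordinate $j$ of $\mathbf{b}_t^\top\mathbf{Z}$ equals $\sum_i (\mathbf{b}_t)_i Z_{i,j}$, a linear combination of independent Gaussians. Different coordinates use disjoint columns of $\mathbf{Z}$ and are therefore independent. Hence
\[
\mathbf{b}_t^\top \mathbf{Z} \sim \mathcal{N}\bigl(0,\, s^2 \|\mathbf{b}_t\|_2^2\, \mathbf{I}_d\bigr),
\]
so $\|\widehat{\boldsymbol{\mu}}_t - \boldsymbol{\mu}_t\|_2 = s\,\|\mathbf{b}_t\|_2\,\|\mathbf{g}\|_2$ with $\mathbf{g}\sim\mathcal{N}(0,\mathbf{I}_d)$.

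The remaining and only substantive step is a high-probability bound on $\|\mathbf{g}\|_2$. The map $\mathbf{g}\mapsto\|\mathbf{g}\|_2$ is $1$-Lipschitz, so Gaussian concentration for Lipschitz functions together with $\mathbb{E}\|\mathbf{g}\|_2 \le \sqrt{d}$ gives
\[
\Pr\bigl[\|\mathbf{g}\|_2 \ge \sqrt{d} + \sqrt{2\ln(1/\beta)}\bigr] \le \beta .
\]
One could instead use the Laurent--Massart $\chi^2$ tail bound and take square roots. Substituting this into the previous display yields, with probability at least $1-\beta$,
\[
\|\widehat{\boldsymbol{\mu}}_t - \boldsymbol{\mu}_t\|_2 \le \sqrt{d\zeta^2}\,\sigma_{\varepsilon,\delta}\,\mathrm{sens}_{k,b}(\mathbf{C})\,\|\mathbf{b}_t\|_2\bigl(\sqrt{d}+\sqrt{2\ln(1/\beta)}\bigr),
\]
which is the claimed $O(\cdot)$ bound.

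There is no real obstacle here; the only point needing care is the bookkeeping of the calibration. One must check that the $\ell_\infty$ bound converts to the $\ell_2$ clipping norm $\xi=\sqrt{d}\zeta$ used in the mechanism. One must also check that the multi-participation sensitivity $\mathrm{sens}_{k,b}(\mathbf{C})$, or the upper bound~\eqref{eq:sens_bound}, is what scales the noise, so that the bound holds verbatim for any factorization $\mathbf{A}=\mathbf{BC}$.
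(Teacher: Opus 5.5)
Your proof is correct and follows the paper's argument: the error is exactly $\mathbf{b}_t^\top\mathbf{Z}\sim\mathcal{N}(0,s^2\|\mathbf{b}_t\|_2^2\mathbf{I}_d)$, and a standard Gaussian norm tail bound finishes it. The differences are cosmetic. The paper bounds the tail with the Laurent--Massart $\chi^2_d$ inequality and takes square roots, while you use Lipschitz Gaussian concentration; both give $O(\sqrt{d}+\sqrt{\ln(1/\beta)})$. You also spell out the calibration $\xi=\sqrt{d}\,\zeta$ that yields the $\sqrt{d\zeta^2}$ factor, which the paper's proof leaves implicit.
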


By plugging in the value of $\sigma_{\varepsilon,\delta}\,\mathrm{sens}(\mathbf{C})\|\mathbf{b}_t\|$ for a given matrix factorization, one can directly obtain the bounds for $\mathbf{C}=\mathbf{E}_1^{1/2}$ and $\mathbf{C}=\mathbf{D}_{\mathrm{Toep}}$. The corresponding results are stated in Corollaries~\ref{cor:DAsqrt} and~\ref{cor:ADtpInv}, respectively.

To derive a bound for the underlying mean, we again rely on the statistical model used in Section \ref{sec:distr_assumptions}, which is formalized in Assumption \ref{ass:iidbounded}. An important instantiation of this model are \iid{} observations from the multi-dimensional Bernoulli distribution (See \citet{cummings2022mean, george2024continual}) where $\zeta = 1$. Theorem \ref{thm:StatErrorBoundTrueMean} below provides a finite-sample upper bound on the root squared error for general factorizations. It is obtained by combining Theorem \ref{thm:StatErrorBound} with a bound on the statistical error $\|\boldsymbol{\mu}_t - \boldsymbol{\mu}\|_2$ through a triangle inequality. 

\begin{restatable}{theorem}{StatErrorBoundTrueMean}\label{thm:StatErrorBoundTrueMean}

    Let $\mathbf{X} \in \mathbb{R}^{n \times d}$ fulfill Assumption \ref{ass:iidbounded}. With the same assumptions of Theorem~\ref{thm:StatErrorBound}, with probability at least $1-2\beta$,
    \begin{align}
        \|\widehat{\boldsymbol{\mu}}_t - \boldsymbol{\mu}\|_2 &= O\left(\sqrt{\frac{d\zeta^2}{t}\ln\frac{d}{\beta}} + \sqrt{d\zeta^2} \ \sigma_{\varepsilon,\delta} \ \mathrm{sens}_{k,b}(\mathbf{C}) \|\mathbf{b}_t\|_2\left(\sqrt{d}+ \sqrt{\ln \tfrac{1}{\beta}}\right) \right).
    \end{align}
\end{restatable}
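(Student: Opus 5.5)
The plan is to split the error with the triangle inequality,
\[
\|\widehat{\boldsymbol{\mu}}_t - \boldsymbol{\mu}\|_2 \le \|\widehat{\boldsymbol{\mu}}_t - \boldsymbol{\mu}_t\|_2 + \|\boldsymbol{\mu}_t - \boldsymbol{\mu}\|_2,
\]
and to bound each term with probability at least $1-\beta$. A union bound then gives the claimed probability $1-2\beta$.

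\textbf{Privacy term.} The first term is the privacy error $\|\mathbf{b}_t \mathbf{Z}\|_2$. Theorem~\ref{thm:StatErrorBound} controls it directly: its hypotheses (only $\|\mathbf{x}_i\|_\infty\le\zeta$ and an arbitrary factorization) are implied by Assumption~\ref{ass:iidbounded}. This yields the second summand,
\[
O\Bigl(\sqrt{d\zeta^2}\,\sigma_{\varepsilon,\delta}\,\mathrm{sens}_{k,b}(\mathbf{C})\|\mathbf{b}_t\|_2\bigl(\sqrt{d}+\sqrt{\ln(1/\beta)}\bigr)\Bigr),
\]
with probability at least $1-\beta$.

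\textbf{Statistical term.} The second term involves no noise and uses the i.i.d.\ structure only. I would argue coordinatewise. For each coordinate $r\in[d]$, the quantity $(\boldsymbol{\mu}_t-\boldsymbol{\mu})_r=\frac1t\sum_{j\le t}(x_{j,r}-\mu_r)$ is an average of $t$ independent, mean-zero variables. Each lies in an interval of length at most $2\zeta$, since $|x_{j,r}|\le\zeta$.

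Hoeffding's inequality gives
\[
\Pr\bigl(|(\boldsymbol{\mu}_t-\boldsymbol{\mu})_r|\ge s\bigr)\le 2\exp\!\bigl(-ts^2/(2\zeta^2)\bigr).
\]
Setting the right-hand side to $\beta/d$ and taking a union bound over the $d$ coordinates shows that, with probability at least $1-\beta$,
\[
\|\boldsymbol{\mu}_t-\boldsymbol{\mu}\|_\infty\le \zeta\sqrt{2\ln(2d/\beta)/t}.
\]
Converting to the Euclidean norm, $\|\cdot\|_2\le\sqrt d\,\|\cdot\|_\infty$, gives $O\bigl(\sqrt{\tfrac{d\zeta^2}{t}\ln\tfrac{d}{\beta}}\bigr)$. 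The factor $\ln(2d/\beta)$ is absorbed into $\ln(d/\beta)$ up to constants.

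\textbf{Combining.} Add the two bounds and take the union bound over the two failure events, each of probability at most $\beta$. No step is genuinely difficult. The only point needing care is using the right concentration route. The coordinatewise Hoeffding bound with a union bound over $d$ produces exactly the $\ln(d/\beta)$ form stated. A vector Bernstein or bounded-differences inequality on $\|\boldsymbol{\mu}_t-\boldsymbol{\mu}\|_2$ would give a slightly different but also valid form.
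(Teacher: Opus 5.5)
Your proposal is correct and follows the paper's proof. The paper also uses the triangle inequality, invokes Theorem~\ref{thm:StatErrorBound} for the privacy term, and bounds the statistical term coordinatewise with a union bound over the $d$ coordinates and $\|\cdot\|_2\le\sqrt d\,\|\cdot\|_\infty$, before a final union bound over the two events. The only difference is that the paper packages the statistical step as Corollary~\ref{cor:concsubgaussmean}, using that bounded entries are $\zeta^2$-sub-Gaussian; this gives exactly your Hoeffding bound $\sqrt{2d\zeta^2\ln(2d/\beta)/t}$.
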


Based on this bound, we now summarize the final error bounds for the prefix-sum–based factorization $\mathbf{C}=\mathbf{E}_1^{1/2}$ and the mean-estimation–specific factorization $\mathbf{C}=\mathbf{D}_{\mathrm{Toep}}$, both in the non-banded and banded-inverse settings, for the multi-dimensional Bernoulli estimation problem. We plug in the values of $\sigma_{\varepsilon, \delta}$ from Lemma~\ref{lem:GaussianMechanism}. The bounds are presented in Table~\ref{tab:concentration_multi} and are formally stated in the following theorem, with the proof deferred to the proof section of the appendix.

\begin{restatable}{theorem}{Concentration_multi}\label{thm:Concentration_multi}
    Under the $b$-min-separation condition, the concentration bound for the error of mean estimation holds as shown in Table~\ref{tab:concentration_multi}. 
    For the banded inverse $\mathbf{E}_1^{1/2}$, the bandwidth is $p=\lceil\ln b\rceil$, 
    while for the other factorizations it is $p=b$.
\end{restatable}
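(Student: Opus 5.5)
The plan is to derive every row of Table~\ref{tab:concentration_multi} from Theorem~\ref{thm:StatErrorBoundTrueMean}, with $\zeta=1$ for Bernoulli data and $\sigma_{\varepsilon,\delta}=O(\sqrt{\ln(1/\delta)}/\varepsilon)$ from Lemma~\ref{lem:GaussianMechanism}. The statistical term $\sqrt{\tfrac{d\zeta^2}{t}\ln\tfrac{d}{\beta}}$ is then already in the stated form. What remains, for each of the four factorizations, is to bound the per-step product $\mathrm{sens}_{k,b}(\mathbf{C})\,\|\mathbf{b}_t\|_2$ and substitute it into the privacy term $\sqrt{d}\,\sigma_{\varepsilon,\delta}\,\mathrm{sens}_{k,b}(\mathbf{C})\|\mathbf{b}_t\|_2(\sqrt d+\sqrt{\ln(1/\beta)})$.

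\textbf{Sensitivity.} For the sensitivities I would reuse the bounds established in the proof of Theorem~\ref{thm:RMSE_multi}. In each of the four cases $\mathbf{C}$ is lower-triangular Toeplitz with nonnegative, decreasing coefficients, so Theorem~\ref{thm:sensTheorem} applies and $\mathrm{sens}_{k,b}(\mathbf{C})=\|\sum_{j<k}\mathbf{C}_{:,jb+1}\|_2$. The bounds I expect are:
\begin{itemize}[leftmargin=1.5em]
\item $\mathbf{E}_1^{1/2}$: coefficients $\sim 1/\sqrt{\pi m}$, giving $O(\sqrt{k\ln n}+k)$.
\item $\mathbf{D}_{\mathrm{Toep}}$: coefficients satisfy $c_m\sim 1/\ln^2 m$ as the inverse of the Gregory series, giving $O(\sqrt{k}+\tfrac{k}{\sqrt n}\sqrt{\ln k\ln n})$.
\item The banded-inverse variants: the coefficients decay geometrically beyond the bandwidth $p$, so the cross terms between different participations are small. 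This is where the $\ln\ln(n/k)$ factor for $\mathbf{E}_1^{1/2}$ and the $1/\sqrt{\ln(n/k)}$ factor for $\mathbf{D}_{\mathrm{Toep}}$ enter.
\end{itemize}

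\textbf{Row norms.} The new ingredient is the individual row norm $\|\mathbf{b}_t\|_2$, instead of the averaged Frobenius norm used for Theorem~\ref{thm:RMSE_multi}.
\begin{itemize}[leftmargin=1.5em]
\item For $\mathbf{C}=\mathbf{D}_{\mathrm{Toep}}$ we have $\mathbf{B}=\mathbf{D}\mathbf{E}_1\mathbf{D}_{\mathrm{Toep}}^{-1}$. Row $t$ is $\tfrac1t$ times the partial sums $1+\sum_{j=1}^{t-i}g_j=\sum_{j>t-i}|G_j|$ for $i\le t$. By \eqref{eq:greg_first_prop}, these tails behave like $1/\ln(t-i+1)$. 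Hence $\|\mathbf{b}_t\|_2^2\approx \tfrac{1}{t^2}\sum_{m\le t}\ln^{-2}m=O(1/(t\ln^2 t))$, so $\|\mathbf{b}_t\|_2=O(1/(\sqrt t\ln t))$, which yields the first row.
\item For $\mathbf{E}_1^{1/2}$ we have $\mathbf{b}_t=\tfrac1t(\mathbf{E}_1^{1/2})_{t,:}$, whose squared norm is $\sum_{m<t}1/(\pi m)$ up to constants. This gives $\|\mathbf{b}_t\|_2=O(\sqrt{\ln t}/t)$, matching the second row.
\end{itemize}

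\textbf{Banded-inverse rows.} Here $\mathbf{B}=\mathbf{A}\mathbf{C}_p$, where $\mathbf{C}_p^{-1}$ is the truncation of $\mathbf{C}^{-1}$ to its first $p$ diagonals. Row $t$ is then $\tfrac1t$ times the partial sums of the truncated inverse coefficients. For $\mathbf{D}_{\mathrm{Toep}}^{p}$ with $p=b$, these partial sums equal the Gregory tail $\sum_{j>m}|G_j|\sim 1/\ln m$ for $m<p$. For $m\ge p$ they freeze at $\sum_{j\ge p}|G_j|\approx 1/\ln b$. Therefore
\[
\|\mathbf{b}_t\|_2^2\lesssim \frac{1}{t^2}\Big(\frac{b}{\ln^2 b}+\frac{t}{\ln^2 b}\Big),
\]
and with $b=n/k$ this gives $\|\mathbf{b}_t\|_2\lesssim \tfrac{1}{\sqrt t\ln(n/k)}\big(1+\sqrt{\tfrac{n}{kt}}\big)$. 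Multiplying by the sensitivity $O(\sqrt{k\ln(n/k)})$, which I take from the proof of Theorem~\ref{thm:RMSE_multi} via $\sum_j c_{jb}^2$, recovers the third row. For $\mathbf{E}_1^{1/2}$ with $p=\lceil\ln b\rceil$, the truncated partial sums of $(1-x)^{1/2}$ freeze at about $1/\sqrt{p}$. This gives $\|\mathbf{b}_t\|_2\lesssim \tfrac{1}{t}\sqrt{\ln p+t/p}$, and combining it with the corresponding sensitivity yields the fourth row.

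\textbf{Main obstacle.} I expect the hardest step to be the banded-inverse sensitivities: controlling the coefficients of $\mathbf{C}_p$, the inverse of a truncated series, and in particular showing that they stay nonnegative and decreasing, so that Theorem~\ref{thm:sensTheorem} applies. I would import this directly from the analysis behind Theorem~\ref{thm:RMSE_multi} and \citet{kalinin2025back} rather than reprove it.
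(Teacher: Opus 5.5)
Your plan is the paper's own proof: Corollaries~\ref{cor:StatsErrorBoundDAoneSqrt}--\ref{lem:StatsErrorTrueADtpMultiBISR} take the sensitivities from the proofs of Lemmas~\ref{lem:DAoneSqrtMulti}, \ref{lem:BISRAoneSqrt}, \ref{lem:ADtpInvMulti} and~\ref{lem:MultiDtpBISR}, bound $\|\mathbf{b}_t\|_2$ as you do (via \eqref{eq:a_k_bound}, Lemma~\ref{lem:InvLogSqrBound} and \eqref{eq:c_tilde_bound}), and substitute both into Theorem~\ref{thm:StatErrorBoundTrueMean}. Two side claims are wrong but harmless, because you import the bounds rather than derive them: $\mathbf{D}_{\mathrm{Toep}}$ has coefficients exactly $1/(m+1)$, since the $1/\ln$-type decay belongs to the partial sums of the coefficients of $\mathbf{D}_{\mathrm{Toep}}^{-1}$ (i.e.\ to $\mathbf{B}$) and $c_m\sim 1/\ln^2 m$ would not even give bounded single-participation sensitivity; and Lemma~\ref{lem:MultiDtpBISR} proves $\mathrm{sens}_{k,b}(\mathbf{D}_{\mathrm{Toep}}^{p})=O(\sqrt{k})$, not $O(\sqrt{k\ln(n/k)})$, through the geometric decay of Lemma~\ref{lem:banded-inv-series-bound} (monotonicity, your ``main obstacle,'' is the one-line Corollary~\ref{cor:d_toep_banded_decreasing})---your weaker value still yields the third row, which is itself looser than the $1/\ln(n/k)$ rate the derivation gives.
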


Table~\ref{tab:concentration_multi} highlights a fundamental trade-off between the prefix-sum–based factorization and the mean-estimation–specific factorization. For the prefix-sum approach, the tail of the error decreases at the rate $\widetilde{O}(k/t)$, whereas for the mean-specific factorization the rate is slower, at $\widetilde{O}(\sqrt{k/t})$. This difference illustrates a tension between two desirable properties: lowering the overall expected error (as measured by $\mathrm{RMSE}$) and accelerating the rate at which the error decreases with $t$. The mean-specific factorization improves performance at the initial steps, but this comes at the price of a slower dependence on $t$.  

Consequently, neither method uniformly dominates the other. In fact, such uniform dominance is impossible: one could always spend the entire privacy budget on a single timestep $t$, achieving optimal accuracy there at the cost of injecting pure noise everywhere else.
Hence, the choice of factorization naturally depends on the application and which part of the error profile is most critical. We therefore consider the prefix-sum factorization to be a valuable option for implementing the Matrix Factorization mechanism when accuracy at later time steps is more important than accuracy at earlier ones.

Moreover, the prefix-sum factorization is useful when combined with additional tricks like the withhold-release scheme due to \citet{george2024continual}. Since Algorithm \ref{alg:continualestimator} implements a projection onto a private interval, our theory allows for unbounded observations that fulfill the condition below that replaces Assumption \ref{ass:iidbounded}. 

\begin{assumption}
    \label{ass:iid}
    Let $\mathbf{X} :=(\mathbf{x}_1,...,\mathbf{x}_n)^{\top}\in \mathbb{R}^{n \times d}$ have \iid{} rows with $\zeta^2$-sub-Gaussian entries and $\mathbb{E}[\mathbf{x}_i] = \boldsymbol{\mu} \in \mathbb{R}^d$.
    
\end{assumption}

Similar to Theorem \ref{thm:StatErrorBoundTrueMean} for Algorithm \ref{alg:continual-mean-estimation}, the result below states a utility guarantee for Algorithm \ref{alg:continualestimator} in terms of the pointwise root squared error $\|\widehat{\boldsymbol{\mu}}_t - \boldsymbol{\mu}\|_2$. Since the withhold-release estimator maintains $L+1$ privacy mechanisms that each only privatize $b$ sums, the Matrix Factorization mechanisms involved directly use the prefix-sum matrix $\mathbf{E}_1$. Here, we state the bound for general factorizations $\mathbf{E}_1 = \mathbf{B} \mathbf{C}$. 

\begin{restatable}{lemma}{AlgTwoSquaredError}
    \label{lm:squarederror}
    Let $\mathbf{X} \in \mathbb{R}^{n \times d}$ fulfill Assumption \ref{ass:iid} and let $\mathbf{E}_1 \in \mathbb{R}^{b \times b}$ be a prefix-sum matrix and $\mathbf{E}_1 = \mathbf{B} \mathbf{C}$ with $\mathbf{C}, \mathbf{B} := (\mathbf{b}_1,...,\mathbf{b}_n)^\top \in \mathbb{R}^{n \times n}$. For all $t \in [n]$ such that Assumption \ref{ass:sufficientdiversity} holds, with probability $1-5d\gamma$, 
    \begin{align}
        \|\tilde{\boldsymbol{\mu}}_t - \boldsymbol{\mu}\|_2
        &\leq \tilde O\left(\sqrt{\frac{d\zeta^2}{t}} + \frac{d\zeta}{t} \cdot \sqrt{k_t} \cdot \sigma_{\varepsilon^\prime, \delta^\prime} \operatorname{sens}(\mathbf{C}) \cdot \sum_{\ell \in \mathbb A} \|\mathbf{b}_{|\mathbb{M}_{t, \ell}|}\|_2\right). 
    \end{align}
\end{restatable}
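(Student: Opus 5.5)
The proof follows the withhold--release decomposition of \citet{george2024continual}: bound the error of each of the $L+1$ dyadic mechanisms separately, then combine them with a weighted triangle inequality. First I fix the structure of $\tilde{\boldsymbol{\mu}}_t$ from Algorithm~\ref{alg:continualestimator}. At time $t$ the released estimate is a weighted average over the active levels $\ell\in\mathbb A$. Level $\ell$ contributes the privatized prefix sum of $|\mathbb M_{t,\ell}|$ block averages, each block averaging $2^{\ell-1}$ raw observations and projected onto an $\ell_2$-ball around a crude private centre. The count of already released observations is $t$ up to the withheld remainder, which is handled by $k_t$ and absorbed into the $\tilde O$. I then write
\[
\tilde{\boldsymbol{\mu}}_t-\boldsymbol{\mu}=(\bar{\boldsymbol{\mu}}_t-\boldsymbol{\mu})+(\text{projection bias})+(\text{privacy noise}),
\]
where $\bar{\boldsymbol{\mu}}_t$ is the non-private average of the released observations.

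\emph{Statistical term.} Under Assumption~\ref{ass:iid}, each coordinate of $\bar{\boldsymbol{\mu}}_t-\boldsymbol{\mu}$ is $\zeta^2/t$-sub-Gaussian. A union bound over the $d$ coordinates then gives $\|\bar{\boldsymbol{\mu}}_t-\boldsymbol{\mu}\|_2\le \tilde O(\sqrt{d\zeta^2/t})$ with probability $1-d\gamma$.

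\emph{Projection term.} By the guarantee of the histogram-based crude estimator, the centre at level $\ell$ lies within $\tilde O(\sqrt{d\zeta^2/2^{\ell-1}})$ of $\boldsymbol{\mu}$. This holds once Assumption~\ref{ass:sufficientdiversity} supplies at least $K_c$ disjoint-user block averages. The same sub-Gaussian bound, union-bounded over all blocks and coordinates, shows every block average lies inside the projection ball. Since $\log$ factors are hidden in $\tilde O$, the radius can be set large enough that the projection is the identity with probability $1-2d\gamma$, so the bias vanishes.

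\emph{Privacy term.} At level $\ell$ the mechanism adds $\mathbf{B}\mathbf{Z}_\ell$. By Lemma~\ref{lem:GaussianMechanism}, $\mathbf Z_\ell$ has entries of variance $\sigma^2_{\varepsilon',\delta'}\operatorname{sens}^2(\mathbf C)\,r_\ell^2$, where $r_\ell=\tilde O(\sqrt{d}\,\zeta/\sqrt{2^{\ell-1}})$ is the ball radius and $(\varepsilon',\delta')$ is the per-mechanism budget after composition over the $L+1$ levels and the crude estimators. The level-$\ell$ noise at time $t$ is $\mathbf b_{|\mathbb M_{t,\ell}|}^\top\mathbf Z_\ell$, a $d$-dimensional Gaussian vector. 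A chi-square tail bound (as in Theorem~\ref{thm:StatErrorBound}) gives norm $\tilde O(\sqrt d\, r_\ell\,\sigma\operatorname{sens}(\mathbf C)\|\mathbf b_{|\mathbb M_{t,\ell}|}\|_2)$ with probability $1-d\gamma$. This sum is then rescaled into a mean: each block counts $2^{\ell-1}$ observations and the total is divided by $\sim t$, so level $\ell$ contributes at most $\tilde O\bigl(\tfrac{d\zeta}{t}\,2^{(\ell-1)/2}\sigma\operatorname{sens}(\mathbf C)\|\mathbf b_{|\mathbb M_{t,\ell}|}\|_2\bigr)$. The factor $2^{(\ell-1)/2}$ is at most $\sqrt{k_t}$, because active levels only involve block sizes up to the per-user participation count $k_t$. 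Summing over $\ell\in\mathbb A$ with the triangle inequality yields the stated bound, and a union bound over the three events gives probability $1-5d\gamma$.

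\emph{Main obstacle.} I expect the hardest step to be the privacy bookkeeping linking sensitivity to $\sqrt{k_t}$. I must verify that, under the withholding schedule, a single user's data affects each level's stream in a way whose sensitivity is bounded by $\operatorname{sens}(\mathbf C)\,r_\ell$. The delicate case is a user spread across several blocks in one level, where the diversity or round-robin structure has to ensure each user touches at most one block per level, or else contributes a factor that cancels against the $2^{\ell-1}$ averaging. If one block per level cannot be guaranteed, my fallback is to bound the per-level contribution by $\sqrt{k_t}$ directly via the $b$-min-separation sensitivity, rather than through the factor $2^{(\ell-1)/2}$.
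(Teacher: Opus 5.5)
Your route is the paper's. You split off the non-private weighted average. You show that on the event that every block average $\mathbf z_{\ell,u}$ lies in $\hat\Pi_\ell$ the projections are inactive (histogram crude centre with $K_c$ bins, plus sub-Gaussian concentration). You bound each $\mathbf\Xi_\ell^\top\mathbf b_{|\mathbb M_{t,\ell}|}$ by Gaussian concentration. And you extract $\sqrt{k_t}$ from $|\mathbb I_\ell|\operatorname{diam}_{\ell_2}(\hat\Pi_\ell)\lesssim\sqrt{|\mathbb I_\ell|\,d\zeta^2}$ (up to logarithms) together with $|\mathbb I_\ell|\le k_t$.

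The step that needs repair is the normalisation. Both your statistical term and your noise term assume that $\tilde{\boldsymbol\mu}_t$ averages $N_t=\sum_{\ell\in\mathbb A}|\mathbb I_\ell||\mathbb M_{t,\ell}|\gtrsim t$ observations, and you say the withheld remainder is ``handled by $k_t$''. If that means the withheld count is controlled by $k_t$, it is false. Under round robin with every $k_{t,u}=3$, each user withholds its third observation, so $t/3$ observations are withheld while $k_t=3$. What you actually need is two facts:
\begin{enumerate}
\item[(a)] The dyadic schedule releases only the first $2^{\lfloor\log_2 k_{t,u}\rfloor}$ observations of each user, so it withholds at most half of them and at most $t/2$ overall (Lemma \ref{lm:activeobservations}).
\item[(b)] Under the diversity condition at time $t$, every level $\ell\le\lfloor\log_2 k_t\rfloor$ is already active, so no released block sits in a buffer $\mathbb B_\ell$ outside the estimate.
\end{enumerate}
Together these give $1/N_t\le 2/t$. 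Note that (b) is a use of the diversity condition separate from the crude estimator, and your proposal does not mention it.

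Your ``main obstacle'' is not an obstacle for this lemma. It is a utility statement: the noise scale $\sigma_\ell=\sigma_{\varepsilon',\delta'}\operatorname{sens}(\mathbf C)\operatorname{diam}_{\ell_2}(\hat\Pi_\ell)/2$ is whatever the algorithm adds, and privacy is proved separately in Lemma \ref{lm:dpcontinualestimator}. In any case, by construction each user contributes exactly one row $\mathbf z_{\ell,u}$ to the level-$\ell$ stream. So a user spread over several blocks never arises, and you should drop the fallback via $b$-min-separation sensitivity.
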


This upper bound on the root squared error immediately implies a bound on the RMSE via a union bound over the timepoints $t \in [\tau]$. We provide this result in Theorem \ref{thm:rmse}. Stating the theorem for unbounded observations with $\zeta^2$-sub-Gaussian entries, requires a "warm-up" period during which no sensible utility guarantee can be given, since the crude mean cannot yet be obtained with sufficient accuracy. We display this general result to complement Lemma \ref{lm:averagemse2} in the main body that uses prefix-sum factorizations with $\mathbf{B} = \mathbf{C} = \mathbf{E}_1$ and avoids the warm-up through bounded observations. For such, the first two projection intervals are given trivially. 

\begin{restatable}{theorem}{AlgTwoRootSquaredError}
    \label{thm:rmse}
    Let $\mathbf{X} \in \mathbb{R}^{n \times d}$ fulfill Assumption \ref{ass:iid} and let $\mathbf{E}_1 \in \mathbb{R}^{b \times b}$ be a prefix-sum matrix with $\mathbf{E}_1 = \mathbf{B} \mathbf{C}$ for $\mathbf{C}, \mathbf{B} \in \mathbb{R}^{b \times b}$. Suppose $b \geq 2K_c$ and let $2K_c \leq \tau_l \leq \tau_u \leq n$. When Assumption \ref{ass:sufficientdiversity} holds for all $t \in \{\tau_l,...,\tau_u\}$, with probability $1-5d\gamma$,
    \begin{align}
         &\frac{1}{\tau_u -  \tau_l + 1}  \sum_{t = \tau_l}^{\tau_u} \|\tilde{\boldsymbol{\mu}}_t - \boldsymbol{\mu}\|_2^2 \\
       & \qquad \leq \tilde O\left(\frac{d\zeta^2}{\tau_u - \tau_l + 1} +  \frac{d^2\zeta^2}{\tau_u - \tau_l + 1} \cdot \sigma^2_{\varepsilon^\prime, \delta^\prime} \operatorname{sens}^2(\mathbf{C}) \cdot \sum_{t = \tau_l}^{\tau_u} \frac{k_t}{t^2} \cdot \left(\sum_{\ell \in \mathbb A} \|\mathbf{b}_{|\mathbb{M}_{t, \ell}|}\|_2\right)^2\right). 
    \end{align}
    
\end{restatable}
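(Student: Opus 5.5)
The bound follows by averaging the pointwise guarantee of Lemma~\ref{lm:squarederror} over the window $t\in\{\tau_l,\dots,\tau_u\}$, with a union bound to make it hold simultaneously for all these $t$. Concretely, I would invoke Lemma~\ref{lm:squarederror} at each such $t$ with failure parameter $\gamma' = \gamma/(\tau_u-\tau_l+1)$; this is legitimate because Assumption~\ref{ass:sufficientdiversity} is assumed for the whole window. A union bound then gives that, with probability at least $1-5d\gamma$, simultaneously for all $t$ in the window,
\begin{align*}
\|\tilde{\boldsymbol{\mu}}_t - \boldsymbol{\mu}\|_2 \le \tilde O\left(\sqrt{\tfrac{d\zeta^2}{t}} + \tfrac{d\zeta}{t}\sqrt{k_t}\,\sigma_{\varepsilon',\delta'}\operatorname{sens}(\mathbf{C})\sum_{\ell\in\mathbb A}\|\mathbf{b}_{|\mathbb{M}_{t,\ell}|}\|_2\right).
\end{align*}
Replacing $\gamma$ by $\gamma'$ only changes logarithmic factors such as $\ln(1/\gamma')\le \ln(n/\gamma)$, which the $\tilde O$ absorbs. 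The same holds for the $\sigma_{\varepsilon',\delta'}$ term if the mechanism's own failure events are indexed by $\gamma$.

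Next, I would square both sides using $(a+b)^2\le 2a^2+2b^2$. This yields the pointwise bound $\tilde O\big(d\zeta^2/t + \tfrac{d^2\zeta^2}{t^2}k_t\sigma^2_{\varepsilon',\delta'}\operatorname{sens}^2(\mathbf{C})(\sum_\ell\|\mathbf{b}_{|\mathbb{M}_{t,\ell}|}\|_2)^2\big)$. I would then sum over $t$ and divide by $\tau_u-\tau_l+1$. The privacy term gives exactly the second summand in the claim.

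The statistical term is the only place needing care: I must show $\frac{1}{\tau_u-\tau_l+1}\sum_{t=\tau_l}^{\tau_u} d\zeta^2/t = \tilde O\big(d\zeta^2/(\tau_u-\tau_l+1)\big)$. This follows from $\sum_{t=\tau_l}^{\tau_u}1/t\le 1+\ln(\tau_u/\tau_l)\le 1+\ln n$, and the logarithm is absorbed into $\tilde O$.

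The main obstacle is the bookkeeping of the union bound. The failure events of Lemma~\ref{lm:squarederror}, namely the crude-mean estimates, the projection/concentration events per level $\ell$, and the Gaussian noise tails, are shared across times $t$ within a level. A more economical route than the naive union bound is therefore to observe that only $O(L)=O(\log k)$ crude estimates and projection intervals exist, so a union over levels suffices for those. Only the Gaussian-noise and empirical-mean concentration events need a union over $t$, which again costs only $\log(n/\gamma)$ factors. Either way, the probability stays $1-5d\gamma$ up to logarithmic reparametrization, which the $\tilde O$ notation hides.
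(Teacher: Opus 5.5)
Your proposal is correct and follows the paper's proof: apply Lemma~\ref{lm:squarederror} at every $t$ in the window with $\gamma$ rescaled by the window length, take a union bound, square via $(a+b)^2\le 2a^2+2b^2$, and absorb the harmonic sum $\sum_t 1/t$ into the $\tilde O$. Your observation that the projection-interval events are shared across all $t$ is more careful than the paper's substitution $\gamma \gets \gamma/\tau$, because that substitution implicitly also changes the algorithm's internal $\gamma$ and hence $K_c$; with your bookkeeping, only the Gaussian-noise and empirical-mean events need a union over time.
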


The theorem shows that we could explicitly optimize over factorizations to potentially improve the rate obtained using the prefix-sum factorization. The issue of the lack of uniform dominance discussed above thereby applies again. 

\section{Key Relationships}
Throughout the proofs, we use the propositions below.

\begin{restatable}{proposition}{harmonicNumber}\label{pro:harmonicNumber}
Let $H_n$ be the $n$-th harmonic number then
\begin{align}
    H_n = \ln n + \gamma + \frac{1}{2n} - \varepsilon_n, \label{eq:harmonic_number}
\end{align}
where $\gamma\approx0.5772$ is the  Euler–Mascheroni constant, and $0\le\varepsilon_n \le \frac{1}{8n^2}$.
\end{restatable}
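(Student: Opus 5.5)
The plan is to prove the two-sided estimate by the standard route: an integral comparison for the leading terms, followed by an Euler--Maclaurin style control of the remainder. Write $H_n - \ln n = \sum_{j=1}^{n} \frac1j - \int_1^n \frac{dx}{x}$. The constant $\gamma$ is defined as the limit of this quantity, so
\begin{align}
\varepsilon_n' := H_n - \ln n - \gamma = \sum_{j=n}^{\infty}\Big(\ln(j+1)-\ln j - \tfrac{1}{j+1}\Big).
\end{align}
This follows by telescoping $H_m-\ln m$ from $m=n$ to $m\to\infty$. The task is then to show $\varepsilon_n' = \frac{1}{2n} - \varepsilon_n$ with $0 \le \varepsilon_n \le \frac{1}{8n^2}$.

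I would introduce $f(n) := H_n - \ln n - \gamma - \frac{1}{2n}$, so that $\varepsilon_n = -f(n)$, and note that $f(n)\to 0$ as $n\to\infty$. Hence $-f(n) = \sum_{j\ge n} \big(f(j)-f(j+1)\big)$. A direct computation gives
\begin{align}
f(j+1)-f(j) = \frac{1}{j+1} - \ln\Big(1+\frac1j\Big) + \frac{1}{2j} - \frac{1}{2(j+1)} = \frac{1}{2j}+\frac{1}{2(j+1)} - \ln\Big(1+\frac1j\Big).
\end{align}
This is exactly the trapezoidal-rule error for $\int_j^{j+1} dx/x$. Since $1/x$ is convex, the trapezoid rule overestimates, so each increment is $\ge 0$. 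Thus $f$ is increasing to $0$, which gives $f(n)\le 0$, i.e.\ $\varepsilon_n \ge 0$.

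For the upper bound, the trapezoid error formula gives $\frac{1}{12}\max_{[j,j+1]}|(1/x)''| = \frac{1}{6j^3}$, which is too crude. I would instead use the sharper expansion obtained from $\ln(1+1/j) = 2\operatorname{artanh}\big(\frac{1}{2j+1}\big)$. With $u = 1/(2j+1)$, the increment becomes
\begin{align}
\frac{2u}{1-u^2} - 2\operatorname{artanh} u = 2\sum_{m\ge1} u^{2m+1}\Big(1-\frac{1}{2m+1}\Big),
\end{align}
which is bounded above by $\frac{4}{3}\frac{u^3}{1-u^2}$. The desired claim is then $\sum_{j\ge n}\big(f(j+1)-f(j)\big) \le \frac{1}{8n^2}$. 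I would compare each term against the telescoping quantity $\frac{1}{8j^2}-\frac{1}{8(j+1)^2} = \frac{2j+1}{8j^2(j+1)^2}$. Since $u^3/(1-u^2) = \frac{1}{(2j+1)\cdot 4j(j+1)}$, it suffices to check
\begin{align}
\frac{1}{3 j (j+1)(2j+1)} \le \frac{2j+1}{8 j^2 (j+1)^2},
\end{align}
which is equivalent to $8j(j+1) \le 3(2j+1)^2$. This holds for all $j\ge1$. Summing the telescoping series then yields $\varepsilon_n \le \frac{1}{8n^2}$.

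The main obstacle is the upper bound constant $\frac18$. The generic trapezoid remainder estimate loses the constant, so the crux is choosing the artanh expansion and the comparison telescoping sequence $\frac{1}{8j^2}$, then verifying the elementary polynomial inequality. That inequality could fail at small $j$ if the constants are off, in which case I would check $n=1,2$ numerically.
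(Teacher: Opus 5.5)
The paper states this proposition without proof (it is listed among the standard ``Key Relationships''), so your argument has to stand on its own. Most of it does. The telescoping representation, the increment formula $f(j+1)-f(j)=\frac{1}{2j}+\frac{1}{2(j+1)}-\ln(1+\frac1j)$, the convexity argument for $\varepsilon_n\ge0$, and the identity $\ln(1+\frac1j)=2\operatorname{artanh}(u)$ with $u=\frac{1}{2j+1}$ are all correct. There is one sign slip: $-f(n)=\sum_{j\ge n}\bigl(f(j+1)-f(j)\bigr)$, not $f(j)-f(j+1)$, but you use the correct orientation afterwards.

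The step that fails is the claim $2\sum_{m\ge1}u^{2m+1}\bigl(1-\frac{1}{2m+1}\bigr)\le\frac43\frac{u^3}{1-u^2}$. The factor $1-\frac{1}{2m+1}$ equals $\frac23$ at $m=1$ and increases towards $1$, so you replaced it by its minimum rather than its maximum. In fact $2\sum_{m\ge1}\frac{2m}{2m+1}u^{2m+1}-\frac43\frac{u^3}{1-u^2}=\sum_{m\ge2}\frac{4(m-1)}{3(2m+1)}u^{2m+1}>0$, so the inequality holds in the reverse direction for every $u\in(0,1)$. Concretely, at $j=1$ the increment is $\frac34-\ln2\approx0.0569$, while your bound gives $\frac1{18}\approx0.0556$. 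Summing your bound over $j\ge1$ would give $\varepsilon_1\le1-\frac43\ln2\approx0.0758$, which contradicts the true value $\varepsilon_1=\gamma-\frac12\approx0.0772$.

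The repair is immediate: use $1-\frac{1}{2m+1}\le1$ instead. This gives $f(j+1)-f(j)\le\frac{2u^3}{1-u^2}=\frac{1}{2j(j+1)(2j+1)}$, and the gap is exactly $2(\operatorname{artanh}u-u)\ge0$. Your telescoping comparison with $\frac{1}{8j^2}-\frac{1}{8(j+1)^2}=\frac{2j+1}{8j^2(j+1)^2}$ then reduces to $4j(j+1)\le(2j+1)^2$. This holds for every $j\ge1$, since the two sides differ by exactly $1$, and summing over $j\ge n$ gives $\varepsilon_n\le\frac{1}{8n^2}$ for all $n\ge1$.

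This comparison is asymptotically tight: $\frac{1}{2j(j+1)(2j+1)}\sim\frac{1}{4j^3}$ sums to $\sim\frac{1}{8n^2}$. So the constant $\frac18$ is exactly what this bound delivers, whereas the true size of $\varepsilon_n$ is $\sim\frac{1}{12n^2}$. The comfortable slack you saw in $8j(j+1)\le3(2j+1)^2$ was an artifact of the wrong-direction estimate.
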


\begin{restatable}{proposition}{harmonicNumberRiemannZeta}\label{pro:harmonicNumberRiemannZeta}
Let $\zeta$ be the Riemann zeta function, then from \citet{borwein1995intrigintegral} it holds
\begin{align}
        2 \sum_{t=1}^{\infty} \frac{H_t}{t^m}
        = (m+2)\,\zeta(m+1)
        - \sum_{t=1}^{m-2} \zeta(m-t)\,\zeta(t+1), \label{eq:harmonic_number_riemann_zeta}
    \end{align}
    which for $m=2$ gives
    \begin{align}
        \sum_{t=1}^{\infty} \frac{H_t}{t^2} = 2\zeta(3).
    \end{align}
Additionally, $\zeta(3)$ is called Apéry's constant and is approximately $1.202$.
\end{restatable}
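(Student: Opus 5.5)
We only need the case $m=2$ explicitly; the general identity \eqref{eq:harmonic_number_riemann_zeta} is quoted from \citet{borwein1995intrigintegral}. For $m=2$ the plan is to give a short self-contained derivation via a symmetric double series. All the series involved have nonnegative terms. Since $H_t=O(\ln t)$, they also converge: for instance $\sum_t H_t/t^2<\infty$. Tonelli's theorem then lets us rearrange and regroup freely.

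\textbf{Step 1 (series for $H_t$).} Using the telescoping identity $\frac{1}{j}-\frac{1}{j+t}=\frac{t}{j(j+t)}$, summing over $j\ge1$ gives
\[
H_t=\sum_{j\ge1}\frac{t}{j(j+t)}.
\]
Substituting this yields
\[
S:=\sum_{t\ge1}\frac{H_t}{t^2}=\sum_{t,j\ge1}\frac{1}{t\,j\,(t+j)},
\]
a series that is symmetric in $t$ and $j$.

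\textbf{Step 2 (second evaluation of $S$).} We use the partial-fraction identity $\frac{1}{tj}=\frac{1}{t+j}\bigl(\frac1t+\frac1j\bigr)$. This rewrites the summand as
\[
\frac{1}{(t+j)^2}\Bigl(\frac1t+\frac1j\Bigr).
\]
By symmetry in $t$ and $j$, this gives $S=2\sum_{t,j\ge1}\frac{1}{t(t+j)^2}$. Next we group the terms by $n=t+j\ge2$, with $1\le t\le n-1$, so that
\[
S=2\sum_{n\ge2}\frac{H_{n-1}}{n^2}.
\]
Since $H_{n-1}=H_n-\frac1n$, this becomes
\[
S=2\Bigl(\sum_{n\ge1}\frac{H_n}{n^2}-\sum_{n\ge1}\frac1{n^3}\Bigr)=2S-2\zeta(3).
\]

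\textbf{Step 3.} Because $S$ is finite, we may solve the last equation for $S$, which gives $S=2\zeta(3)$.

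The only delicate point is the justification of the interchanges and regroupings. This is handled by positivity together with the finiteness of $S$. Finiteness is essential: without it, the equation $S=2S-2\zeta(3)$ could not be solved for $S$.
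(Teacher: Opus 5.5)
Your derivation is correct, but it takes a different route from the paper. The paper gives no argument: it quotes the general Euler-sum identity of \citet{borwein1995intrigintegral} and reads off $m=2$, where the correction sum $\sum_{t=1}^{m-2}$ is empty, so $2\sum_t H_t/t^2=4\zeta(3)$. You instead prove the $m=2$ case from scratch with the classical double-series argument. The telescoping representation $H_t=\sum_{j\ge1}\frac{t}{j(j+t)}$ turns $S$ into the symmetric series $\sum_{t,j\ge1}\frac{1}{tj(t+j)}$. The splitting $\frac{1}{tj}=\frac{1}{t+j}\bigl(\frac1t+\frac1j\bigr)$, followed by regrouping along $t+j=n$, then gives $S=2\sum_{n\ge2}H_{n-1}/n^2=2S-2\zeta(3)$. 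Positivity (Tonelli) together with the a priori bound $H_t\le 1+\ln t$ justifies both the rearrangements and the final cancellation of $S$. The citation buys generality (all $m\ge2$) at the price of being a black box. Your argument buys a short, fully self-contained proof of the only case the paper actually uses. It also has a convenient by-product: your intermediate identity already says $\sum_{n\ge2}H_{n-1}/n^2=S/2=\zeta(3)$. That is exactly the quantity needed in \eqref{eq:harmonic_over_sqr_bound}, where the paper recovers it by subtracting $\zeta(3)$ from the quoted value. Neither you nor the paper proves the general-$m$ identity or the numerical value $\zeta(3)\approx1.202$; both are standard quoted facts, so this is not a gap.
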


\begin{restatable}{proposition}{dilogarithm}\label{pro:dilogarithm}
The dilogarithm \citep{lewin1981polylogarithms} denoted by $\mathrm{Li}_2(z)$ is defined as
\begin{align}
    \operatorname{Li}_{2}(z) = - \int_{0}^{z} \frac{\ln(1-t)}{t}\, dt 
= - \int_{0}^{1} \frac{\ln(1-zt)}{t}\, dt,
\end{align}
where in case of real $z\ge 1$ it is written as
\begin{align}
    \operatorname{Li}_{2}(z) = \frac{\pi^{2}}{6} - \int_{1}^{z} \frac{\ln(t-1)}{t}\, dt - i\pi \ln z.
\end{align}
For $z=1$ it directly follows that $\operatorname{Li}_{2}(1)=\frac{\pi^2}{6}$.
\end{restatable}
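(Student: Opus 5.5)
The proposition is essentially a collection of standard facts about the dilogarithm, so the plan is to verify its three claims directly from the integral definition. I take the principal branch of the logarithm and treat $\operatorname{Li}_2(z) = -\int_0^z \frac{\ln(1-t)}{t}\,dt$, integrated along the real segment $[0,z]$, as the definition.

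\emph{Second form of the definition.} For real $z$ I substitute $t = zs$ with $s\in[0,1]$. Then $dt/t = ds/s$ and $\ln(1-t) = \ln(1-zs)$, which gives $-\int_0^1 \frac{\ln(1-zs)}{s}\,ds$ immediately. The integrand stays bounded near $s=0$, because $\ln(1-zs)/s \to -z$. So no convergence issue arises there.

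\emph{The value at $z=1$.} On $[0,1)$ I expand $-\ln(1-t) = \sum_{m\ge1} t^m/m$. All terms are nonnegative, so monotone convergence lets me exchange sum and integral:
\begin{align}
\operatorname{Li}_2(1) = \sum_{m\ge 1}\frac{1}{m}\int_0^1 t^{m-1}\,dt = \sum_{m\ge1}\frac{1}{m^2} = \zeta(2) = \frac{\pi^2}{6}.
\end{align}
This uses the Basel identity, which I treat as known. The logarithmic singularity at $t=1$ is integrable, and the exchange handles it automatically.

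\emph{Real $z \ge 1$.} I split the integral as $\int_0^z = \int_0^1 + \int_1^z$. The first piece equals $\pi^2/6$ by the previous step. On $(1,z]$ we have $1-t<0$, and I write $\ln(1-t) = \ln(t-1) + i\pi$, fixing the branch as the boundary value consistent with the stated formula. Then
\begin{align}
-\int_1^z \frac{\ln(t-1)+i\pi}{t}\,dt = -\int_1^z \frac{\ln(t-1)}{t}\,dt - i\pi\ln z .
\end{align}
Adding the two pieces gives the claimed expression. The remaining $\int_1^z \frac{\ln(t-1)}{t}\,dt$ converges, because $\ln(t-1)$ is integrable at $t=1$. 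Setting $z=1$ makes both correction terms vanish and recovers $\operatorname{Li}_2(1)=\pi^2/6$, which gives a consistency check.

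The only delicate point is the branch convention for $\ln(1-t)$ when $t>1$. Since $\operatorname{Li}_2$ has a branch cut on $[1,\infty)$, the sign of the imaginary part depends on whether $z$ is approached from above or below. I will state explicitly that the formula corresponds to taking $\ln(1-t)=\ln(t-1)+i\pi$. This choice does not matter for the paper's later use, which only needs real parts and the value at $z=1$.
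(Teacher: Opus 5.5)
Your argument is correct, but it differs from the paper, which gives no proof at all: the proposition is listed among the ``key relationships'' and quoted from Lewin's monograph, implicitly inheriting Lewin's branch conventions.

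You instead derive each claim from the integral definition:
\begin{itemize}
\item the substitution $t=zs$ gives the second form of the definition;
\item termwise integration of $-\ln(1-t)=\sum_{m\ge1}t^m/m$, justified by monotone convergence, together with the Basel identity gives $\operatorname{Li}_2(1)=\pi^2/6$;
\item splitting $\int_0^z$ at $t=1$ gives the formula for real $z\ge1$.
\end{itemize}

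Your version isolates, with a fully elementary justification, the only fact the paper actually uses later: $\int_0^1 \frac{\ln(1-x)}{x}\,dx=-\pi^2/6$ in the proof of Lemma~\ref{lem:ln-ratio-sum}. It also makes the branch issue explicit. On that point you can be more decisive than ``fixing the branch consistent with the stated formula.'' The principal branch you adopt at the outset, with $\operatorname{Arg}\in(-\pi,\pi]$, already forces $\ln(1-t)=\ln(t-1)+i\pi$ for $t>1$. This is exactly the boundary value from below the cut, $\operatorname{Li}_2(z-i0)$; approaching from above would flip the sign of the $i\pi\ln z$ term.

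The citation is shorter and covers complex $z$ without comment. Your substitution $t=zs$ also extends verbatim to the straight segment $[0,z]$ for complex $z\notin[1,\infty)$.
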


\section{Proofs}

\DToepInverse*
\begin{proof}

    To invert the Toeplitz matrix $\mathbf{D}_{\mathrm{Toep}}$, we use its generating function
    \begin{align}
        D_{\mathrm{Toep}}(x) = \sum_{j=0}^{\infty} \frac{x^j}{j+1} = \frac{\ln (1-x)}{-x}, \qquad \ |x| < 1.
    \end{align}
    Hence its formal inverse series is
    \begin{align}
        D^{-1}_{\mathrm{Toep}}(x) = \frac{-x}{\ln(1-x)} = 1 - \sum_{j=1}^{\infty} |G_j|x^j, \label{eq:g_k_sign}
    \end{align}
    where $G_j$ is the $j$-th Gregory coefficient \citep{Gregory1841_CollinsLetter}. The coefficients of \(D_{\mathrm{Toep}}^{-1}(x)\) give the entries of \(\mathbf{D}_{\mathrm{Toep}}^{-1}\), which is again Lower‑Triangular Toeplitz (LTT):
    \begin{align}
        \mathbf{D}_{\mathrm{Toep}}^{-1}= \begin{pmatrix}
    1        & 0        & \cdots & 0 \\
    g_{1}    & 1        & \cdots & 0 \\
    \vdots   & \vdots   & \ddots & \vdots \\
    g_{n-1}  & g_{n-2}  & \cdots & 1
    \end{pmatrix},
    \end{align}
    where $g_j=-|G_j|$ for $j\ge 1$.
\end{proof}

\subsection{Single Participation}

\begin{restatable}{lemma}{rmseDAoneSqrt}\label{lem:rmse_DA1sqrt_A1sqrt}
For the prefix sum based factorization $\mathbf{B}=\mathbf{DE}_1^{1/2}$ and $\mathbf{C}=\mathbf{E}_1^{1/2}$ we have, for all $n\ge2$,
\begin{align}
\sqrt{\frac{\pi}{6}\cdot\frac{\ln n}{n}} (1+o(1))
< \mathcal{E}_n\!\left(\mathbf{D}\mathbf{E}_1^{1/2},\mathbf{E}_1^{1/2}\right)
&<
\sqrt{\frac{5}{2\pi}\cdot\frac{\ln n}{n}}.
\end{align}
so for $n$ sufficiently large, prefix-sum based factorization  behaves better asymptotically than trivial  $\mathbf{B} = \mathbf{A}$ and $\mathbf{C}=\mathbf{I}$.
\end{restatable}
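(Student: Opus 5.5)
Since we are in the single-participation setting, $\mathrm{sens}(\mathbf{C})=\|\mathbf{C}\|_{1\to2}$, and $\mathcal{E}_n^2 = \frac1n\|\mathbf{D}\mathbf{E}_1^{1/2}\|_F^2\,\|\mathbf{E}_1^{1/2}\|_{1\to2}^2$. The plan is to compute both factors explicitly in terms of the coefficients of $\mathbf{E}_1^{1/2}$ and then bound them. The matrix $\mathbf{E}_1^{1/2}$ is lower-triangular Toeplitz with generating function $(1-x)^{-1/2}$. Its coefficients are therefore $r_j=\binom{2j}{j}4^{-j}$, which are decreasing and satisfy the standard two-sided Wallis-type bounds
\begin{align}
\frac{1}{\sqrt{\pi(j+1/2)}}\le r_j\le \frac{1}{\sqrt{\pi(j+1/4)}} \qquad (j\ge 0).
\end{align}
Because the coefficients are nonincreasing, the largest column norm is attained at the first column. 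Hence $\|\mathbf{E}_1^{1/2}\|_{1\to 2}^2=\sum_{j=0}^{n-1} r_j^2$, and the Wallis bounds sandwich this between $\frac1\pi\ln n + O(1)$ quantities. Explicitly, the upper bound is $1+\frac1\pi\sum_{j=1}^{n-1}\frac{1}{j+1/4}\le 1+\frac{1}{\pi}(\ln n+\text{const})$, and the lower bound is $\frac1\pi\sum_{j=0}^{n-1}\frac{1}{j+1/2}\ge \frac1\pi\ln n$.

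Next I would compute $\|\mathbf{D}\mathbf{E}_1^{1/2}\|_F^2=\sum_{i=1}^n \frac{1}{i^2}\sum_{j=0}^{i-1} r_j^2$. Writing $S_i=\sum_{j<i}r_j^2$, the previous step gives $S_i=\frac{1}{\pi}H_i+O(1)$ with explicit two-sided constants, via Proposition~\ref{pro:harmonicNumber}. Summing against $1/i^2$ and using Proposition~\ref{pro:harmonicNumberRiemannZeta} ($\sum H_i/i^2=2\zeta(3)$) together with $\sum 1/i^2=\pi^2/6$ shows that this Frobenius factor converges to an explicit constant. For the lower bound I would simply keep the $i=1$ term plus the tail estimate, obtaining $\frac1n\|\mathbf{D}\mathbf{E}_1^{1/2}\|_F^2 \ge c_{\mathrm{low}}/n\,(1+o(1))$. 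For the upper bound I would bound the finite sum by the infinite series, which yields a clean constant $c_{\mathrm{up}}$ valid for all $n$.

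Multiplying the two factors gives $\mathcal{E}_n^2=\Theta(\ln n/n)$ with explicit constants. The lower constant $\pi/6$ arises as $\frac{\pi^2}{6}\cdot\frac1\pi$: the Frobenius factor is at least $\sum 1/i^2\to\pi^2/6$ (since $S_i\ge r_0^2=1$), and the sensitivity factor is at least $\frac{\ln n}{\pi}$; the $o(1)$ absorbs the tail $\sum_{i>n}1/i^2$. For the upper bound I would need $c_{\mathrm{up}}\cdot(1+\frac{\ln n+c}{\pi})/\ln n\le \frac{5}{2\pi}$ for all $n\ge2$, i.e.\ $\frac{c_{\mathrm{up}}}{\pi}(1+\frac{\pi+c}{\ln n})\le\frac{5}{2\pi}$.

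The main obstacle is this uniform upper bound at small $n$. There $1/\ln n$ is large, so asymptotic constants are not enough. I would first try to bound $S_i\le 1+\frac1\pi(\ln i+\ln 4/ \ldots)$ more tightly, so that $c_{\mathrm{up}}\le \frac{\pi^2}{6}+\frac{2\zeta(3)}{\pi}\approx 2.41$. If the resulting inequality fails for a few small $n$, I would verify those $n$ numerically and use the analytic bound beyond a threshold. The concluding comparison with $\mathbf{C}=\mathbf{I}$ follows because that factorization gives $\mathcal{E}_n^2=\frac1n\sum 1/i=\Theta(1)\cdot$ only if unnormalized — more precisely $\frac1n\|\mathbf{A}\|_F^2=H_n/n$. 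This matches $\ln n/n$ in rate, so the claimed advantage comes through the constant comparison $\frac{5}{2\pi}<1$.
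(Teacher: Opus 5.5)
Your decomposition is essentially the paper's. You write $\mathcal{E}_n^2=\frac1n\|\mathbf{D}\mathbf{E}_1^{1/2}\|_F^2\sum_{j<n}r_j^2$ and bound the Frobenius factor by a constant; your $\frac{\pi^2}{6}+\frac{2\zeta(3)}{\pi}\approx2.41$ is as good as the paper's $\frac{\pi^2}{6}+\frac{\pi}{6}+\frac1\pi\approx2.49$. Your lower bound via $S_i\ge r_0^2=1$ and $\sum_{j<n}r_j^2\ge\frac{\ln n}{\pi}$ is correct.

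The genuine gap is the step you flag yourself. Checking the small-$n$ cases numerically cannot succeed, because the upper bound is false there. For $n=2$ you have $\mathbf{E}_1^{1/2}=\begin{pmatrix}1&0\\ 1/2&1\end{pmatrix}$. Hence $\|\mathbf{D}\mathbf{E}_1^{1/2}\|_F^2=\frac{21}{16}$, $\|\mathbf{E}_1^{1/2}\|_{1\to2}^2=\frac54$, and $\mathcal{E}_2^2=\frac{105}{128}\approx0.82$, whereas $\frac{5}{2\pi}\cdot\frac{\ln2}{2}\approx0.28$.

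This is not a small-$n$ accident. $n\,\mathcal{E}_n^2$ behaves like $c\,\frac{\ln n+\gamma+\ln16}{\pi}$ with $c=\sum_{t\ge1}S_t/t^2\approx1.97$. The additive term $c\,\frac{\gamma+\ln16}{\pi}\approx2.1$ is therefore absorbed by the available slack $\frac{5-2c}{2\pi}\ln n\approx0.17\ln n$ only once $n$ is of order $10^5$. With your constants, the analytic argument only takes over for $\ln n$ of order $100$. So the exceptional range is neither small nor, in large part, true.

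What can be proved is one of two things:
\begin{itemize}
\item the upper bound in asymptotic form, $\mathcal{E}_n<\sqrt{\frac{5}{2\pi}\cdot\frac{\ln n}{n}}\,(1+o(1))$, or equivalently for $n\ge n_0$ with a large $n_0$;
\item a non-asymptotic bound that keeps the additive constant, such as $\mathcal{E}_n^2<\frac{5}{2n}\cdot\frac{\ln n+\gamma+\ln16}{\pi}$ for all $n\ge1$. This follows from the Frobenius bound $<\frac52$ and the estimate $\sum_{j<n}r_j^2\le\frac{\ln n+\gamma+\ln16}{\pi}$ quoted in the paper.
\end{itemize}

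That is also all the paper's proof actually establishes. It combines $\|\mathbf{D}\mathbf{E}_1^{1/2}\|_F^2<\frac52$ with $\|\mathbf{E}_1^{1/2}\|_{1\to2}^2\sim\frac{\ln n}{\pi}$ and then writes a strict inequality, silently dropping the term $\frac{\gamma+\ln16}{\pi}$.

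So rather than attempting a uniform-in-$n$ verification, state and prove the upper bound asymptotically. The comparison with $\mathbf{C}=\mathbf{I}$ (the constant $\frac{5}{2\pi}<1$ against $\mathcal{E}_n(\mathbf{A},\mathbf{I})^2=H_n/n$) then goes through as you describe, for $n$ sufficiently large.
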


\begin{proof}
    Using the results from \citet{fichtenberger2023constant}, we know that the square root of the prefix‑sum matrix can be computed as
    \begin{align}
    \label{eq:A1sqrt}
\mathbf{E}_{1}^{1/2} =
\begin{pmatrix}
1        & 0        & \cdots & 0 \\
r_{1}    & 1        & \cdots & 0 \\
\vdots   & \vdots   & \ddots & \vdots \\
r_{n-1}  & r_{n-2}  & \cdots & 1
\end{pmatrix},
    \end{align}
where $r_{j} = \left|\binom{-1/2}{j}\right|$. The bounds for the coefficients follow from Wallis’s inequality~\citep{chen2005best} for $j \ge 1$:
\begin{equation}
\label{eq:bound_on_rk}
    \frac{1}{\sqrt{\pi (j + 1)}}
    \le \frac{1}{\sqrt{\pi \left(j + \tfrac{4}{\pi} - 1\right)}}
    \le r_j
    \le \frac{1}{\sqrt{\pi \left(j + \tfrac{1}{4}\right)}}
    \le \frac{1}{\sqrt{\pi j}}.
\end{equation}

    Furthermore, in the recent work by \citet{henzinger2025normalized}, 
    tight bounds were derived for the sum of squares of $r_j$:
    \begin{equation}
        \frac{\ln n}{\pi} + \frac{\gamma + \ln(16)}{\pi} - \frac{1}{5n}
        \le \sum_{j = 0}^{n - 1} r_j^2
        \le \frac{\ln n}{\pi} + \frac{\gamma + \ln(16)}{\pi},
    \end{equation}
    where $\gamma$ denotes the Euler--Mascheroni constant. Using this result, we obtain
    \begin{align}
        \|\mathbf{E}_1^{1/2}\|_{1\to 2} = \sqrt{\sum_{j=0}^{n-1} r_j^2}  \sim \sqrt{\frac{\ln n}{\pi}}.
    \end{align}

    Also for $\|\mathbf{D}\mathbf{E}_1^{1/2}\|_{F}$ we have:
    \begin{align}
        \|\mathbf{D}\mathbf{E}_1^{1/2}\|_{F} = \sqrt{\sum_{t=1}^n \frac{1}{t^2} \sum_{j=0}^{t-1} r_j^2} 
        &\le \sqrt{\sum_{t=1}^n \frac{1}{t^2} \left( 1 + \frac{1}{\pi} + \sum_{j=2}^{t-1} \frac{1}{\pi j} \right)} \label{eq:r_k_bound} \\ 
        &\le\sqrt{\sum_{t=1}^n \frac{1}{t^2}\left( 1 + \frac{1}{\pi} + \frac{\ln t}{\pi}\right)} \label{eq:r_k_bound2} \\
        &\le \sqrt{\frac{\pi^2}{6}\left( 1 + \frac{1}{\pi}\right) + \frac{1}{\pi}\sum_{t=1}^n \frac{\ln t}{t^2}} \\
        &\le \sqrt{\frac{\pi^2}{6}+\frac{\pi}{6}+\frac{1}{\pi}}
        < \sqrt{\frac{5}{2}}, \label{eq:DA1sqrt_upper}
    \end{align}
where we used that $\sum_{t=1}^\infty \frac{\ln t}{t^2} < 1$. Combining \eqref{eq:A1sqrt} and \eqref{eq:DA1sqrt_upper}, we obtain
\begin{align}
\mathcal{E}_n(\mathbf{D}\mathbf{E}_1^{1/2},\mathbf{E}_1^{1/2}) < \sqrt{\frac{5}{2\pi}\cdot\frac{\ln n}{n}}.
\end{align}
This proves that the factorization $\mathbf{B}=\mathbf{DE}_1^{1/2} , \mathbf{C}=\mathbf{E}_1^{1/2}$ gives a smaller error than the trivial factorization $\mathbf{B}=\mathbf{A}, \mathbf{C}=\mathbf{I}$.

We can similarly bound $\|\mathbf{D}\mathbf{E}_1^{1/2}\|_{F}$ from below,
\begin{align}
    \|\mathbf{D}\mathbf{E}_1^{1/2}\|_{F} = \sqrt{\sum_{t=1}^n \frac{1}{t^2} \sum_{j=0}^{t-1} r_j^2} 
        &\ge \sqrt{\sum_{t=1}^n \frac{1}{t^2} \left( 1 + \sum_{j=1}^{t-1} \frac{1}{\pi (j+1)} \right)} \\ 
        &\ge \sqrt{\sum_{t=1}^n \frac{1}{t^2}\left( 1 - \frac{1}{\pi} + \frac{\ln (t+1)}{\pi}\right)} \\
        &= \sqrt{\frac{\pi^2}{6}-\frac{\pi}{6}+C+o(1)}
        > \frac{\pi}{\sqrt{6}} + o(1), \label{eq:DA1sqrt_lower}
\end{align}
where $C = \sum_{t=1}^\infty \frac{\ln (t+1)}{t^2} \approx 1.80$ is a constant larger that $\frac{\pi}{6}$. This yields the bound:
\begin{align}
    \mathcal{E}_n(\mathbf{D}\mathbf{E}_1^{1/2},\mathbf{E}_1^{1/2}) > \sqrt{\frac{\pi}{6}\cdot\frac{\ln n}{n}} (1+o(1)).  
\end{align}
\end{proof}

\begin{restatable}{lemma}{rmseAI}\label{lem:rmse_A_I}
The root‑mean‑square error ({RMSE}) of trivial factorization $\mathbf{B} = \mathbf{A}$ and $\mathbf{C} = \mathbf{I}$ satisfies
\begin{align}
\mathcal{E}_n(\mathbf{A},\mathbf{I})
= \sqrt{\frac{H_n}{n}} \sim \sqrt{\frac{\ln n}{n}},
\end{align}
where $H_n=\sum_{k=1}^n\frac1k$ is the $n$-th harmonic number.
\end{restatable}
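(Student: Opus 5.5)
We compute the two factors of $\mathcal{E}_n(\mathbf{A},\mathbf{I}) = \frac{1}{\sqrt{n}}\|\mathbf{A}\|_F\cdot \mathrm{sens}(\mathbf{I})$ separately, working in the single-participation setting of this subsection. There the sensitivity reduces to the largest $\ell_2$ column norm, so $\mathrm{sens}(\mathbf{I}) = \|\mathbf{I}\|_{1\to 2} = 1$, since every column of the identity is a standard basis vector.

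For the Frobenius norm we read off the entries directly from the definition of $\mathbf{A}$. Row $t$ of $\mathbf{A}$ contains exactly $t$ nonzero entries, each equal to $\frac{1}{t}$, so its squared $\ell_2$ norm is $t\cdot \frac{1}{t^2} = \frac{1}{t}$. Summing over the rows gives
\begin{align*}
\|\mathbf{A}\|_F^2 = \sum_{t=1}^n \frac{1}{t} = H_n .
\end{align*}
Therefore $\mathcal{E}_n(\mathbf{A},\mathbf{I}) = \frac{1}{\sqrt{n}}\sqrt{H_n}\cdot 1 = \sqrt{\frac{H_n}{n}}$, which is the claimed identity.

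The asymptotic equivalence follows from Proposition~\ref{pro:harmonicNumber}: $H_n = \ln n + \gamma + O(1/n)$, so $H_n/\ln n \to 1$. Taking square roots preserves the ratio tending to one, giving $\mathcal{E}_n(\mathbf{A},\mathbf{I})\sim\sqrt{\frac{\ln n}{n}}$. There is no real obstacle. The only point to state explicitly is that we use the single-participation sensitivity $\|\cdot\|_{1\to2}$ rather than $\mathrm{sens}_{k,b}$; in the multi-participation setting the same computation would instead give $\mathrm{sens}_{k,b}(\mathbf{I})=\sqrt{k}$.
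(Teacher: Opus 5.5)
Your proof is correct and matches the paper's: the paper obtains this as the $k=1$ case of Lemma~\ref{lem:AIMulti}, whose proof computes $\|\mathbf{A}_{:t}\|_F^2 = H_t$ row by row exactly as you do and uses $\mathrm{sens}_{k,b}(\mathbf{I})=\sqrt{k}$, which equals $1$ in the single-participation case. The asymptotic step via Proposition~\ref{pro:harmonicNumber} is also the same.
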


\begin{proof}
This theorem is just a special case of Lemma~\ref{lem:AIMulti} for $k=1$. See the proof of Lemma~\ref{lem:AIMulti} for a more general result.
\end{proof}

\begin{restatable}{lemma}{InvLogSqrBound}\label{lem:InvLogSqrBound}
The sum $\sum_{j=2}^t \frac{1}{\ln^2 j}$
is upper bounded by $\frac{3t}{\ln^2 t}$, for any value of $t\ge 2$.
\end{restatable}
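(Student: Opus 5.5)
We need to prove $\sum_{j=2}^t \frac{1}{\ln^2 j}\le \frac{3t}{\ln^2 t}$ for all $t\ge 2$. The plan is to split into a finite range of small $t$, checked directly, and a large-$t$ range handled by an integral comparison with an explicit antiderivative bound. Throughout, write $S_t := \sum_{j=2}^t \ln^{-2} j$ and $f(x) := 3x/\ln^2 x$.

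\textbf{Induction via increments.} The cleanest route is to compare increments. For the step $t-1\to t$, the sum increases by $1/\ln^2 t$, and the bound increases by $f(t)-f(t-1)$. We have $f'(x) = 3(\ln x - 2)/\ln^3 x$, and $f$ is convex for large $x$. Hence, by the mean value theorem, $f(t)-f(t-1)\ge f'(t-1)$ once $f$ is increasing and convex. It therefore suffices to show
\[
\frac{3(\ln(t-1)-2)}{\ln^3(t-1)} \;\ge\; \frac{1}{\ln^2 t}.
\]
Since $\ln^2 t\ge \ln^2(t-1)$, it is enough that $3(\ln(t-1)-2)\ge \ln(t-1)$, i.e.\ $\ln(t-1)\ge 3$, i.e.\ $t\ge 22$. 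Convexity of $f$ needs $f''\ge 0$. A direct computation gives $f''(x) = 3(6-\ln x)/(x\ln^4 x)$, which is negative for $x>e^6$. So convexity fails for large $x$ and I would abandon the convexity step and argue differently.

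\textbf{Concavity fix.} On the range where $f$ is concave, the mean value theorem gives $f(t)-f(t-1)\ge f'(t)$, since $f'$ is decreasing there. The needed inequality becomes $3(\ln t-2)/\ln^3 t\ge 1/\ln^2 t$, i.e.\ $\ln t\ge 3$, which holds for $t\ge 21$. On the convex range $[22, e^6]$ the earlier argument with $f'(t-1)$ works. So in either regime, with the correct endpoint chosen, the increment of $f$ dominates $1/\ln^2 t$ for all $t\ge 22$. Simpler still, for every $t\ge 22$ we have $f(t)-f(t-1)\ge \min_{x\in[t-1,t]} f'(x)$. That minimum is at least $\min\big(f'(t-1),f'(t)\big)$ because $f'$ is unimodal (its maximum is at $\ln x = 6$). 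Both values are $\ge 1/\ln^2 t$ once $\ln(t-1)\ge 3$.

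\textbf{Base range.} It remains to verify $S_t\le f(t)$ for $2\le t\le 22$ by direct numerical evaluation. This is the main obstacle, because $f$ is not monotone: it decreases until $t=e^2\approx 7.4$. The tightest cases are around $t\approx 7$ or $8$. There, $S_7\approx 2.08+0.83+0.52+0.38+0.31+0.26\approx 4.38$, while $f(7)=21/\ln^2 7\approx 5.5$. Near $t=2$, $f(2)=6/0.48\approx 12.5\gg S_2\approx 2.08$. I would tabulate $S_t$ and $f(t)$ for $t\le 22$ to confirm the slack persists. For example, $S_{22}\approx 9.5$ versus $f(22)\approx 6.9$ would be a problem, so this table needs careful checking. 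If the slack fails near $t=22$, I would instead run the increment argument from a smaller threshold. The increment condition only requires $\ln t\ge 3$ in the concave regime, and in the convex regime one can use the exact difference $f(t)-f(t-1)$ computed numerically for $8\le t\le 21$. The induction would then start from whatever base $t_0$ the table certifies, such as $t_0=8$.
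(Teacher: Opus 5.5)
Your skeleton matches the paper's proof: a finite numerical check for small $t$, then induction for $t\ge 22$ via the mean value theorem with the threshold $\ln(t-1)\ge 3$. The inductive step you finally state is correct: the minimum of $f'$ over $[t-1,t]$ sits at an endpoint, and both endpoints give at least $1/\ln^2 t$.

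The convexity discussion before it contains an error. In fact $f''(x)=6(3-\ln x)/(x\ln^4 x)$, not $3(6-\ln x)/(x\ln^4 x)$. So $f'$ peaks at $\ln x=3$, and $f$ is concave on $[e^3,\infty)\supset[21,\infty)$. There is no convex range $[22,e^6]$, and the bound $f(t)-f(t-1)\ge f'(t-1)$ you use there is false for $t\ge 22$. Concavity alone, giving $f(t)-f(t-1)\ge f'(t)$, already handles every $t\ge 22$.

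The paper avoids curvature altogether. It writes $f'(\eta)=\frac{1}{\ln^2\eta}\bigl(1-\frac{2}{\ln\eta}\bigr)$ for $x/\ln^2 x$. It then lower-bounds the first factor using $\eta\le t$ and the second using $\eta\ge t-1$.

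The real gap is the base range, which you leave unverified and with incorrect numbers. In fact $S_{22}\approx 6.60$, below $f(22)\approx 6.91$; your $9.5$ is $\ln^2 22\approx 9.55$. The tightest case is also not $t\approx 7$ or $8$, where the slack is about $1.15$ and $0.93$. It is $t\approx 19$ or $20$, where the slack is only about $0.30$. Tabulating confirms $S_t\le f(t)$ for all $2\le t\le 22$, which is the paper's base case.

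Your fallback would not work. For every $3\le t\le 20$ the increment $f(t)-f(t-1)$ is smaller than $1/\ln^2 t$. For example, $f(8)-f(7)\approx 0.004$ versus $1/\ln^2 8\approx 0.23$, and even $f(20)-f(19)\approx 0.1111$ versus $1/\ln^2 20\approx 0.1114$. So the slack shrinks monotonically up to $t=20$, and no increment-based induction can start from a base below $t_0=20$. The direct check of $S_t\le f(t)$ through $t=20$ (the paper goes to $21$) is indispensable. You need to actually carry it out rather than leave it conditional; it does succeed.
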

\begin{proof}
    We prove this by induction. For $t=2,\dots,21$ the result can be checked to be true numerically. Now we prove the induction step for $t\ge 22$. Let $f(x) = \frac{x}{\ln^2 x}$. Then we prove that
    \begin{align}
        \frac{1}{\ln^2 t} \le 3\left(f(t) - f(t-1) \right), \qquad \text{for} \quad t\ge 22. \label{eq:ft-ft-1}
    \end{align}
    We compute the derivative $f'(x)$ as
    \begin{align}
        f'(x) = \frac{1}{\ln^2 x} - \frac{2}{\ln^3 x} = \frac{1}{\ln^2 x} \left( 1-\frac{2}{\ln x} \right).
    \end{align}
    By the mean value theorem, there exists a value $\eta\in[t-1,t]$ such that
    \begin{align}
        f(t) - f(t-1) = f'(\eta) = \frac{1}{\ln^2 \eta}\left(1-\frac{2}{\ln \eta}\right).
    \end{align}
    Since $1-\frac{2}{\ln \eta} \ge 1-\frac{2}{\ln (t-1)}$, we get
    \begin{align}
        f(t) - f(t-1) \ge \frac{1}{\ln^2 \eta}\left(1-\frac{2}{\ln (t-1)}\right) \ge \frac{1}{\ln^2 t} \left(1-\frac{2}{\ln (t-1)}\right).
    \end{align}
    So to show \eqref{eq:ft-ft-1}, it suffices to show that $1 - \frac{2}{\ln (t-1)} \ge \frac{1}{3}$, which follows directly since $\ln(t-1) \ge \ln (21) \ge 3$.
    Using this result we can prove the induction step. Assume that the bound $\sum_{j=2}^{t-1} \frac{1}{\ln^2 j} \le 3f(t-1)$ holds for some $t\ge 22$. Then, using \eqref{eq:ft-ft-1} we get
    \begin{align}
        \sum_{j=2}^t \frac{1}{\ln^2 j} = \sum_{j=2}^{t-1} \frac{1}{\ln^2 j} + \frac{1}{\ln^2 t} &\le 3f(t-1) + \frac{1}{\ln^2 t} \\
        &\le 3f(t-1) + 3\left(f(t) - f(t-1) \right) = 3f(t),
    \end{align}
    which completes the proof.
\end{proof}

\begin{restatable}{lemma}{ADtpInv}\label{lem:ADtpInv}
    The RMSE of mean estimation specific factorization  $\mathbf{B}=\mathbf{AD}_{\mathrm{Toep}}^{-1}$ and $\mathbf{C}=\mathbf{D}_{\mathrm{Toep}}$ is bounded by 
    \begin{align}
        \mathcal{E}_n(\mathbf{A}\mathbf{D}_{\mathrm{Toep}}^{-1}, \mathbf{D}_{\mathrm{Toep}}) = \mathrm{\Theta} \left(\frac{1}{\sqrt{n}}\right).
    \end{align}
    In particular, this improves on the standard prefix‑sum based factorization (see Lemma~\ref{lem:rmse_DA1sqrt_A1sqrt}) choice by eliminating the $\sqrt{\ln n}$ factor in the denominator.
\end{restatable}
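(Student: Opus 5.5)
The estimate splits into two factors, $\mathcal{E}_n = \frac{1}{\sqrt n}\,\|\mathbf{A}\mathbf{D}_{\mathrm{Toep}}^{-1}\|_F\cdot\|\mathbf{D}_{\mathrm{Toep}}\|_{1\to2}$, and we show each is $\Theta(1)$.

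\textbf{Sensitivity factor.} The sensitivity is immediate. The largest column norm of the lower-triangular Toeplitz matrix $\mathbf{D}_{\mathrm{Toep}}$ is its first column, so
\begin{align}
\|\mathbf{D}_{\mathrm{Toep}}\|_{1\to2}=\sqrt{\sum_{j=1}^{n} 1/j^2}\in\left[1,\pi/\sqrt6\right].
\end{align}

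\textbf{Structure of $\mathbf{B}$.} The key step is to understand $\mathbf{B}=\mathbf{A}\mathbf{D}_{\mathrm{Toep}}^{-1}=\mathbf{D}\mathbf{E}_1\mathbf{D}_{\mathrm{Toep}}^{-1}$. Since $\mathbf{E}_1$ and $\mathbf{D}_{\mathrm{Toep}}^{-1}$ are both lower-triangular Toeplitz, their product $\mathbf{E}_1\mathbf{D}_{\mathrm{Toep}}^{-1}$ is lower-triangular Toeplitz. Its generating function is
\begin{align}
\frac{1}{1-x}\cdot\frac{-x}{\ln(1-x)}=\sum_{m\ge0} s_m x^m,\qquad s_m=1-\sum_{j=1}^{m}|G_j|,
\end{align}
using the series from Lemma~\ref{lem:dtoep-inverse}. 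Since $\sum_{j\ge1}|G_j|=1$ by \eqref{eq:greg_first_prop}, we get $s_m=\sum_{j>m}|G_j|$. Thus $s_0=1$, and $s_m\ge0$ is decreasing. The bounds $|G_j|\asymp 1/(j\ln^2 j)$ from \eqref{eq:greg_first_prop}, summed as a tail via an integral comparison, give $s_m=\Theta(1/\ln m)$ for $m\ge2$. Hence row $t$ of $\mathbf{B}$ is $\frac1t(s_{t-1},\dots,s_0)$, and
\begin{align}
\|\mathbf{B}\|_F^2=\sum_{t=1}^n\frac{1}{t^2}\sum_{m=0}^{t-1}s_m^2 .
\end{align}

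\textbf{Bounding $\|\mathbf{B}\|_F$.} For the lower bound, the $t=1$ term alone gives $\|\mathbf{B}\|_F^2\ge s_0^2=1$. For the upper bound, use $s_m\le C/\ln m$ for $m\ge2$ (and $s_0,s_1\le1$), then apply Lemma~\ref{lem:InvLogSqrBound}:
\begin{align}
\sum_{m=0}^{t-1}s_m^2\le 2+\frac{3C^2 t}{\ln^2 t}.
\end{align}
Therefore
\begin{align}
\|\mathbf{B}\|_F^2\le \frac{2\pi^2}{6}+3C^2\sum_{t\ge2}\frac{1}{t\ln^2 t}<\infty,
\end{align}
since $\sum 1/(t\ln^2 t)$ converges. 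So $\|\mathbf{B}\|_F=\Theta(1)$, and combined with the sensitivity bound this gives $\mathcal{E}_n=\Theta(1/\sqrt n)$.

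\textbf{Main obstacle.} The delicate step is the tail estimate $s_m=O(1/\ln m)$. The bounds in \eqref{eq:greg_first_prop} hold only for $j\ge5$, so the small indices must be handled by a constant adjustment. For the tail itself, integrate the upper bound: $\sum_{j>m}\frac{1}{j\ln^2 j}\le\frac{1}{\ln m}+\frac{1}{m\ln^2 m}$. An alternative is to use only monotonicity of $s_m$ together with convergence of $\sum_t \frac{1}{t^2}\sum_{m<t}s_m^2$. Even the crude bound $s_m\le1$ gives $\sum_t \frac{1}{t^2}\cdot t=H_n$, which is only $O(\ln n)$; it is precisely the logarithmic decay of $s_m$ that removes this $\ln n$ growth and makes the sum bounded.
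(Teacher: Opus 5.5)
Your proof is correct and follows the paper's argument essentially step by step. Like the paper, you write the Toeplitz coefficients of $\mathbf{E}_1\mathbf{D}_{\mathrm{Toep}}^{-1}$ as tails $\sum_{j>m}|G_j|\le 1/\ln m$, bound the row sums with Lemma~\ref{lem:InvLogSqrBound}, use convergence of $\sum_t 1/(t\ln^2 t)$, and get the lower bound from the unit leading entries of $\mathbf{B}$ and $\mathbf{D}_{\mathrm{Toep}}$. The ``alternative'' in your last paragraph is circular as stated, since it assumes the convergence being proved, but the main argument does not depend on it.
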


\begin{proof}
 We begin by bounding the sensitivity. We can see that,
\begin{align}
    \|\mathbf{D}_{\mathrm{Toep}}\|_{1\to 2} = \sqrt{\sum_{t=1}^{n}\frac{1}{t^2}} \sim \frac{\pi}{\sqrt{6}} + o(1), \label{eq:sensC}
\end{align}
and for the Frobenius norm part recall \(\mathbf{A}= \mathbf{D}\,\mathbf{E}_1\) with \(\mathbf{D}=\operatorname{diag}(1,\tfrac{1}{2},\dots,\tfrac{1}{n})\).  Write \(\mathbf{B}=\mathbf{D}\,\mathbf{E}_1\,\mathbf{D}_{\mathrm{Toep}}^{-1}\) and denote matrix $\mathbf{E}_1\,\mathbf{D}_{\mathrm{Toep}}^{-1}$ as $\mathrm{LTT}(a_0,\dots,a_{n-1})$ where
\begin{align}
    a_0=1,\quad \text{and}\quad a_t = 1 + \sum_{j=1}^{t} g_j, \quad t>0. \label{eq:a_k_def}
\end{align}
Then, we compute the Frobenius norm as
\begin{align}
    \|\mathbf{A}\mathbf{D}_{\mathrm{Toep}}^{-1}\|_F = \sqrt{\sum_{t=1}^n \frac{1}{t^2}\sum_{j=0}^{t-1} a_j^2}.
\end{align}
Using equations \eqref{eq:greg_first_prop} we know that
\begin{align}
    \sum_{t=1}^\infty g_t = -1,
\end{align}
and for $t\ge 5$
\begin{align}
    \frac{1}{t\ln^2 t} - \frac{2}{t\ln^3 t} \le |G_t| \le \frac{1}{t\ln^2 t} - \frac{2\gamma}{t\ln^3 t},
\end{align}
so using the fact that all $g_t$'s have the same sign (see equation~\eqref{eq:g_k_sign}), we get
\begin{align}
|a_{t}| = \left|1 + \sum_{j=1}^{t} g_j\right| = \sum_{j=t+1} ^{\infty}|g_j| \le \sum_{j=t+1}^{\infty} \frac{1}{j\ln^2 j} \le \frac{1}{\ln t},\qquad t\ge 4, \label{eq:a_k_bound}
\end{align}
where we used integral bound in the last inequality. Also for $t<4$ it can be easily shown that $|a_{t}|\le 1$.
Now using \eqref{eq:a_k_bound} we can bound the Frobenius norm:
\begin{align}
    \|\mathbf{A}\mathbf{D}_{\mathrm{Toep}}^{-1}\|_F^2&\le \overbrace{\sum_{t=1}^n \frac{1}{t^2}\sum_{j=0}^3 a_j^2}^{\le \ \mathrm{constant\ }C} + \sum_{t=5}^n \frac{1}{t^2} \sum_{j=4}^{t-1} a_j^2 \\
    &\le C + \sum_{t=5}^n \frac{1}{t^2}\sum_{j=4}^{t-1} \frac{1}{\ln^2 j} \le C + \sum_{t=5}^{n}\frac{3}{t\ln^2 t} + o(1) = O(1), \label{eq:FrobeniusNormADtpInv}
\end{align}
where in the last inequality we used Lemma \ref{lem:InvLogSqrBound} to bound the sum of $\frac{1}{\ln^2 j}$. So overall:
\begin{align}
    \mathcal{E}_n(\mathbf{A}\mathbf{D}_{\mathrm{Toep}}^{-1}, \mathbf{D}_{\mathrm{Toep}}) = O\left(\tfrac{1}{\sqrt{n}}\right)
\end{align}
On the other hand, since in both $\mathbf{A}\mathbf{D}_{\mathrm{Toep}}^{-1}$ and $\mathbf{D}_{\mathrm{Toep}}$ there exists an entry with value at least 1, both norm values are at least 1. This means that the error is lower bounded by $\Omega\left(\tfrac{1}{\sqrt{n}}\right)$,  Hence
\begin{align}
    \mathcal{E}_n(\mathbf{A}\mathbf{D}_{\mathrm{Toep}}^{-1}, \mathbf{D}_{\mathrm{Toep}}) = \mathrm{\Theta} \left(\tfrac{1}{\sqrt{n}}\right).
\end{align}
\end{proof}

\StatErrorBound*

\begin{proof}

At timestep $t$ when we release the mean, the estimation error will be the norm of $t$-th row of matrix $\mathbf{BZ}$. Denote the $t$-th row of $\mathbf{B}$ as $\mathbf{b}_t^{\top}$, so we get that $\|\widehat{\boldsymbol{\mu}}_t - \boldsymbol{\mu}_t\|_2 = \|\mathbf{b}_t^{\top} \mathbf{Z}\|_2$. Since the entries of $\mathbf{Z}$ are all i.i.d. and come from the distribution $\mathcal{N}(0,\sigma^2_{\varepsilon,\delta}\cdot \mathrm{sens^2(\mathbf{C})})$,
then if we let $\mathbf{e}_{t} = \mathbf{b_t^\top \mathbf{Z}}$, we have
\begin{align}
\mathbf{e}_{t} \sim \mathcal{N}(0,\sigma^2 \mathbf{I}_{d\times d})    
\end{align}
where $\sigma^2 =\sigma^2_{\varepsilon,\delta}\cdot \mathrm{sens^2(\mathbf{C})}\cdot \|\mathbf{b}_t\|_2^2$. Now, one can see that $\frac{\|\mathbf{e}_t\|_2^2}{\sigma^2}\sim \mathcal{X}^2_d$, so using the Laurent-Massart bound (see \citet{laurent2000adaptive}) for a chi squared random variable with $d$ degrees of freedom we have,
\begin{align}
    \mathbb{P}\left[ \frac{\|\mathbf{e}_t\|_2^2}{\sigma^2} \le d + \sqrt{2d\ln \frac{1}{\beta}} + 2\ln \frac{1}{\beta} \right] \ge 1 - \beta.
\end{align}
So we can write the following:
\begin{align}
    \mathbb{P}\left[ \|\mathbf{e}_t\|_2 \le \sigma\sqrt{d + \sqrt{2d\ln \frac{1}{\beta}} + 2\ln \frac{1}{\beta}} \right] \ge 1 - \beta,
\end{align}
where one can substitute $\sqrt{d + \sqrt{2d\ln \frac{1}{\beta}} + 2\ln \frac{1}{\beta}} = O(\sqrt{d}+\sqrt{\ln\frac{1}{\beta}})$ to get the final result.
\end{proof}

\StatErrorBoundTrueMean*

\begin{proof}

    As $\boldsymbol{\mu}_t$ is an average computed from $\mathbf{X}_{:t}$ which as a submatrix of $\mathbf{X}$ also fulfills Assumption \ref{ass:iidbounded}, by Corollary \ref{cor:concsubgaussmean} 
    \begin{align}
        \mathbb{P}\left[\|\boldsymbol{\mu}_t - \boldsymbol{\mu}\|_2 \leq \sqrt{\frac{2d\zeta^2}{t}\ln\frac{2d}{\beta}}\right] 
        \geq 1 - \beta. 
    \end{align}

    Combining this bound with the one in Theorem~\ref{thm:StatErrorBound} using union bound we get the final result.
\end{proof}

\begin{restatable}{corollary}{corollaryDAsqrt} \label{cor:DAsqrt}
For  prefix-sum based factorization with $\mathbf{B}=\mathbf{DE}_1^{1/2}$ and $\mathbf{C}=\mathbf{E}_1^{1/2}$, the error in Theorem~\ref{thm:StatErrorBound} has the form
\begin{align}
        \|\widehat{\boldsymbol{\mu}}_t - \boldsymbol{\mu}_t\|_2 = O\left(\frac{ \sqrt{d\ln \frac{1}{\delta}\ln n\ln t}}{\varepsilon t}\left(\sqrt{d}+ \sqrt{\ln \tfrac{1}{\beta}}\right)\right),
\end{align}
with probability at least $1-\beta$.
\end{restatable}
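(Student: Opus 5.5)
The plan is to instantiate Theorem~\ref{thm:StatErrorBound} directly with $\mathbf{B}=\mathbf{D}\mathbf{E}_1^{1/2}$ and $\mathbf{C}=\mathbf{E}_1^{1/2}$. This requires three explicit quantities: the noise multiplier $\sigma_{\varepsilon,\delta}$, the sensitivity of $\mathbf{C}$, and the norm $\|\mathbf{b}_t\|_2$ of the $t$-th row of $\mathbf{B}$. The high-probability statement (probability at least $1-\beta$) and the factor $\sqrt{d}+\sqrt{\ln\frac1\beta}$ are inherited verbatim from Theorem~\ref{thm:StatErrorBound}. The factor $\sqrt{d\zeta^2}$ becomes $\sqrt d$ for the bounded/Bernoulli instantiation ($\zeta=1$) that the corollary targets.

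\emph{Noise scale.} By Lemma~\ref{lem:GaussianMechanism}, $\sigma_{\varepsilon,\delta}=\sqrt{2\ln(1.25/\delta)}/\varepsilon=O\big(\sqrt{\ln\frac1\delta}/\varepsilon\big)$ for $\delta<1$.

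\emph{Sensitivity.} Since the statement carries no $k$-dependence, I read it as the single-participation case, where $\mathrm{sens}(\mathbf{C})=\|\mathbf{E}_1^{1/2}\|_{1\to2}$. The first column of $\mathbf{E}_1^{1/2}$ is the longest, so this norm equals $\sqrt{\sum_{j=0}^{n-1}r_j^2}$. By the bound of \citet{henzinger2025normalized} quoted in the proof of Lemma~\ref{lem:rmse_DA1sqrt_A1sqrt}, this is at most $\sqrt{\ln n/\pi+(\gamma+\ln 16)/\pi}=O(\sqrt{\ln n})$ for $n\ge2$. In the multi-participation case one would instead substitute the bound from Theorem~\ref{thm:RMSE_multi}.

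\emph{Row norm.} Since $\mathbf{D}$ is diagonal, the $t$-th row of $\mathbf{D}\mathbf{E}_1^{1/2}$ is $\frac1t(r_{t-1},\dots,r_0)$. Hence $\|\mathbf{b}_t\|_2=\frac1t\sqrt{\sum_{j=0}^{t-1}r_j^2}$. The same Henzinger et al.\ bound, applied with $t$ in place of $n$, gives $\|\mathbf{b}_t\|_2\le\frac1t\sqrt{\ln t/\pi+O(1)}=O\big(\sqrt{\ln t}/t\big)$.

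Multiplying the three factors gives $O\big(\sqrt{\ln\frac1\delta\,\ln n\,\ln t}/(\varepsilon t)\big)$. Inserting this into Theorem~\ref{thm:StatErrorBound} yields the claimed bound.

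The only point needing care is small $t$. At $t=1$ we have $\ln t=0$, but $\|\mathbf{b}_1\|_2=1$, so the additive constant in $\sum_j r_j^2$ cannot literally be absorbed into $\ln t$. I would handle this by reading $\ln t$ as $\max(\ln t,1)$, or equivalently by treating the first few steps, where all quantities are $\Theta(1)$, separately; this affects only constants in the $O(\cdot)$. Everything else is a direct substitution.
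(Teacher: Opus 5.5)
Your proposal is correct and follows the paper's proof: substitute $\sigma_{\varepsilon,\delta}=\frac{1}{\varepsilon}\sqrt{2\ln(1.25/\delta)}$, the single-participation sensitivity $\|\mathbf{E}_1^{1/2}\|_{1\to 2}=O(\sqrt{\ln n})$, and the row norm $\|\mathbf{b}_t\|_2=O(\sqrt{\ln t}/t)$ into Theorem~\ref{thm:StatErrorBound}. The only differences are minor: you bound the row norm with the Henzinger et al.\ sum-of-squares bound instead of the termwise estimate $r_j\le 1/\sqrt{\pi j}$, and you handle the $t=1$ case (where $\ln t=0$) that the paper's $O(\sqrt{\ln t}/t)$ silently glosses over.
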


\begin{proof}

One can plug in the value of $\sigma_{\varepsilon,\delta} = \frac{1}{\varepsilon}\sqrt{2\ln \frac{1.25}{\delta}}$ into the bound. Then, using equation~\eqref{eq:A1sqrt} we know that $\|\mathbf{E}_1^{1/2}\|_{1\to 2} \sim \sqrt{\frac{\ln n}{\pi}}$. Also using the same approach as in \eqref{eq:r_k_bound} and \eqref{eq:r_k_bound2} we can bound the $t$-th row norm of $\mathbf{DE}_1^{1/2}$ as $\frac{1}{t}\sqrt{1+\frac{1}{\pi}+\frac{\ln t}{\pi}} = O\left(\frac{\sqrt{\ln t}}{t}\right)$. Substituting these into the bound gives the result directly.
\end{proof}

\begin{restatable}{corollary}{corollaryDAsqrtTrueMean} \label{cor:DAsqrtTrueMean}
For prefix-sum based factorization  with $\mathbf{B}=\mathbf{DE}_1^{1/2}$ and $\mathbf{C}=\mathbf{E}_1^{1/2}$, the error in Theorem~\ref{thm:StatErrorBoundTrueMean} has the form
\begin{align}
        O\left(\sqrt{\frac{d\zeta^2}{t}\ln\frac{d}{\beta}} + \frac{\sqrt{d\ln \frac{1}{\delta}\ln n\ln t}}{\varepsilon t}\left(\sqrt{d}+ \sqrt{\ln \tfrac{1}{\beta}}\right)\right),
\end{align}
with probability at least $1-2\beta$.
\end{restatable}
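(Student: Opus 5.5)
The plan is to combine Theorem~\ref{thm:StatErrorBoundTrueMean} with Corollary~\ref{cor:DAsqrt} via the triangle inequality. First, split
\begin{align}
\|\widehat{\boldsymbol{\mu}}_t - \boldsymbol{\mu}\|_2 \le \|\boldsymbol{\mu}_t - \boldsymbol{\mu}\|_2 + \|\widehat{\boldsymbol{\mu}}_t - \boldsymbol{\mu}_t\|_2 .
\end{align}
The first (statistical) term is handled exactly as in the proof of Theorem~\ref{thm:StatErrorBoundTrueMean}. Under Assumption~\ref{ass:iidbounded}, the sub-Gaussian concentration of the empirical mean (Corollary~\ref{cor:concsubgaussmean}) gives $\|\boldsymbol{\mu}_t - \boldsymbol{\mu}\|_2 \le \sqrt{\tfrac{2d\zeta^2}{t}\ln\tfrac{2d}{\beta}}$ with probability at least $1-\beta$. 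This is $O\big(\sqrt{\tfrac{d\zeta^2}{t}\ln\tfrac{d}{\beta}}\big)$, because $\ln(2d/\beta) \le 2\ln(d/\beta)$ once $d/\beta \ge 2$.

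For the second (privacy) term, apply Corollary~\ref{cor:DAsqrt} directly. With probability at least $1-\beta$ it bounds the term by
\begin{align}
O\Big(\tfrac{\sqrt{d\ln(1/\delta)\ln n\ln t}}{\varepsilon t}\big(\sqrt{d}+\sqrt{\ln(1/\beta)}\big)\Big).
\end{align}
Recall where this bound comes from: $\sigma_{\varepsilon,\delta} = \tfrac{1}{\varepsilon}\sqrt{2\ln(1.25/\delta)}$ from Lemma~\ref{lem:GaussianMechanism}, the sensitivity $\|\mathbf{E}_1^{1/2}\|_{1\to 2} \sim \sqrt{\ln n/\pi}$ from the proof of Lemma~\ref{lem:rmse_DA1sqrt_A1sqrt}, and the row norm $\|\mathbf{b}_t\|_2 = \tfrac1t\big(\sum_{j<t} r_j^2\big)^{1/2} = O(\sqrt{\ln t}/t)$ from the same estimates \eqref{eq:r_k_bound}--\eqref{eq:r_k_bound2}. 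A union bound over the two events then gives probability at least $1-2\beta$.

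There is no real obstacle here; the only care needed is bookkeeping. The statement of Corollary~\ref{cor:DAsqrt} already absorbs $\sqrt{\zeta^2}$ (or $\xi$) into the noise calibration. I would therefore match the constant conventions and use $\ln(1.25/\delta) = O(\ln(1/\delta))$ for $\delta<1$.
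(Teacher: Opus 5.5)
Your proposal is correct and follows the paper's route. The corollary is obtained by substituting the bound of Corollary~\ref{cor:DAsqrt} into Theorem~\ref{thm:StatErrorBoundTrueMean}, which uses $\sigma_{\varepsilon,\delta}$, $\|\mathbf{E}_1^{1/2}\|_{1\to 2}\sim\sqrt{\ln n/\pi}$ and $\|\mathbf{b}_t\|_2=O(\sqrt{\ln t}/t)$, and whose proof is exactly your triangle inequality plus Corollary~\ref{cor:concsubgaussmean} and a union bound. The paper glosses over the same small caveats that appear in your argument: $\ln(2d/\beta)=O(\ln(d/\beta))$ needs $d/\beta$ bounded away from $1$, $\zeta$ is absorbed into the constants, and the $\sqrt{\ln t}$ row-norm bound needs $t\ge 2$.
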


\begin{restatable}{corollary}{corollaryADtpInv} \label{cor:ADtpInv}
For mean estimation specific factorization with $\mathbf{B}=\mathbf{AD}_{\mathrm{Toep}}^{-1}$ and $\mathbf{C}=\mathbf{D}_{\mathrm{Toep}}$, the error in Theorem~\ref{thm:StatErrorBound} has the form
\begin{align}
       \|\widehat{\boldsymbol{\mu}}_t - \boldsymbol{\mu}_t\|_2= O\left(\frac{\sqrt{d\ln \frac{1}{\delta}}}{\varepsilon\sqrt{t}\ln t}\left(\sqrt{d}+ \sqrt{\ln \tfrac{1}{\beta}}\right)\right),
\end{align}
with probability at least $1-\beta$.
\end{restatable}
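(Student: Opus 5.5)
The proof is a direct substitution into Theorem~\ref{thm:StatErrorBound}, following the template of Corollary~\ref{cor:DAsqrt}. The right-hand side there is a product of three factorization-independent pieces, $\sqrt{d\zeta^2}$, $\sigma_{\varepsilon,\delta}$ and $\bigl(\sqrt{d}+\sqrt{\ln\tfrac1\beta}\bigr)$, times $\mathrm{sens}(\mathbf{C})\,\|\mathbf{b}_t\|_2$. So I only need to bound the sensitivity of $\mathbf{C}=\mathbf{D}_{\mathrm{Toep}}$ and the $t$-th row norm of $\mathbf{B}=\mathbf{A}\mathbf{D}_{\mathrm{Toep}}^{-1}$. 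Then I plug in $\sigma_{\varepsilon,\delta}=\frac{1}{\varepsilon}\sqrt{2\ln\frac{1.25}{\delta}}=O\bigl(\sqrt{\ln\tfrac1\delta}/\varepsilon\bigr)$ from Lemma~\ref{lem:GaussianMechanism}. The target bound has no $k$-dependence, so I read the statement in the single-participation setting, and I treat $\zeta$ as a constant ($\zeta=1$ in the Bernoulli instantiation) absorbed into the $O(\cdot)$.

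\textbf{Sensitivity.} By \eqref{eq:sensC}, $\mathrm{sens}(\mathbf{D}_{\mathrm{Toep}})=\|\mathbf{D}_{\mathrm{Toep}}\|_{1\to2}=\sqrt{\sum_{j=1}^n j^{-2}}\le \pi/\sqrt6=O(1)$, uniformly in $n$.

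\textbf{Row norm.} As in the proof of Lemma~\ref{lem:ADtpInv}, write $\mathbf{E}_1\mathbf{D}_{\mathrm{Toep}}^{-1}=\mathrm{LTT}(a_0,\dots,a_{n-1})$ with $a_j$ as in \eqref{eq:a_k_def}. Since $\mathbf{A}=\mathbf{D}\mathbf{E}_1$, the $t$-th row of $\mathbf{B}$ is $\frac1t(a_{t-1},\dots,a_0,0,\dots,0)$, so
\begin{align*}
\|\mathbf{b}_t\|_2=\frac1t\sqrt{\sum_{j=0}^{t-1}a_j^2}.
\end{align*}
I split the sum at $j=4$. For $j\le 3$ we have $|a_j|\le1$, which contributes a constant. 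For $j\ge4$, \eqref{eq:a_k_bound} gives $|a_j|\le 1/\ln j$. Lemma~\ref{lem:InvLogSqrBound} then yields $\sum_{j=4}^{t-1}\ln^{-2}j\le 3t/\ln^2 t$. Hence $\sum_{j<t}a_j^2\le C+3t/\ln^2 t=O(t/\ln^2 t)$ for $t\ge2$, and therefore $\|\mathbf{b}_t\|_2=O\bigl(1/(\sqrt t\,\ln t)\bigr)$.

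\textbf{Combining.} Multiplying the pieces gives
\begin{align*}
O\Bigl(\sqrt d\cdot\tfrac{\sqrt{\ln(1/\delta)}}{\varepsilon}\cdot 1\cdot\tfrac{1}{\sqrt t\ln t}\bigl(\sqrt d+\sqrt{\ln\tfrac1\beta}\bigr)\Bigr),
\end{align*}
which is exactly the claimed bound, holding with probability $1-\beta$ as inherited from Theorem~\ref{thm:StatErrorBound}.

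\textbf{Main obstacle.} The only delicate point is the row-norm step: Lemma~\ref{lem:ADtpInv} only needs $\|\mathbf{B}\|_F=O(1)$, whereas here I need the finer per-row decay $\sqrt{t}/\ln t$ of $\bigl(\sum_{j<t} a_j^2\bigr)^{1/2}$. This decay follows from the tail estimate on the Gregory coefficients in \eqref{eq:greg_first_prop} together with Lemma~\ref{lem:InvLogSqrBound}. For small $t$, where $\ln t$ is tiny or zero, the claim should be read asymptotically; there one can fall back on the trivial bound $\|\mathbf{b}_t\|_2\le 1/\sqrt t$.
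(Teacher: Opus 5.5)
Your proposal is correct and follows the paper's proof essentially step for step. You bound $\mathrm{sens}(\mathbf{D}_{\mathrm{Toep}})=O(1)$ via \eqref{eq:sensC}, get $\|\mathbf{b}_t\|_2=\frac{1}{t}\bigl(\sum_{j<t}a_j^2\bigr)^{1/2}=O\bigl(1/(\sqrt{t}\ln t)\bigr)$ from \eqref{eq:a_k_bound} and Lemma~\ref{lem:InvLogSqrBound}, and substitute $\sigma_{\varepsilon,\delta}$ into Theorem~\ref{thm:StatErrorBound}; your extra remarks (absorbing $\zeta$, restricting to $t\ge 2$, and the fallback $\|\mathbf{b}_t\|_2\le 1/\sqrt{t}$ from $|a_j|\le 1$) are valid clarifications the paper leaves implicit.
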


\begin{proof}
    We use the same value of $\sigma_{\varepsilon,\delta}$ as in the proof of Corollary~\ref{cor:DAsqrt}.
     For sensitivity, using equation~\eqref{eq:sensC} we find that $\|\mathbf{C}\|_{1\to 2}= O(1)$ and for $\|\mathbf{b}_t\|_2$, using the definition of $a_j$ in \eqref{eq:a_k_def}, and the bound in \eqref{eq:a_k_bound}, we write
\begin{align}
    \|\mathbf{b}_t\|_2 = \sqrt{\frac{1}{t^2}\sum_{j=0}^{t-1}a_j^2} \le \frac{1}{t}\sqrt{\sum_{j=4}^{t-1}\frac{1}{\ln^2 j}+O(1)} \le \frac{1}{t}\sqrt{\frac{3t}{\ln^2 t} + O\left(1\right)} = O\left( \frac{1}{\sqrt{t}\ln t}\right).
\end{align}
where we used Lemma~\ref{lem:InvLogSqrBound} for the last inequality.
Thus, with probability at least $1-\beta$ we have:
\begin{align}
   \|\widehat{\boldsymbol{\mu}}_t - \boldsymbol{\mu}_t\|_2 = O\left(\frac{\xi\sqrt{\ln \frac{1}{\delta}}}{\varepsilon\sqrt{t}\ln t}\left(\sqrt{d}+ \sqrt{\ln \tfrac{1}{\beta}}\right)\right),
\end{align}
which proves our claim.
\end{proof}

\begin{restatable}{corollary}{corollaryADtpInvTrueMean} \label{cor:ADtpInvTrueMean}
For mean estimation specific factorization $\mathbf{B}=\mathbf{AD}_{\mathrm{Toep}}^{-1}$ and $\mathbf{C}=\mathbf{D}_{\mathrm{Toep}}$, the error in Theorem~\ref{thm:StatErrorBoundTrueMean} has the form
\begin{align}
        O\left(\sqrt{\frac{d\zeta^2}{t}\ln\frac{d}{\beta}} + \frac{\sqrt{d\ln \frac{1}{\delta}}}{\varepsilon\sqrt{t}\ln t}\left(\sqrt{d}+ \sqrt{\ln \tfrac{1}{\beta}}\right)\right),
\end{align}
with probability at least $1-2\beta$.
\end{restatable}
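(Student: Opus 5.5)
This follows directly by combining Theorem~\ref{thm:StatErrorBoundTrueMean} with the quantities already computed in the proof of Corollary~\ref{cor:ADtpInv}. Theorem~\ref{thm:StatErrorBoundTrueMean} gives, with probability at least $1-2\beta$, a bound that is the sum of two terms. The first is the statistical term $\sqrt{\tfrac{d\zeta^2}{t}\ln\tfrac{d}{\beta}}$, obtained from Corollary~\ref{cor:concsubgaussmean}, and it does not depend on the factorization. The second is the privacy term $\sqrt{d\zeta^2}\,\sigma_{\varepsilon,\delta}\,\mathrm{sens}(\mathbf{C})\,\|\mathbf{b}_t\|_2\bigl(\sqrt{d}+\sqrt{\ln\tfrac1\beta}\bigr)$. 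So the only work is to evaluate the privacy term for $\mathbf{B}=\mathbf{A}\mathbf{D}_{\mathrm{Toep}}^{-1}$ and $\mathbf{C}=\mathbf{D}_{\mathrm{Toep}}$.

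For that term I will reuse three ingredients from the proof of Corollary~\ref{cor:ADtpInv}.
\begin{itemize}
\item The Gaussian mechanism constant from Lemma~\ref{lem:GaussianMechanism} is $\sigma_{\varepsilon,\delta}=\tfrac1\varepsilon\sqrt{2\ln\tfrac{1.25}{\delta}}=O\bigl(\tfrac1\varepsilon\sqrt{\ln\tfrac1\delta}\bigr)$.
\item The (single-participation) sensitivity satisfies $\|\mathbf{D}_{\mathrm{Toep}}\|_{1\to2}=\sqrt{\sum_{t}1/t^2}\le\pi/\sqrt6=O(1)$ by \eqref{eq:sensC}.
\item The row norm satisfies $\|\mathbf{b}_t\|_2=\tfrac1t\sqrt{\sum_{j<t}a_j^2}=O\bigl(\tfrac{1}{\sqrt t\ln t}\bigr)$. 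This follows from the tail bound \eqref{eq:a_k_bound}, $|a_j|\le 1/\ln j$, together with Lemma~\ref{lem:InvLogSqrBound}.
\end{itemize}
Multiplying these gives the privacy term $O\bigl(\tfrac{\sqrt{d\ln(1/\delta)}}{\varepsilon\sqrt t\ln t}(\sqrt d+\sqrt{\ln\tfrac1\beta})\bigr)$, where the constant $\zeta$ (equal to $1$ in the Bernoulli instantiation) is absorbed. Adding the statistical term and keeping the union-bound probability $1-2\beta$ from Theorem~\ref{thm:StatErrorBoundTrueMean} gives exactly the stated form.

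The only step that needs care is that the row-norm bound holds for all $t$, including small $t$ where $\ln t$ is near zero or undefined. For $t\ge 4$ it follows from Lemma~\ref{lem:InvLogSqrBound}, since the $O(1)$ contribution of $a_0,\dots,a_3$ is dominated by $3t/\ln^2 t$ up to constants. For small $t$, I would interpret $\ln t$ as $\max(\ln t,1)$, or note that only finitely many $t$ are involved and so only the constant changes. Beyond this, the argument is routine substitution.
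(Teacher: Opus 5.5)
Your proposal is correct and follows the same route as the paper, which obtains this corollary by plugging the ingredients of Corollary~\ref{cor:ADtpInv} into Theorem~\ref{thm:StatErrorBoundTrueMean}. Those ingredients are $\sigma_{\varepsilon,\delta}=O\bigl(\tfrac{1}{\varepsilon}\sqrt{\ln(1/\delta)}\bigr)$, $\|\mathbf{D}_{\mathrm{Toep}}\|_{1\to2}=O(1)$, and $\|\mathbf{b}_t\|_2=O\bigl(1/(\sqrt{t}\ln t)\bigr)$. Your handling of small $t$ and your treatment of $\zeta$ as an absorbed constant are harmless extra care, consistent with how the paper drops $\zeta$ from the privacy term.
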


\subsection{Multi-Participation}

\OptimalityMultiPart*
\begin{proof}

In the proof of Theorem~1 in \citet{kalinin2025back}, the following general lower bound was established:
\begin{equation}
    \mathcal{E}_n(\mathbf{B}, \mathbf{C}) \;\ge\; \frac{1}{\sqrt{n}} \, \|\mathbf{A}\pi_1\|_2,
\end{equation}
where $\pi_1$ is a Boolean vector with ones at positions $1 + jb$ for $j \in [0, k - 1]$.  
It remains to compute the norm, which can be done explicitly:
\begin{equation}
    \|\mathbf{A}\pi_1\|_2^2 
    = \sum_{j=1}^{n} \left(\Big\lceil \tfrac{j}{b} \Big\rceil \tfrac{1}{j}\right)^2
    = \sum_{j=1}^{b} \frac{1}{j^2} 
    + \sum_{j=b+1}^{n} \left(\Big\lceil \tfrac{j}{b} \Big\rceil \tfrac{1}{j}\right)^2 
    \;\ge\; 1 + \sum_{j=b+1}^{n} \left(\tfrac{j}{b}\cdot \tfrac{1}{j}\right)^2
    = 1 + \frac{n-b}{b^2}.
\end{equation}
For $k = \lceil n/b \rceil$, this implies
\begin{equation}
    \|\mathbf{A}\pi_1\|_2^2 = \Omega\!\left(1 + \frac{k^2}{n}\right).
\end{equation}
Taking the square root and multiplying by $\tfrac{1}{\sqrt{n}}$ yields the desired result.

For the second part of the theorem, consider the case where the matrix $\mathbf{C}$ is lower triangular Toeplitz. Let the main diagonal element be $c_0$. By dividing the matrix $\mathbf{C}$ by $c_0$ and multiplying the matrix $\mathbf{B}$ by $c_0$, the RMSE error remains unchanged. Therefore, without loss of generality, we may assume that $\mathbf{C}$ has value $1$ on the main diagonal. Then
\begin{equation}
    \mathrm{sens}_{k, b} (\mathbf{C}) \ge \|\mathbf{C}\pi_1\|_2 \ge \|\mathbf{I}\pi_1\|_2 = \sqrt{k},
\end{equation}
where the first inequality holds by the definition of sensitivity, and the second follows from either the positivity of the coefficients of $\mathbf{C}$ or its $b$-bandedness. If the matrix $\mathbf{C}$ is column-normalized and $b$-banded, then each column contributes $1$ to the sum of scalar products, and all intercolumn products are $0$ because of bandedness. Thus, the sensitivity is exactly $\mathrm{sens}_{k, b}(\mathbf{C}) = \sqrt{k}$.

On the other hand, the first element in the lower triangular matrix $\mathbf{B}$ must be equal to $1$, and hence $\|\mathbf{B}\|_F \ge 1$. Normalizing by $\sqrt{n}$ gives the desired lower bound.
\end{proof}

\RMSEmulti*
\begin{proof}
    For proofs of the corresponding bounds, see Lemma~\ref{lem:AIMulti} for the trivial factorization; Lemmas~\ref{lem:DAoneSqrtMulti} and~\ref{lem:BISRAoneSqrt} for the prefix-sum–based factorization; and Lemmas~\ref{lem:ADtpInvMulti} and~\ref{lem:MultiDtpBISR} for the mean-aware factorization.
\end{proof}

\begin{restatable}{lemma}{AIMulti}\label{lem:AIMulti}
    For trivial factorization $\mathbf{B}=\mathbf{A}$ and $\mathbf{C}=\mathbf{I}$ the expected approximation error is 
    \begin{align}
        \mathcal{E}_t(\mathbf{A}, \mathbf{I}) \sim \sqrt{\frac{k\ln t}{t}}.
    \end{align}
\end{restatable}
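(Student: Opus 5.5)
The error factors as $\mathcal{E}_t(\mathbf{A},\mathbf{I}) = \frac{1}{\sqrt t}\|\mathbf{A}_{:t}\|_F\cdot \mathrm{sens}_{k,b}(\mathbf{I})$, so I will compute the two factors separately. Throughout, $\mathbf{A}_{:t}$ is read, as in the paper's error formula, as the leading $t\times t$ block, i.e.\ the first $t$ rows of $\mathbf{A}$.

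\textbf{Sensitivity.} The identity is a lower triangular Toeplitz matrix with nonnegative, nonincreasing diagonal entries $c_0=1$ and $c_j=0$ for $j\ge 1$. Theorem~\ref{thm:sensTheorem} therefore applies and gives
\[
\mathrm{sens}_{k,b}(\mathbf{I})=\Big\|\sum_{j=0}^{k-1}\mathbf{e}_{jb+1}\Big\|_2=\sqrt{k}.
\]
This uses $b=\lceil n/k\rceil$, so that the $k$ indices $1+jb$ all lie in $[n]$; with that choice they are distinct standard basis vectors. The same value also follows directly from \eqref{eq:sens_bound}: $\mathbf{C}^\top\mathbf{C}=\mathbf{I}$ is nonnegative, so the bound is tight, and it counts exactly $|\pi|\le k$ diagonal ones.

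\textbf{Frobenius norm.} Row $i$ of $\mathbf{A}$ has $i$ entries equal to $1/i$, so its squared norm is $1/i$. Summing over the first $t$ rows gives
\[
\|\mathbf{A}_{:t}\|_F^2=\sum_{i=1}^t \frac1i = H_t.
\]
Proposition~\ref{pro:harmonicNumber} then yields $H_t=\ln t+\gamma+O(1/t)\sim \ln t$.

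\textbf{Combining.} The two factors give $\mathcal{E}_t(\mathbf{A},\mathbf{I})=\sqrt{kH_t/t}\sim\sqrt{k\ln t/t}$, which is the claim. Setting $k=1$ recovers Lemma~\ref{lem:rmse_A_I}, $\mathcal{E}_n=\sqrt{H_n/n}$.

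I do not expect a real obstacle. The only point needing care is the sensitivity step, where one must confirm that $k$ participations with separation $b$ fit within $[n]$, so that the sensitivity is exactly $\sqrt k$ and not smaller. The choice $b=\lceil n/k\rceil$ guarantees this, and as a result the statement is an exact asymptotic ($\sim$) rather than only an upper bound.
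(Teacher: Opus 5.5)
Your proof is correct and is essentially the paper's proof. It uses the same factorization of $\mathcal{E}_t$, the same computation $\|\mathbf{A}_{:t}\|_F^2=H_t$ via Proposition~\ref{pro:harmonicNumber}, and $\mathrm{sens}_{k,b}(\mathbf{I})=\sqrt{k}$ from Theorem~\ref{thm:sensTheorem}. One small fix to your side remark: $b=\lceil n/k\rceil$ alone does not guarantee that the indices $1+jb$, $j<k$, lie in $[n]$ (e.g.\ $n=5$, $k=4$ gives $b=2$ and $1+3b=7>5$); what guarantees it is the paper's background convention $k=\lceil n/b\rceil$, which gives $(k-1)b<n$.
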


\begin{proof}
    Let us first compute the Frobenius norm
    \begin{align}
        \frac{1}{\sqrt{t}}\|\mathbf{A}_{:t}\|_F = \sqrt{\frac{1}{t}\sum_{r=1}^t \sum_{j=1}^r \frac{1}{r^2}} 
        = \sqrt{\frac{1}{t}\sum_{r=1}^t \frac{1}{r}} 
        = \sqrt{\frac{H_t}{t}} \sim \sqrt{\frac{\ln t }{t}}, \label{eq:FrobeniusAI}
    \end{align}
    where the last term comes from Proposition~\ref{pro:harmonicNumber}.
    We now proceed to calculating the sensitivity $\mathrm{sens}_{k,b}(\mathbf{I})$. Using the formula for sensitivity in Theorem~\ref{thm:sensTheorem}, for a matrix $\mathbf{C} = \mathbf{I}$, we immediately  have 

    \begin{align}
        \mathrm{sens}^2_{k,b}(\mathbf{I}) = \sum_{i=0}^{k-1} \bigl\langle \mathbf{I}_{[:,\,i b]}\,,\;\mathbf{I}_{[:,\,i b]}\bigr\rangle = k.
    \end{align}
    Thus, the error in this case is
    \begin{align}
        \mathcal{E}_t(\mathbf{A},\mathbf{I}) = \sqrt{\frac{kH_t}{t}} \sim \sqrt{\frac{k\ln t}{t}}
    \end{align}
\end{proof}

\begin{restatable}{lemma}{DAoneSqrtMulti}\label{lem:DAoneSqrtMulti}
    For prefix sum based factorization  $\mathbf{B}=\mathbf{DE}_1^{1/2}$ and $\mathbf{C}=\mathbf{E}_1^{1/2}$ the expected approximation error is lower and upper bounded by
    \begin{align}
    \sqrt{\frac{k\ln n}{8t}} + O\left(\frac{k}{\sqrt{t}}\right) \le \mathcal{E}_t(\mathbf{DE}_1^{1/2},\mathbf{E}_1^{1/2}) &\le \sqrt{\frac{5k\ln n}{2t}} + \frac{\sqrt{15}k}{\sqrt{t}} + O\left(\sqrt{\frac{k}{t}}\right).
    \end{align}
    
\end{restatable}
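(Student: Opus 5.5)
We split the error as $\mathcal{E}_t = \frac{1}{\sqrt t}\|(\mathbf{DE}_1^{1/2})_{:t}\|_F\cdot \mathrm{sens}_{k,b}(\mathbf{E}_1^{1/2})$. The Frobenius factor has already been handled in the proof of Lemma~\ref{lem:rmse_DA1sqrt_A1sqrt}: it is sandwiched between $\pi/\sqrt6+o(1)$ and $\sqrt{5/2}$ uniformly in $t$. Restricting to the first $t$ rows changes nothing, since these are still partial sums of a convergent series; the lower bound only needs $t\ge 1$. The work therefore reduces to two-sided estimates on the squared sensitivity $\mathrm{sens}^2_{k,b}(\mathbf{E}_1^{1/2})$.

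\textbf{Setting up the sensitivity.} $\mathbf{E}_1^{1/2}$ is lower-triangular Toeplitz with nonincreasing nonnegative coefficients $r_j$, so Theorem~\ref{thm:sensTheorem} applies:
\begin{align}
\mathrm{sens}^2_{k,b} = \Big\|\sum_{i=0}^{k-1}\mathbf{C}_{:,ib+1}\Big\|_2^2 = \sum_{i=0}^{k-1}\|\mathbf{C}_{:,ib+1}\|_2^2 + 2\sum_{0\le i<l\le k-1}\langle \mathbf{C}_{:,ib+1},\mathbf{C}_{:,lb+1}\rangle .
\end{align}

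\textbf{Diagonal terms.} For these we use the bounds of \citet{henzinger2025normalized}: each term equals $\sum_{j<n-ib} r_j^2 \approx \frac{1}{\pi}\ln(n-ib)$.
\begin{itemize}
\item Upper bound: the sum is at most $\frac{k}{\pi}(\ln n + O(1))$.
\item Lower bound: keep only $i\le k/2$, for which $n-ib\ge n/2$ (using $b=\lceil n/k\rceil$). This gives at least $\frac{k}{2\pi}\ln n - O(k)$.
\end{itemize}
With the Frobenius lower bound $\pi^2/6$, the lower bound becomes $\frac{\pi}{12}\,\frac{k\ln n}{t}\ge \frac{k\ln n}{8t}$, up to the additive slack.

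\textbf{Cross terms (main obstacle).} For $l>i$ with $m=(l-i)b$, we have $\langle \mathbf{C}_{:,ib+1},\mathbf{C}_{:,lb+1}\rangle=\sum_{s} r_{s+m}r_s$.
\begin{itemize}
\item Upper bound: use $r_j\le 1/\sqrt{\pi j}$ and compare with the integral $\frac1\pi\int_0^{N}\frac{dx}{\sqrt{x(x+m)}}$, which is $\le \frac{1}{\pi}\ln(1+4N/m)+O(1)$ with $N\le n\le kb$. This gives each inner product at most $\frac1\pi\ln\!\big(\frac{c k}{l-i}\big)+O(1)$. Summing over pairs, $\sum_{i<l}\ln\frac{k}{l-i}\le \sum_{d=1}^{k}(k-d)\ln\frac{k}{d}=O(k^2)$, since $\sum_{d\le k}\ln(k/d)\approx k$. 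Hence the cross terms are $O(k^2)$. With explicit constants from $\pi^{-1}$ and the integral, this should yield $\le 6k^2$, so that multiplying by $5/2$ gives $15k^2$; the square root is then bounded by $\sqrt{5k\ln n/(2t)}+\sqrt{15}k/\sqrt t+O(\sqrt{k/t})$ via $\sqrt{a+b}\le\sqrt a+\sqrt b$.
\item Lower bound: the cross terms are nonnegative, so they can simply be dropped (or, for the $O(k/\sqrt t)$ remainder, handled by $\sqrt{a-b}\ge\sqrt a-\sqrt b$ applied to the $O(k)$ slack).
\end{itemize}

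The delicate step is tracking constants in the cross-term integral bound so that the leading $k^2$ coefficient matches $\sqrt{15}$. If that fails, we fall back to a cruder constant, which does not affect the order.
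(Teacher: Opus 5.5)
Your proposal is correct, but it takes a different route from the paper's. The paper never evaluates $\mathrm{sens}_{k,b}(\mathbf{E}_1^{1/2})$ directly. It imports the two-sided bounds for the square root $\mathbf{C}_{1,\beta}$ of $\mathbf{A}_{1,\beta}$ from \citet{kalinin2024banded} and sets $\beta=0$, which gives $\frac{k\ln n}{4}+\frac{2k^2}{5}-\frac{8k^{3/2}}{3\sqrt b}-\frac k4\le \mathrm{sens}^2_{k,b}(\mathbf{E}_1^{1/2})\le k\ln n+6k^2+k$. It then multiplies by $1\le\|(\mathbf{DE}_1^{1/2})_{:t}\|_F^2\le\frac52$ and finishes with $\frac{1}{\sqrt2}(\sqrt x+\sqrt y)\le\sqrt{x+y}\le\sqrt x+\sqrt y$.

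You instead derive the sensitivity from Theorem~\ref{thm:sensTheorem}. You bound the diagonal terms with the \citet{henzinger2025normalized} estimates on $\sum_j r_j^2$, and the cross terms with Wallis's bound plus an integral comparison. This is self-contained and even sharper in the leading term ($\frac{k\ln n}{\pi}$ in place of $k\ln n$).

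What the imported bound buys in return is an $\Omega(k^2)$ lower estimate on the cross terms, which you discard. That is harmless for the lemma as stated, since its lower bound only carries $+O(k/\sqrt t)$. It is, however, what supports the $\Theta$ entry for $\mathbf{E}_1^{1/2}$ in Table~\ref{tab:multi_part_results}. On your route you would need a matching lower estimate of the cross terms, e.g.\ the same integral comparison run with $r_j\ge 1/\sqrt{\pi(j+1)}$.

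The constant you flag as delicate goes through with room to spare, so no fallback is needed. Take $l>i$, $m=(l-i)b$ and $N=n-lb\le kb$. Then the inner product is $r_m+\sum_{s=1}^{N-1}r_sr_{s+m}\le\frac{1}{\sqrt{\pi}}+\frac1\pi\ln\frac{(\sqrt N+\sqrt{N+m})^2}{m}\le\frac{1}{\sqrt\pi}+\frac1\pi\ln\frac{6k}{l-i}$. Using $\sum_{d=1}^{k-1}(k-d)\ln\frac kd\le k^2$, the cross terms total at most $\bigl(\frac{2+\ln 6}{\pi}+\frac{1}{\sqrt\pi}\bigr)k^2<2k^2\le 6k^2$.

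One slip to fix concerns the Frobenius lower bound. The value $\pi/\sqrt6$ is only asymptotic in $t$, not uniform: the squared norm of the first $t$ rows is below $\pi^2/6$ for $t\le5$, and equals $1$ at $t=1$. Use the trivial bound $\|(\mathbf{DE}_1^{1/2})_{:t}\|_F\ge r_0=1$, as the paper does. This still suffices, because your diagonal estimate gives the coefficient $\frac{1}{2\pi}>\frac18$.
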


\begin{proof}
    For the Frobenius norm according to equation~\eqref{eq:DA1sqrt_upper}, we get $1+o(1)\le\|(\mathbf{DE}_1^{1/2})_{:t}\|_F \le \sqrt{\frac{5}{2}}$. So we move on to calculating the sensitivity.
    From \citet{kalinin2024banded} we know that for an LLT matrix $\mathbf{A}_{\alpha,\beta}$ such that
    \begin{align}
        \mathbf{A}_{\alpha,\beta} = \begin{pmatrix}
            1 & 0 & \cdots & 0 \\
            \alpha & 1 & \cdots & 0 \\
            \vdots & \vdots & \ddots & \vdots \\
            \alpha^{n-1} & \alpha^{n-2} & \cdots & 1 \\
        \end{pmatrix} \times
        \begin{pmatrix}
            1 & 0 & \cdots & 0 \\
            \beta & 1 & \cdots & 0 \\
            \vdots & \vdots & \ddots & \vdots \\
            \beta^{n-1} & \beta^{n-2} & \cdots & 1 \\
        \end{pmatrix},
        \label{eq:Aalphabeta}
    \end{align}
    if we consider $\mathbf{C}_{\alpha,\beta}$ to be its square root, then for $\alpha = 1$,
    \begin{align}
    \operatorname{sens}_{k,b}^{2}\!\left(\mathbf{C}_{1,\beta}\right)
    &\le \frac{k}{(1-\beta)^{2}}(\ln n + 1) + \frac{6}{(1-\beta)^{2}}\,k^{2},\\
    \operatorname{sens}_{k,b}^{2}\!\left(\mathbf{C}_{1,\beta}\right)
    &\ge \frac{k}{4}(\ln n - 1) - \frac{8}{3\sqrt{b}}\,k^{3/2} + \frac{2}{5}\,k^{2}.
\end{align}
Now if we let $\beta=0$, then $\mathbf{A}_{1,0}=\mathbf{E}_1$ and therefore $\mathbf{C}_{1,0} = \mathbf{E}_1^{1/2}$, so we can write
\begin{align}
    \frac{k\ln n}{4} + \frac{2k^2}{5} - \frac{8k^{3/2}}{3\sqrt{b}} -\frac{k}{4}\le \mathrm{sens}_{k,b}^2(\mathbf{E}_1^{1/2}) \le k\ln n + 6k^2 + k.
\end{align}
So the error is bounded by
\begin{align}
    \sqrt{\frac{ k\ln n}{4t} + O\left(\frac{k^2}{t}\right)} \le \mathcal{E}_t(\mathbf{DE}_1^{1/2},\mathbf{E}_1^{1/2}) \le \sqrt{\frac{5k\ln n}{2t} + \frac{15k^2}{t} + \frac{5k}{2t}},
\end{align}
which, using $\frac{1}{\sqrt{2}}(\sqrt{x}+\sqrt{y})\le \sqrt{x+y} \le \sqrt{x}+\sqrt{y}$ for positive $x$ and $y$, is written as
\begin{align}
    \sqrt{\frac{k\ln n}{8t}} + O\left(\frac{k}{\sqrt{t}}\right) \le \mathcal{E}_t(\mathbf{DE}_1^{1/2},\mathbf{E}_1^{1/2}) \le \sqrt{\frac{5k\ln n}{2t}} + \frac{\sqrt{15}k}{\sqrt{t}} + O\left(\sqrt{\frac{k}{t}}\right).
\end{align}

\end{proof}

\begin{restatable}{lemma}{BISRAoneSqrt}\label{lem:BISRAoneSqrt}
    For banded inverse factorization $\mathbf{A} = \mathbf{B}^p\mathbf{C}^p$ where $\mathbf{C}=\mathbf{E}_1^{1/2}$, the expected approximation error is bounded by
    \begin{align}
        \mathcal{E}_t(\mathbf{B}^p, \mathbf{C}^p) 
        =O\sqrt{\frac{k}{t}\left(\ln p + \frac{pk}{n}\right)\left(1 + \frac{1}{p}\ln \frac{t}{p}\right)}.
    \end{align}
    For $p=\lceil \ln b \rceil$ this becomes
    \begin{align}
        \mathcal{E}_t(\mathbf{B}^p, \mathbf{C}^p) = O\left(\sqrt{\frac{k \ln \ln n}{t}} + \sqrt{\frac{k^2\ln n}{nt}}\right).
    \end{align}
\end{restatable}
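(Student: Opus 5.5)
We bound the two factors of $\mathcal{E}_t(\mathbf{B}^p,\mathbf{C}^p)=\frac{1}{\sqrt t}\|\mathbf{B}^p_{:t}\|_F\cdot\mathrm{sens}_{k,b}(\mathbf{C}^p)$ separately, and then choose $p$.

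\emph{Structure of $\mathbf{C}^p$.} Write $\mathbf{E}_1^{-1/2}=\mathrm{LTT}(1,-\rho_1,-\rho_2,\dots)$, where $\rho_j=\bigl|\binom{1/2}{j}\bigr|$. All $\rho_j$ are nonnegative, $\rho_j=\Theta(j^{-3/2})$, and $\sum_{j\ge1}\rho_j=1$. Truncating at bandwidth $p$ gives $(\mathbf{C}^p)^{-1}=\mathrm{LTT}(1,-\rho_1,\dots,-\rho_{p-1})$. Its inverse $\mathbf{C}^p$ is LTT with nonnegative entries $c_j$ given by the renewal recursion $c_j=\sum_{i=1}^{\min(p-1,j)}\rho_ic_{j-i}$. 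For $j<p$ the entries agree with $\mathbf{E}_1^{1/2}$, so $c_j=r_j\sim 1/\sqrt{\pi j}$. Since $s_p:=\sum_{i<p}\rho_i=1-\Theta(p^{-1/2})<1$, the sequence decays geometrically beyond $p$. The renewal theorem heuristic gives $c_j\lesssim r_p\,s_p^{\lfloor j/p\rfloor}$, and I will make this rigorous by induction on blocks of length $p$. The entries are also decreasing, because the renewal sequence of a decreasing kernel with $c_0=1$ is nonincreasing (again by induction). Hence Theorem~\ref{thm:sensTheorem} applies.

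\emph{Sensitivity.} By Theorem~\ref{thm:sensTheorem}, $\mathrm{sens}^2_{k,b}=\|\sum_{j<k}\mathbf{C}^p_{:,jb+1}\|_2^2$. Expanding this gives $k$ diagonal terms, each at most $\sum_j c_j^2$, plus cross terms $\sum_{j\neq j'}\langle\cdot,\cdot\rangle$ of shifts separated by multiples of $b$. For the diagonal terms, $\sum_{j<p}r_j^2=O(\ln p)$ and the geometric tail adds $O(p\cdot r_p^2/(1-s_p^2))=O(\ln p)$ at most. For the cross terms, each pair at distance $mb$ contributes at most $\sum_i c_ic_{i+mb}\le(\sum_i c_i)\max_{i\ge mb}c_i$. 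Here $\sum_i c_i\lesssim p\,r_p/(1-s_p)=O(p)$, and the tail factor $c_{mb}$ is tiny when $b\gg p$. The crude bound is then $\mathrm{sens}^2=O(k\ln p+k^2p/b)=O(k(\ln p+pk/n))$, using $b\approx n/k$. I expect this cross-term accounting to be the main obstacle, in particular controlling $\sum_i c_i$ sharply, since $\sum_{i<p}r_i\sim 2\sqrt{p/\pi}$ and the geometric tail has length $\sim p/(1-s_p)\sim p^{3/2}$. First I would try to show $\sum_i c_i=O(p)$ via the generating function identity $\sum_i c_i=1/(1-s_p)$ (truncated at $n$). If that only yields $O(\sqrt p)$, it is even better.

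\emph{Frobenius norm.} We have $\mathbf{B}^p=\mathbf{D}\mathbf{E}_1(\mathbf{C}^p)^{-1}=\mathbf{D}\,\mathrm{LTT}(a_0,a_1,\dots)$, where $a_m=1-\sum_{i=1}^{\min(m,p-1)}\rho_i$. For $m<p$, $a_m=r_m=\Theta(m^{-1/2})$, and for $m\ge p$, $a_m=1-s_p=\Theta(p^{-1/2})$. The $t$-th row therefore has squared norm $t^{-2}\bigl(\sum_{m<p}r_m^2+(t-p)_+/p\bigr)\lesssim t^{-2}(\ln p+t/p)$. Summing over rows $\le t$ gives $\|\mathbf{B}^p_{:t}\|_F^2=O(1+\frac1p\sum_{s=p}^{t}\frac1s)=O(1+\frac1p\ln\frac tp)$, using $\sum_s\ln(\min(s,p))/s^2=O(1)$.

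\emph{Combining.} Multiplying the two bounds and dividing by $t$ gives the first display. Setting $p=\lceil\ln b\rceil$ yields $\ln p=O(\ln\ln n)$ and $\frac1p\ln\frac tp=O(1)$ (since $t\le n$ and $\ln b\asymp\ln n$ in the relevant regime). We also have $pk/n\le k\ln n/n$. This gives $O\bigl(\sqrt{k\ln\ln n/t}+\sqrt{k^2\ln n/(nt)}\bigr)$.
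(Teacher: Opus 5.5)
Your Frobenius-norm part is correct and matches the paper's computation. You use the exact values $a_m=r_m$ for $m<p$ and $a_m=1-s_p=r_{p-1}$ for $m\ge p$, where the paper uses the bound $b_j\le 2/\sqrt{\pi j}$.

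The gap is in the sensitivity. The paper does not derive it; it imports $\mathrm{sens}^2_{k,b}(\mathbf{C}^p)\le k(2+\ln p+54p/b)$ from Theorem~2 of \citet{kalinin2025back}. Your own derivation has three problems.

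First, the diagonal terms. Your block bound $c_j\lesssim r_p\,s_p^{\lfloor j/p\rfloor}$ is valid for $j\ge p$. But since $1-s_p=r_{p-1}\asymp p^{-1/2}$, it decays only over a length $\asymp p^{3/2}$. The tail term you write therefore evaluates to $p\,r_p^2/(1-s_p^2)\approx \pi^{-1}\cdot\sqrt{\pi p}/2\asymp\sqrt{p}$, not $O(\ln p)$. Carried through, this gives $\mathrm{sens}^2=O(k\sqrt{p}+\cdots)$. For $p=\lceil\ln b\rceil$ that is a factor $\sqrt{\ln n}$ where the statement needs $\ln\ln n$.

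Second, the cross terms. The phrase ``$c_{mb}$ is tiny'' is never quantified; under your rate, $c_{mb}$ is small only once $mb\gg p^{3/2}$. Also, the bound $O(k^2p/b)$ you reach equals $O(k^3p/n)$ for $b\approx n/k$. That is a factor $k$ larger than the $k\cdot pk/n$ you substitute in the last step.

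Third, monotonicity. The general claim that a renewal sequence with decreasing kernel is nonincreasing is false: the kernel $f_1=0.4$, $f_2=0.35$ gives $c_1=0.4<c_2=0.51$. What does hold here is that $c_0,\dots,c_{p-1}=r_0,\dots,r_{p-1}$ is decreasing. Then, for $j\ge p$, $c_j-c_{j-1}=\sum_{i=1}^{p-1}\rho_i(c_{j-i}-c_{j-1-i})\le 0$ by induction.

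The missing step is to combine monotonicity with the identity you already mention, instead of using a geometric rate. That identity gives exactly $\sum_{j\ge 0}c_j=1/(1-s_p)=1/r_{p-1}=O(\sqrt{p})$.

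For the diagonal terms, $\sum_{j\ge p}c_j^2\le c_{p-1}\sum_{j}c_j\le r_{p-1}\cdot r_{p-1}^{-1}=1$. So every column has squared norm at most $\sum_{j<p}r_j^2+1=O(1+\ln p)$.

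For the cross terms, $\sum_i c_ic_{i+mb}\le c_{mb}\sum_i c_i$, and by monotonicity $c_{mb}\le\frac{1}{b}\sum_{l=(m-1)b+1}^{mb}c_l$. Using $r_{p-1}\ge 1/\sqrt{\pi p}$ and $b\ge n/k$, the off-diagonal part is therefore at most $2k\,r_{p-1}^{-1}\sum_{m\ge1}c_{mb}\le 2k/(b\,r_{p-1}^2)\le 2\pi kp/b\le 2\pi k\cdot pk/n$.

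This yields $\mathrm{sens}^2_{k,b}(\mathbf{C}^p)=O(k(1+\ln p+pk/n))$. Combined with your Frobenius bound, it gives the first display.

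Alternatively, you can prove the sharp per-step decay $c_j\le c_{p-1}\alpha^{j-p}$ with $\alpha=1-\frac{1}{4p}$, as in Lemma~\ref{lem:banded-inv-series-bound}. The key check is $\sum_{i<p}\rho_i\alpha^{-i}\le 1$, using $\sum_{i<p}i\rho_i\le\sum_{i<p}r_i=O(\sqrt{p})$ against the deficit $r_{p-1}\asymp p^{-1/2}$. The tail length is then $\asymp p$, not $p^{3/2}$.

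Your caveat that the second display needs $\ln b\asymp\ln n$ is correct. It applies equally to the paper's own derivation, which actually produces the factor $\ln n/\ln(n/k)$.
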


\begin{proof}
    We first compute the Frobenius norm $\|\mathbf{B}^p\|_F$. The inverse of $\mathbf{E}_1^{1/2} = \mathrm{LTT}(1,r_1,\cdots,r_{n-1})$ has been computed in \citet{kalinin2025back} as $\mathbf{E}_1^{-1/2} = \mathrm{LTT}(1,\tilde{r}_1,\cdots,\tilde{r}_{n-1})$ where
    \begin{align}
        \tilde{r}_j = \begin{cases}
            1, & j=0, \\
            \frac{-r_j}{2j-1}, & j\ge 1.
        \end{cases}
    \end{align}
    Using the bound $\tfrac{1}{\sqrt{\pi(j+1)}}\le r_j \le \tfrac{1}{\sqrt{\pi j}}$ from equation~\eqref{eq:bound_on_rk}, we get the bound below on $|\tilde{r}_j|$ for $j\ge 1$,
    \begin{align}
        \frac{1}{(2j-1)\sqrt{\pi(j+1)}} \le |\tilde{r}_j| \le \frac{1}{(2j-1)\sqrt{\pi j}}.
    \end{align}
    Using this bound we can compute the Frobenius norm of $\mathbf{B}^p = \mathbf{A}\times \mathrm{LTT}(1,\tilde{r}_1,\dots,\tilde{r}_{p-1},0,\dots,0)$ for the first $t$ rows as
    \begin{align}
         \|\mathbf{B}^p_{:t}\|_F^2 = \sum_{m=1}^{t} \frac{1}{m^2}\sum_{j=0}^{m-1} b_j^2,
    \end{align}
    where $b_j = \sum_{i=0}^{\min\{j,p-1\}} \tilde{r}_i$. Since $\sum_{i=0}^\infty \tilde{r}_i = 0$, we get $\sum_{i=0}^j \tilde{r}_i = \sum_{i=j+1}^{\infty}|\tilde{r}_i|$. Then, we can apply integral bound on $b_j$ for $1\le j \le p-1$ (since for $j> p-1$ we have $b_j=b_{p-1}$) to get
    \begin{align}
        b_j = \sum_{i=0}^j \tilde{r}_i = \sum_{i=j+1}^{\infty}|\tilde{r}_i| &\le \sum_{i=j+1}^{\infty}\frac{1}{(2i-1)\sqrt{\pi i}} \\
        &\le \frac{1}{\sqrt{\pi}}\sum_{i=j+1}^{\infty}\frac{1}{i^{3/2}} 
        \le \frac{1}{\sqrt{\pi}}\int_{j}^\infty x^{-3/2}dx 
        = \frac{2}{\sqrt{\pi j}}. \label{eq:c_tilde_bound}
    \end{align}
    So the Frobenius form can be bounded by
    \begin{align}
        \|\mathbf{B}^p_{:t}\|_F^2 
        = \sum_{m=1}^{t} \frac{1}{m^2}\sum_{j=0}^{m-1} b_j^2 
        &= \sum_{m=1}^{t} \frac{1}{m^2} + \sum_{m=2}^{n} \frac{1}{m^2}\sum_{j=1}^{m-1} b_j^2 \\
        &= \sum_{m=1}^{n} \frac{1}{m^2} + \sum_{m=2}^{p} \frac{1}{m^2}\sum_{j=1}^{m-1} b_j^2 + \sum_{m=p+1}^{t} \frac{1}{m^2}\sum_{j=1}^{m-1} b_j^2.
    \end{align}
    For the second sum we have
    \begin{align}
        \sum_{m=2}^{p} \frac{1}{m^2}\sum_{j=1}^{m-1} b_j^2 \le \frac{4}{\pi}\sum_{m=2}^{p} \frac{1}{m^2}\sum_{j=1}^{m-1} \frac{1}{j} = \frac{4}{\pi}\sum_{m=2}^{p} \frac{H_{m-1}}{m^2} \le \frac{6}{\pi}
    \end{align}
    where for the last inequality, we used the bound
    \begin{align}
    \sum_{m=2}^{\infty}\frac{H_{m-1}}{m^2} = \sum_{m=1}^{\infty}\frac{H_{m}}{m^2} - \sum_{m=1}^{\infty}\frac{1}{m^3} = 2\zeta(3) - \zeta(3) = \zeta(3) < \frac{3}{2},\label{eq:harmonic_over_sqr_bound}
    \end{align}
    which can be computed by plugging $m=2$ into equation~\eqref{eq:harmonic_number_riemann_zeta} from Proposition~\ref{pro:harmonicNumberRiemannZeta}.
    
    So the Frobenius norm bound becomes
    \begin{align}
        \|\mathbf{B}^p_{:t}\|_F^2 
        &\le \sum_{m=1}^{t} \frac{1}{m^2} + \sum_{m=2}^{p} \frac{1}{m^2}\sum_{j=1}^{m-1} b_j^2 + \sum_{m=p+1}^{t} \frac{1}{m^2}\sum_{j=1}^{m-1} b_j^2 \\ &\le \frac{\pi^2}{6} + \frac{6}{\pi} + \sum_{m=p+1}^{t} \frac{1}{m^2}\left(\sum_{j=1}^{p-1} b_j^2 + \sum_{j=p}^{m-1} b_j^2\right).
    \end{align}
    We then bound the last sum as
    \begin{align}
        \sum_{m=p+1}^{t} \frac{1}{m^2}\left(\sum_{j=1}^{p-1} b_j^2 + \sum_{j=p}^{m-1} b_j^2\right) 
        &\le 
        \sum_{m=p+1}^{t} \frac{1}{m^2}\left(\sum_{j=1}^{p-1} \frac{4}{\pi j} + \sum_{j=p}^{m-1} \frac{4}{\pi(p-1)}\right) \\
        & \ = \frac{4}{\pi}\sum_{m=p+1}^{t} \frac{1}{m^2} \left( H_{p-1} + \frac{m-p}{p-1}\right).
    \end{align}
    We can bound $\frac{4}{\pi}\sum_{m=p+1}^{t} \frac{1}{m^2} \left( H_{p-1} - \frac{p}{p-1}\right) \le \frac{6}{\pi}$ using the bound in equation~\eqref{eq:harmonic_over_sqr_bound}. For the other term we have
    \begin{align}
        \frac{4}{\pi}\sum_{m=p+1}^{t} \frac{1}{m(p-1)} \le \frac{4}{\pi(p-1)} \int_{p}^t \frac{dx}{x} = \frac{4}{\pi(p-1)}\ln \frac{t}{p}.
    \end{align}
    So the overall bound on the Frobenius norm is
    \begin{align}
        \|\mathbf{B}^p_{:t}\|_F^2 \le \frac{\pi^2}{6}+\frac{12}{\pi} + \frac{4}{\pi(p-1)}\ln \frac{t}{p}.
    \end{align}
    For sensitivity, we know from Theorem~2 of \citet{kalinin2025back} that for a matrix $\mathbf{A}_{\alpha,\beta}$ defined in equation~\eqref{eq:Aalphabeta}, the sensitivity of $\mathbf{C}_{1,\beta}$ has the bound below:
    \begin{align}
        \operatorname{sens}_{k,b}^{2}\!\left(\mathbf{C}^p_{1,\beta}\right) \le \frac{k(1+\beta)^{2}}{(1-\beta)^{6}}
\left( 2 + \ln p + 54 \frac{p}{b} \right).
    \end{align}
    Plugging in $\beta = 0$, and $k=\lceil\tfrac{n}{b}\rceil$, we get
    \begin{align}
        \operatorname{sens}_{k,b}^{2}\!\left(\mathbf{C}^p\right) \le k \left(2 + \ln p + 54 \frac{pk}{n}\right).
    \end{align}
    So the overall expected error is bounded by
    \begin{align}
        \mathcal{E}_t(\mathbf{B}^p, \mathbf{C}^p)^2 &\le \frac{k}{t} \left(2 + \ln p + 54 \frac{pk}{n}\right) \left( \frac{\pi^2}{6}+\frac{12}{\pi} + \frac{4}{\pi(p-1)}\ln \frac{t}{p}\right) \\
        &= O\left( \frac{k}{t}\left(\ln p + \frac{pk}{n}\right)\left(1 + \frac{1}{p}\ln \frac{t}{p}\right)\right).
    \end{align}
    Plugging in $p =\lceil\ln b \rceil$ gives $\ln p + \frac{pk}{n} = O\left(\ln \ln \frac{n}{k} + \frac{k}{n}\ln \frac{n}{k}\right)$ and $1 + \frac{1}{p}\ln \frac{n}{p} = O(\frac{\ln n}{\ln (n/k)})$, leading to the bound 
    \begin{align}
        \mathcal{E}_t(\mathbf{B}^p, \mathbf{C}^p)^2 = O\left(\frac{k \ln(n) \ln \ln (n/k)}{t\ln (n/k)} + \frac{k^2\ln n}{nt}\right).
    \end{align}

\end{proof}

\begin{restatable}{lemma}{PairwiseInverseSum}\label{lem:pairwise-inverse-sum}
The series $\sum_{0\le i < j \le k-1} \frac{1}{j-i}$
is bounded by 
\begin{align}
    \sum_{0\le i < j \le k-1} \frac{1}{j-i} \le k \ln k + (\gamma - 1)k + \frac{1}{2} + O\left(\frac{1}{k}\right).
\end{align}
\end{restatable}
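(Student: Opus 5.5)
The plan is to reorganize the double sum by the gap $m = j - i$. For a fixed $m \in \{1,\dots,k-1\}$, the number of pairs $(i,j)$ with $0 \le i < j \le k-1$ and $j - i = m$ is exactly $k - m$. This gives the identity
\begin{align}
    \sum_{0\le i<j\le k-1}\frac{1}{j-i} = \sum_{m=1}^{k-1}\frac{k-m}{m} = k\sum_{m=1}^{k-1}\frac{1}{m} - (k-1) = kH_{k-1} - k + 1.
\end{align}
This is an exact closed form, so the only remaining task is to estimate $H_{k-1}$ via Proposition~\ref{pro:harmonicNumber}.

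Next, write $H_{k-1} = H_k - \frac{1}{k}$, so that $kH_{k-1} = kH_k - 1$ and the sum equals $kH_k - k$. Proposition~\ref{pro:harmonicNumber} gives
\begin{align}
    kH_k = k\ln k + \gamma k + \tfrac12 - k\varepsilon_k,
\end{align}
with $0 \le \varepsilon_k \le \frac{1}{8k^2}$.

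Substituting, the sum equals
\begin{align}
    k\ln k + (\gamma-1)k + \tfrac12 - k\varepsilon_k.
\end{align}
Since $k\varepsilon_k \in [0, \frac{1}{8k}]$, this gives the claimed upper bound. In fact the bound holds even without the $O(1/k)$ term, and the matching lower bound holds with an $O(1/k)$ correction, so the stated estimate is essentially tight.

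There is no real obstacle here. The only point needing care is the bookkeeping of the $-1$ and $+1$ terms when passing from $H_{k-1}$ to $H_k$, so that the constant $\tfrac12$ comes out correctly. The case $k=1$ is the empty sum equal to $0$, which is consistent with the formula $kH_k - k = 0$.
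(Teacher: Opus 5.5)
Your proposal is correct and matches the paper's proof: regroup by the gap to obtain the exact identity $kH_{k-1}-(k-1)=kH_k-k$, then substitute the harmonic-number expansion of Proposition~\ref{pro:harmonicNumber}. Your remark that the upper bound holds even without the $O(1/k)$ term is also right, since $\varepsilon_k\ge 0$.
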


\begin{proof}
We rewrite the series using harmonic number. By denoting $t=j-i$ we can write,
    \begin{align}
        \sum_{0\le i < j \le k-1} \frac{1}{j-i} = \sum_{t = 1}^{k-1} \sum_{i=0}^{k-1-t} \frac{1}{t} = \sum_{t=1}^{k-1} \frac{k-t}{t} = kH_{k-1} -(k-1) = kH_k - k. \label{eq:pairwise-inverse-sum}
    \end{align}
Then using the bound 
$\ln k + \gamma + \frac{1}{2k} - O(\frac{1}{k^2})
\le
H_{k}
\le
\ln k + \gamma + \frac{1}{2k}$ we can plug it into \eqref{eq:pairwise-inverse-sum} to get the result immediately.
\end{proof}

\begin{restatable}{lemma}{SumLnOverDiff}\label{lem:sum‐ln_over_diff}
The series $\sum_{0\le i<j\le k-1}\frac{\ln(j-i)}{j-i}$ satisfies the bound below
\begin{align}
    \sum_{0\le i < j \le k-1}\frac{\ln (j-i)}{j-i} = \frac{k\ln^2 k}{2} - k \ln k + O(k).
\end{align}
\end{restatable}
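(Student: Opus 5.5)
Following the proof of Lemma~\ref{lem:pairwise-inverse-sum}, the plan is to reduce the double sum to a single sum over the gap. Setting $t=j-i$, each gap $t\in\{1,\dots,k-1\}$ occurs exactly $k-t$ times, so
\begin{align}
    \sum_{0\le i<j\le k-1}\frac{\ln(j-i)}{j-i} = \sum_{t=1}^{k-1}(k-t)\frac{\ln t}{t} = k\sum_{t=1}^{k-1}\frac{\ln t}{t} - \sum_{t=1}^{k-1}\ln t .
\end{align}
The claim then follows once each of the two one-dimensional sums is evaluated with an error term of the right order.

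For the second sum we use Stirling's formula: $\sum_{t=1}^{k-1}\ln t=\ln((k-1)!) = k\ln k - k + O(\ln k)$. Here the shift from $k$ to $k-1$ costs only $\ln k$, which is $O(k)$. This sum therefore contributes $-k\ln k + k + O(\ln k)$, and the term $k$ is absorbed into $O(k)$.

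For the first sum we compare with the integral $\int_1^{k}\frac{\ln x}{x}\,dx=\frac{\ln^2 k}{2}$. The function $f(x)=\ln x/x$ increases on $[1,e]$ and decreases on $[e,\infty)$, with maximum $1/e$. For a function that is monotone on each of at most two pieces, the usual sum-versus-integral comparison gives $\bigl|\sum_{t=1}^{k-1}f(t)-\int_1^{k}f(x)\,dx\bigr|\le 2\max f + f(k) = O(1)$. An alternative is to quote the Stieltjes-constant expansion $\sum_{t\le N}\frac{\ln t}{t}=\frac{\ln^2 N}{2}-\gamma_1+o(1)$. Either route gives $\sum_{t=1}^{k-1}\frac{\ln t}{t}=\frac{\ln^2 k}{2}+O(1)$, where replacing $\ln^2(k-1)$ by $\ln^2 k$ costs only $O(\ln k/k)$. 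Multiplying by $k$ gives $\frac{k\ln^2 k}{2}+O(k)$.

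Combining the two pieces yields $\frac{k\ln^2 k}{2}-k\ln k+O(k)$. The only delicate point is ensuring that the error in the first sum is $O(1)$ and not $O(\ln k)$, since it is multiplied by $k$. An $O(\ln k)$ error would produce $O(k\ln k)$ and destroy the $-k\ln k$ term. The non-monotonicity of $\ln x/x$ near $x=e$ is harmless because it contributes only a bounded amount. I would write out the piecewise integral comparison explicitly to make this constant bound transparent.
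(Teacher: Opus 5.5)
Your proposal is correct and follows the paper's proof. Both reindex by the gap $t=j-i$ to get $k\sum_{t}\frac{\ln t}{t}-\ln((k-1)!)$, bound the first sum by an integral comparison with $O(1)$ error (the paper uses that $\ln x/x$ is decreasing on $(e,\infty)$ and treats the first few terms separately), and evaluate $\ln((k-1)!)$ via Stirling; your observations that this error must be $O(1)$ rather than $O(\ln k)$, and that replacing $\ln^2(k-1)$ by $\ln^2 k$ costs only $O(\ln k)$ after multiplying by $k$, are exactly what the paper's two-sided bounds implicitly rely on.
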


\begin{proof}
By denoting $t=j-i$, we write the sum of series as below
\begin{align}
    \sum_{0\le i < j \le k-1} \frac{\ln (j-i)}{j-i} &= \sum_{t = 2}^{k-1} \sum_{i=0}^{k-1-t} \frac{\ln t}{t} \\
    &= \sum_{t=2}^{k-1} \frac{k-t}{t}\ln t 
    = k\sum_{t=2}^{k-1} \frac{\ln t}{t} - \sum_{t=2}^{k-1} \ln t = k\sum_{t=2}^{k-1} \frac{\ln t}{t} - \ln ((k-1)!).
\end{align}
Since $\tfrac{\ln x}{x}$ is decreasing over $(e,\infty)$, using integral bound one can get
\begin{align}
     \frac{\ln^2 (k-1)}{2} - \ln^2 2
    \ \le\ 
    \sum_{t=2}^{k-1} \frac{\ln t}{t} 
    \ \le\ 
    \frac{\ln 2}{2} +\frac{\ln 3}{3} + \frac{\ln^2 k}{2} - \frac{\ln^2 3}{2} \label{eq:log-over-t-sum-bounds}
\end{align}
and for $\sum_{t=2}^{k} \ln t$ we use Stirling's approximation to get a bound
\begin{align}
    \ln (k!) = k\ln k - k + \frac{1}{2}\ln( 2\pi k) + o(1)
\end{align}
So overall one can see that
\begin{align}
\sum_{0\le i < j \le k-1} & \frac{\ln (j-i)}{j-i} \\
& \le \frac{k\ln^2 k}{2} - k \ln k + k \left(1 + \frac{\ln 2}{2} + \frac{\ln 3}{3} - \frac{\ln^2 3}{2} \right) + \frac{\ln k}{2} - \frac{\ln (2\pi)}{2} + o(1)    
\end{align}
and 
\begin{align}
     \sum_{0\le i < j \le k-1}\frac{\ln (j-i)}{j-i} \ge \frac{k\ln^2 (k-1)}{2} - k \ln k + \left(1-\frac{\ln^2 2}{2}\right)k + \frac{\ln k}{2} - \frac{\ln (2\pi)}{2} + o(1).
\end{align}
\end{proof}

\begin{restatable}{lemma}{LnRatioSum}\label{lem:ln-ratio-sum}
The series $\sum_{0\le i<j\le k-1}\frac{1}{j-i}\,\ln\frac{k-j}{k-i}$ satisfies the bounds below
\begin{align}
    -\frac{\pi^2k}{6} \le \sum_{0\le i<j\le k-1}\frac{1}{j-i}\,\ln\frac{k-j}{k-i} \le -\frac{\pi^2k}{6} + \left(\ln k + 1\right)\left(\ln k + 2 + \frac{1}{k-1}\right).
\end{align}
\end{restatable}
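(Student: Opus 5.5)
The plan is to change variables so that, for each fixed endpoint, the inner sum becomes a Riemann sum of the integrand $-\ln x/(1-x)$, whose integral over $(0,1)$ equals $\pi^2/6$. Put $a=k-j$ and $c=k-i$. The constraint $0\le i<j\le k-1$ becomes $1\le a<c\le k$, and $j-i=c-a$. Hence
\begin{align}
    \sum_{0\le i<j\le k-1}\frac{1}{j-i}\ln\frac{k-j}{k-i} = -T, \qquad T:=\sum_{c=2}^{k}\sum_{a=1}^{c-1}\frac{\ln(c/a)}{c-a}.
\end{align}
Both claimed inequalities then reduce to showing
\begin{align}
    \frac{\pi^2 k}{6} - (\ln k+1)\Big(\ln k+2+\tfrac{1}{k-1}\Big) \le T \le \frac{\pi^2 k}{6}.
\end{align}

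For fixed $c$, define $g_c(a)=\ln(c/a)/(c-a)$ for $0<a<c$. With $x=a/c$ we have $g_c(a)=\frac{1}{c}\,h(x)$, where $h(x)=-\ln x/(1-x)$. First I will check that $h$ is decreasing on $(0,1)$. One route is the series $h(x)=\sum_{m\ge0}x^m(-\ln x)$ together with the expansion of $\ln$. A more direct route is the sign of $h'$, which reduces to $\ln x \le x-1$ after multiplying through. Since $h$ is decreasing, $g_c$ is decreasing in $a$. I will use the standard identity $\int_0^1 h(x)\,dx=\mathrm{Li}_2(1)=\pi^2/6$, which follows from Proposition~\ref{pro:dilogarithm} after the substitution $x\mapsto 1-x$.

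\textbf{Upper bound on $T$, i.e.\ the lower bound of the lemma.} Because $g_c$ is decreasing, $\sum_{a=1}^{c-1}g_c(a)\le\int_0^{c-1}g_c(a)\,da$. Substituting $a=cx$ turns this into $\int_0^{1-1/c}h(x)\,dx\le \pi^2/6$. Summing over $c=2,\dots,k$ gives $T\le (k-1)\pi^2/6\le \pi^2k/6$.

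\textbf{Lower bound on $T$, i.e.\ the upper bound of the lemma.} Again by monotonicity, $\sum_{a=1}^{c-1}g_c(a)\ge\int_1^{c}g_c(a)\,da=\int_{1/c}^1 h(x)\,dx$. This equals $\pi^2/6-\int_0^{1/c}h(x)\,dx$. On $(0,1/c]$ we have $1/(1-x)\le c/(c-1)$, and $\int_0^{1/c}(-\ln x)\,dx=(\ln c+1)/c$. So the tail is at most $(\ln c+1)/(c-1)$. Summing over $c=2,\dots,k$ gives
\begin{align}
    T\ge \frac{(k-1)\pi^2}{6}-\sum_{c=2}^{k}\frac{\ln c+1}{c-1}\ge \frac{\pi^2 k}{6}-\frac{\pi^2}{6}-(\ln k+1)H_{k-1}.
\end{align}
Finally I bound $H_{k-1}\le \ln k+1$ (or use Proposition~\ref{pro:harmonicNumber}). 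The leftover $\pi^2/6<2$ is then absorbed into the slack $(\ln k+1)\big(2+\tfrac{1}{k-1}\big)$ of the claimed bound, since $\ln k+1\ge1$.

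\textbf{Main obstacle.} The main delicate point is bookkeeping rather than any single estimate. I must get the monotonicity direction of $h$ right, so that the Riemann sums bound the integral from the correct sides. I must also make sure the constants ($\pi^2/6$ from the missing $c=1$ term, and the factor $c/(c-1)$) fit inside the stated slack $(\ln k+1)\big(\ln k+2+\tfrac{1}{k-1}\big)$. If the constant accounting falls short, I would tighten the tail estimate by comparing the sum with the integral over $[1/c,1]$ plus an endpoint correction of size $h(1/c)/c\le(\ln c)/(c-1)$.
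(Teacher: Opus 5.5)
Your proof is correct, but it takes a different route from the paper. The paper first collapses the double sum. For fixed $t=j-i$ the inner sum simplifies to $-\ln\binom{k}{t}$, so the quantity equals $-\sum_{t=1}^{k-1}\frac{1}{t}\ln\binom{k}{t}$. It then sandwiches $\binom{k}{t}$ between $\frac{1}{k+1}2^{kH(t/k)}$ and $2^{kH(t/k)}$, and cancels the symmetric part $\sum_{t}\bigl(-\ln\frac{t}{k}+\ln\frac{k-t}{k}\bigr)=0$. What remains is a single Riemann sum $\sum_{t}\frac{k}{t}\ln\bigl(1-\frac{t}{k}\bigr)$ of $\ln(1-x)/x$ at mesh $1/k$. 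In the paper, the $\ln^2 k$ part of the error is the term $\ln(k+1)H_{k-1}$, which comes from the $\frac{1}{k+1}$ factor in the entropy bound.

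You instead keep the double sum, reindex by $(a,c)=(k-j,k-i)$, and read each inner sum as a Riemann sum at mesh $1/c$ of the decreasing function $-\ln x/(1-x)$. Its integral is the same $\mathrm{Li}_2(1)$. This needs only the monotonicity of one explicit function, with no binomial identity or entropy estimate. Your intermediate error $\frac{\pi^2}{6}+\sum_{c=2}^{k}\frac{\ln c+1}{c-1}=\frac{1}{2}\ln^2 k+O(\ln k)$ is also about half of the paper's, before you coarsen it to match the stated form. The paper's route, in exchange, reduces everything to one one-dimensional sum handled by a standard combinatorial estimate.

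There is one arithmetic slip in your last step. Once you use $H_{k-1}\le\ln k+1$, you have already spent $(\ln k+1)^2$. The slack left in $(\ln k+1)\bigl(\ln k+2+\frac{1}{k-1}\bigr)$ is therefore $(\ln k+1)\frac{k}{k-1}$, not $(\ln k+1)\bigl(2+\frac{1}{k-1}\bigr)$. Knowing only $\ln k+1\ge 1$ does not cover $\pi^2/6\approx 1.645$. The conclusion still holds, because $k\ge 2$ gives $(\ln k+1)\frac{k}{k-1}\ge 1+\ln 2\approx 1.693>\pi^2/6$.
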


\begin{proof}

By denoting $t=j-i$ we write
\begin{align}
    \sum_{0\le i<j\le k-1}\frac{1}{j-i}\,\ln\frac{k-j}{k-i} 
    &= \sum_{t=1}^{k-1}\sum_{i=0}^{k-1-t}\frac{1}{t}\,\ln\frac{k-t-i}{k-i} \\
    &= \sum_{t=1}^{k-1}\frac{1}{t}\sum_{i=0}^{k-1-t}\,\ln\frac{k-t-i}{k-i} = -\sum_{t=1}^{k-1}\frac{1}{t}\ln\binom{k}{t}.
\end{align}
Using the bound for binomial coefficient (see \citet{cover2012elements}) 
\begin{align}
    \frac{1}{k+1}2^{k H(\frac{t}{k})} \le \binom{k}{t} \le 2^{k H(\frac{t}{k})},
\end{align}
where $H(m) = -m \log_2 m - (1-m)\log_2(1-m)$, we get
\begin{align}
    \frac{k\ln2}{t} &
    \left(-\frac{t}{k}\log_2 \left(\frac{t}{k}\right)-\frac{k-t}{k} 
    \log_2\left(\frac{k-t}{k}\right)\right) - \frac{\ln(k+1)}{t} \\
    &\qquad \qquad \qquad \le \frac{1}{t}\ln \binom{k}{t} \le \frac{k\ln2}{t}\left(-\frac{t}{k}\log_2 \left(\frac{t}{k}\right)-\frac{k-t}{k}\log_2\left(\frac{k-t}{k}\right)\right),
\end{align}
or equivalently
\begin{align}
    -\ln \left(\frac{t}{k}\right)+\ln\left(\frac{k-t}{k}\right)
    &-\frac{k}{t}\ln\left(\frac{k-t}{k}\right)  - \frac{\ln(k+1)}{t} \\
    &\le \frac{1}{t}\ln \binom{k}{t} 
    \le -\ln \left(\frac{t}{k}\right)+\ln\left(\frac{k-t}{k}\right)-\frac{k}{t}\ln\left(\frac{k-t}{k}\right).
\end{align}
Since $\sum_{t=1}^{k-1}\left(-\ln \left(\frac{t}{k}\right)+\ln\left(\frac{k-t}{k}\right)\right) = 0$, we get the bound
\begin{align}
    \sum_{t=1}^{k-1}\frac{k}{t}\ln\left(\frac{k-t}{k}\right)\le -\sum_{t=1}^{k-1}\frac{1}{t}\ln\binom{k}{t} \le \sum_{t=1}^{k-1}\frac{k}{t}\ln\left(\frac{k-t}{k}\right) +\sum_{t=1}^{k-1}\frac{\ln(k+1)}{t}.
\end{align}
We write $\sum_{t=1}^{k-1}\frac{k}{t}\ln\left(\frac{k-t}{k}\right)$ as
\begin{align}
    \sum_{t=1}^{k-1}\frac{k}{t}\ln\left(\frac{k-t}{k}\right) = \sum_{t=1}^{k-1}\frac{k}{t}\ln\left(1-\frac{t}{k}\right) = \sum_{t=1}^{k-1}f\left(\frac{t}{k}\right),
\end{align}
where $f(x) = \frac{\ln(1-x)}{x}$ is a negative and decreasing function. So integral bound gives us
\begin{align}
    \int_{\frac{1}{k}}^1 f(x) dx \le \frac{1}{k}\sum_{t=1}^{k-1}f\left(\frac{t}{k}\right) \le \int_{0}^{1-\frac{1}{k}} f(x) dx,
\end{align}
which gives
\begin{align}
    \int_{0}^1 f(x) dx \le \frac{1}{k}\sum_{t=1}^{k-1}f\left(\frac{t}{k}\right) \le \int_{0}^{1} f(x) dx - \int_{1-\frac{1}{k}}^{1} f(x) dx.
\end{align}
We use the fact that $ \int_{0}^1 \frac{\ln(1-x)}{x}dx = -\mathrm{Li}_2(1) = -\frac{\pi^2}{6}$ (see Proposition~\ref{pro:dilogarithm}), to write the bounds as
\begin{align}
    -\frac{\pi^2}{6} \le \frac{1}{k}\sum_{t=1}^{k-1}f\left(\frac{t}{k}\right) \le -\frac{\pi^2}{6} - \int_{1-\frac{1}{k}}^{1} f(x) dx.
\end{align}
For the upper bound we write
\begin{align}
    -\frac{\pi^2}{6} - \int_{1-\frac{1}{k}}^{1} f(x) dx 
    &\le -\frac{\pi^2}{6} + \frac{k}{k-1}\int_{1-\frac{1}{k}}^{1} -\ln(1-x) dx \\
    & \ = -\frac{\pi^2}{6} + \frac{k}{k-1}\left(\frac{\ln k +1}{k}\right) = -\frac{\pi^2}{6} + \frac{\ln k +1}{k-1}.
\end{align}
So, the bound for $\sum_{t=1}^{k-1}\frac{k}{t}\ln\left(\frac{k-t}{k}\right)$ is written as
\begin{align}
     -\frac{\pi^2k}{6}\le \sum_{t=1}^{k-1}\frac{k}{t}\ln\left(\frac{k-t}{k}\right) \le  -\frac{\pi^2k}{6} + \frac{k\ln k +k}{k-1}.
\end{align}
So $-\sum_{t=1}^{k-1}\frac{1}{t}\ln\binom{k}{t}$ is lower bounded by
\begin{align}
    -\sum_{t=1}^{k-1}\frac{1}{t}\ln\binom{k}{t} \ge -\frac{\pi^2k}{6}.
\end{align}
For the upper bound we write
\begin{align}
    -\sum_{t=1}^{k-1}\frac{1}{t}\ln\binom{k}{t} & \le -\frac{\pi^2k}{6} + \frac{k\ln k +k}{k-1} + \sum_{t=1}^{k-1}\frac{\ln(k+1)}{t} \\
    & \ = -\frac{\pi^2k}{6} + \frac{k\ln k +k}{k-1} + H_{k-1}\ln(k+1).
\end{align}
Using $H_{k-1} \le \ln k + 1$ we write
\begin{align}
    -\sum_{t=1}^{k-1}\frac{1}{t}\ln\binom{k}{t} & \le -\frac{\pi^2k}{6} + \left(\ln k + 1\right)\left(1 + \frac{1}{k-1}\right) + \left(\ln k + 1 \right)^2 \\
    & \ = -\frac{\pi^2k}{6} + \left(\ln k + 1\right)\left(\ln k + 2 + \frac{1}{k-1}\right),
\end{align}
which completes the proof.
\end{proof}

\begin{restatable}{lemma}{ADtpInvMulti}\label{lem:ADtpInvMulti}
    For mean estimation aware factorization $\mathbf{B}=\mathbf{AD}_{\mathrm{Toep}}^{-1}$ and $\mathbf{C}=\mathbf{D}_{\mathrm{Toep}}$ the expected approximation error at time $t$ is 
    \begin{align}
        \mathcal{E}_t(\mathbf{AD}_{\mathrm{Toep}}^{-1}, \mathbf{D}_{\mathrm{Toep}}) = O\left(\sqrt{\frac{k}{t}}\right)+O\left(\frac{k}{\sqrt{nt}}\sqrt{\ln k \ln n}\right).
    \end{align}
\end{restatable}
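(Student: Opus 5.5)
Since $\mathcal{E}_t$ factors as $\frac{1}{\sqrt t}\|\mathbf{B}_{:t}\|_F\cdot \mathrm{sens}_{k,b}(\mathbf{C})$, I would bound the two factors separately. The Frobenius factor is already settled by the single-participation analysis. From \eqref{eq:FrobeniusNormADtpInv} in the proof of Lemma~\ref{lem:ADtpInv}, using $|a_j|\le 1/\ln j$ and Lemma~\ref{lem:InvLogSqrBound}, we get $\|(\mathbf{A}\mathbf{D}_{\mathrm{Toep}}^{-1})_{:t}\|_F^2=\sum_{m\le t}\frac{1}{m^2}\sum_{j<m}a_j^2=O(1)$ uniformly in $t$. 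Hence it suffices to show
\[
\mathrm{sens}^2_{k,b}(\mathbf{D}_{\mathrm{Toep}})=O\Big(k+\tfrac{k^2}{n}\ln k\ln n\Big),
\]
and then use $\sqrt{x+y}\le\sqrt x+\sqrt y$.

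For the sensitivity I would invoke Theorem~\ref{thm:sensTheorem}. This applies because $\mathbf{D}_{\mathrm{Toep}}$ is lower-triangular Toeplitz with decreasing nonnegative entries $c_m=\frac{1}{m+1}$. Expanding the squared norm of the sum of columns $1+ib$, $i=0,\dots,k-1$, gives
\[
\mathrm{sens}^2=\sum_{i}\|\mathbf{C}_{:,ib+1}\|_2^2+2\sum_{i<j}\langle \mathbf{C}_{:,ib+1},\mathbf{C}_{:,jb+1}\rangle .
\]
The diagonal part is at most $k\pi^2/6=O(k)$, by \eqref{eq:sensC}. For the cross terms, write $\Delta=(j-i)b$ and $L=n-jb$ (the overlap length). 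The inner product is
\[
\sum_{m=1}^{L}\frac{1}{m(m+\Delta)}=\frac{1}{\Delta}\big(H_L+H_\Delta-H_{L+\Delta}\big)\le \frac{1}{\Delta}\Big(\ln\frac{L\Delta}{L+\Delta}+O(1)\Big),
\]
by partial fractions and Proposition~\ref{pro:harmonicNumber}.

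The $O(1)$ part of this bound contributes $\frac{1}{b}\sum_{i<j}\frac{1}{j-i}=O(\frac{k\ln k}{b})$ by Lemma~\ref{lem:pairwise-inverse-sum}. With $b\approx n/k$ this is $O(\frac{k^2\ln k}{n})$, which fits inside the claimed term. For the logarithmic part I would use the crude bound $\ln\frac{L\Delta}{L+\Delta}\le \ln\min(L,\Delta)\le\ln n$. This gives a total of at most $\frac{\ln n}{b}\sum_{i<j}\frac{1}{j-i}=O\big(\frac{k^2}{n}\ln k\ln n\big)$, again by Lemma~\ref{lem:pairwise-inverse-sum}. This already produces exactly the claimed second term. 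A finer analysis would write $\ln\Delta=\ln b+\ln(j-i)$ and use Lemmas~\ref{lem:sum‐ln_over_diff} and~\ref{lem:ln-ratio-sum}, which I suspect is how the sharper statements are obtained, but the crude bound suffices for this lemma.

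The main obstacle is handling the cross terms cleanly. This means justifying the closed form of the inner product, including the boundary case $L\le 0$ (columns with no overlap, which contribute nothing), and checking that $b=\lceil n/k\rceil$ gives $1/b\le k/n$ with the constants absorbed. Once that is done, multiplying $\sqrt{O(1)}/\sqrt t$ by $\sqrt{O(k+\frac{k^2}{n}\ln k\ln n)}$ yields $O(\sqrt{k/t})+O\big(\frac{k}{\sqrt{nt}}\sqrt{\ln k\ln n}\big)$.
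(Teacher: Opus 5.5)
Your proof is correct, and its skeleton is the paper's. You reuse the $O(1)$ Frobenius bound from Lemma~\ref{lem:ADtpInv}, expand $\mathrm{sens}^2_{k,b}(\mathbf{D}_{\mathrm{Toep}})$ via Theorem~\ref{thm:sensTheorem}, and write each cross term as $\frac{1}{b(j-i)}\bigl(H_{n-jb}+H_{(j-i)b}-H_{n-ib}\bigr)$ by partial fractions.

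Where you differ is in how you handle the logarithm. The paper substitutes $kb$ for $n$ (using monotonicity in $n$) and splits $\ln\frac{(k-j)(j-i)b}{k-i}$ into $\ln b+\ln(j-i)+\ln\frac{k-j}{k-i}$. It then sums the three pieces with Lemmas~\ref{lem:pairwise-inverse-sum}--\ref{lem:ln-ratio-sum}, obtaining the explicit bound \eqref{eq:sensDtpMulti}, $\frac{\pi^2k}{6}+\frac{2k\ln k\ln b}{b}+\frac{k\ln^2k}{b}+O\bigl(\frac{k\ln b}{b}\bigr)$. You instead bound the logarithm by $\ln n$ and need only Lemma~\ref{lem:pairwise-inverse-sum}.

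Since $n\le kb$ gives $\ln n\le \ln k+\ln b$, your $\frac{2k\ln k\ln n}{b}$ matches the paper's main terms up to a factor $2$ on the $\frac{k\ln^2 k}{b}$ piece. So nothing is lost, either for this lemma or for Corollary~\ref{cor:StatsErrorBoundADtpMulti}, which only uses the $O\bigl(\sqrt k+\frac{k}{\sqrt n}\sqrt{\ln k\ln n}\bigr)$ form. The paper's extra work buys sharper constants and a $\ln b$ rather than $\ln n$ dependence. Your route is shorter and avoids both the $n\to kb$ substitution and the two auxiliary summation lemmas.
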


\begin{proof}
    To compute the sensitivity of a matrix $\mathbf{C} = \mathrm{LTT}(c_0,\dots,c_{n-1})$ such that $c_0\ge c_1\ge\cdots \ge c_{n-1}\ge 0$, we can write it according to Theorem~\ref{thm:sensTheorem} as
\begin{align}
    \mathrm{sens}_{k,b}^2(\mathbf{C}) = \sum_{i=0}^{k-1}\sum_{j=0}^{k-1}
  \bigl\langle \mathbf{C}_{[:,\,i b]}\,,\;\mathbf{C}_{[:,\,j b]}\bigr\rangle,
\end{align}
where $\mathbf{C}_{[:,t]}$ denotes the $t$-th column of matrix $\mathbf{C}$. For the case of $j>i$ one can write the scalar product as
\begin{align}
    \bigl\langle \mathbf{C}_{[:,\,i ]}\,,\;\mathbf{C}_{[:,\,j ]}\bigr\rangle = \sum_{m=0}^{\,n-1-j}c_{m}\,c_{\,j-i+m}.
\end{align}
Now if we let $\mathbf{C} = \mathbf{D}_{\mathrm{Toep}}$, then $c_m = \frac{1}{m+1}, \forall m\ge 0$, so for $j>i$,
\begin{align}
    \bigl\langle \mathbf{C}_{[:,\,i b]}\,,\;\mathbf{C}_{[:,\,j b]}\bigr\rangle 
    &= \sum_{m=0}^{n-1-jb} c_m c_{jb-ib+m} \\
    &= \sum_{m=1}^{n-jb} \frac{1}{m} \frac{1}{jb-ib+m} = \frac{1}{jb-ib}\sum_{m=1}^{n-jb} \left( \frac{1}{m} - \frac{1}{jb-ib+m} \right).
\end{align}
So by writing $\sum_{m=1}^{n-jb} \frac{1}{jb-ib+m} = H_{n-ib} - H_{jb-ib}$ we get
\begin{align}
    \bigl\langle \mathbf{C}_{[:,\,i b]}\,,\;\mathbf{C}_{[:,\,j b]}\bigr\rangle = \frac{1}{b(j-i)} \left( H_{n-jb} + H_{jb-ib} - H_{n-ib} \right).
\end{align}
Moreover, for the case $j=i$,
\begin{align}
    \bigl\langle \mathbf{C}_{[:,\,i b]}\,,\;\mathbf{C}_{[:,\,i b]}\bigr\rangle = \sum_{m=0}^{n-1-ib} c_m^2 = \sum_{m=1}^{n-ib} \frac{1}{m^2}.
\end{align}
The sensitivity can be written as below
\begin{align}
\mathrm{sens}_{k,b}^2(\mathbf{D}_{\mathrm{Toep}}) = \sum_{i=0}^{k-1} \sum_{m=1}^{n-ib} \frac{1}{m^2} + 2\sum_{0\le i < j \le k-1} \frac{1}{b(j-i)} \left( H_{n-jb} + H_{jb-ib} - H_{n-ib} \right).
\end{align}
For the second sum, using the harmonic number bound, we obtain (note that $n>(k−1)b$, so all the terms inside the logarithm are strictly positive).
\begin{align}
    H_{n-jb} + H_{jb-ib} - H_{n-ib} = \ln\left( \frac{(n-jb)(jb-ib)}{(n-ib)}\right) + O(1).
\end{align}
One can substitute $kb$ instead of $n$ in the equation above since $\ln\left( \frac{(n-jb)(jb-ib)}{(n-ib)}\right)$ is increasing in $n$ and $kb\ge n$, so
\begin{align}
    H_{n-jb} + H_{jb-ib} - H_{n-ib} &\le \ln\left( \frac{(k-j)(jb-ib)}{(k-i)}\right) + O(1) \\
    &= \ln \left( \frac{k-j}{k-i}\right) + \ln (j-i) + \ln b + O(1).
\end{align}
So rewriting the sum gives
\begin{align}
    &2\sum_{0\le i < j \le k-1} \frac{1}{b(j-i)} \left( H_{n-jb} + H_{jb-ib} - H_{n-ib} \right)
    \\
   & \qquad \qquad  \le 2\sum_{0\le i < j \le k-1} \frac{1}{b(j-i)} \left(\ln \left( \frac{k-j}{k-i}\right) + \ln (j-i) + \ln b + O(1) \right).
\end{align}
Now, by applying Lemmas~\ref{lem:pairwise-inverse-sum},~\ref{lem:sum‐ln_over_diff}, and~\ref{lem:ln-ratio-sum} to the sums above, we obtain
\begin{align}
    2 \! \! \! \sum_{0\le i < j \le k-1} \frac{1}{b(j-i)} \left( H_{n-jb} + H_{jb-ib} - H_{n-ib} \right) \le 2\frac{k\ln k \ln b}{b} + \frac{k\ln^2 k}{b} + O\left(\frac{k \ln b}{b}\right).
\end{align}
For the first sum we have $\sum_{i=0}^{k-1} \sum_{m=1}^{n-ib} \frac{1}{m^2} \le \tfrac{\pi^2}{6}k$, and therefore the bound on sensitivity is obtained as
\begin{align}
\mathrm{sens}_{k,b}^2(\mathbf{D}_{\mathrm{Toep}}) \le \frac{\pi^2k}{6} + \frac{2k\ln k \ln b}{b} + \frac{k\ln^2 k}{b} + O\left(\frac{k\ln b}{b}\right). \label{eq:sensDtpMulti}
\end{align}
For the Frobenius part, as computed in the proof of Lemma~\ref{lem:ADtpInv}, we know that $\|\mathbf{AD}_{\mathrm{Toep}}^{-1}\|_F^2 = O(1)$ so the expected error of the factorization after substituting $b \sim \frac{n}{k}$ would be
\begin{align}
    \mathcal{E}_t(\mathbf{AD}_{\mathrm{Toep}}^{-1}, \mathbf{D}_{\mathrm{Toep}}) = O\left(\sqrt{\frac{k}{t}}\right)+O\left(\frac{k}{\sqrt{nt}}\sqrt{\ln k \ln n}\right).
\end{align}
\end{proof}

\begin{restatable}{lemma}{GregorySumBound}\label{lem:gregory-sum-bound}
For $p\ge 5$, the sum of the series $\sum_{j=1}^{p-1} |G_j|$ is bounded by
\begin{align}
    1 - \frac{1}{\ln (p-1)} + \frac{\gamma}{\ln^2 (p-1)} \le  \sum_{j=1}^{p-1} |G_j| \le 1 - \frac{1}{\ln p} + \frac{1}{\ln^2 p},
\end{align}
and the series $\sum_{j=1}^{p-1} j|G_j|$ is bounded by
\begin{align}
\sum_{j=1}^{p-1} j|G_j| \le \frac{3p}{\ln^2 p}   ,
\end{align}
where $G_j$ is the $j$-th Gregory coefficient.
\end{restatable}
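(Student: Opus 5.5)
We prove both bounds by working with tails of the Gregory series, using the identity $\sum_{j=1}^\infty |G_j| = 1$ from \eqref{eq:greg_first_prop}. For the first statement we write
\begin{align}
\sum_{j=1}^{p-1}|G_j| = 1 - \sum_{j=p}^{\infty}|G_j|,
\end{align}
so it suffices to bound the tail $T_p := \sum_{j\ge p}|G_j|$ from both sides. For $j\ge 5$ we have the two-sided estimate \eqref{eq:greg_first_prop}. The key observation is that these summands are exact derivatives of simple functions:
\begin{align}
\frac{d}{dx}\left(-\frac{1}{\ln x}\right) = \frac{1}{x\ln^2 x}, \qquad \frac{d}{dx}\left(\frac{1}{\ln^2 x}\right) = -\frac{2}{x\ln^3 x}.
\end{align}
Hence the lower envelope $\frac{1}{x\ln^2 x} - \frac{2}{x\ln^3 x}$ is the derivative of $-\frac{1}{\ln x} + \frac{1}{\ln^2 x}$. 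Similarly, the upper envelope $\frac{1}{x\ln^2 x} - \frac{2\gamma}{x\ln^3 x}$ is the derivative of $-\frac{1}{\ln x} + \frac{\gamma}{\ln^2 x}$.

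\textbf{Upper bound on the partial sum (lower bound on the tail).} We compare the sum with an integral. Because the summand decreases, $\sum_{j\ge p} f(j) \ge \int_p^\infty f(x)\,dx$. With the lower envelope this gives $T_p \ge \frac{1}{\ln p} - \frac{1}{\ln^2 p}$, which is the claimed upper bound $1 - \frac{1}{\ln p} + \frac{1}{\ln^2 p}$.

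\textbf{Lower bound on the partial sum (upper bound on the tail).} Here we use $\sum_{j\ge p} f(j) \le \int_{p-1}^\infty f(x)\,dx$. With the upper envelope this gives $T_p \le \frac{1}{\ln(p-1)} - \frac{\gamma}{\ln^2(p-1)}$, matching the stated lower bound.

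\textbf{Main obstacle: monotonicity.} Both integral comparisons require each envelope to be decreasing on the relevant range. The upper envelope must decrease on $[p-1,\infty)$, which is delicate when $p=5$ and the range starts at $x=4$. Note also that \eqref{eq:greg_first_prop} is only guaranteed for $j\ge5$, but since the tail starts at $j=p\ge5$, that is all we need.

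To get monotonicity, we write the envelope as $h(x) = \frac{1}{x\ln^2 x}\bigl(1 - \frac{c}{\ln x}\bigr)$ with $c\in\{2,2\gamma\}$. Differentiating in $u=\ln x$, the function $e^{-u}(u^{-2} - c\,u^{-3})$ is decreasing wherever $u^{-2} - c\,u^{-3} > 0$ and $u$ exceeds a small constant. For $c=2\gamma\approx1.15$ this should hold from $u=\ln 4\approx1.39$ onward. For $c=2$ we only need $x\ge5$, but the factor $1-2/\ln x$ is tiny there. If monotonicity fails at the start, we handle $p=5,6$ numerically and apply the integral argument only for larger $p$.

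\textbf{Second statement.} For $j\ge 5$ we have $j|G_j| \le \frac{1}{\ln^2 j}$ by \eqref{eq:greg_first_prop}, since the subtracted term is nonnegative. For $j=1,\dots,4$ we use the explicit values $1/2,\,1/12,\,1/24,\,19/720$. Their contribution is at most
\begin{align}
\tfrac12 + \tfrac16 + \tfrac18 + \tfrac{19}{180} < 1 \le \tfrac{1}{\ln^2 2} + \dots,
\end{align}
so it is dominated by the corresponding terms $\sum_{j=2}^{4} 1/\ln^2 j$. Therefore
\begin{align}
\sum_{j=1}^{p-1} j|G_j| \le \sum_{j=2}^{p-1}\frac{1}{\ln^2 j} \le \frac{3(p-1)}{\ln^2(p-1)},
\end{align}
where the last step is Lemma~\ref{lem:InvLogSqrBound}.

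It remains to pass from $\frac{3(p-1)}{\ln^2(p-1)}$ to $\frac{3p}{\ln^2 p}$. The function $x/\ln^2 x$ is increasing for $x > e^2$, which gives $\frac{3(p-1)}{\ln^2(p-1)} \le \frac{3p}{\ln^2 p}$ for $p\ge 9$. For $5\le p\le 8$ we check the inequality directly from the explicit Gregory coefficients.
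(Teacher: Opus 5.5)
Your strategy is the paper's: write $\sum_{j<p}|G_j|=1-T_p$ with $T_p=\sum_{j\ge p}|G_j|$, and sandwich $T_p$ between integrals of the two envelopes in \eqref{eq:greg_first_prop}, whose antiderivatives are $-1/\ln x+1/\ln^2 x$ and $-1/\ln x+\gamma/\ln^2 x$. For the second claim you combine $j|G_j|\le 1/\ln^2 j$, the four explicit values, and Lemma~\ref{lem:InvLogSqrBound}. The paper uses the integral comparison without checking monotonicity, so you are right to flag it. However, your resolution rests on wrong thresholds, and as written it relies on a step that is false at $p=7$.

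In the variable $u=\ln x$ one gets $\frac{d}{du}\bigl[e^{-u}(u^{-2}-cu^{-3})\bigr]=e^{-u}u^{-4}\bigl(-u^2+(c-2)u+3c\bigr)$. So the criterion is not ``$u^{-2}-cu^{-3}>0$ and $u$ large''.

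\textbf{Lower envelope ($c=2$).} The derivative is $e^{-u}u^{-4}(6-u^2)$, so the envelope \emph{increases} on $[5,e^{\sqrt{6}}]\approx[5,11.6]$, and it is negative below $e^2$. Consequently $\sum_{j\ge p}f(j)\ge\int_p^\infty f$ is not merely unjustified but false for $p=5,6,7$. At $p=7$ the envelope sum is about $0.2490$, while the integral is $1/\ln 7-1/\ln^2 7\approx0.2498$. Handling only $p=5,6$ by hand therefore does not suffice. The repair is easy:
\begin{itemize}
\item for $p\ge 12$ the envelope is decreasing on $[p,\infty)$, so your integral argument works;
\item for $5\le p\le 11$ one has $T_p\ge T_{11}=1-\sum_{j=1}^{10}|G_j|\approx 0.280>\tfrac14\ge \tfrac{1}{\ln p}-\tfrac{1}{\ln^2 p}$.
\end{itemize}

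\textbf{Upper envelope ($c=2\gamma$).} The critical point is $u=\gamma-1+\sqrt{(1-\gamma)^2+6\gamma}\approx 1.486$, i.e.\ $x\approx 4.42$. So, contrary to your expectation, the envelope is not decreasing on $[4,\infty)$. Only $p=5$ is affected, and your fallback covers it: $T_5\approx 0.349\le 1/\ln 4-\gamma/\ln^2 4\approx 0.421$.

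\textbf{Second claim.} Going from $3(p-1)/\ln^2(p-1)$ to $3p/\ln^2 p$ needs neither monotonicity of $x/\ln^2 x$ nor case checks. Bound $\sum_{j=2}^{p-1}1/\ln^2 j\le \sum_{j=2}^{p}1/\ln^2 j$ and apply Lemma~\ref{lem:InvLogSqrBound} with $t=p$.
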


\begin{proof}
    Since $\sum_{j=1}^{\infty} |G_j| = 1$, we first give a bound for $\sum_{j=p}^{\infty} |G_j|$ and then subtract it from 1. Note that according to equation~\eqref{eq:greg_first_prop}, for $j\ge 5$ we have the bound,
    \begin{align}
        \frac{1}{j\ln^2 j} - \frac{2}{j\ln^3 j} \le |G_j| \le \frac{1}{j\ln^2 j} - \frac{2\gamma}{j\ln^3 j},
    \end{align}
    so we bound the series as
    \begin{align}
        \sum_{j=p}^{\infty} \left(\frac{1}{j\ln^2 j} - \frac{2}{j\ln^3 j} \right)\le \sum_{j=p}^{\infty}|G_j| \le \sum_{j=p}^{\infty} \left(\frac{1}{j\ln^2 j} - \frac{2\gamma}{j\ln^3 j}\right).
    \end{align}
    By using integral bound for this series we get
    \begin{align}
        \frac{1}{\ln p} - \frac{1}{\ln^2 p}&= \int_p^{\infty} \left(\frac{1}{x\ln^2 x} - \frac{2}{x\ln^3 x}\right) dx \\&\le \sum_{j=p}^{\infty}|G_j| \le \int_{p-1}^{\infty} \left(\frac{1}{x\ln^2 x} - \frac{2\gamma}{x\ln^3 x}\right) dx = \frac{1}{\ln(p-1)} - \frac{\gamma}{\ln^2(p-1)}.
    \end{align}
    Now by using $\sum_{j=1}^{p-1} |G_j| = 1 - \sum_{j=p}^{\infty} |G_j|$, we immediately get the desired result for the first series. 

    For the second series, we use the fact that $ \sum_{j=1}^{4} j|G_j| \le \sum_{j=2}^4 \frac{1}{\ln^2 j}$ and the bound $|G_j| \le \frac{1}{j\ln^2 j}$ from equation~\eqref{eq:greg_first_prop} to get
    \begin{align}
        \sum_{j=1}^{p-1} j|G_j| = \sum_{j=1}^{4} j|G_j| + \sum_{j=5}^{p-1}j|G_j| \le \sum_{j=2}^4 \frac{1}{\ln^2 j} + \sum_{j=5}^{p-1}\frac{1}{\ln^2 j},
    \end{align}
    which using Lemma~\ref{lem:InvLogSqrBound} is upper bounded by $\frac{3p}{\ln^2 p}$ and proves our claim.
\end{proof}

\begin{restatable}{lemma}{MonotoneFunc}\label{lem:monotone-func}
The function
\begin{align}
    f(x) = \left(\frac{p}{x+1}\right)^{\frac{1}{x-p}},
\end{align}
is monotonically increasing for $x\ge p$.
\end{restatable}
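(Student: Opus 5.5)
The plan is to pass to logarithms. For $x>p$ the base $p/(x+1)$ is positive and the exponent $1/(x-p)$ is well defined. At $x=p$ the expression is only defined as a limit, and I expect that limit to be $0$. Writing $f(x)=\exp(g(x))$ with
\begin{align}
    g(x) = \frac{\ln p - \ln(x+1)}{x-p} = -h(x), \qquad h(x) := \frac{\ln(x+1)-\ln p}{x-p},
\end{align}
monotonic increase of $f$ on $(p,\infty)$ is equivalent to $h$ being monotonically decreasing there, since $\exp$ is increasing.

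The key step is to factor $h$ so that each factor is visibly monotone:
\begin{align}
    h(x) = \underbrace{\frac{\ln(x+1)-\ln p}{(x+1)-p}}_{=:h_1(x)} \cdot \underbrace{\frac{x+1-p}{x-p}}_{=:h_2(x)} .
\end{align}

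\begin{itemize}
    \item \emph{First factor.} $h_1(x)$ is the slope of the secant of the concave function $\ln$ between the fixed point $p$ and the moving point $x+1>p$. By concavity, this slope is positive and nonincreasing in $x+1$, hence in $x$. Alternatively, one can check directly that $h_1'\le 0$ using $\ln u - \ln p \ge 1-p/u$.
    \item \emph{Second factor.} $h_2(x)=1+\frac{1}{x-p}$ is positive and strictly decreasing on $(p,\infty)$.
\end{itemize}

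A product of two positive functions, one nonincreasing and the other strictly decreasing, is strictly decreasing. Hence $h$ is strictly decreasing, so $g=-h$ is increasing, and therefore $f$ is increasing on $(p,\infty)$.

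To cover the endpoint $x=p$ in the statement, I would note that $h(x)\to+\infty$ as $x\downarrow p$. This holds because the numerator tends to $\ln\frac{p+1}{p}>0$ while the denominator tends to $0^+$. Consequently $f(x)\to 0$, and $f>0$ on $(p,\infty)$, so setting $f(p):=0$ by continuity preserves monotonicity on $[p,\infty)$.

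The main obstacle is spotting the factorization. A direct differentiation of $g$ leads to a sign condition of the form $(x-p)/(x+1) \lessgtr \ln\frac{x+1}{p}$, which is mildly awkward because the denominator is $x-p$ rather than $x+1-p$. If the factorization route were somehow unavailable, I would fall back on verifying $h'(x)\le0$, i.e.
\begin{align}
    \frac{x-p}{x+1}\le \ln\frac{x+1}{p}.
\end{align}
This follows from $\ln u\ge 1-1/u$ with $u=(x+1)/p$, which gives $\ln\frac{x+1}{p}\ge \frac{x+1-p}{x+1}>\frac{x-p}{x+1}$.
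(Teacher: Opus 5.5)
Your proof is correct, and your main argument takes a different route from the paper's. The paper differentiates $g(x)=\frac{\ln p-\ln(x+1)}{x-p}$ directly and gets $g'(x)=\bigl(\ln\frac{x+1}{p}-\frac{x-p}{x+1}\bigr)/(x-p)^2$. It then concludes $g'>0$ from $\ln u\ge 1-\frac1u$, i.e.\ $\ln\frac{x+1}{p}\ge\frac{x+1-p}{x+1}>\frac{x-p}{x+1}$. That is exactly your fallback argument.

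Your factorization $h=h_1h_2$ repackages the same fact without derivatives. The tangent-line inequality the paper uses is precisely the condition $h_1'\le 0$, i.e.\ that secant slopes of the concave $\ln$ decrease. The strict slack between $\frac{x+1-p}{x+1}$ and $\frac{x-p}{x+1}$ in the paper's argument is exactly the contribution of your strictly decreasing factor $h_2=1+\frac{1}{x-p}$.

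The paper's route is a one-line computation. Yours shows where the monotonicity comes from, avoids differentiation, and carries over unchanged to other increasing concave functions in place of $\ln$ and other positive shifts in place of $+1$.

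You are also more careful at the endpoint. The expression is undefined at $x=p$, which the paper's statement glosses over. Your observation that $f(x)\to 0$ as $x\downarrow p$ is what justifies extending monotonicity to $[p,\infty)$.
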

\begin{proof}
    Equivalently, we show that $g(x) = \ln f(x) = \frac{1}{x-p}(\ln p - \ln (x+1))$ is monotonically increasing. $g'(x)$ can be computed as
    \begin{align}
        g'(x) = \frac{\ln\!\left( \frac{x+1}{p} \right) - \frac{x-p}{x+1}}{(x-p)^2}.
    \end{align}
    Using the bound $\ln x \ge 1 - \frac{1}{x}$ we get $\ln \left( \frac{x+1}{p} \right) \ge 1 - \frac{p}{x+1} = \frac{x+1-p}{x+1} > \frac{x-p}{x+1}$. So $g'(x) > 0$, and $f(x)$ is increasing.
\end{proof}

\begin{restatable}{lemma}{BandedInvSeriesBound}\label{lem:banded-inv-series-bound}
If we make $\mathbf{D}_{\mathrm{Toep}}$ banded inverse, then for $\mathbf{C}^p = \mathrm{LTT}(c_0,\cdots,c_{n-1})$, each entry $c_l$, decays exponentially for $l\ge p$, i.e.
\begin{align}
    c_l = \sum_{j=1}^{p-1} |G_j| c_{l-j}\le c_{p-1} \alpha^{l-p}, \qquad \text{for\quad} \alpha = 1 - \frac{1}{4p}.
\end{align}
\end{restatable}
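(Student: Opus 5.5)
The plan is an induction on $l$ driven by the recurrence itself. Because $(\mathbf{C}^p)^{-1}=\mathrm{LTT}(1,g_1,\dots,g_{p-1},0,\dots,0)$ with $g_j=-|G_j|$ (Lemma~\ref{lem:dtoep-inverse}), the identity $\mathbf{C}^p(\mathbf{C}^p)^{-1}=\mathbf{I}$ gives, for $l\ge 1$,
\begin{align}
c_l=\sum_{j=1}^{\min(l,p-1)}|G_j|\,c_{l-j},
\end{align}
which is the recurrence in the statement for $l\ge p$.

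\textbf{Preliminary facts.} For $l\le p-1$ the recurrence coincides with the one for the un-banded inverse, so $c_l=\frac{1}{l+1}$; in particular $c_{p-1}=\frac1p$. All $c_l$ are nonnegative. Comparing term by term with the un-banded recurrence (which has the extra nonnegative terms $j\ge p$), a simple induction gives $c_l\le\frac{1}{l+1}$ for all $l$.

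\textbf{Induction step.} Suppose $c_m\le \frac1p\alpha^{m-p}$ for all $m$ in the window $[l-p+1,\,l-1]$. Then
\begin{align}
c_l\le \frac1p\,\alpha^{l-p}\sum_{j=1}^{p-1}|G_j|\,\alpha^{-j}.
\end{align}
For $j<p$ we have $\alpha^{-j}=(1-\tfrac1{4p})^{-j}\le e^{j/(4p)\cdot(1+O(1/p))}\le 1+\tfrac{j}{2p}$. Hence, by both parts of Lemma~\ref{lem:gregory-sum-bound},
\begin{align}
\sum_{j=1}^{p-1}|G_j|\alpha^{-j}\le \sum_{j=1}^{p-1}|G_j|+\frac1{2p}\sum_{j=1}^{p-1}j|G_j|\le 1-\frac1{\ln p}+\frac{1}{\ln^2 p}+\frac{3}{2\ln^2 p},
\end{align}
and this is at most $1$ once $\ln p\ge 5/2$. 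The finitely many small $p$ are checked numerically. This closes the step.

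\textbf{Base window (main obstacle).} The induction needs its hypothesis on the whole window $[l-p+1,l-1]$. For $l$ close to $p$, this window contains indices $m<p-1$, where $c_m=\frac1{m+1}$ can exceed $\frac1p\alpha^{m-p}$. I would therefore split the recurrence into two parts. Terms with $l-j\ge p-1$ are handled by the induction hypothesis as above. Terms with $l-j<p-1$ use the exact value $\frac{1}{l-j+1}$; for these I would exploit the fact that the total weight $\sum_{j\ge l-p+2}|G_j|$ is small, of order $1/\ln^2$, together with the slack $1/\ln p-5/(2\ln^2 p)$ left in the step above. To compare $\frac{1}{x+1}$ with the geometric profile $\frac1p\beta^{x-p}$ for $x\ge p$, I would use Lemma~\ref{lem:monotone-func}: it says the effective decay rate $\bigl(\frac{p}{x+1}\bigr)^{1/(x-p)}$ of $c_x\le\frac1{x+1}$ increases in $x$. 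So it suffices to verify at one endpoint that this rate is at least $\alpha$, and then $\frac{1}{x+1}\le\frac1p\alpha^{x-p}$ beyond that endpoint. This reconciliation for $l\in[p,2p]$ is where I expect the real work. If it fails, the fallback is to prove the bound with a worse constant in place of $\frac14$ on an initial segment and absorb it.
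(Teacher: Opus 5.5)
\textbf{Verdict.} Your preliminary facts are correct: $c_l=\frac1{l+1}$ for $l\le p-1$, and the comparison $c_l\le\frac1{l+1}$ holds for all $l$. Your induction step is also correct and is essentially the paper's; the paper uses $\alpha^{-j}\le 1+\frac{5j}{16p}$ where you use $1+\frac{j}{2p}$, which only moves the threshold on $p$. The genuine gap is the base case. You name it as the main obstacle but do not close it, and the route you sketch points the wrong way. Since $f(x)=\bigl(\frac{p}{x+1}\bigr)^{1/(x-p)}$ is increasing, the needed inequality $\frac1{x+1}\le\frac1p\alpha^{x-p}$ is equivalent to $f(x)\le\alpha$. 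It therefore holds on an initial segment above $p$ and fails for large $x$, since a $1/x$ profile cannot stay below a geometric one. Checking an endpoint thus gives the comparison \emph{below} that endpoint, not beyond it. The condition to verify there is $f\le\alpha$, not ``at least $\alpha$''. As written, the proposal does not prove the lemma.

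\textbf{The missing observation.} You never need the recurrence on the first block. Take all of $p\le l\le 2p-2$ as the base case and bound it directly with the inequality $c_l\le\frac1{l+1}$ you already proved. It then remains to show $\frac{1}{l+1}\le\frac1p\alpha^{l-p}$ on that block, and there are two easy ways to do so.
\begin{itemize}
\item By Lemma~\ref{lem:monotone-func}, it suffices to check one point at or above $2p-2$. For instance, $\alpha^{p}\ge\frac{p}{2p+1}$ follows from Bernoulli's inequality $(1-\frac1{4p})^p\ge\frac34$; this is what the paper does.
\item Even more directly, Bernoulli gives $\alpha^{l-p}\ge 1-\frac{l-p}{4p}\ge 1-\frac{l+1-p}{l+1}=\frac{p}{l+1}$, using $l+1<4p$.
\end{itemize}
Once the block $[p,2p-2]$ is settled, the window $[l-p+1,l-1]$ for every $l\ge 2p-1$ lies inside $[p,\infty)$. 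No index $m<p-1$ ever enters your induction step, so the splitting of the recurrence, the slack bookkeeping, and the fallback constant are all unnecessary.
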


\begin{proof}
    We prove our claim by induction, we first prove the induction step for $l\ge 2p-1$. Suppose that the bound holds for $l-1,l-2,\dots,l-p+1$, we then write,
    \begin{align}
        c_l = \sum_{j=1}^{p-1} |G_j| c_{l-j} \le \sum_{j=1}^{p-1} |G_j| c_{p-1} \alpha^{l-j-p} = c_{p-1} \alpha^{l-p} \sum_{j=1}^{p-1} |G_j| \alpha^{-j}.
    \end{align}
    So it suffices to show that $\sum_{j=1}^{p-1} |G_j| \alpha^{-j}\le 1$. To show this, first note that using the bound $e^{-x} \ge 1-x$, we get
    \begin{align}
        \alpha^{-j} = \left(1 - \frac{1}{4p}\right)^{-j} = \left(1 + \frac{1}{4p - 1}\right)^{j}\le e^{\tfrac{j}{4p - 1}} 
        \le e^{\tfrac{4j}{15p}}\le 1 + \frac{5j}{16p},
    \end{align}
    where in the second inequality we used $p \ge 4$, which is true as we assumed $p =b$ to be large, in the last inequality we used the facts that $\tfrac{j}{p}\le \frac{p-1}{p} \le 1$ and that for $0\le x\le 1$ the inequality $e^{4x/15} \le 1 + \tfrac{5x}{16}$ holds. Applying this bound gives
    \begin{align}
        \sum_{j=1}^{p-1} |G_j| \alpha^{-j} \le \sum_{j=1}^{p-1} |G_j|\left(1 + \frac{5j}{16p}\right) = \sum_{j=1}^{p-1} |G_j| + \frac{5}{16p}\sum_{j=1}^{p-1} j|G_j|.
    \end{align}
    Now, by using Lemma~\ref{lem:gregory-sum-bound}, we obtain the bound
    \begin{align}
        \sum_{j=1}^{p-1} |G_j| + \frac{5}{16p}\sum_{j=1}^{p-1} j|G_j| 
        &\le 1 - \frac{1}{\ln p} + \frac{1}{\ln ^2 p} + \frac{5}{16p} \frac{3p}{\ln^2 p} \\
        & \ = 1 - \frac{1}{\ln p} + \frac{1}{\ln^2 p}\left(1+\frac{15}{16}\right).
    \end{align}
    So it suffices to show that
    \begin{align}
        \frac{1}{\ln p} \ge \frac{1}{\ln^2 p}\left(1+\frac{15}{16}\right),
    \end{align}
    or equivalently
    \begin{align}
        \ln p \ge 1 + \frac{15}{16},
    \end{align}
    which since $p=b$ and we are assuming $b$ to be sufficiently large, the inequality clearly holds. So we get $\sum_{j=1}^{p-1} |G_j| \alpha^{-j} \le 1$, and this proves the induction step for $l\ge 2p-1$.

    Next, we prove the induction base for $p\le l \le 2p-2$. Since all the coefficients in the recursion $c_l = \sum_{j=1}^{p-1} |G_j| c_{l-j}$ are positive and each $c_l$ is positive, we get
    \begin{align}
        c_l = \sum_{j=1}^{p-1} |G_j| c_{l-j} < \sum_{j=1}^{l} |G_j| c_{l-j}, \qquad \text{for\quad} l\ge p.
    \end{align}
    The term $\sum_{j=1}^{l} |G_j| c_{l-j}$ corresponds to the recursion defining the inverse of 
$\mathrm{LTT}(1,g_1,g_2,\dots,g_{n-1})$ with $g_j = -|G_j|$. Since the banded version of this recursion has fewer terms and these terms are decreasing, it follows that the coefficients $c_l$ in the banded case are smaller than in the non-banded case. 
Hence, we get $c_l \le \tfrac{1}{l+1}$. 
 So we only need to show that
    \begin{align}
        \alpha^{l-p} \ge \frac{p}{l+1},
    \end{align}
\begin{align}
    \alpha \ge \sup_{p\le l\le 2p-2} \left( \frac{p}{l+1}\right)^{\frac{1}{l-p}}.
\end{align}
    By Lemma~\ref{lem:monotone-func}, we know that the function on the right hand side is monotonically increasing so it suffices for $\alpha$ to be larger than $\left( \frac{p}{l+1}\right)^{\frac{1}{l-p}}$ for some $l\ge 2p-2$. We show that the condition is satisfied for $l=2p$, i.e.
    \begin{align}
         \left(1 - \tfrac{1}{4p} \right)^p = \alpha^p \ge \left( \frac{p}{2p+1} \right).
    \end{align}
    Using Bernoulli's Inequality $(1+x)^r \ge 1+rx$ for $r\ge 1 $ and $ x\ge -1$, we obtain,
    \begin{align}
        \left(1 - \tfrac{1}{4p} \right)^p \ge 1 - \frac{1}{4} > \frac{p}{2p+1},
    \end{align}
    which proves the induction base, and the Lemma's claim.
\end{proof}

\begin{corollary}[Monotonicity of the banded inverse coefficients]\label{cor:d_toep_banded_decreasing}
Let $\mathbf{C}^p=\mathrm{LTT}(c_0,\dots,c_{n-1})$ be the banded inverse of $\mathbf{D}_{\text{Toep}}$. Then $(c_\ell)_{\ell\ge 0}$ is strictly decreasing.
\end{corollary}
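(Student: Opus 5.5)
The plan is to work directly with the recursion that defines the banded-inverse coefficients,
\begin{align}
    c_0 = 1, \qquad c_l = \sum_{j=1}^{\min(p-1,l)} |G_j|\, c_{l-j}, \quad l\ge 1,
\end{align}
which follows from $(\mathbf{C}^p)^{-1} = \mathrm{LTT}(1,g_1,\dots,g_{p-1},0,\dots,0)$ with $g_j=-|G_j|$ (Lemma~\ref{lem:dtoep-inverse}). All $c_l$ are positive by induction, because every $|G_j|>0$. I would prove $c_l < c_{l-1}$ for all $l\ge 1$ by strong induction, splitting into the regime $l\le p-1$ and the regime $l\ge p$.

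\textbf{Regime $l \le p-1$.} Here the truncation is not active, so $c_l$ coincides with the coefficient of the full inverse of $\mathrm{LTT}(1,g_1,\dots)$, namely $\mathbf{D}_{\mathrm{Toep}}$. Hence $c_l = \frac{1}{l+1}$, which is strictly decreasing. This is exactly the observation already used in the induction base of Lemma~\ref{lem:banded-inv-series-bound}.

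\textbf{Regime $l\ge p$.} I would compare consecutive terms. For $l\ge p+1$ (both in the fully truncated range), we get
\begin{align}
    c_l - c_{l-1} = \sum_{j=1}^{p-1}|G_j|\,(c_{l-j}-c_{l-1-j}).
\end{align}
Every difference on the right is negative by the induction hypothesis, since each index $l-j\ge 1$. So $c_l<c_{l-1}$.

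\textbf{Transition step $l=p$.} This step needs separate care. We have
\begin{align}
    c_p = \sum_{j=1}^{p-1}|G_j|c_{p-j}, \qquad c_{p-1}=\sum_{j=1}^{p-1}|G_j|c_{p-1-j}.
\end{align}
Subtracting gives $\sum_{j=1}^{p-1}|G_j|(c_{p-j}-c_{p-1-j})$, and each bracket is negative by the first regime (indices run from $1$ to $p-1$). Alternatively, one can note directly that $c_p<\sum_{j=1}^{p}|G_j|c_{p-j}=\frac{1}{p+1}<\frac1p=c_{p-1}$. In fact the telescoped difference formula above works uniformly for every $l\ge p$, because $c_{l-1}$ also uses all $p-1$ terms once $l-1\ge p-1$.

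\textbf{Main obstacle.} The only delicate point is the boundary where $\min(p-1,l)$ switches between the two regimes. Concretely, the step $l\le p-1$ cannot use the difference argument, because there the number of terms grows with $l$. That is why I would handle it through the explicit identity $c_l=\frac1{l+1}$, which rests on the positivity of the Gregory magnitudes and the exact inversion in Lemma~\ref{lem:dtoep-inverse}.
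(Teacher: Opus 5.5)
Your proof is correct for $p\ge 2$, and it takes a genuinely different route from the paper's.

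\textbf{The paper's argument.} The paper argues in one line. From $c_\ell=\sum_{j=1}^{\min\{p-1,\ell\}}|G_j|\,c_{\ell-j}$ it asserts $c_{\ell-j}\le c_{\ell-1}$ for all $j\ge1$, and concludes the contraction $c_\ell\le c_{\ell-1}\sum_{j=1}^{\min\{p-1,\ell\}}|G_j|<c_{\ell-1}$. That comparison points the wrong way: for a decreasing sequence the earlier terms satisfy $c_{\ell-j}\ge c_{\ell-1}$. The contraction bound is in fact false whenever $p\ge 3$ and $\ell\ge 2$. For example, $c_2=|G_1|c_1+|G_2|c_0=\tfrac14+\tfrac1{12}=\tfrac13$, while $c_1(|G_1|+|G_2|)=\tfrac{7}{24}$. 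Once monotonicity is known, the $j=2$ term even forces the reverse strict inequality for every $\ell\ge2$.

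\textbf{Your argument.} You avoid this problem. On the untruncated range $\ell\le p-1$ you use the exact values $c_\ell=\tfrac1{\ell+1}$; banding does not touch the leading $p\times p$ block, so these values are unchanged. For $\ell\ge p$ you use the fact that $c_\ell$ and $c_{\ell-1}$ obey the same fixed-length recursion. Hence $c_\ell-c_{\ell-1}=\sum_{j=1}^{p-1}|G_j|\,(c_{\ell-j}-c_{\ell-j-1})$, which inherits strict negativity from the earlier differences by strong induction.

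\textbf{What each route buys.} Had it been valid, the paper's route would have given a uniform geometric rate $c_\ell\le\rho\,c_{\ell-1}$ with $\rho=\sum_{j<p}|G_j|<1$. Your route gives only monotonicity, but that is exactly what the statement claims, and it is what is actually true. Quantitative decay is handled separately in Lemma~\ref{lem:banded-inv-series-bound}.

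\textbf{One detail to make explicit.} Two steps of your proof need $p\ge 2$: your positivity claim, and the identity $c_{p-1}=\sum_{j=1}^{p-1}|G_j|\,c_{p-1-j}$ that you use at $\ell=p$. For $p=1$ the recursion at index $0$ would give $0\neq c_0$. In that case $\mathbf{C}^p=\mathbf{I}$ and the coefficient sequence is $1,0,0,\dots$, so the statement itself fails. You should state $p\ge2$ as a hypothesis.
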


\begin{proof}
For $\ell\ge1$ the recursion gives
\[
c_\ell=\sum_{j=1}^{\min\{p-1,\ell\}} |G_j|\,c_{\ell-j}.
\]
Since $c_{\ell-j}\le c_{\ell-1}$ for all $j\ge1$, we have
\[
c_\ell \le c_{\ell-1}\sum_{j=1}^{\min\{p-1,\ell\}} |G_j|
< c_{\ell-1}\sum_{j=1}^{\infty}|G_j|=c_{\ell-1},
\]
hence $c_\ell<c_{\ell-1}$ for all $\ell\ge1$.
\end{proof}

\begin{restatable}{lemma}{MultiDtpBISR}\label{lem:MultiDtpBISR} Let $\mathbf{A}=\mathbf{B}^p\mathbf{C}^p$ be the banded inverse factorization for $\mathbf{C}=\mathbf{D}_{\mathrm{Toep}}$. Then the expected approximation error at time $t$ is
    \begin{align}
        \mathcal{E}_t(\mathbf{B}^p,\mathbf{C}^p) = \Theta\left(\sqrt{\frac{k}{t}}\right) + O\left(\sqrt{\frac{k \ln k}{t \ln^2 (n/k)}}\right).
    \end{align}
\end{restatable}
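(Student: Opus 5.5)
We bound the two factors of $\mathcal{E}_t(\mathbf{B}^p,\mathbf{C}^p)=\frac{1}{\sqrt t}\|\mathbf{B}^p_{:t}\|_F\cdot\mathrm{sens}_{k,b}(\mathbf{C}^p)$ separately, with $p=b$.

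\textbf{Frobenius factor.} Write $(\mathbf{C}^p)^{-1}=\mathrm{LTT}(1,g_1,\dots,g_{p-1},0,\dots,0)$. Then
\[
\mathbf{B}^p=\mathbf{D}\mathbf{E}_1(\mathbf{C}^p)^{-1}=\mathbf{D}\,\mathrm{LTT}(a^p_0,\dots,a^p_{n-1}),
\]
where $a^p_j=1+\sum_{i=1}^{\min(j,p-1)}g_i$. For $j<p$ we have $a^p_j=a_j\le 1/\ln j$ by \eqref{eq:a_k_bound}. For $j\ge p$ the value freezes at $a^p_{p-1}=\sum_{i\ge p}|G_i|\le 1/\ln(p-1)$ by Lemma~\ref{lem:gregory-sum-bound}. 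Mimicking the proof of Lemma~\ref{lem:BISRAoneSqrt}, we split
\[
\|\mathbf{B}^p_{:t}\|_F^2=\sum_{m\le t}m^{-2}\sum_{j<m}(a^p_j)^2 .
\]
The part with $j<p$ is $O(1)$, exactly as in \eqref{eq:FrobeniusNormADtpInv}, using Lemma~\ref{lem:InvLogSqrBound}. The frozen tail contributes at most
\[
\sum_{m>p}\frac{m-p}{m^2\ln^2 p}\le \frac{\ln(t/p)}{\ln^2 p}.
\]
Hence
\[
\|\mathbf{B}^p_{:t}\|_F^2=O\!\left(1+\frac{\ln(n/b)}{\ln^2 b}\right)=O(1) \quad\text{for } b=n/k\to\infty .
\]
A matching $\Omega(1)$ holds because the $(1,1)$ entry equals $1$. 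If the $\ln(t/p)$ term ever matters, it is absorbed into the second summand of the statement.

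\textbf{Sensitivity.} By Corollary~\ref{cor:d_toep_banded_decreasing}, $\mathbf{C}^p$ is LTT with positive decreasing entries, so Theorem~\ref{thm:sensTheorem} applies:
\[
\mathrm{sens}_{k,b}^2=\sum_{i,j}\langle\mathbf{C}^p_{:,ib},\mathbf{C}^p_{:,jb}\rangle .
\]
The diagonal terms are at most $k\sum_l c_l^2\le k\pi^2/6$, since $c_l\le 1/(l+1)$ as argued in the proof of Lemma~\ref{lem:banded-inv-series-bound}. They are also at least $k$, which gives the $\Theta(\sqrt{k/t})$ lower part. For an off-diagonal pair with gap $d=(j-i)b\ge b=p$,
\[
\sum_m c_mc_{m+d}\le \sum_{m} c_m\, c_{p-1}\alpha^{m+d-p}
\]
by Lemma~\ref{lem:banded-inv-series-bound}, with $\alpha=1-1/(4p)$. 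Summing over pairs, the geometric factor $\alpha^{(j-i-1)b}\le e^{-(j-i-1)/4}$ makes the sum over $j-i$ converge, so the cross terms total
\[
O\!\left(k\, c_{p-1}\sum_m c_m\alpha^{m}\right).
\]
Here $c_{p-1}\le 1/p$, and $\sum_m c_m\alpha^m\lesssim \sum_{m\le 4p}1/(m+1)=O(\ln p)$. This already gives cross terms $O(k\ln b/b)$, which is negligible.

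\textbf{Main obstacle.} The bound above loses the $\ln k$ and $1/\ln^2(n/k)$ structure in the statement, so the delicate step is a sharper treatment of the first off-diagonal pairs $j-i=1$. I would first try to show that $c_{p-1}$ is in fact $\Theta(1/(p\ln^2 p))$-ish rather than $1/p$, i.e.\ that banding genuinely suppresses the tail. If that fails, I would fall back on the crude bound $c_l\le 1/(l+1)$ for $l<4p$ combined with the exponential decay beyond. The cross-term sum over the first $k$ diagonals then behaves like $\sum_{d\le k}\frac1d\cdot\frac{1}{\ln^2 b}$, yielding $O(k\ln k/\ln^2(n/k))$.

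Multiplying the two factors and taking square roots gives
\[
\Theta\!\left(\sqrt{k/t}\right)+O\!\left(\sqrt{\frac{k\ln k}{t\ln^2(n/k)}}\right).
\]
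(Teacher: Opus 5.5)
Your estimates are correct and follow the paper's proof. You use the same Frobenius split, with $a_j$ frozen at $a_{p-1}$ for $j\ge p$, and the same sensitivity argument via Corollary~\ref{cor:d_toep_banded_decreasing}, Theorem~\ref{thm:sensTheorem} and Lemma~\ref{lem:banded-inv-series-bound}. Your cross-term bound $O(k\ln b/b)$ is even sharper than the paper's $O(k)$, which comes from the cruder $-\ln(1-\alpha)\le\alpha/(1-\alpha)$. These estimates already finish the proof, and the ``main obstacle'' does not exist. As your own remark about absorbing the $\ln(t/p)$ term suggests, the second summand comes from the Frobenius factor, not from the sensitivity. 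The paper simply multiplies $\|\mathbf{B}^p_{:t}\|_F^2=O\bigl(1+\ln k/\ln^2(n/k)\bigr)$ by $\mathrm{sens}_{k,b}^2(\mathbf{C}^p)=O(k)$ and divides by $t$. A sensitivity bound smaller than needed costs nothing, since that summand is only an upper bound.

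Two claims in your write-up are wrong and should be dropped.

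First, $\|\mathbf{B}^p_{:t}\|_F^2=O(1)$ ``for $b=n/k\to\infty$'' is false. By the lower bound in Lemma~\ref{lem:gregory-sum-bound}, $a_{p-1}\ge 1/\ln p-1/\ln^2 p$, so the frozen tail is genuinely of order $\ln(t/p)/\ln^2 p$. At $t=n$ with $b=\ln n$ this diverges, and that is exactly the regime where the second summand is needed.

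Second, $c_{p-1}=\Theta(1/(p\ln^2 p))$ cannot hold. The coefficients $c_0,\dots,c_{p-1}$ depend only on the first $p$ coefficients of $(\mathbf{C}^p)^{-1}$, which coincide with those of $\mathbf{D}_{\mathrm{Toep}}^{-1}$. Hence $c_l=1/(l+1)$ for $l<p$ and $c_{p-1}=1/p$ exactly. Your fallback heuristic that produces $k\ln k/\ln^2(n/k)$ from the cross terms is likewise unjustified.

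Replace the final paragraph by the assembly $k\le \mathrm{sens}_{k,b}^2(\mathbf{C}^p)\le k\pi^2/6+O(k\ln b/b)$ and $1\le\|\mathbf{B}^p_{:t}\|_F^2\le O\bigl(1+\ln k/\ln^2(n/k)\bigr)$.
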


\begin{proof}
    To bound the Frobenius norm we use an approach similar to equation~\eqref{eq:FrobeniusNormADtpInv} in the proof of Lemma~\ref{lem:ADtpInv}. The Frobenius norm is computed as
    \begin{align}
        \|\mathbf{B}^p_{:t}\|_F^2 = \sum_{m=1}^t \frac{1}{m^2}\sum_{j=0}^{m-1} a_j^2  &\le \sum_{m=1}^t \frac{1}{m^2}\sum_{j=0}^3 a_j^2 + \sum_{m=5}^t \frac{1}{m^2} \sum_{j=4}^{m-1} a_j^2 \\
        &\le O(1) +  \sum_{m=5}^t \frac{1}{m^2} \sum_{j=4}^{m-1} a_j^2,
    \end{align}
    where $a_j$ is defined as 
    \begin{align}
        a_j = 1 + \sum_{m=1}^{\min\{p-1,j\}} g_m, \qquad \text{for}\quad g_m = -|G_m|.
    \end{align}
    Using the bound for $a_j$ in equation~\eqref{eq:a_k_bound}, and the fact that $a_j = a_{p-1}$ for $j\ge p$, we get
    \begin{align}
        \|\mathbf{B}^p_{:t}\|_F^2 &\le O(1) +  \sum_{m=5}^p \frac{1}{m^2} \sum_{j=4}^{m-1} a_j^2 + \sum_{m=p+1}^t \frac{1}{m^2} \sum_{j=4}^{m-1} a_j^2 \\
        & \ = O(1) +  \sum_{m=5}^p \frac{1}{m^2} \sum_{j=4}^{m-1} a_j^2 + \sum_{m=p+1}^t \frac{1}{m^2} \left( \sum_{j=4}^{p-1} a_j^2 + \sum_{j=p}^{m-1} a_j^2 \right).
    \end{align}
    Using equation~\eqref{eq:FrobeniusNormADtpInv}, we know that 
    \begin{align}
        \sum_{m=5}^p \frac{1}{m^2} \sum_{j=4}^{m-1} a_j^2 + \sum_{m=p+1}^t \frac{1}{m^2}  \sum_{j=4}^{p-1} a_j^2 = O(1).
    \end{align}
    For the sum $\sum_{m=p+1}^n \frac{1}{m^2}\sum_{j=p}^{m-1} a_j^2 $, we write
    \begin{align}
        \sum_{m=p+1}^t \frac{1}{m^2}\sum_{j=p}^{m-1} a_j^2  &\le \sum_{m=p+1}^t \frac{1}{m^2}\frac{m-p}{\ln^2 p}  \\
        &\ = \sum_{m=p+1}^t \frac{1}{m\ln^2 p} -\sum_{m=p+1}^t \frac{p}{m^2\ln^2 p} \le \sum_{m=p+1}^t \frac{1}{m\ln^2 p} \le \frac{\ln \frac{t}{p}}{\ln^2 p}.
    \end{align}
    Since $p=b$ and $\ln \frac{t}{p} =O( \ln k)$, so $\frac{\ln \frac{t}{p}}{\ln^2 p} = O\big(1 + \frac{\ln k}{\ln^2(n/k)}\big)$, and the Frobenius norm is $O\big(1 + \frac{\ln k}{\ln^2(n/k)}\big)$.

    Next, we bound the sensitivity by considering two cases $j\neq i$ and $j=i$,
    \begin{align}
        \mathrm{sens}_{k,b}^2(\mathbf{C}^p) &= \sum_{i=0}^{k-1}\sum_{j=0}^{k-1}
  \bigl\langle \mathbf{C}^p_{[:,\,i b]}\,,\;\mathbf{C}^p_{[:,\,j b]}\bigr\rangle \\
  &= \sum_{i=0}^{k-1}
  \bigl\langle \mathbf{C}^p_{[:,\,i p]}\,,\;\mathbf{C}^p_{[:,\,i p]}\bigr\rangle + 2 \sum_{0\le i<j\le k-1} \bigl\langle \mathbf{C}^p_{[:,\,i p]}\,,\;\mathbf{C}^p_{[:,\,j p]}\bigr\rangle.
    \end{align}
    For the case $j=i$ we write,
    \begin{align}
        \sum_{i=0}^{k-1}
  \bigl\langle \mathbf{C}^p_{[:,\,i p]}\,,\;\mathbf{C}^p_{[:,\,i p]}\bigr\rangle = \sum_{i=0}^{k-1} \sum_{m=0}^{\,n-1-i}c_{m}^2 \le \sum_{i=0}^{k-1}  \sum_{m=0}^{\,n-1-i} \frac{1}{(m+1)^2} = O(k). \label{eq:sens-ieqj-bisr}
    \end{align}
    For the case $j\neq i$ we write, \begin{align}
        \bigl\langle \mathbf{C}^p_{[:,\,ip ]}\,,\;\mathbf{C}^p_{[:,\,jp ]}\bigr\rangle 
        &= \sum_{m=0}^{\,n-1-jp}c_{m}\,c_{\,jp-ip+m} \\
        &= \sum_{m=0}^{p-1} c_m c_{jp-ip+m} + \sum_{m=p}^{n-1-jp}c_m c_{jp-ip+m} \\ 
        &  
        \le \sum_{m=0}^{p-1} c_m c_{p-1} \alpha^{jp-ip+m-p} + \sum_{m=p}^{n-1-jp}c_{p-1}^2 \alpha^{jp-ip+2m-2p}.
    \end{align}
    Where the second sum appears only when $n-1-jp\ge p$. Now since $p=b$ we get,
    \begin{align}
        \bigl\langle \mathbf{C}^p_{[:,\,ip ]}\,,\;\mathbf{C}^p_{[:,\,jp ]}\bigr\rangle 
        &\le \alpha^{jp-ip}c_{p-1}\left(\sum_{m=0}^{p-1} c_m \alpha^{m-p} + \sum_{m=p}^{n-1-jp}c_{p-1} \alpha^{2m-2p} \right),\\
        &\le \alpha^{jp-ip}c_{p-1}\left(\alpha^{-p}\sum_{m=0}^{p-1} \frac{\alpha^{m}}{m+1} + c_{p-1}\sum_{m=0}^{n-1-jp-p} \alpha^{2m} \right).
    \end{align}
    Using the series $\sum_{m=1}^{\infty} \tfrac{\alpha^m}{m} = -\ln(1-\alpha)$, we get the bound on the first sum,
    \begin{align}
        \bigl\langle \mathbf{C}^p_{[:,\,ip ]}\,,\;\mathbf{C}^p_{[:,\,jp ]}\bigr\rangle 
        &\le \alpha^{jp-ip}c_{p-1}\left(\alpha^{-p-1}\sum_{m=1}^{\infty} \frac{\alpha^{m}}{m} + c_{p-1}\sum_{m=0}^{n-1-jp-p} \alpha^{2m} \right), \\
        &\le \alpha^{jp-ip} c_{p-1} \left( -\alpha^{-p-1} \ln(1-\alpha) + c_{p-1} \frac{1-\alpha^{2(n-jp-p)}}{1-\alpha^2} \right), \\
        &\le \alpha^{jp-ip} c_{p-1} \left(  \frac{-\ln(1-\alpha)}{\alpha^{p+1}} +\frac{ c_{p-1}}{1-\alpha^2} \right).
    \end{align}
    So we obtain
    \begin{align}
        \sum_{0\le i<j\le k-1} \bigl\langle \mathbf{C}^p_{[:,\,i p]}\,,\;\mathbf{C}^p_{[:,\,j p]}\bigr\rangle =  c_{p-1}\left(  \frac{-\ln(1-\alpha)}{\alpha^{p+1}} +\frac{ c_{p-1}}{1-\alpha^2} \right)\sum_{0\le i<j\le k-1} \alpha^{jp-ip}.
    \end{align}
    Next, we compute $\sum_{0\le i<j\le k-1} \alpha^{jp-ip}$ as 
    \begin{align}
        \sum_{0\le i<j\le k-1} \alpha^{jp-ip} = \sum_{m = 1}^{k-1} \sum_{i=0}^{k-1-m} \alpha^{mp} = \sum_{m = 1}^{k-1} (k-m) \alpha^{mp} \le k\sum_{m = 1}^{k-1}  \alpha^{mp} \le \frac{k\alpha^p}{1-\alpha^p}.
    \end{align}
    Now, we write the bound as
    \begin{align}
        \sum_{0\le i<j\le k-1} \bigl\langle \mathbf{C}^p_{[:,\,i p]}\,,\;\mathbf{C}^p_{[:,\,j p]}\bigr\rangle &= c_{p-1}\left(  \frac{-\ln(1-\alpha)}{\alpha^{p+1}} +\frac{ c_{p-1}}{1-\alpha^2} \right) \frac{k\alpha^p}{1-\alpha^p} \\
        &= \frac{1}{1-\alpha^p}\left(\frac{-k\ln(1-\alpha)}{p\alpha} + \frac{k\alpha^p}{p^2(1-\alpha^2)}\right).
    \end{align}
    Using the bound $e^{-x}\ge 1-x$ we obtain $\alpha^p = (1-\frac{1}{4p})^p \le e^{-1/4}$, so $\frac{1}{1-\alpha^p}\le \frac{1}{1-e^{-1/4}}$. Also, since $-\ln(1-x) \le \frac{x}{1-x}$ we can bound the first term as,
    \begin{align}
        \frac{-k\ln(1-\alpha)}{p\alpha} \le \frac{k}{p\alpha}\cdot \frac{\alpha}{1-\alpha} = 4k.
    \end{align}
    For the second term,
    \begin{align}
        \frac{k\alpha^p}{p^2(1-\alpha^2)} = \frac{k\alpha^p}{p^2(\tfrac{1}{4p})(1+\alpha)} = \frac{4k}{p(\alpha^{-p} + \alpha^{-p + 1})}  \le \frac{2k}{p}.
    \end{align}
    So combining these two bounds gives us
    \begin{align}
        \sum_{0\le i<j\le k-1} \bigl\langle \mathbf{C}^p_{[:,\,i p]}\,,\;\mathbf{C}^p_{[:,\,j p]}\bigr\rangle \le \frac{1}{(1-e^{-1/4})}\left(4k+\frac{2k}{p}\right) = O(k).
    \end{align}
    With this and equation~\eqref{eq:sens-ieqj-bisr} we can bound the squared sensitivity by $O(k)$. This gives the approximation error bound as below:
    \begin{align}
        \mathcal{E}_t(\mathbf{B}^p, \mathbf{C}^p) = O\left(\sqrt{\frac{k}{t}} + \sqrt{\frac{k \ln k}{t \ln^2 (n/k)}}\right).
    \end{align}
    Overall with the fact that both matrices have 1 on their main diagonal, the sensitivity is at least $\Omega(k)$, and thus
    \begin{align}
        \mathcal{E}_t(\mathbf{B}^p, \mathbf{C}^p) = \Theta\left(\sqrt{\frac{k}{t}}\right) + O\left(\sqrt{\frac{k \ln k}{t \ln^2 (n/k)}}\right).
    \end{align}
\end{proof}

\begin{restatable}{corollary}{StatsErrorBoundDAoneSqrt}\label{cor:StatsErrorBoundDAoneSqrt}
    For the prefix sum based factorization $\mathbf{B}=\mathbf{DE}_1^{1/2}$ and $\mathbf{C}=\mathbf{E}_1^{1/2}$, in the multi-participation setting, the error $\|\widehat{\boldsymbol{\mu}}_t - \boldsymbol{\mu}_t \|_2$ in Theorem~\ref{thm:StatErrorBound} takes the form
    \begin{align}
     O\left(\frac{\sqrt{d\ln t \ln \tfrac{1}{\delta}}\left(\sqrt{k\ln n}+k\right) }{\varepsilon t}\left(\sqrt{d} + \sqrt{\ln\tfrac{1}{\beta}}\right)\right),
    \end{align}
    with probability at least $1-\beta$.
\end{restatable}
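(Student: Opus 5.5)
This is a direct instantiation of Theorem~\ref{thm:StatErrorBound}: we bound the three factorization-dependent quantities $\sigma_{\varepsilon,\delta}$, $\mathrm{sens}_{k,b}(\mathbf{E}_1^{1/2})$ and $\|\mathbf{b}_t\|_2$ separately, then multiply. For the privacy scale we take $\sigma_{\varepsilon,\delta}=\frac{1}{\varepsilon}\sqrt{2\ln\frac{1.25}{\delta}}=O\big(\frac{1}{\varepsilon}\sqrt{\ln\frac1\delta}\big)$ from Lemma~\ref{lem:GaussianMechanism}, exactly as in the proof of Corollary~\ref{cor:DAsqrt}. The factor $\sqrt{d}$ (with $\zeta$ absorbed into the constant, as in the single-participation corollaries) and the factor $\sqrt{d}+\sqrt{\ln\frac1\beta}$ are carried over unchanged from Theorem~\ref{thm:StatErrorBound}.

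For the row norm we reuse the single-participation computation. The $t$-th row of $\mathbf{D}\mathbf{E}_1^{1/2}$ is $\frac1t(r_{t-1},\dots,r_0)$. By \eqref{eq:bound_on_rk}, $r_j^2\le\frac{1}{\pi j}$, so as in \eqref{eq:r_k_bound}--\eqref{eq:r_k_bound2}
\begin{align}
\|\mathbf{b}_t\|_2=\frac1t\sqrt{\sum_{j=0}^{t-1}r_j^2}\le\frac1t\sqrt{1+\frac1\pi+\frac{\ln t}{\pi}}=O\Big(\frac{\sqrt{\ln t}}{t}\Big).
\end{align}
This step does not depend on the participation pattern.

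For the sensitivity we use the multi-participation bound already derived in the proof of Lemma~\ref{lem:DAoneSqrtMulti}. There, the result of \citet{kalinin2024banded} with $\beta=0$ gives $\mathrm{sens}^2_{k,b}(\mathbf{E}_1^{1/2})\le k\ln n+6k^2+k$. Hence $\mathrm{sens}_{k,b}(\mathbf{E}_1^{1/2})=O(\sqrt{k\ln n}+k)$, using $\sqrt{x+y}\le\sqrt x+\sqrt y$ and absorbing the lower-order $\sqrt{k}$ term.

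Multiplying the pieces gives
\begin{align}
\sqrt{d}\cdot\frac{\sqrt{\ln\frac1\delta}}{\varepsilon}\cdot\big(\sqrt{k\ln n}+k\big)\cdot\frac{\sqrt{\ln t}}{t}\cdot\Big(\sqrt d+\sqrt{\ln\tfrac1\beta}\Big),
\end{align}
which is the claimed bound, holding with probability at least $1-\beta$ by Theorem~\ref{thm:StatErrorBound}. There is no real obstacle here. The only point needing care is that the sensitivity bound from \citet{kalinin2024banded} is an upper bound valid for every admissible $b$-separated participation pattern. It therefore applies regardless of how $k$ and $b$ relate to $n$, because the $k^2$ term already covers the worst case.
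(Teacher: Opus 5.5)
Your proposal is correct and follows the paper's proof essentially line for line. Like the paper, you take the sensitivity bound $\mathrm{sens}_{k,b}(\mathbf{E}_1^{1/2})\le\sqrt{k\ln n+6k^2+k}$ from the proof of Lemma~\ref{lem:DAoneSqrtMulti}, bound the row norm by $\frac1t\sqrt{1+\frac1\pi+\frac{\ln t}{\pi}}$, substitute $\sigma_{\varepsilon,\delta}=\frac1\varepsilon\sqrt{2\ln\frac{1.25}{\delta}}$ into Theorem~\ref{thm:StatErrorBound}, and treat $\zeta$ as a constant in the same way the paper does.
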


\begin{proof}
    We will use the result from Theorem~\ref{thm:StatErrorBound} to compute the probability bound. From the proof of Lemma~\ref{lem:DAoneSqrtMulti}, we know that $\mathrm{sens}_{k,b}(\mathbf{E}_1^{1/2}) \le \sqrt{k\ln n +  6k^2 + k}$. So, we just need to compute the the norm of the $t$-th row of the matrix $\mathbf{DE}_1^{1/2}$, which we will denote as $\mathbf{b}_t^\top = \frac{1}{t}(
        r_{t-1}, r_{t-2}, \cdots, r_0, 0, \cdots, 0)$. So its norm can be bounded as
    \begin{align}
        \|\mathbf{b}_t\|_2 = \sqrt{\frac{1}{t^2}\sum_{j=0}^{t-1}r_j^2} \le \frac{1}{t}\sqrt{1+\sum_{j=1}^{t-1}\frac{1}{\pi j}} \le \frac{1}{t}\sqrt{1+\frac{1}{\pi}+\frac{\ln t}{\pi}}.
    \end{align}
    So by substituting the norm and sensitivity along with $\sigma_{\varepsilon,\delta}=\frac{1}{\varepsilon}\sqrt{2\ln \frac{1.25}{\delta}}$, we get the overall error bound as
    \begin{align}
        \|\widehat{\boldsymbol{\mu}}_t - \boldsymbol{\mu}_t \|_2 = O\left(\frac{\sqrt{d\ln t \ln \tfrac{1}{\delta}}\left(\sqrt{k\ln n}+k\right) }{\varepsilon t}\left(\sqrt{d} + \sqrt{\ln\tfrac{1}{\beta}}\right)\right),
    \end{align}
    with probability at least $1-\beta$.
\end{proof}

\begin{restatable}{corollary}{StatsErrorTrueDAsqrt}\label{lem:StatsErrorTrueDAsqrt}
For prefix sum based factorization $\mathbf{B}=\mathbf{DA}_{1}^{1/2}$ and $\mathbf{C}=\mathbf{E}_{1}^{1/2}$, the error in Theorem~\ref{thm:StatErrorBoundTrueMean}, in the multi-participation setting has the form
\begin{align}
        O\left(\sqrt{\frac{d\zeta^2}{t}\ln\frac{d}{\beta}} +\frac{\sqrt{d\ln t \ln \tfrac{1}{\delta}}\left(\sqrt{k\ln n}+k\right) }{\varepsilon t}\left(\sqrt{d} + \sqrt{\ln\tfrac{1}{\beta}}\right)\right),
\end{align}
with probability at least $1-2\beta$.
\end{restatable}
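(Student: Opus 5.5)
This follows directly from Theorem~\ref{thm:StatErrorBoundTrueMean}, whose proof combines Theorem~\ref{thm:StatErrorBound} with the statistical term from Corollary~\ref{cor:concsubgaussmean} through a union bound. So the only work is to substitute the multi-participation quantities for the prefix-sum factorization into the privacy term $\sqrt{d\zeta^2}\,\sigma_{\varepsilon,\delta}\,\mathrm{sens}_{k,b}(\mathbf{C})\,\|\mathbf{b}_t\|_2\,(\sqrt{d}+\sqrt{\ln\frac1\beta})$. This substitution was already carried out in Corollary~\ref{cor:StatsErrorBoundDAoneSqrt}.

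The steps are as follows.
\begin{enumerate}
\item Fix $\mathbf{B}=\mathbf{DE}_1^{1/2}$ and $\mathbf{C}=\mathbf{E}_1^{1/2}$. I read the $\mathbf{DA}_1^{1/2}$ in the statement as a typo for $\mathbf{DE}_1^{1/2}$.
\item Take the privacy parameter $\sigma_{\varepsilon,\delta}=\frac1\varepsilon\sqrt{2\ln\frac{1.25}{\delta}}=O\bigl(\frac1\varepsilon\sqrt{\ln\frac1\delta}\bigr)$ from Lemma~\ref{lem:GaussianMechanism}.
\item Use the sensitivity bound $\mathrm{sens}_{k,b}(\mathbf{E}_1^{1/2})\le\sqrt{k\ln n+6k^2+k}=O(\sqrt{k\ln n}+k)$ from the proof of Lemma~\ref{lem:DAoneSqrtMulti}.
\item Use the row-norm bound $\|\mathbf{b}_t\|_2\le\frac1t\sqrt{1+\frac1\pi+\frac{\ln t}{\pi}}=O\bigl(\frac{\sqrt{\ln t}}{t}\bigr)$, valid for $t\ge2$, from the proof of Corollary~\ref{cor:StatsErrorBoundDAoneSqrt}.
\end{enumerate}

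Multiplying these factors gives the second summand. The first summand, $\sqrt{\frac{d\zeta^2}{t}\ln\frac{d}{\beta}}$, is exactly the statistical term of Theorem~\ref{thm:StatErrorBoundTrueMean}. That term relies on Assumption~\ref{ass:iidbounded}, which is inherited by the prefix $\mathbf{X}_{:t}$. The union bound over the two events, each of probability at least $1-\beta$, gives the stated probability $1-2\beta$. Equivalently, one can apply the triangle inequality $\|\widehat{\boldsymbol{\mu}}_t-\boldsymbol{\mu}\|_2\le\|\widehat{\boldsymbol{\mu}}_t-\boldsymbol{\mu}_t\|_2+\|\boldsymbol{\mu}_t-\boldsymbol{\mu}\|_2$ to Corollary~\ref{cor:StatsErrorBoundDAoneSqrt} and Corollary~\ref{cor:concsubgaussmean} directly.

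There is no real obstacle here; the care needed is bookkeeping. The factor $\sqrt{d\zeta^2}$ in front of the privacy term must be absorbed consistently with how Corollary~\ref{cor:StatsErrorBoundDAoneSqrt} absorbed it: it is treated as $\sqrt{d}$ with $\zeta=1$, as in the Bernoulli setting. In addition, $\ln t$ must be kept separate from $\ln n$ rather than merged into it, and the $t=1$ case must be handled by the $O(\cdot)$ constant.
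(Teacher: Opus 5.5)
Your proposal is correct and is essentially the paper's argument: the paper proves this corollary in one line by plugging the bound of Corollary~\ref{cor:StatsErrorBoundDAoneSqrt} into Theorem~\ref{thm:StatErrorBoundTrueMean}, using the same substitutions you list ($\sigma_{\varepsilon,\delta}$, $\mathrm{sens}_{k,b}(\mathbf{E}_1^{1/2})\le\sqrt{k\ln n+6k^2+k}$, and $\|\mathbf{b}_t\|_2=O(\sqrt{\ln t}/t)$), with the union bound giving probability $1-2\beta$. One small correction to your bookkeeping remark: at $t=1$ the factor $\sqrt{\ln t}$ is zero, so no $O(\cdot)$ constant can cover that case, and $\ln t$ has to be read as $1+\ln t$; this is a quirk of the stated form itself, not of your argument.
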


\begin{proof}
    Follows directly by plugging in the result of Corollary~\ref{cor:StatsErrorBoundDAoneSqrt} into Theorem~\ref{thm:StatErrorBoundTrueMean}.
\end{proof}

\begin{restatable}{corollary}{StatsErrorBoundAoneSqrtBISR}\label{cor:StatsErrorBoundAoneSqrtBISR}
    For the banded inverse factorization of matrix $\mathbf{C}=\mathbf{E}_1^{1/2}$, the error bound for  $\|\widehat{\boldsymbol{\mu}}_t - \boldsymbol{\mu}_t \|_2$ is
    \begin{align}
     O\Bigg(\frac{1}{\varepsilon}\sqrt{dk \ln \frac{1}{\delta}}&\left(\sqrt{\ln\ln (n/k)} + \sqrt{\frac{k\ln (n/k)}{n}} \right) \\
     &\qquad  \qquad  \times \left(\frac{\sqrt{k}}{\sqrt{t\ln (n/k)}}+\frac{\sqrt{\ln\ln (n/k)}}{t}\right)\left(\sqrt{d} + \sqrt{\ln\tfrac{1}{\beta}}\right)\Bigg),
    \end{align}
    with probability at least $1-\beta$.
\end{restatable}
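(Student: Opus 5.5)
The plan is to instantiate Theorem~\ref{thm:StatErrorBound} with the banded-inverse pair $\mathbf{B}^p,\mathbf{C}^p$ built from $\mathbf{C}=\mathbf{E}_1^{1/2}$, with $p=\lceil \ln b\rceil$ and $b\sim n/k$. As in Corollary~\ref{cor:DAsqrt}, we take $\sigma_{\varepsilon,\delta}=\frac{1}{\varepsilon}\sqrt{2\ln\frac{1.25}{\delta}}$. The bound then reduces to $\sqrt{d}\,\sigma_{\varepsilon,\delta}\,\mathrm{sens}_{k,b}(\mathbf{C}^p)\,\|\mathbf{b}^p_t\|_2\,(\sqrt{d}+\sqrt{\ln\frac{1}{\beta}})$, so two quantities need to be controlled: the sensitivity and the $t$-th row norm of $\mathbf{B}^p$.

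\textbf{Sensitivity.} We reuse the bound from the proof of Lemma~\ref{lem:BISRAoneSqrt}:
\begin{align}
\mathrm{sens}^2_{k,b}(\mathbf{C}^p)\le k\left(2+\ln p+54\tfrac{pk}{n}\right).
\end{align}
Taking $p=\lceil\ln b\rceil$ with $b\sim n/k$ gives $\ln p=O(\ln\ln(n/k))$ and $pk/n=O\big(\tfrac{k\ln(n/k)}{n}\big)$. Hence
\begin{align}
\mathrm{sens}_{k,b}(\mathbf{C}^p)=O\Big(\sqrt{k}\,\big(\sqrt{\ln\ln(n/k)}+\sqrt{k\ln(n/k)/n}\big)\Big),
\end{align}
which is exactly the first bracketed factor (together with $\sqrt{k}$) in the claimed expression.

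\textbf{Row norm.} The $t$-th row of $\mathbf{B}^p=\mathbf{A}\,\mathrm{LTT}(1,\tilde r_1,\dots,\tilde r_{p-1},0,\dots)$ equals $\frac1t(b_{t-1},\dots,b_0)$, where $b_j=\sum_{i\le\min(j,p-1)}\tilde r_i$. So $\|\mathbf{b}^p_t\|_2^2=\frac{1}{t^2}\sum_{j=0}^{t-1}b_j^2$. From \eqref{eq:c_tilde_bound} we have $b_j\le 2/\sqrt{\pi j}$ for $1\le j\le p-1$ and $b_j=b_{p-1}$ for $j\ge p$. This gives
\begin{align}
\sum_{j<t}b_j^2\le 1+\tfrac{4}{\pi}H_{p-1}+\tfrac{4(t-p)}{\pi(p-1)}=O(\ln p+t/p).
\end{align}
Therefore $\|\mathbf{b}^p_t\|_2=O\big(\frac{\sqrt{\ln p}}{t}+\frac{1}{\sqrt{tp}}\big)$. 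Substituting $p\asymp\ln(n/k)$ yields $O\big(\frac{\sqrt{\ln\ln(n/k)}}{t}+\frac{1}{\sqrt{t\ln(n/k)}}\big)$.

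\textbf{Combining.} Multiplying the pieces gives
\begin{align}
\frac{1}{\varepsilon}\sqrt{d\ln\tfrac1\delta}\,\sqrt{k}\big(\sqrt{\ln\ln(n/k)}+\sqrt{k\ln(n/k)/n}\big)\Big(\tfrac{1}{\sqrt{t\ln(n/k)}}+\tfrac{\sqrt{\ln\ln(n/k)}}{t}\Big)\big(\sqrt d+\sqrt{\ln\tfrac1\beta}\big).
\end{align}
The stated form carries an extra $\sqrt{k}$ on the first term of the last bracket, so the claim as written is weaker than what the computation gives and follows from it since $k\ge1$. The case $t<p$ is covered by the same estimate, because the sum then only runs up to $t-1\le p-1$ and the $O(\ln p)$ bound still applies.

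\textbf{Main obstacle.} The bookkeeping in the row-norm step is the delicate part. The constant tail value $b_{p-1}\sim 1/\sqrt{p}$ contributes the $\sqrt{t/p}$ piece, and mistracking it would lose the $1/\sqrt{\ln(n/k)}$ gain. A second place to watch is absorbing $\ln p$ versus $\ln\ln b$ when $b$ is small; there we assume $b$ is large enough that $p\ge 2$.
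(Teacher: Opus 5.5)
Your proposal is correct and follows the paper's proof. You take the sensitivity bound from Lemma~\ref{lem:BISRAoneSqrt}, bound the $t$-th row norm of $\mathbf{B}^p$ via \eqref{eq:c_tilde_bound} together with $b_j=b_{p-1}$ for $j\ge p$, and plug both into Theorem~\ref{thm:StatErrorBound}; your observation that the computation gives the first term of the last factor without the extra $\sqrt{k}$ (so the stated bound is weaker and follows since $k\ge 1$) matches what the paper's own calculation actually yields.
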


\begin{proof}
    From the proof of Lemma~\ref{lem:BISRAoneSqrt} we know that $\mathrm{sens}_{k,b}(\mathbf{C}^p)= O\left(\sqrt{k\ln \ln (n/k)} + \sqrt{\frac{k^2\ln (n/k)}{n}}\right)$. For the norm of $\mathbf{b}_t^\top=\frac{1}{t}(
        b_{t-1}, b_{t-2}, \cdots, b_0, 0, \cdots, 0)$ we use equation~\eqref{eq:c_tilde_bound} from the proof of Lemma~\ref{lem:BISRAoneSqrt} which shows that
    $\tilde{c}_{j} \le \tfrac{2}{\sqrt{\pi j}} $. Also note that for $j\ge p$, $b_j=b_{p-1}$. So we obtain,
    \begin{align}
        \|\mathbf{b}_t^\top\|_2 = \frac{1}{t}\sqrt{\sum_{j=0}^{t-1}b_j^2} \le \frac{1}{t}\sqrt{ (t-p+1)b_{p-1}^2 + \sum_{j=0}^{p-1}b_j^2} 
        &\le \frac{1}{t}\sqrt{ \frac{4(t-p+1)}{\pi (p-1)} + \frac{4H_{p-1}}{\pi}} \\
        & \ = O\left(\sqrt{\frac{1}{tp}}\right) + O\left(\sqrt{\frac{\ln p}{t^2}}\right).
    \end{align}
    Plugging these into Theorem~\ref{thm:StatErrorBound} gives
    \begin{align}
        \|\widehat{\boldsymbol{\mu}}_t  - \boldsymbol{\mu}_t \|_2 = O\Bigg(\frac{1}{\varepsilon}\sqrt{dk \ln \frac{1}{\delta}}& \left(\sqrt{\ln\ln (n/k)} + \sqrt{\frac{k\ln (n/k)}{n}} \right) \\ &\quad\times\left(\frac{\sqrt{k}}{\sqrt{t\ln (n/k)}}+\frac{\sqrt{\ln\ln (n/k)}}{t}\right)\left(\sqrt{d} + \sqrt{\ln\tfrac{1}{\beta}}\right)\Bigg),
    \end{align}
    with probability at least $1-\beta$. 
\end{proof}

\begin{restatable}{corollary}{StatsErrorTrueDAsqrtBISR}\label{lem:StatsErrorTrueDAsqrtBISR}
For the banded inverse factorization of matrix $\mathbf{C}=\mathbf{E}_1^{1/2}$, the error in Theorem~\ref{thm:StatErrorBoundTrueMean}, in the multi-participation setting has the form
\begin{align}
        O\Bigg(\sqrt{\frac{d\zeta^2}{t}\ln\frac{d}{\beta}} + & \frac{1}{\varepsilon} 
        \sqrt{dk \ln \frac{1}{\delta}}\left(\sqrt{\ln\ln (n/k)} + 
        \sqrt{\frac{k\ln (n/k)}{n}} \right) \\ 
        & \qquad \qquad \times \left(\frac{\sqrt{k}}{\sqrt{t\ln (n/k)}}+\frac{\sqrt{\ln\ln (n/k)}}{t}\right) 
        \left(\sqrt{d} + \sqrt{\ln\tfrac{1}{\beta}}\right)\Bigg),
\end{align}
with probability at least $1-2\beta$.
\end{restatable}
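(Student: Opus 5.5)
This follows the same route as Corollary~\ref{lem:StatsErrorTrueDAsqrt}: combine the private-versus-empirical deviation bound of Corollary~\ref{cor:StatsErrorBoundAoneSqrtBISR} with the statistical deviation of the empirical mean, exactly as in the proof of Theorem~\ref{thm:StatErrorBoundTrueMean}. Concretely, we split
\begin{align}
\|\widehat{\boldsymbol{\mu}}_t - \boldsymbol{\mu}\|_2 \le \|\widehat{\boldsymbol{\mu}}_t - \boldsymbol{\mu}_t\|_2 + \|\boldsymbol{\mu}_t - \boldsymbol{\mu}\|_2
\end{align}
and bound each term on an event of probability at least $1-\beta$.

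For the second term, note that $\boldsymbol{\mu}_t$ is the average of the first $t$ rows of $\mathbf{X}$. Under Assumption~\ref{ass:iidbounded}, these rows are i.i.d.\ with coordinates bounded by $\zeta$, hence sub-Gaussian. The coordinatewise concentration used in the proof of Theorem~\ref{thm:StatErrorBoundTrueMean} (Corollary~\ref{cor:concsubgaussmean}) then gives, with probability at least $1-\beta$,
\begin{align}
\|\boldsymbol{\mu}_t-\boldsymbol{\mu}\|_2 \le \sqrt{\tfrac{2d\zeta^2}{t}\ln\tfrac{2d}{\beta}} = O\Bigl(\sqrt{\tfrac{d\zeta^2}{t}\ln\tfrac{d}{\beta}}\Bigr).
\end{align}
This part does not depend on the factorization.

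For the first term, apply Corollary~\ref{cor:StatsErrorBoundAoneSqrtBISR} directly. That corollary already inserts into Theorem~\ref{thm:StatErrorBound} three ingredients: the banded-inverse sensitivity bound $O\bigl(\sqrt{k\ln\ln(n/k)}+\sqrt{k^2\ln(n/k)/n}\bigr)$ from Lemma~\ref{lem:BISRAoneSqrt}, the row-norm bound $\|\mathbf{b}_t\|_2 = O\bigl(1/\sqrt{tp}+\sqrt{\ln p}/t\bigr)$ with $p=\lceil\ln b\rceil$, and the value of $\sigma_{\varepsilon,\delta}$. So with probability at least $1-\beta$ the private deviation is exactly the second summand of the claimed expression.

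A union bound over the two failure events gives probability at least $1-2\beta$, and summing the two bounds yields the stated form. The only real care point is bookkeeping: the expression must match Corollary~\ref{cor:StatsErrorBoundAoneSqrtBISR} verbatim. In particular, $\ln p = \Theta(\ln\ln(n/k))$ and $1/p = \Theta(1/\ln(n/k))$ must be substituted consistently in the row-norm factor, and the $\zeta$ scaling must be absorbed into the $O(\cdot)$ in the same way as in Corollary~\ref{lem:StatsErrorTrueDAsqrt}. Beyond that, there is no substantive obstacle.
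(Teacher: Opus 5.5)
Your proposal is correct and matches the paper's implicit argument. As with Corollary~\ref{lem:StatsErrorTrueDAsqrt}, the private-deviation bound of Corollary~\ref{cor:StatsErrorBoundAoneSqrtBISR} is plugged into Theorem~\ref{thm:StatErrorBoundTrueMean}: a triangle inequality, sub-Gaussian concentration of $\boldsymbol{\mu}_t$, and a union bound giving probability $1-2\beta$. The one implicit step you flag, dropping $\zeta$ from the privacy term, amounts to taking $\zeta=O(1)$; the paper makes the same choice, as in the Bernoulli setting of Table~\ref{tab:concentration_multi}.
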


\begin{restatable}{corollary}{StatsErrorBoundADtpMulti}\label{cor:StatsErrorBoundADtpMulti}
    For mean estimation specific factorization $\mathbf{B}=\mathbf{AD}_{\mathrm{Toep}}^{-1}$ and $\mathbf{C}=\mathbf{D}_{\mathrm{Toep}}$, the error in Theorem~\ref{thm:StatErrorBound} for multi participation setting has the form
    \begin{align}
    \|\widehat{\boldsymbol{\mu}}_t - \boldsymbol{\mu}_t \|_2 = O\left( \frac{\sqrt{d\ln \frac{1}{\delta}}}{\varepsilon \sqrt{t}\ln t}\left(\sqrt{k} +  \frac{k}{\sqrt{n}}\sqrt{\ln k \ln n}\right) \left(\sqrt{d} + \sqrt{\ln\tfrac{1}{\beta}}\right) \right),
    \end{align}
    with probability at least $1-\beta$.
\end{restatable}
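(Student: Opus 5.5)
The plan is to instantiate Theorem~\ref{thm:StatErrorBound} with $\mathbf{B}=\mathbf{A}\mathbf{D}_{\mathrm{Toep}}^{-1}$ and $\mathbf{C}=\mathbf{D}_{\mathrm{Toep}}$, which reduces the claim to bounding the product $\sigma_{\varepsilon,\delta}\,\mathrm{sens}_{k,b}(\mathbf{D}_{\mathrm{Toep}})\,\|\mathbf{b}_t\|_2$. Each of the three factors is already controlled by an earlier result, so the proof is mostly bookkeeping.

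\textbf{Step 1: the noise scale.} Take $\sigma_{\varepsilon,\delta}=\frac{1}{\varepsilon}\sqrt{2\ln\frac{1.25}{\delta}}=O\big(\frac{1}{\varepsilon}\sqrt{\ln\frac1\delta}\big)$ from Lemma~\ref{lem:GaussianMechanism}.

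\textbf{Step 2: the row norm.} Reuse the estimate from the proof of Corollary~\ref{cor:ADtpInv}. The $t$-th row of $\mathbf{B}$ is $\frac1t(a_{t-1},\dots,a_0,0,\dots,0)$, with $a_j$ as in \eqref{eq:a_k_def}. The bound $|a_j|\le 1/\ln j$ for $j\ge4$ from \eqref{eq:a_k_bound} holds, together with Lemma~\ref{lem:InvLogSqrBound}. These give
\begin{align}
\|\mathbf{b}_t\|_2\le \frac1t\sqrt{\frac{3t}{\ln^2 t}+O(1)}=O\!\left(\frac{1}{\sqrt t\,\ln t}\right).
\end{align}
The row norm does not depend on the participation pattern, so this carries over verbatim from the single-participation case.

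\textbf{Step 3: the sensitivity.} Use the sensitivity bound \eqref{eq:sensDtpMulti} from the proof of Lemma~\ref{lem:ADtpInvMulti}:
\begin{align}
\mathrm{sens}^2_{k,b}(\mathbf{D}_{\mathrm{Toep}})\le \frac{\pi^2k}{6}+\frac{2k\ln k\ln b}{b}+\frac{k\ln^2k}{b}+O\!\left(\frac{k\ln b}{b}\right).
\end{align}
Substitute $b\sim n/k$ and $\ln b\le\ln n$. The dominant extra term is $\frac{k^2\ln k\ln n}{n}$, and $\frac{k^2\ln^2 k}{n}$ is absorbed because $\ln k\le\ln n$. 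Then $\sqrt{x+y}\le\sqrt x+\sqrt y$ gives
\begin{align}
\mathrm{sens}_{k,b}(\mathbf{D}_{\mathrm{Toep}})=O\!\left(\sqrt k+\frac{k}{\sqrt n}\sqrt{\ln k\ln n}\right).
\end{align}

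\textbf{Step 4: assembling the bound.} Multiply the three factors inside the bound of Theorem~\ref{thm:StatErrorBound}, taking $\zeta$ (equivalently the clipping norm) as a constant, as in Corollary~\ref{cor:ADtpInv}. The factor $\sqrt d$ then produces the leading $\sqrt{d\ln\frac1\delta}$, and the resulting bound holds with probability $1-\beta$, as claimed.

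The only delicate point is the absorption of lower-order terms in the sensitivity, in particular checking that $k\ln^2 k/b$ and $k\ln b/b$ are dominated by $\frac{k^2}{n}\ln k\ln n$ plus $k$. This is routine once $b\asymp n/k$ is used, including the case $k=1$, where the extra terms vanish and Corollary~\ref{cor:ADtpInv} is recovered.
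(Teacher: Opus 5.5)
Your proposal is correct and follows essentially the same route as the paper. You bound the sensitivity via \eqref{eq:sensDtpMulti} with $b\approx n/k$, bound the row norm as $\|\mathbf{b}_t\|_2=O\bigl(1/(\sqrt t\ln t)\bigr)$ using \eqref{eq:a_k_bound} and Lemma~\ref{lem:InvLogSqrBound}, and then plug these together with $\sigma_{\varepsilon,\delta}$ into Theorem~\ref{thm:StatErrorBound}. Your explicit check that the lower-order sensitivity terms are absorbed is slightly more careful than the paper, which simply states the simplified sensitivity bound.
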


\begin{proof}
    Using equation~\eqref{eq:sensDtpMulti} we have the bound below for sensitivity,
    \begin{align}
        \mathrm{sens}_{k,b}(\mathbf{D}_{\mathrm{Toep}}) = O\left(\sqrt{k} +  \frac{k}{\sqrt{n}}\sqrt{\ln k \ln n}\right),
    \end{align}
    Next, we bound $\|\mathbf{b}_t\|_2$ as
    \begin{align}
        \|\mathbf{b}_t\|_2 = \frac{1}{t}\sqrt{\sum_{j=0}^{t-1} a_j^2},
    \end{align}
    where $a_j$ is defined in equation~\eqref{eq:a_k_def} and bounded in equation~\eqref{eq:a_k_bound} for $j\ge 4$ as $|a_j| \le \frac{1}{\ln j}$. Using this we obtain
    \begin{align}
        \|\mathbf{b}_t\|_2 \le \frac{1}{t}\sqrt{\sum_{j=4}^{t-1}\frac{1}{\ln^2 j}+O(1)},
    \end{align}
    which according to Lemma~\ref{lem:InvLogSqrBound} is bounded by
    \begin{align}
        \frac{1}{t}\sqrt{\sum_{j=4}^{t-1}\frac{1}{\ln^2 j}+O(1)} =O\left(\frac{1}{t}\sqrt{\frac{t}{\ln^2 t}}\right) = O\left(\frac{1}{\sqrt{t}\ln t}\right).
    \end{align}
    So, overall we get the bound
    \begin{align}
        \|\widehat{\boldsymbol{\mu}}_t - \boldsymbol{\mu}_t \|_2 = O\left( \frac{\sqrt{d\ln \frac{1}{\delta}}}{\varepsilon \sqrt{t}\ln t} \left(\sqrt{k} +  \frac{k}{\sqrt{n}}\sqrt{\ln k \ln n}\right)\left(\sqrt{d} + \sqrt{\ln\tfrac{1}{\beta}}\right) \right),
    \end{align}
    with probability at least $1-\beta$.
\end{proof}

\begin{restatable}{corollary}{StatsErrorTrueADtpMulti}\label{lem:StatsErrorTrueADtpMulti}
For mean estimation specific factorization $\mathbf{B}=\mathbf{AD}_{\mathrm{Toep}}^{-1}$ and $\mathbf{C}=\mathbf{D}_{\mathrm{Toep}}$, the error in Theorem~\ref{thm:StatErrorBoundTrueMean}, in the multi-participation setting has the form
\begin{align}
        O\left(\sqrt{\frac{d\zeta^2}{t}\ln\frac{d}{\beta}} +\frac{\sqrt{dk\ln \frac{1}{\delta}}}{\varepsilon \sqrt{t}\ln t} \left(\sqrt{d} + \sqrt{\ln\tfrac{1}{\beta}}\right)\right),
\end{align}
with probability at least $1-2\beta$.
\end{restatable}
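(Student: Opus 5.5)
This corollary is obtained exactly as Corollary~\ref{lem:StatsErrorTrueDAsqrt} was obtained from Corollary~\ref{cor:StatsErrorBoundDAoneSqrt}. We plug the per-step privacy error of Corollary~\ref{cor:StatsErrorBoundADtpMulti} into the decomposition of Theorem~\ref{thm:StatErrorBoundTrueMean}. That theorem was itself proved by a triangle inequality, $\|\widehat{\boldsymbol{\mu}}_t-\boldsymbol{\mu}\|_2\le \|\widehat{\boldsymbol{\mu}}_t-\boldsymbol{\mu}_t\|_2+\|\boldsymbol{\mu}_t-\boldsymbol{\mu}\|_2$, together with a union bound over two events of probability at least $1-\beta$ each. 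This is what produces the $1-2\beta$ in the statement.

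The plan has two terms. The statistical term $\|\boldsymbol{\mu}_t-\boldsymbol{\mu}\|_2 = O\bigl(\sqrt{\tfrac{d\zeta^2}{t}\ln\tfrac{d}{\beta}}\bigr)$ comes unchanged from the proof of Theorem~\ref{thm:StatErrorBoundTrueMean} under Assumption~\ref{ass:iidbounded}. For the privacy term we take the bound of Corollary~\ref{cor:StatsErrorBoundADtpMulti}, which uses $\mathrm{sens}_{k,b}(\mathbf{D}_{\mathrm{Toep}})$ from \eqref{eq:sensDtpMulti}, $\|\mathbf{b}_t\|_2=O\bigl(\tfrac{1}{\sqrt t\ln t}\bigr)$, and $\sigma_{\varepsilon,\delta}$ from Lemma~\ref{lem:GaussianMechanism}. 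This gives
\begin{align}
O\left(\frac{\sqrt{d\ln\frac1\delta}}{\varepsilon\sqrt t\ln t}\left(\sqrt k+\frac{k}{\sqrt n}\sqrt{\ln k\ln n}\right)\left(\sqrt d+\sqrt{\ln\tfrac1\beta}\right)\right).
\end{align}

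The only step needing care is reducing the sensitivity factor $\sqrt k+\tfrac{k}{\sqrt n}\sqrt{\ln k\ln n}$ to the $\sqrt k$ that appears in the stated form. I would do this by reading off \eqref{eq:sensDtpMulti} before simplifying. There the extra contribution to $\mathrm{sens}^2_{k,b}$ is $\tfrac{2k\ln k\ln b}{b}+\tfrac{k\ln^2 k}{b}+O\bigl(\tfrac{k\ln b}{b}\bigr)$. Since $b=\lceil n/k\rceil$ and $\tfrac{\ln b}{b}\le 1$, this extra term is $O(k)$ whenever $\ln^2 k = O(b)$, i.e.\ in the regime where $b$ is not tiny compared to $\ln^2 k$, which is the one the paper implicitly assumes throughout. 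In that regime $\mathrm{sens}_{k,b}(\mathbf{D}_{\mathrm{Toep}})=O(\sqrt k)$.

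This is the main obstacle: outside that regime the second summand must be kept. If an unconditional statement is wanted, the fallback is to state the bound with the full factor $\sqrt k+\tfrac{k}{\sqrt n}\sqrt{\ln k\ln n}$, exactly as in Table~\ref{tab:concentration_multi}. Once the factor is simplified, absorbing $\sqrt{k}$ into the square root gives $\tfrac{\sqrt{dk\ln\frac1\delta}}{\varepsilon\sqrt t\ln t}$. Adding the statistical term and applying the union bound yields the claimed expression with probability at least $1-2\beta$.
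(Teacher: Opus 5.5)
Your argument is the one the paper intends: as with Corollary~\ref{lem:StatsErrorTrueDAsqrt}, the corollary follows by plugging Corollary~\ref{cor:StatsErrorBoundADtpMulti} into Theorem~\ref{thm:StatErrorBoundTrueMean}, and the union bound over the statistical and privacy events gives probability $1-2\beta$. Your caveat is also justified: the paper silently drops the $\frac{k}{\sqrt n}\sqrt{\ln k\ln n}$ term, which Table~\ref{tab:concentration_multi} keeps, and that term is $O(\sqrt k)$ exactly when $\ln^2 k=O(n/k)$ (for $b=1$ one has $\mathrm{sens}^2_{k,b}(\mathbf{D}_{\mathrm{Toep}})=\sum_{i\le n}H_i^2=\Theta(k\ln^2 k)$, so the loss is real), though for the $\frac{k\ln k\ln b}{b}$ piece you need $\ln k\ln b/b=O(1)$, which follows from $\ln k=O(\sqrt b)$ rather than from $\ln b/b\le 1$ alone.
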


\begin{restatable}{corollary}{StatsErrorBoundADtpMultiBISR}\label{cor:StatsErrorBoundADtpMultiBISR}
    For the banded inverse factorization of matrix $\mathbf{D}_{\mathrm{Toep}}$, the error in Theorem~\ref{thm:StatErrorBound} for multi participation setting has the form
    \begin{align}
    \|\widehat{\boldsymbol{\mu}}_t - \boldsymbol{\mu}_t \|_2 = O\left( \frac{\sqrt{dk\ln \frac{1}{\delta}}}{\varepsilon \sqrt{t \ln (n/k)}} \left(1 + \sqrt{\frac{n}{kt}}\right)\left(\sqrt{d} + \sqrt{\ln\tfrac{1}{\beta}}\right) \right)
    \end{align}
    with probability at least $1-\beta$.
\end{restatable}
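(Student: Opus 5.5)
Following the pattern of Corollaries~\ref{cor:StatsErrorBoundAoneSqrtBISR} and~\ref{cor:StatsErrorBoundADtpMulti}, I would plug the two ingredients of Theorem~\ref{thm:StatErrorBound} into it: a bound on $\mathrm{sens}_{k,b}(\mathbf{C}^p)$ and a bound on the row norm $\|\mathbf{b}_t\|_2$ of $\mathbf{B}^p=\mathbf{A}(\mathbf{C}^p)^{-1}$. I would then substitute $\sigma_{\varepsilon,\delta}=\frac{1}{\varepsilon}\sqrt{2\ln\frac{1.25}{\delta}}$. The bandwidth is $p=b=\lceil n/k\rceil$, as in Lemma~\ref{lem:MultiDtpBISR}.

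For the sensitivity I would take the result already established in the proof of Lemma~\ref{lem:MultiDtpBISR}. By Corollary~\ref{cor:d_toep_banded_decreasing} the coefficients of $\mathbf{C}^p$ are decreasing and nonnegative, so Theorem~\ref{thm:sensTheorem} applies. The diagonal terms contribute $O(k)$ by \eqref{eq:sens-ieqj-bisr}. The cross terms contribute $O(k)$ via the exponential decay of Lemma~\ref{lem:banded-inv-series-bound}. Together these give $\mathrm{sens}_{k,b}(\mathbf{C}^p)=O(\sqrt{k})$.

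For the row norm I would write $\mathbf{b}_t^\top=\frac1t(a_{t-1},\dots,a_0,0,\dots,0)$, where $a_j=1+\sum_{m=1}^{\min\{p-1,j\}}g_m$ and $a_j=a_{p-1}$ for $j\ge p$. I would split the sum at $p$:
\begin{align}
\|\mathbf{b}_t\|_2^2=\frac{1}{t^2}\Big(\sum_{j=0}^{\min\{t,p\}-1}a_j^2+(t-p)_+\,a_{p-1}^2\Big).
\end{align}
By \eqref{eq:a_k_bound}, $|a_j|\le 1/\ln j$ for $j\ge4$. Lemma~\ref{lem:InvLogSqrBound} then bounds the first part by $O(p/\ln^2 p)$ when $t\ge p$, and by $O(t/\ln^2 t)\le O(p/\ln^2 p)$ otherwise, since $x/\ln^2x$ is increasing. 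The second part is at most $t/\ln^2 p$. Hence
\begin{align}
\|\mathbf{b}_t\|_2=O\Big(\frac{1}{\ln p}\Big(\frac{1}{\sqrt{t}}+\frac{\sqrt{p}}{t}\Big)\Big)=O\Big(\frac{1}{\sqrt{t\ln^2(n/k)}}\Big(1+\sqrt{\tfrac{n}{kt}}\Big)\Big),
\end{align}
using $p\asymp n/k$.

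Multiplying the factors gives $\sqrt{d}\,\sigma_{\varepsilon,\delta}\sqrt{k}\,\|\mathbf{b}_t\|_2(\sqrt d+\sqrt{\ln(1/\beta)})$. This is at most the claimed $\frac{\sqrt{dk\ln(1/\delta)}}{\varepsilon\sqrt{t\ln(n/k)}}(1+\sqrt{n/(kt)})(\sqrt d+\sqrt{\ln(1/\beta)})$; I actually get an extra factor $1/\sqrt{\ln(n/k)}$, which only strengthens the bound. The probability $1-\beta$ is inherited from Theorem~\ref{thm:StatErrorBound}.

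The main obstacle is the regime bookkeeping in the row-norm bound. When $t<p$ there is no constant tail, and one must check that the term $\sqrt{p}/t$ correctly dominates $1/(\sqrt t\ln t)$. The small cases $j<4$ and small $p$ must also be absorbed into constants, which requires $b$ large enough for Lemma~\ref{lem:banded-inv-series-bound}, as assumed there.
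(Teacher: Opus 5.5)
Your proposal is correct and takes essentially the same route as the paper. The paper also takes $\mathrm{sens}_{k,b}(\mathbf{C}^p)=O(\sqrt{k})$ from the proof of Lemma~\ref{lem:MultiDtpBISR}, splits $\sum_j a_j^2$ at $p$ using \eqref{eq:a_k_bound} and Lemma~\ref{lem:InvLogSqrBound} to get $\|\mathbf{b}_t\|_2=O\bigl(\frac{1}{\sqrt{t}\ln p}+\frac{\sqrt{p}}{t\ln p}\bigr)$, and then states the weaker $1/\sqrt{\ln(n/k)}$ factor, exactly as you observe. Your $(t-p)_+$ bookkeeping for $t<p$ is slightly cleaner than the paper's display, which formally writes a negative term $\frac{t-p}{\ln^2 p}$ in that regime.
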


\begin{proof}
    We showed in the proof of Lemma~\ref{lem:MultiDtpBISR} we showed that the sensitivity is bounded by $O(\sqrt{k})$. To bound the norm of $\mathbf{b}_t$, we use the same approach as in the proof of Lemma~\ref{lem:MultiDtpBISR},
    \begin{align}
        \|\mathbf{b}_t\|_2 
        = \frac{1}{t}\sqrt{\sum_{j=0}^{t-1} a_j^2} 
        &= \frac{1}{t}\sqrt{O(1) + \sum_{j=5}^{p-1} a_j^2 + \sum_{j=p}^{t-1} a_j^2} \\
        &\le \frac{1}{t}\sqrt{O(1) + \sum_{j=4}^{p-1} \frac{1}{\ln^2 j} + \sum_{j=p}^{t-1} a_j^2} \\
        &\le \frac{1}{t}\sqrt{O(1) + \frac{3p}{\ln^2 p} + \sum_{j=p}^{t-1} a_j^2} \\
        &\le \frac{1}{t}\sqrt{O(1) + \frac{3p}{\ln^2 p} + \frac{t-p}{\ln^2 p}} \\
        & = O\left(\frac{1}{t}\sqrt{\frac{t + p}{\ln^2 p}}\right) = O\left(\frac{1}{\sqrt{t}\ln p} + \frac{\sqrt{p}}{t\ln p}\right)
    \end{align}
    where in the first and second inequality we used equation~\eqref{eq:a_k_bound} and Lemma~\ref{lem:InvLogSqrBound}, respectively.
    So with probability at least $1-\beta$ we have
    \begin{align}
        \|\widehat{\boldsymbol{\mu}}_t - \boldsymbol{\mu}_t \|_2 = O\left( \frac{\sqrt{dk\ln \frac{1}{\delta}}}{\varepsilon \sqrt{t \ln (n/k)}} \left(1 + \sqrt{\frac{n}{kt}}\right)\left(\sqrt{d} + \sqrt{\ln\tfrac{1}{\beta}}\right) \right).
    \end{align}
\end{proof}

\begin{restatable}{corollary}{StatsErrorTrueADtpMultiBISR}\label{lem:StatsErrorTrueADtpMultiBISR}
For the banded inverse factorization of matrix $\mathbf{D}_{\mathrm{Toep}}$, the error in Theorem~\ref{thm:StatErrorBoundTrueMean}, in the multi-participation setting has the form
\begin{align}
        O\left(\sqrt{\frac{d\zeta^2}{t}\ln\frac{d}{\beta}} +\frac{\sqrt{dk\ln \frac{1}{\delta}}}{\varepsilon \sqrt{t \ln (n/k)}} \left(1 + \sqrt{\frac{n}{kt}}\right)\left(\sqrt{d} + \sqrt{\ln\tfrac{1}{\beta}}\right)\right),
\end{align}
with probability at least $1-2\beta$.
\end{restatable}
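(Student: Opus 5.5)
This follows by combining Theorem~\ref{thm:StatErrorBoundTrueMean} with the bound of Corollary~\ref{cor:StatsErrorBoundADtpMultiBISR}, in the same way that Corollary~\ref{lem:StatsErrorTrueDAsqrt} was derived from Corollary~\ref{cor:StatsErrorBoundDAoneSqrt}. The estimation error splits as
\begin{align}
\|\widehat{\boldsymbol{\mu}}_t - \boldsymbol{\mu}\|_2 \le \|\boldsymbol{\mu}_t - \boldsymbol{\mu}\|_2 + \|\widehat{\boldsymbol{\mu}}_t - \boldsymbol{\mu}_t\|_2 .
\end{align}
Under Assumption~\ref{ass:iidbounded}, the statistical term $\|\boldsymbol{\mu}_t-\boldsymbol{\mu}\|_2$ is controlled with probability $1-\beta$ by the sub-Gaussian concentration used in the proof of Theorem~\ref{thm:StatErrorBoundTrueMean}. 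This gives the first summand $\sqrt{\frac{d\zeta^2}{t}\ln\frac{d}{\beta}}$ up to constants.

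For the privacy term I would not recompute anything; I would simply reuse Corollary~\ref{cor:StatsErrorBoundADtpMultiBISR}. That corollary was itself obtained by inserting two ingredients into Theorem~\ref{thm:StatErrorBound}: the bound $\mathrm{sens}_{k,b}(\mathbf{C}^p)=O(\sqrt{k})$ from the proof of Lemma~\ref{lem:MultiDtpBISR}, and the row-norm bound
\begin{align}
\|\mathbf{b}_t\|_2=O\!\left(\frac{1}{\sqrt{t}\ln p}+\frac{\sqrt{p}}{t\ln p}\right)
\end{align}
with $p=b\asymp n/k$. With $\sigma_{\varepsilon,\delta}=\frac{1}{\varepsilon}\sqrt{2\ln\frac{1.25}{\delta}}$ and $\ln p\asymp \ln(n/k)$, the row-norm bound becomes $\frac{1}{\sqrt{t\ln(n/k)}}\bigl(1+\sqrt{n/(kt)}\bigr)$, up to the ratio $\sqrt{\ln(n/k)}/\ln p$, which is bounded by a constant. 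Hence, with probability $1-\beta$, the privacy term is at most the second summand of the claimed bound.

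A union bound over the two events then gives probability at least $1-2\beta$, and adding the two bounds yields the statement. No step is genuinely hard. The only care needed is bookkeeping: the factor $\sqrt{d\zeta^2}$ appearing in Theorem~\ref{thm:StatErrorBound} has to be absorbed consistently. This is done, as in the preceding corollaries, by treating $\zeta$ (for Bernoulli data $\zeta=1$) as a constant in the privacy part, which leaves the displayed $\sqrt{d}$ factor.
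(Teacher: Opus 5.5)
Your proposal is correct and follows the paper's route: the paper obtains this corollary by plugging Corollary~\ref{cor:StatsErrorBoundADtpMultiBISR} into Theorem~\ref{thm:StatErrorBoundTrueMean}, which is the same triangle inequality, sub-Gaussian concentration of $\boldsymbol{\mu}_t$, and union bound to probability $1-2\beta$ that you describe. You are also right that the computed row-norm rate $1/(\sqrt{t}\ln p)$ is stronger than the displayed $1/\sqrt{t\ln(n/k)}$, so the stated weaker form holds.
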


\AlgOneRootSquaredError*

\begin{proof}

    In the following, let $\mathbf{Z} \sim \mathcal N(0, \varsigma^2)$ with $\varsigma^2 = d\zeta^2 \cdot \sigma^2_{\varepsilon,\delta} \cdot \mathrm{sens}_{k,b}^2(\mathbf{C})$. By a triangle inequality we may decompose
    \begin{align}
        \|\widehat{\mathbf{Y}}_{:\tau} \! -\! \mathbf{1}_\tau \boldsymbol{\mu}^\top \|_F
        \leq \|\widehat{\mathbf{Y}}_{:\tau} \!- \! \mathbf{Y}_{:\tau} \|_F \!+\! \| \mathbf{Y}_{:\tau} \!-\! \mathbf{1}_\tau \boldsymbol{\mu}^\top \|_F 
        \le \|\mathbf{B}_{:\tau} \mathbf{Z}_{:\tau} \|_F + \| \mathbf{Y}_{:\tau} \! - \! \mathbf{1}_\tau \boldsymbol{\mu}^\top \|_F. 
    \end{align}

    The first term is $\|\mathbf{B}_{:t}\|_{op}$-Lipschitz with respect to the Frobenius and thus also the $\ell_2$-norm as shown below: 
    \begin{align}
        | \|\mathbf{B}_{:\tau} \mathbf{Z}\|_F \! - \! \|\mathbf{B}_{:\tau} \mathbf{Z}^\prime\|_F |
        \leq \|\mathbf{B}_{:\tau} \mathbf{Z} - \mathbf{B}_{:\tau} \mathbf{Z}^\prime \|_F
        \leq \|\mathbf{B}_{:\tau} ( \mathbf{Z} - \mathbf{Z}^\prime) \|_F
        \leq \|\mathbf{B}_{:\tau}\|_{op} \|\mathbf{Z} - \mathbf{Z}^\prime \|_F. 
    \end{align}

    We can bound the expectation as $\mathbb{E}[\|\mathbf{B}_{:\tau} \mathbf{Z}\|_F] \leq \sqrt{\mathbb{E}[\|\mathbf{B}_{:\tau} \mathbf{Z}\|_F^2]} = \varsigma \sqrt{d} \| \mathbf{B}_{:\tau} \|_F$ and it holds that $\|\mathbf{B}_{:\tau} \|_{op} \leq \|\mathbf{B}_{:\tau} \|_F$. Hence, by Gaussian Lipschitz concentration applied to the function $\|\mathbf{B}_{:\tau} \cdot \|_F$,
    \begin{align}
        \mathbb{P} & \left[\|\mathbf{B}_{:\tau} \mathbf{Z}\|_F \geq \varsigma \|\mathbf{B}_{:\tau}\|_F \left(\sqrt{d}  +  \sqrt{2\ln(1/\beta)}\right) \right] \\ & \qquad \qquad \qquad 
        \leq
        \mathbb{P} \left[\|\mathbf{B}_{:\tau} \mathbf{Z}\|_F \geq \mathbb{E}[\|\mathbf{B}_{:\tau} \mathbf{Z}\|_F] + \varsigma \|\mathbf{B}_{:\tau}\|_{op} \sqrt{2\ln(1/\beta)}  \right]
        \leq \beta. 
    \end{align}

    Let $\bar{\mathbf{X}}_t = \frac 1 t \sum_{j=1}^t \mathbf{x}_j$. Since Assumption \ref{ass:iidbounded} implies Assumption \ref{ass:iid} with constant $\zeta^2$, by Corollary \ref{cor:concsubgaussmean},
    \begin{align}
        \mathbb{P}\left[\|\bar{\mathbf{X}}_t - \boldsymbol{\mu}\|_2 \leq \sqrt{\frac{2d\zeta^2 \ln(2dt/\beta)}{t}}\right] 
        \geq 1 - \frac{\beta}{t}. 
    \end{align}

    Via a union bound this then yields that with probability at least $1-\beta$ we have
    \begin{align}
        \| \mathbf{Y}_{:\tau} - \mathbf{1}_\tau \boldsymbol{\mu}^\top \|_F
        &= 
        \sqrt{\sum_{t=1}^\tau \|\bar{\mathbf{X}}_t - \boldsymbol{\mu}\|_2^2} \\
        &\leq 
        \sqrt{2d\zeta^2 \ln(2dt/\beta) \sum_{t=1}^\tau \frac{1}{t}} \\
        &\leq O\left(\sqrt{d\zeta^2 \ln(2d\tau/\beta)\ln(\tau)}\right). 
    \end{align}

    By another union bound over the two events established above, with probability $1-2\beta$, 
    \begin{align}
        \| &\widehat{\mathbf{Y}}_{:\tau} - \mathbf{1}_\tau \boldsymbol{\mu}^\top \|_F\\
        &   \leq O\! \left(\sqrt{d\zeta^2 \ln(2d\tau/\beta)\ln(\tau)} + \varsigma \|\mathbf{B}_{:\tau}\|_F \left(\sqrt{d}  +  \sqrt{2\ln(1/\beta)}\right)\right) \\
        &  \leq O\!\left(\sqrt{d\zeta^2 \ln(2d\tau/\beta)\ln(\tau)} + d\zeta^2 \cdot \sigma^2_{\varepsilon,\delta} \cdot \mathrm{sens}_{k,b}^2(\mathbf{C}) \cdot  \|\mathbf{B}_{:\tau}\|_F \left(\sqrt{d}  +  \sqrt{2\ln(1/\beta)}\right)\right). 
    \end{align}

    We use that $(a+b)^2 \leq 2a^2 + 2b^2$ for all $a,b \in \mathbb{R}$ to get 
    \begin{align}
        \sum_{t = 1}^\tau \|\widehat{\boldsymbol{\mu}}_t - \boldsymbol{\mu}\|_2^2
        & = \|\widehat{\mathbf{Y}}_{:\tau} - \mathbf{1}_\tau \boldsymbol{\mu}^\top \|_F^2 \\
        &\leq O\left(d\zeta^2 \ln(2d\tau/\beta)\ln(\tau) + d^2\zeta^2 \cdot \sigma^2_{\varepsilon,\delta} \cdot \mathrm{sens}_{k,b}^2(\mathbf{C}) \cdot  \|\mathbf{B}_{:\tau}\|_F^2 \cdot \ln(1/\beta)\right).  
    \end{align}

    We obtain the result by dividing by $\tau$ and substituting the definition of $\mathcal E_\tau(\mathbf{B}, \mathbf{C})$.   
\end{proof}

\section{Exponential Withhold-Release}

\begin{algorithm}[H]
    \caption{WithholdReleaseEstimator($\mathbf{X}, \mathbf{U}, \mathbf{B}, \mathbf{C}, \varepsilon, \delta$)}
    \label{alg:continualestimator}
    \begin{algorithmic}[1]
        \Require Stream $\mathbf{X} \in \mathbb{R}^{n \times d}$, arrival pattern $\mathbf{U} \in [b]^n$, factorization $\mathbf{E_1} = \mathbf{B} \mathbf{C} \in \mathbb{R}^{b \times b}$, privacy parameters $\varepsilon, \delta \in (0,1)$
        \State $L \gets \lfloor \log_2(k) \rfloor$
        \State $\mathbb A \gets \emptyset \subseteq \{0,...,L\}$
        \State $(\varepsilon^\prime, \delta^\prime) \gets (\varepsilon, \delta)/(2L+2)$
        \State $(\varepsilon^{\prime\prime}, \delta^{\prime\prime}) \gets (\varepsilon^\prime/\sqrt{8d\ln(4d/\delta^\prime)}, \delta^\prime/(2d))$
        \State $(\mathbb{M}_l, \mathbb B_l) \gets (\emptyset, \emptyset)$ \hfill $\forall l \in \{0,...,L\}$
        \State $k_{0,u} \gets 0$ \hfill $\forall u \in [b]$
        \State 
        \ForAll{$t \in [n]$}
            \State $k_{t,u_t} \gets k_{t-1,u_t} + 1$
            \State $k_t \gets \max_{u \in [b]} k_{t,u}$
            \State
            \If{$k_{t, u_t} \in \{2^0, 2^1, 2^2, 2^3,..., 2^L\}$}
            \State $\ell \gets \log_2(k_{t, u_t})$
            \State 
            \State $\mathbb I_\ell \gets \begin{cases}
                \{1\} & \text{if $\ell = 0$} \\
                \{2^{\ell-1} + 1,..., 2^\ell\} & \text{if $\ell > 0$}
            \end{cases}$
            \State $\mathbf{z}_{\ell, u_t} \gets \frac{1}{|\mathbb I_\ell|}\sum_{i \in \mathbb I_\ell} ([\mathbf{X}]_{u_t})_i$
            \State
            \If{$\ell \in \mathbb A$}
                \State $\mathbb{M}_\ell \gets \mathbb{M}_\ell \cup \{u_t\}$
                \State $\mathbf{P}_\ell \gets (\Pi_{\hat\Pi_\ell}(\mathbf{z}_{\ell, 1}),..., \Pi_{\hat\Pi_\ell}(\mathbf{z}_{\ell, |\mathbb{M}_\ell|}), 0,...,0)^\top$ \hfill $\mathbf{P}_\ell \in \mathbb{R}^{b \times d}$
            \Else
                \State $\mathbb B_\ell \gets \mathbb B_\ell \cup \{u_t\}$
            \EndIf
            \EndIf
            \ForAll{$\ell \in \mathbb A^c$}
                \If{$\sum_{u \in [b]} k_{t,u} \wedge |\mathbb I_\ell| \geq |\mathbb I_\ell| \cdot 2K_c(d, \varepsilon^{\prime\prime}, \delta^{\prime\prime})$}
                    \State $\mathbb A \gets \mathbb A \cup \{\ell\}$
                    \State
                    \State $S_1,...,S_{K_c} \gets \operatorname{GreedyBinCovering}(k_{t,1},...,k_{t,b}, |\mathbb I_\ell|)$
                    \State $\mathbf{y}_{\ell,k} \gets \sum_{u \in S_k}\sum_{i=1}^{k_{t,u}} ([\mathbf{X}]_u)_i / \sum_{u \in S_k} k_{t,u}$ \hfill $\forall k \in [K_c]$
                    \State $\mathbf{Y}_\ell \gets (\mathbf{y}_{\ell,1},...,\mathbf{y}_{\ell, K_c})^\top$
                    \State $\tau^\prime_\ell \gets \sqrt{{2\zeta^2\ln(2LK_cd/\gamma)}/|\mathbb I_\ell|}$
                    \State $\tau_\ell \gets \sqrt{{2\zeta^2 \ln(2Ldb/\gamma)}/|\mathbb I_\ell|}$
                    \State $\hat \Pi_\ell \gets \operatorname{ProjectionInterval}(\mathbf{Y}_\ell, \tau^\prime_\ell, \tau_\ell, \varepsilon^{\prime}, \delta^{\prime})$
                    \State 
                    \State $\mathbf{G}_\ell \gets \mathbf{G} \sim \mathcal N(0,1)^{b\times d}$
                    \State $\sigma_\ell \gets \sigma_{\varepsilon^\prime, \delta^\prime} \operatorname{sens}(\mathbf{C}) \cdot \operatorname{diam}_{\ell_2}(\hat\Pi_\ell)/2$
                    \State $\mathbf{\Xi}_\ell \gets \sigma_\ell \cdot \mathbf{G}_\ell$
                \EndIf
            \EndFor
            \State $\mathbb{M}_{t, l} \gets \mathbb{M}_l$ \hfill $\forall l \in \{0,...,L\}$
            \State \textbf{Return:} $
                \tilde{\boldsymbol{\mu}}_t :=\begin{cases}
                    0 & \text{if $|\mathbb A| = 0$} \\
                    \frac{1}{\sum_{l \in \mathbb A} |\mathbb I_l| |\mathbb{M}_{t, l}|} \sum_{\ell \in \mathbb A} |\mathbb I_\ell| \left(\mathbf{P}_\ell^\top {\mathbf{e}_1}_{|\mathbb{M}_{t, \ell}|} + \mathbf{\Xi}_\ell^\top \mathbf{b}_{|\mathbb{M}_{t, \ell}|}\right) & \text{otherwise}
            \end{cases}$
        \EndFor
    \end{algorithmic}
\end{algorithm}

\newpage

Algorithm \ref{alg:continualestimator} implements the exponential withhold-release scheme introduced by \citet{george2024continual}. It generalizes their estimator in the sense that it allows for (potentially) unbounded observations in $\mathbb{R}^d$ with sub-Gaussian entries and uses Gaussian Matrix Factorization mechanisms instead of Binary Tree Mechanisms relying on Laplace noise. Its guarantees can therefore be stated for arbitrary factorizations of the prefix-sum matrix $\mathbf{E}_1 = \mathbf{B} \mathbf{C} \in \mathbb{R}^d$ (See Lemma \ref{lm:squarederror} and Theorem \ref{thm:rmse}). This would potentially allow to optimize the resulting statistical rate over factorizations. We leave this for future work and use the prefix-sum based factorization with $\mathbf{B} = \mathbf{E}_1^{1/2}$ and $C = \mathbf{E}_1^{1/2}$ to obtain the concrete rates in Lemma \ref{lm:averagemse2}.

To state Algorithm \ref{alg:continualestimator} and its guarantees we assume that $b$ users each provide exactly $k$ observations such that the resulting stream is $\mathbf{X} \in \mathbb{R}^{n \times d}$ with $n = kb$. For convenience, we let the observations a user $u \in [b]$ contributes to $\mathbf{X}$ be $[\mathbf{X}]_u \in \mathbb{R}^{k \times d}$. As in \citet{george2024continual}, the algorithm maintains $L + 1 = \lfloor\log_2(k)\rfloor +1$ mechanisms, which here are instances of the Matrix Factorization mechanism. The $l$-th mechanism uses the observations with indices $\mathbb{I}_l$ of each user $u \in [b]$. If these are not yet accessible, the observations covered by $\mathbb{I}_\ell$ provided so far are withheld. If observations of user $u \in [b]$ are released but mechanism $l$ is not active yet, $u$ is buffered in $\mathbb{B}_\ell$. Otherwise, $u$ is added to $\mathbb{M}_l$. Mechanism $l \in [L]$ is activated and added to $\mathbb{A}$ once the diversity condition holds. Then, enough observations are provided to estimate the crude mean up to error $\tilde O(\sqrt{d\zeta/2^{l-1}})$ and thus to compute the projection interval $\hat\Pi_l$ (See Section \ref{sec:projectioninterval}). Each mechanism privatizes the sum of the means $\mathbb{Z}_{l, u}$ over $u$ via adequate Gaussian noise. The weighted average of these sums yields the estimators $\tilde{\boldsymbol{\mu}}_t$, which are private by Lemma \ref{lm:dpcontinualestimator}. 

\vspace{0.3cm}
\begin{lemma}
    \label{lm:dpcontinualestimator}
    Algorithm \ref{alg:continualestimator} is $(\varepsilon, \delta)$-user-level differentially private (ULDP) for $\varepsilon, \delta \in (0,1)$.
\end{lemma}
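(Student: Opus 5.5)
The proof is a composition argument. Algorithm~\ref{alg:continualestimator} touches the private data only through $2(L+1)$ mechanisms. For each level $\ell\in\{0,\dots,L\}$ there is one crude-mean/projection-interval computation (ProjectionInterval applied to $\mathbf{Y}_\ell$) and one Matrix Factorization mechanism releasing the noisy prefix sums $\mathbf{B}(\mathbf{C}\mathbf{P}_\ell + \boldsymbol{\Xi}_\ell)$. Each of these is run with budget $(\varepsilon',\delta')=(\varepsilon,\delta)/(2L+2)$. Basic composition over the $2L+2$ mechanisms then gives $(\varepsilon,\delta)$-ULDP. Everything else the algorithm does is post-processing of these outputs together with public quantities: the counters $k_{t,u}$, the arrival pattern $\mathbf{U}$, the sets $\mathbb{M}_\ell$ and $\mathbb{A}$, and the weights $|\mathbb{I}_\ell|$. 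The activation condition depends only on counts and not on values. In the replace-with-zero neighbouring notion the arrival pattern is unchanged, so these quantities are identical on neighbouring streams.

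For the projection step at level $\ell$, I use the greedy bin covering property. GreedyBinCovering produces disjoint user groups $S_1,\dots,S_{K_c}$. Changing one user's data therefore changes at most one row $\mathbf{y}_{\ell,k}$ of $\mathbf{Y}_\ell$, so the crude estimator sees an item-level neighbour. I then invoke the privacy guarantee of the histogram-based ProjectionInterval from Section~\ref{sec:projectioninterval}. It runs per coordinate with $(\varepsilon'',\delta'')$, and advanced composition over the $d$ coordinates with $\varepsilon''=\varepsilon'/\sqrt{8d\ln(4d/\delta')}$ and $\delta''=\delta'/(2d)$ yields $(\varepsilon',\delta')$. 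I would take that guarantee as given and only check that the parameters match the advanced composition formula.

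For the Matrix Factorization mechanism at level $\ell$, each user $u$ contributes exactly one row of $\mathbf{P}_\ell$, namely $\Pi_{\hat\Pi_\ell}(\mathbf{z}_{\ell,u})$, because the indices $\mathbb{I}_\ell$ are disjoint across levels and each user appears at most once in $\mathbb{M}_\ell$. So at each level the setting is single participation. Since $\hat\Pi_\ell$ has already been released (and is public for this step), projecting onto it bounds the replace-with-zero change of that row by $\mathrm{diam}_{\ell_2}(\hat\Pi_\ell)/2$. Here I need the interval to be a box centred at the crude mean, or else I argue via the centred version, since the zero row can be replaced by the centre after a public shift. This is the main subtle point. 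If the neighbour truly replaces the row by zero, the change could be up to the full diameter, so I would first try to center the data by subtracting the public centre of $\hat\Pi_\ell$ before privatizing and adding it back afterwards. The sensitivity of $\mathbf{C}\mathbf{P}_\ell$ is then at most $\|\mathbf{C}\|_{1\to2}\,\mathrm{diam}(\hat\Pi_\ell)/2=\mathrm{sens}(\mathbf{C})\,\mathrm{diam}(\hat\Pi_\ell)/2$. Lemma~\ref{lem:GaussianMechanism} with $\sigma_\ell$ then gives $(\varepsilon',\delta')$-DP. The subsequent multiplication by $\mathbf{B}$, the mixing over $\ell$, and the continual release of $\tilde{\boldsymbol\mu}_t$ are post-processing.

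Adaptivity across steps needs care: the noise $\boldsymbol{\Xi}_\ell$ is drawn once, and the released rows are prefixes of a single Gaussian release. I therefore treat each level as one Gaussian mechanism on the full $b\times d$ matrix, using the fact that its rows are fixed by a non-adaptive (public) arrival schedule. Adaptive composition then covers the dependence of mechanism $\ell$ on $\hat\Pi_\ell$.
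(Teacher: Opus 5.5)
Your proposal is correct and follows the paper's proof. It uses basic composition over the $L+1$ projection intervals and the $L+1$ Matrix Factorization mechanisms at budget $(\varepsilon,\delta)/(2L+2)$ each, and disjointness of the greedy bins so that one user changes at most one row of $\mathbf{Y}_\ell$ and Lemma~\ref{lem:projectioninterval} applies. It then recenters the projected $\mathbf{z}_{\ell,u}$ at the midpoint of $\hat\Pi_\ell$, so that each user's single row has $\ell_2$-norm at most $\operatorname{diam}_{\ell_2}(\hat\Pi_\ell)/2$, before invoking the Gaussian mechanism. Your extra points (public arrival pattern and activation times, post-processing of the continual releases, adaptive composition given $\hat\Pi_\ell$) are left implicit in the paper. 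The radius-versus-diameter issue you flag is handled there by the same recentering, implicitly treating a zeroed-out user's centred row as zero.
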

\begin{proof}

    We start by showing that jointly releasing the projection intervals  $\hat\Pi_\ell$ for $\ell \in \{0,...,L\}$ is private. By construction of the bin covering algorithm, each user contributes observations to at most one of the bins $S_1,...,S_{K_c}$. Hence, changing one user affects at most one $Y_{\ell, k}$, which makes the guarantee in Lemma \ref{lem:projectioninterval} applicable. This means that each $\hat \Pi_\ell$ is $(\varepsilon^\prime, \delta^\prime)$-ULDP. There are $L+1$ of them and by basic composition releasing $(\hat\Pi_0,...,\hat\Pi_L)$ is $(\varepsilon/2, \delta/2)$-ULDP. 
    
    Given that the projection intervals are private, by basic composition it remains to show that the outputs $(\widehat{\boldsymbol{\mu}}_1,...,\widehat{\boldsymbol{\mu}}_T)$ are private. A change in one user can affect all $L+1$ Matrix Factorization mechanisms, but only one $\mathbf{z}_{\ell, j}$ per mechanism. We recenter the projected $\mathbf{z}_{\ell, j}$ to bound their sensitivity. Let the midpoint of $\hat\Pi_\ell$ be $\hat{\mathbf{m}}_\ell \in \mathbb{R}^d$. For all $\ell \in \{0,...,L\}$ 
    \begin{align}
        \|\Pi_{\hat\Pi_\ell}(\mathbf{z}_{\ell,j}) - \hat{\mathbf{m}}_\ell\|_2
        \leq \sqrt d \cdot \|\Pi_{\hat\Pi_\ell}(\mathbf{z}_{\ell,j}) - \hat{\mathbf{m}}_\ell\|_\infty
        \leq {\sqrt d}/{2} \cdot \operatorname{diam}_{L_\infty}(\hat \Pi_\ell). 
    \end{align}
    
    Each mechanism $\ell$ is thus $(\varepsilon^\prime, \delta^\prime)$-ULDP by the privacy of the Gaussian mechanism. Basic composition over the $L+1$ mechanisms yields the $(\varepsilon/2, \delta/2)$-ULDP guarantee of $(\widehat{\boldsymbol{\mu}}_1,...,\widehat{\boldsymbol{\mu}}_T)$.
\end{proof}

\subsection{Diversity Conditions}

The diversity conditions in Assumption \ref{ass:diversity} and \ref{ass:sufficientdiversity} below are adaptations of that in \cite{george2024continual}. For the timestep $t \in [n]$ they hold, the conditions ensure that users have provided enough observations to activate all the mechanisms necessary to obtain the utility guarantees of Algorithm \ref{alg:continualestimator} in Lemma \ref{lm:squarederror} and \ref{lm:averagemse2} and Theorem \ref{thm:rmse}.

\begin{assumption}{(Diversity).}
    \label{ass:diversity}
    Let $K_c := K_c(d, \varepsilon^{\prime\prime}, \delta^{\prime\prime})$ in Algorithm \ref{alg:continualestimator} be the smallest $K \geq 4$ s.t.\
    \begin{align}
        \frac{2LKe^{\varepsilon^{\prime\prime}}}{\delta^{\prime\prime}} \exp\left(-\frac{\varepsilon^{\prime\prime} K}{4\cdot16}\right) + 2L\exp\left(-\frac{K}{8\cdot16^2}\right)
        \leq \gamma. 
    \end{align}

    The assumption holds at time $t$ if $\exists S_1,...,S_{K_c} \subseteq [b]$ disjoint s.t.\ $\sum_{u \in S_\kappa} k_{t,u} \geq k_t/2$ for all $\kappa \in [K_c]$. 
        
\end{assumption}

Assumption \ref{ass:diversity} above requires that we can solve a bin covering problem optimally. This problem is NP-hard. We can provide a sufficient condition which is lose as it corresponds to a greedy constant-factor approximation algorithm (See Corollary \ref{cor:sufficiency}).

\begin{assumption}{(Sufficient Diversity).}
    \label{ass:sufficientdiversity}
    Let $\varepsilon^{\prime\prime}, \delta^{\prime\prime}, d, K_c$ be as in Assumption \ref{ass:diversity}. This condition holds at time $t \in [n]$ if 
    \begin{align}
        \sum_{u \in [b]} k_{t,u} \wedge k_t/2
        \geq k_t/2 \cdot 2K_c. 
    \end{align}
        
\end{assumption}

Assumption \ref{ass:diversity} may only hold for $b \geq K_c$ and Assumption \ref{ass:sufficientdiversity} for $b \geq 2K_c$. The diversity conditions also cannot be fulfilled for the first $t < K_c$ and $t < 2K_c$ timesteps, respectively.

\subsection{Greedy Bin Covering}

\begin{algorithm}
    \caption{GreedyBinCovering$(k_1,..., k_n , m)$:}
    \label{alg:greedybincovering}
    \begin{algorithmic}[1]
        \Require $k_1,...,k_b, m \geq 0$,
        \State $\kappa \gets 1$
        \ForAll{$u \in [b]$}
            \State $S_\kappa \gets S_\kappa \cup \{u\}$
            \If{$\sum_{u \in S_\kappa} k_u \wedge m \geq m$}
                \State $\kappa \gets \kappa + 1$
            \EndIf
        \EndFor
        \State \textbf{Return:} $S := \{S_1,...,S_{\kappa-1}\}$
    \end{algorithmic}
\end{algorithm}

\begin{corollary}
    \label{cor:sufficiency}
    Let $k_1,...,k_b \geq 0$, $m > 0$. By means of running Algorithm \ref{alg:greedybincovering} we have the following implication: 
    \begin{align}
        \sum_{i \in [b]} k_i \wedge m \geq m \cdot 2K
        \quad \Rightarrow \quad 
        \text{$\exists S_1,...,S_K \subseteq [b]$ s.t.\ $\sum_{i \in S_\kappa} k_i \geq m$ for all $\kappa \in [K]$}. 
    \end{align}
    
\end{corollary}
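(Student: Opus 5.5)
The statement to prove is Corollary~\ref{cor:sufficiency}. The plan is to run Algorithm~\ref{alg:greedybincovering} on the truncated values $\tilde k_i := k_i \wedge m$ and show it closes at least $K$ bins. Each bin $S_\kappa$ it closes satisfies $\sum_{i\in S_\kappa} k_i \ge \sum_{i\in S_\kappa}\tilde k_i \ge m$. So it suffices to lower-bound the number of closed bins, and then keep the first $K$ of them; the closed bins are disjoint by construction.

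The key step is an upper bound on the truncated mass of each closed bin. When the greedy procedure adds an element to the current bin, there are two cases. Either the bin's truncated sum was below $m$ before the addition, in which case it stays open. Or the addition makes the sum reach at least $m$, and the bin closes. The last element added contributes $\tilde k_i \le m$. Hence every closed bin has truncated mass
\begin{align}
\sum_{i\in S_\kappa}\tilde k_i < m + m = 2m .
\end{align}
The final, unclosed bin (possibly empty) has truncated mass strictly less than $m$.

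Now count. Suppose the algorithm closes $J$ bins. Summing over all bins gives
\begin{align}
2mK \le \sum_{i\in[b]} \tilde k_i < 2mJ + m ,
\end{align}
so $J > K - \tfrac12$. Since $J$ is an integer, $J \ge K$, which yields the desired disjoint sets $S_1,\dots,S_K$.

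The only delicate point is the strictness in the per-bin bound. This strictness is what makes the leftover term $m$ of the open bin harmless. Specifically:
\begin{itemize}
\item the pre-addition sum must be strictly less than $m$;
\item the single-element cap $\tilde k_i \le m$ comes from the truncation, which is exactly why the hypothesis is stated in terms of $k_i\wedge m$;
\item the case $m>0$ ensures the bins are well defined.
\end{itemize}
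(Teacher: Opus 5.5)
Your proposal is correct and follows the paper's proof essentially step for step. Both arguments run the greedy covering and show that each closed bin has truncated mass in $[m,2m)$: the sum before the last addition is below $m$, and the last item contributes at most $m$. The open bin carries less than $m$, and integrality applied to $2Km \le \sum_i (k_i\wedge m) < 2mJ+m$ gives at least $K$ disjoint closed bins.
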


\begin{proof}

    Let the partition produced in Algorithm \ref{alg:greedybincovering} be $S_1,...,S_{\kappa-1} \subseteq [b]$ and let $S_{\kappa} = [b] \setminus \bigcup_{l \in [\kappa-1]} S_l$ be a potential last unclosed bin. By construction $S_1,...,S_{\kappa-1}, S_\kappa$ are disjoint and for all $l \in [\kappa-1]$ we have $\sum_{u \in S_l} k_u \wedge m \geq m$. It remains to prove that $K \leq \kappa-1$. For all $l \in [\kappa-1]$ we have 
    \begin{align}
        m 
        \leq \sum_{u \in S_l} k_u \wedge m
        < 2m.
    \end{align}

    The first inequality holds as it is the condition necessary to close a bin $S_\kappa$. The second inequality holds, as $k_{u^\star} \wedge m \leq m$ for the last item $u^\star \in [b]$ added to $S_\kappa$ and the partial sum $\sum_{u \in S_\kappa \setminus \{u^\star\}} k_u \wedge m < m$. 
    
    For $S_{\kappa}$ we have $\sum_{u \in S_\kappa} k_u \wedge m < m$. By assumption and since the $S_1,...,S_{\kappa-1}, S_\kappa$ are disjoint, 
    \begin{align}
        2Km
        \leq \sum_{u \in [b]} k_u \wedge m
        = \sum_{l \in [\kappa-1]}\sum_{u \in S_l} k_u \wedge m + \sum_{u \in S_\kappa} k_u \wedge m 
        < (\kappa-1) 2m + m. 
    \end{align}

    Dividing by $m$ yields $2K < 2\kappa-1$. Since $2K, 2\kappa \in \mathbb{N}$, this implies $K \leq \kappa-1$. 
\end{proof}

\subsection{Private Projection Interval}
\label{sec:projectioninterval}

In this subsection we introduce the projection interval algorithm used to limit the sensitivity of (potentially) unbounded observations within Algorithm \ref{alg:continualestimator}. The projection interval procedure in Algorithm \ref{alg:projectioninterval} is adjusted from \cite{avellamedina2025meanest}. The results herein rely on the notion of $(\tau, \gamma)$-concentration that was introduced by \cite{levy2021uldp}. 

\begin{assumption}
    \label{ass:taugammaconc}

    A matrix $\mathbf{X} := (\mathbf{x}_1,...,\mathbf{x}_n)^\top \in \mathbb{R}^{n \times d}$ is $(\tau, \gamma)$-concentrated around $\boldsymbol{\mu} \in \mathbb{R}^d$ if for all $i \in [n]$ with probability at least $1-\gamma/n$, $\|{\mathbf{x}_i - \boldsymbol{\mu}}\|_\infty \leq \tau$. We call $\tau$ the concentration radius.
\end{assumption}

\begin{algorithm}
    \caption{ProjectionInterval$(\mathbf{Y}, \tau^\prime, \tau, \varepsilon, \delta)$:}
    \label{alg:projectioninterval}
    \begin{algorithmic}[1]
        \Require $\mathbf{Y} \in \mathbb{R}^{K \times d}$, $\tau^\prime, \tau > 0$, $\varepsilon, \delta \in (0,1)$
        \State $(\varepsilon^\prime, \delta^\prime) \gets (\varepsilon/\sqrt{8d\ln(2/\delta)}, \delta/(2d))$
        \ForAll{$j \in [d]$}
        \State $B_k \gets (2 k \tau^\prime \pm \tau^\prime]$ \hfill $\forall k \in \mathbb{Z}$
        \State $(\dots,\tilde p_{-1},\tilde p_0,\tilde p_1,\dots) \gets \operatorname{StableHistogram} (\mathbf{Y}_{:,j}, (B_k)_{k\in \mathbb{Z}}, \varepsilon, \delta)$ \hfill \# Algorithm 2 in \citet{avellamedina2025meanest}
        \State $\hat k \gets \arg\max_{k \in \mathbb{Z}} \tilde p_k$
        \State $\hat{m}_j \gets 
            \begin{cases}
                2 \hat k \tau^\prime & \text{if $\exists k \in \mathbb{Z} : \tilde p_k > 0$} \\
                0 & \text{otherwise}
            \end{cases}$
        \State $\hat I_j \gets [\hat{m}_j \pm 2\tau^\prime + \tau]$
        \EndFor
        \State \textbf{Return:} $\hat \Pi := \hat I_1 \times ... \times \hat I_d$
    \end{algorithmic}
\end{algorithm}

\begin{lemma}{(Lemma 3.4 from \citet{avellamedina2025meanest}).}
    \label{lem:privmidpoint}
    Let $\hat{m}_j$ be the $j$-th midpoint in Algorithm \ref{alg:projectioninterval}. The midpoint $\hat{m}_j$ is $(\varepsilon^\prime, \delta^\prime)$-DP. Let $\mathbf{Y}_{:,j} \in \mathbb{R}^K$ have \iid{} rows and be $(\tau, \gamma)$-concentrated around $\boldsymbol{\mu}_j \in \mathbb{R}$ with $\gamma \in (0,1 \wedge \frac K 4)$. Then, 
    \begin{align}
        \mathbb{P}\left[\hat{m}_j \in \Big[\boldsymbol{\mu}_j \pm 2\tau^\prime \Big]\right] \geq 1 - \left(1+\frac{e^\varepsilon}{\delta}\right) K \exp\left(-\frac{\varepsilon K}{4\cdot16}\right) - 2\exp\left(-\frac{K}{8\cdot16^2}\right).
    \end{align}
\end{lemma}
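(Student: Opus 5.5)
The statement is Lemma~3.4 of \citet{avellamedina2025meanest}, applied to one coordinate $j$. My plan is to reconstruct its proof in two separate parts: privacy of $\hat m_j$, and accuracy of $\hat m_j$.

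\textbf{Privacy.} I would invoke the stable histogram guarantee. The bins $B_k=(2k\tau'-\tau',2k\tau'+\tau']$ partition $\mathbb{R}$, so changing one row of $\mathbf{Y}_{:,j}$ moves one unit of count out of one bin and into another. The stable histogram (Laplace noise added only to nonzero bins, with a threshold of order $\ln(1/\delta)/\varepsilon$) is $(\varepsilon',\delta')$-DP. The estimate $\hat m_j$ is a post-processing of its output, so it inherits the same guarantee.

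\textbf{Accuracy.} Let $k^\star$ be the bin containing $\boldsymbol{\mu}_j$. Any point within $\tau$ of $\boldsymbol{\mu}_j$ lies in $B_{k^\star}$ or in one of its two neighbours, provided $\tau\le\tau'$. This holds in Algorithm~\ref{alg:continualestimator}, where $\tau'$ is calibrated so that each averaged row $\mathbf{y}_{\ell,k}$ is $(\tau',\cdot)$-concentrated. Rounding $\hat m_j$ to the bin centre then gives $|\hat m_j-\boldsymbol{\mu}_j|\le 2\tau'$ whenever the selected bin $\hat k$ is adjacent to the true one, i.e.\ $|\hat k-k^\star|\le 1$. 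The accuracy claim thus reduces to two events.

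\begin{enumerate}
\item[(i)] \emph{Empirical mass concentrates on adjacent bins.} I would show that at least a constant fraction of the $K$ rows, say $K/16$ or more, fall into some single bin among the $\le 2$ bins meeting $[\boldsymbol{\mu}_j\pm\tau]$, and that every bin with $|k-k^\star|\ge 2$ receives fewer than $K/16$ points. The rows are i.i.d., and each lies outside $[\boldsymbol{\mu}_j\pm\tau]$ with probability at most $\gamma/K\le 1/4$. A Hoeffding or Chernoff bound on the number of outliers therefore gives failure probability $\le 2\exp(-K/(8\cdot 16^2))$.
\item[(ii)] \emph{Noise does not flip the argmax.} Conditioned on (i), the true empirical frequencies differ by at least a constant gap. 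Each $\tilde p_k$ has Laplace noise of scale $O(1/(\varepsilon K))$. I would union bound over the at most $K$ nonzero bins: each deviates by more than $1/32$ with probability $\exp(-\varepsilon K/64)$. An additional term $e^\varepsilon/\delta$ per bin accounts for the thresholding step of the stable histogram, which multiplies that tail. This yields the term $(1+e^\varepsilon/\delta)K\exp(-\varepsilon K/(4\cdot16))$.
\end{enumerate}

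Finally, a union bound over (i) and (ii) gives the stated probability.

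The main obstacle is step~(ii): matching the exact constants in the stable histogram tail, including the $e^\varepsilon/\delta$ factor. I would take the tail bound for the noisy counts directly from Algorithm~2 of \citet{avellamedina2025meanest} rather than re-deriving it. Since the statement is imported verbatim, the cleanest final step is to check the hypotheses ($\gamma<1\wedge K/4$, i.i.d.\ rows, $\tau\le\tau'$) and cite the lemma.
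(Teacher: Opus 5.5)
The paper gives no proof of this lemma. It imports Lemma~3.4 of \citet{avellamedina2025meanest} verbatim, so your closing step matches the paper's route: check the hypotheses (in particular that the concentration radius $\tau$ is at most the bin half-width $\tau'$, which holds with equality where the lemma is used in Lemma~\ref{lem:projectioninterval}), then cite. Your reconstruction of the accuracy argument, however, has two steps that fail as written. It should not be presented as a self-contained proof without repair.

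\textbf{Adjacency does not give the bound.} $|\hat k-k^\star|\le 1$ does not imply $|\hat m_j-\boldsymbol{\mu}_j|\le 2\tau'$. If $\boldsymbol{\mu}_j$ lies just inside the left end of $B_{k^\star}$, the centre $2(k^\star+1)\tau'$ of the right neighbour is almost $3\tau'$ away. The correct good event is that $B_{\hat k}$ meets $[\boldsymbol{\mu}_j\pm\tau]$. At most two bins do, and their centres are within $\tau+\tau'\le 2\tau'$ of $\boldsymbol{\mu}_j$. Every other bin, including the far-side neighbour of $k^\star$, contains only rows outside $[\boldsymbol{\mu}_j\pm\tau]$. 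Event (i) must control all of these bins, not just those with $|k-k^\star|\ge 2$.

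\textbf{The absolute thresholds in (i) do not work.} ``At least $K/16$'' versus ``fewer than $K/16$'' leaves no margin for per-bin noise of $1/32$. Moreover, with only the bound $\gamma/K\le 1/4$ that you invoke, all outliers may fall into a single bad bin and give it about $K/4$ rows. Argue relatively instead. Let $O$ be the number of rows outside $[\boldsymbol{\mu}_j\pm\tau]$. The heavier good bin then holds at least $(K-O)/2$ rows, and every other bin at most $O$. Hoeffding gives $O\le(\tfrac14+\tfrac1{64})K$ except with probability $\exp(-K/(8\cdot16^2))$. This leaves a gap of $\tfrac{13}{128}K>\tfrac{2}{32}K$, which survives Laplace deviations of at most $1/32$ in each bin.

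\textbf{The fallback case must be handled.} You should also exclude $\hat m_j=0$ explicitly. It is excluded once the heavier good bin's noisy frequency, at least $\tfrac{47}{128}-\tfrac1{32}$, clears the stable-histogram threshold. That threshold is of order $\ln(1/\delta)/(\varepsilon K)$, so this event is the natural source of the $e^{\varepsilon}/\delta$ factor.
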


\begin{lemma}
    \label{lem:projectioninterval}

    Algorithm \ref{alg:projectioninterval} is $(\varepsilon, \delta)$-DP for $\varepsilon, \delta \in (0,1)$. Let $\mathbf{Y} \in \mathbb{R}^{K \times d}$ and $\mathbf{Z} \in \mathbb{R}^{n \times d}$ have \iid{} rows and be $(\tau^\prime, \gamma)$ and $(\tau, \gamma)$-concentrated around $\boldsymbol{\mu} \in \mathbb{R}^d$ with $\gamma \in (0,1 \wedge \frac K 4)$. Then, for $\hat\Pi$ in Algorithm \ref{alg:projectioninterval} computed from $\mathbf{Y}$, 
    \begin{align}
        \mathbb{P}\left[\forall i \in [n] : \mathbf{Z}_i \in \hat \Pi\right]
        \geq 1 - 2d\gamma - \frac{2Kde^{\varepsilon^\prime}}{\delta^\prime} \exp\left(-\frac{\varepsilon^\prime K}{4\cdot16}\right) - 2d\exp\left(-\frac{K}{8\cdot16^2}\right).
    \end{align}
\end{lemma}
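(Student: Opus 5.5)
The plan splits Lemma~\ref{lem:projectioninterval} into a privacy part and a utility part, both handled coordinatewise.

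\textbf{Privacy.} Algorithm~\ref{alg:projectioninterval} releases only the midpoints $\hat m_1,\dots,\hat m_d$. Each interval $\hat I_j$ is a deterministic post-processing of $\hat m_j$ together with the public quantities $\tau^\prime,\tau$. By Lemma~\ref{lem:privmidpoint}, each $\hat m_j$ is $(\varepsilon^\prime,\delta^\prime)$-DP with $\varepsilon^\prime=\varepsilon/\sqrt{8d\ln(2/\delta)}$ and $\delta^\prime=\delta/(2d)$. I will apply the advanced composition theorem over the $d$ coordinates with slack $\delta/2$. This gives total privacy loss
\[
\sqrt{2d\ln(2/\delta)}\,\varepsilon^\prime + d\varepsilon^\prime(e^{\varepsilon^\prime}-1) \le \varepsilon/2 + \varepsilon/2 .
\]
The second term is controlled because $\varepsilon^\prime\le 1$ and $d\varepsilon^{\prime 2}\le \varepsilon^2/(8\ln(2/\delta))\le\varepsilon/2$ for $\varepsilon,\delta\in(0,1)$. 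The total failure probability is $d\delta^\prime+\delta/2=\delta$. Post-processing then yields $(\varepsilon,\delta)$-DP for $\hat\Pi$.

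\textbf{Utility.} Fix a coordinate $j$. Since the rows of $\mathbf{Y}$ are i.i.d.\ and $(\tau^\prime,\gamma)$-concentrated, the column $\mathbf{Y}_{:,j}$ is $(\tau^\prime,\gamma)$-concentrated around $\boldsymbol{\mu}_j$, because $|y_{ij}-\mu_j|\le\|\mathbf{y}_i-\boldsymbol\mu\|_\infty$. Lemma~\ref{lem:privmidpoint}, instantiated with $(\varepsilon^\prime,\delta^\prime)$, gives $|\hat m_j-\mu_j|\le 2\tau^\prime$ except with probability at most
\[
\Bigl(1+\tfrac{e^{\varepsilon^\prime}}{\delta^\prime}\Bigr)K e^{-\varepsilon^\prime K/64}+2e^{-K/(8\cdot 16^2)} .
\]
Since $1+e^{\varepsilon^\prime}/\delta^\prime\le 2e^{\varepsilon^\prime}/\delta^\prime$, this is at most $\tfrac{2Ke^{\varepsilon^\prime}}{\delta^\prime}e^{-\varepsilon^\prime K/64}+2e^{-K/(8\cdot16^2)}$.

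Separately, $(\tau,\gamma)$-concentration of $\mathbf{Z}$ together with a union bound over $i\in[n]$ gives $\|\mathbf{Z}_i-\boldsymbol\mu\|_\infty\le\tau$ for all $i$ with probability at least $1-\gamma$. On the intersection of these events, the triangle inequality gives
\[
|Z_{ij}-\hat m_j|\le|Z_{ij}-\mu_j|+|\mu_j-\hat m_j|\le\tau+2\tau^\prime ,
\]
so $Z_{ij}\in\hat I_j$.

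Union bounding over the $d$ midpoint events and the concentration event yields the stated bound. The concentration event costs only $\gamma\le 2d\gamma$, so the stated $2d\gamma$ term is slack. I would use the per-coordinate failure of the $\mathbf{Z}$ event, $\gamma$, to match the stated form.

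\textbf{Main obstacle.} I expect the difficulty to be bookkeeping in the composition step. One must check that the chosen $\varepsilon^\prime$ (which involves $\ln(2/\delta)$) fits the advanced composition bound for all $\varepsilon,\delta\in(0,1)$. One must also reconcile the constant $(1+e^{\varepsilon^\prime}/\delta^\prime)$ with the stated $2e^{\varepsilon^\prime}/\delta^\prime$, which requires $\delta^\prime\le e^{\varepsilon^\prime}$ and holds trivially.
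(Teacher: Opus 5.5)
Your proposal is correct and follows the paper's proof: advanced composition over the $d$ coordinate midpoints gives privacy, and utility comes from applying Lemma~\ref{lem:privmidpoint} per coordinate together with the $(\tau,\gamma)$-concentration of $\mathbf{Z}$, the triangle inequality, and a union bound. The paper only asserts the composition step, and your explicit arithmetic checks out once you also use $e^{\varepsilon^\prime}-1\le (e-1)\varepsilon^\prime$, which keeps the second term below $\varepsilon/2$. You also apply the $\ell_\infty$ event for $\mathbf{Z}$ once, paying $\gamma$ instead of the paper's per-coordinate $d\gamma$; this is a harmless tightening of the stated $2d\gamma$.
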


\begin{proof}   

    By Lemma \ref{lem:privmidpoint} each $\hat{\mathbf{m}}_j$ in Algorithm \ref{alg:projectioninterval} is $(\varepsilon^\prime, \delta^\prime)$-DP. The output of Algorithm \ref{alg:projectioninterval} that combines the $\hat{\mathbf{m}}_j$ for all $j \in [d]$ then is $(\varepsilon, \delta)$-DP by advanced composition. 

    For utility, we want to apply Lemma \ref{lem:privmidpoint} to obtain a guarantee for the $\hat{\mathbf{m}}_j$. Because $\mathbf{Y} =: (\mathbf{Y}_1,...,\mathbf{Y}_K)^\top$ is $(\tau^\prime, \gamma)$-concentrated around $\boldsymbol{\mu}$, $\mathbf{Y}_{:,j}$ is $(\tau^\prime, \gamma)$-concentrated around $\boldsymbol{\mu}_j$. We may indeed apply the lemma and for all $j \in [d]$, 
    \begin{align}
        \mathbb{P}\left[|\hat{m}_j - \boldsymbol{\mu}_j| \leq 2\tau^\prime\right]
        &\geq \mathbb{P}\left[\hat{m}_j \in \Big[\boldsymbol{\mu}_j \pm 2\tau^\prime \Big]\right] \\
        &\geq 1 - \left(1+\frac{e^{\varepsilon^\prime}}{\delta^\prime}\right) K \exp\left(-\frac{\varepsilon^\prime K}{4\cdot16}\right) - 2\exp\left(-\frac{K}{8\cdot16^2}\right) \\
        &\geq 1 - \frac{2Ke^{\varepsilon^\prime}}{\delta^\prime} \exp\left(-\frac{\varepsilon^\prime K}{4\cdot16}\right) - 2\exp\left(-\frac{K}{8\cdot16^2}\right).
    \end{align}

    Here, the last inequality uses that $e^{\varepsilon^\prime} \geq \delta^\prime$. Combining the $(\tau, \gamma)$-concentration of $Z$ around $\boldsymbol{\mu}_j$ with a union bound,
    \begin{align}
        \mathbb{P}\left[\forall i \in [n] : |\mathbf{Z}_{ij} - \boldsymbol{\mu}_j| \leq \tau\right] \geq 1-\gamma. 
    \end{align}

    By another union bound over the two events above and a triangle inequality, we get
    \begin{align}
        \mathbb{P}\left[\forall i \in [n] : \mathbf{Z}_{ij} \in \hat I_j\right]
        &= \mathbb{P}\left[\forall i \in [n] : |\mathbf{Z}_{ij} - \hat{m}_j| \leq 2\tau^\prime + \tau\right] \\
        &\geq \mathbb{P}\left(\left\{\forall i \in [n] : |\mathbf{Z}_{ij} - \boldsymbol{\mu}_j| \leq \tau\right\} \cap \left\{|\hat{m}_j - \boldsymbol{\mu}_j| \leq 2\tau^\prime\right\}\right) \\
        &\geq 1 - \gamma - \frac{2Ke^{\varepsilon^\prime}}{\delta^\prime} \exp\left(-\frac{\varepsilon^\prime K}{4\cdot16}\right) - 2\exp\left(-\frac{K}{8\cdot16^2}\right).
    \end{align}

    We obtain the result via a union bound over $j \in [d]$. 
\end{proof}

\subsection{Error Bounds}

The error bounds below all rely on the following statement on the number of users that are withheld within Algorithm \ref{alg:continualestimator}. 

\begin{lemma}{(Claim 1 from \citet{george2024continual}).}
    \label{lm:activeobservations}
    Let $\mathbf{X} := (\mathbf{x}_1,...,\mathbf{x}_T)^\top \in \mathbb{R}^{n \times d}$ have an arbitrary arrival pattern $\mathbf{U} := (u_1,...,u_n)^\top \in [b]^n$. Then, for all $t \in [n]$, Algorithm \ref{alg:continualestimator} withholds at most $t/2$ observations. 
    
\end{lemma}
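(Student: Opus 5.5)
The plan is a per-user counting argument followed by summing over users. Fix $t \in [n]$ and a user $u \in [b]$. At time $t$ this user has contributed exactly $k_{t,u}$ observations. Summing over users gives $\sum_{u \in [b]} k_{t,u} = t$, since every time step increments exactly one counter $k_{t,u_t}$. So it suffices to show that, for each user separately, at most $k_{t,u}/2$ of that user's observations are withheld at time $t$. Summing this per-user bound over $u$ then gives the claim.

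For the per-user bound, read off the release schedule in Algorithm~\ref{alg:continualestimator}. The block $\mathbb I_\ell$ is $\{1\}$ for $\ell = 0$ and $\{2^{\ell-1}+1,\dots,2^\ell\}$ for $\ell > 0$. This block is aggregated into $\mathbf{z}_{\ell,u}$ exactly when the counter $k_{t,u}$ reaches $2^\ell$. The blocks $\mathbb I_0,\mathbb I_1,\dots,\mathbb I_\ell$ partition $\{1,\dots,2^\ell\}$. Hence if $k_{t,u} \ge 1$ and $\ell^\star := \lfloor \log_2 k_{t,u} \rfloor$, all of the first $2^{\ell^\star}$ observations of $u$ have been released. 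Only the observations with indices $2^{\ell^\star}+1,\dots,k_{t,u}$ are still withheld. This also covers indices beyond $2^L$, which are never released.

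The remaining step is the elementary inequality. From $k_{t,u} < 2^{\ell^\star+1}$ we get $2^{\ell^\star} > k_{t,u}/2$. Therefore the number of withheld observations satisfies $k_{t,u} - 2^{\ell^\star} < k_{t,u}/2$. If $k_{t,u} = 0$, nothing is withheld. Summing over $u$ gives at most $\sum_u k_{t,u}/2 = t/2$ withheld observations, for an arbitrary arrival pattern $\mathbf{U}$.

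The only point needing care is what ``withheld'' means. Observations of a user placed in the buffer $\mathbb B_\ell$, because mechanism $\ell$ is not yet active, have already been aggregated into $\mathbf{z}_{\ell,u}$ by the schedule. I would state explicitly that the lemma counts only observations not yet aggregated under the exponential schedule, matching Claim~1 of \citet{george2024continual}. Beyond making this distinction precise, I expect no real obstacle.
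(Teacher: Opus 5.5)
Your per-user dyadic counting (at least $2^{\lfloor \log_2 k_{t,u}\rfloor} > k_{t,u}/2$ of user $u$'s observations released, then summed using $\sum_u k_{t,u} = t$) is correct. It is the standard argument behind Claim~1 of \citet{george2024continual}, which the paper imports without proof. Your reading of ``withheld'' as ``not yet aggregated by the exponential schedule'', with buffering in $\mathbb B_\ell$ handled separately, also matches how the lemma is used in the proof of Lemma~\ref{lm:squarederror}.

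The one step to make explicit is $\ell^\star \le L$, which you need for all of the first $2^{\ell^\star}$ observations to have been released. It follows from $k_{t,u} \le k < 2^{L+1}$, since in the paper's setting each user contributes at most $k$ observations. For a truly unrestricted pattern in $[b]^n$, indices beyond $2^L$ would stay withheld forever and the per-user bound could fail.
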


\AlgTwoSquaredError*

\begin{proof}

    Fix $t \in [n]$ and define the non-private estimator 
    \begin{align}
        \boldsymbol{\mu}_t
        := \frac{1}{\sum_{l \in \mathbb A} |\mathbb I_l| |\mathbb{M}_{t, l}|} \sum_{\ell \in \mathbb A} \sum_{u \in \mathbb{M}_{t, \ell}} |\mathbb I_\ell| \mathbf{z}_{\ell, u}
        = \frac{1}{\sum_{l \in \mathbb A} |\mathbb I_l| |\mathbb{M}_{t, l}|} \sum_{\ell \in \mathbb A} \sum_{u \in \mathbb{M}_{t, \ell}} \sum_{i \in \mathbb I_\ell} ([\mathbf{X}]_u)_i. 
    \end{align}
    
    By a triangle inequality, we may decompose the squared error of the private estimator $\tilde{\boldsymbol{\mu}}_t$ into
    \begin{align}
        \|\tilde{\boldsymbol{\mu}}_t - \boldsymbol{\mu}\|_2
        \leq \|\tilde{\boldsymbol{\mu}}_t - \boldsymbol{\mu}_t\|_2 + \|\boldsymbol{\mu}_t - \boldsymbol{\mu}\|_2.
    \end{align}
    
    We start by bounding the squared error of the non-private estimator $\boldsymbol{\mu}_t$ with high probability. Define the effective sample size $N_t := \sum_{l \in \mathbb A} |\mathbb I_l| |\mathbb{M}_{t, l}|$ and note that since the $([\mathbf{X}]_u)_i$ in $\boldsymbol{\mu}_t$ are \iid{}, 
    \begin{align}
        \boldsymbol{\mu}_t 
        \overset{d}{=} \frac{1}{N_t} \sum_{i=1}^{N_t} \mathbf{x}_i
        =: \bar{\mathbf{X}}_{N_t}. 
    \end{align}
    
    Let $\bar w(t)$ be the observations not withheld at time $t$. Given Assumption \ref{ass:diversity}, we can bound the effective sample size:
    \begin{align}
        N_t
        = \sum_{l \in \mathbb A} |\mathbb I_l| |\mathbb{M}_{t, l}|
        \overset{(i)}{=} \sum_{l \in [L_t]} |\mathbb I_l| |\mathbb{M}_{t, l}|
        \overset{(ii)}{=} \bar w(t)
        \overset{(iii)}{\geq} t/2. 
    \end{align}

    Above, $(i)$ holds as the active mechanisms are $\mathbb A = [L_t]$ with $L_t = \lfloor \log_2(k_t) \rfloor$, $(ii)$ holds as all buffers $\mathbb B_l$ are empty and $(iii)$ holds by Corollary \ref{lm:activeobservations}. We combine the distributional equality and $1/N_t \leq 2/t$ with Auxiliary Result \ref{cor:concsubgaussmean} to get
    \begin{align}
        \mathbb{P}\left[\|\boldsymbol{\mu}_t - \boldsymbol{\mu}\|_2 \leq \sqrt{\frac{4d\zeta^2 \ln(2/\gamma)}{t}}\right]
        \geq \mathbb{P}\left[\|\bar{\mathbf{X}}_{N_t} - \boldsymbol{\mu}\|_2 \leq \sqrt{\frac{2d\zeta^2 \ln(2/\gamma)}{N_t}}\right]
        \geq 1-d\gamma. 
    \end{align}

    It remains to bound the privacy error. Using Lemma \ref{lem:projectioninterval}, we show that the projection intervals are good and the projections do not affect any $\mathbf{z}_{\ell, u}$ with high probability. On that event $\mathcal E$, we then control the privacy noise of the active mechanisms. 

    The $(\mathbf{Y}_{\ell, k})_j$ are $\zeta^2/|\mathbb I_\ell|$-sub-Gaussian as they are averages of at least $|\mathbb I_\ell|$ \iid{} $\zeta^2$-sub-Gaussian observations. Thus, by Auxiliary Result \ref{cor:subgausstaugamma} $\mathbf{Y}_\ell \in \mathbb{R}^{K_c \times d}$ is $(\tau^\prime_\ell, \gamma/L)$-concentrated with
    \begin{align}
        \tau^\prime_\ell 
        = \sqrt{{2\zeta^2\ln(2LKd/\gamma)}/{|\mathbb I_\ell|}}
        \leq \sqrt{{2\zeta^2\ln(2\log_2(k)db/\gamma)}/{|\mathbb I_\ell|}}.
    \end{align}

    Here, we use that $K_c \leq n$, as no user can contribute observations to two bins during the bin covering.
    
    Likewise, as they are averages of $|\mathbb I_\ell|$ \iid{} $\zeta^2$-sub-Gaussians, the averages below are $\zeta^2/|\mathbb I_\ell|$-sub-Gaussian for all $j \in [d]$: 
    \begin{align}
        (\mathbf{z}_{\ell, u})_j = \frac{1}{|\mathbb I_\ell|}\sum_{i \in \mathbb I_\ell} ([\mathbf{X}]_u)_{ij}. 
    \end{align}
    
    We collect them in $\mathbf{Z}_\ell^n := (\mathbf{z}_{\ell, 1},..., \mathbf{z}_{\ell, n})^\top \in \mathbb{R}^{n \times d}$, which by Auxiliary Result \ref{cor:subgausstaugamma} is $(\tau_\ell, \gamma/L)$-concentrated with 
    \begin{align}
        \tau_\ell 
        = \sqrt{{2\zeta^2 \ln(2Ldb/\gamma)}/{|\mathbb I_\ell|}}
        \leq \sqrt{{2\zeta^2 \ln(2\log_2(k)db/\gamma)}/{|\mathbb I_\ell|}}.
    \end{align}

    We want to show that the event $\mathcal E$ below has high probability, as on $\mathcal E$ no projection happening up to time $t$ has an effect:
    \begin{align}
        \mathcal E 
        := \left\{\forall \ell \in \mathbb A, \forall u \in \mathbb{M}_{t, \ell} : \mathbf{z}_{\ell, u} \in \hat \Pi_\ell\right\}
        \supseteq \left\{\forall \ell \in [L], \forall u \in [b] : \mathbf{z}_{\ell, u} \in \hat \Pi_\ell\right\}. 
    \end{align}

    The inclusion holds as $\mathbb A \subseteq [L]$ and $\mathbb{M}_{t, \ell} \subseteq [n]$. Combining this with a union bound over mechanisms $\ell \in [L]$ and an application of Lemma \ref{lem:projectioninterval}, for $\gamma \in (0, 1 \wedge \frac K 4)$ we get the first two inequalities below:  
    \begin{align}
        \mathbb{P}\left[\mathcal E\right]
        &\geq 
        \mathbb{P}\left[\forall \ell \in [L], \forall u \in [b] : \mathbf{z}_{\ell, u} \in \hat \Pi_{\ell}\right] \\
        &\geq 1 - 2d\gamma - \frac{2LK_cde^{\varepsilon^{\prime\prime}}}{\delta^{\prime\prime}} \exp\left(-\frac{\varepsilon^{\prime\prime} K_c}{4\cdot16}\right) - 2Ld\exp\left(-\frac{K_c}{8\cdot16^2}\right)
        \geq 1-3d\gamma. 
    \end{align}

    Above, the last inequality holds by Assumption \ref{ass:diversity}. Note also that $K \geq 4$ and thus $\gamma \in (0,1)$. Collect the user-averages in $\mathbf{Z}_\ell := (\mathbf{z}_{\ell, 1},..., \mathbf{z}_{\ell, |\mathbb{M}_{t, \ell}|}, 0,...,0)^\top \in \mathbb{R}^{n \times d}$. Since $\Pi_{\hat \Pi_\ell}(\mathbf{z}_{\ell, u}) = \mathbf{z}_{\ell, u}$ for all $\ell \in \mathbb A, u \in \mathbb{M}_{t, \ell}$ on $\mathcal E$, we simplify as 
    \begin{align}
        \|\tilde{\boldsymbol{\mu}}_t - \boldsymbol{\mu}_t\|_2
        &= \frac{1}{\sum_{l \in \mathbb A} |\mathbb I_l| |\mathbb{M}_{t, l}|} \left\|\sum_{\ell \in \mathbb A} |\mathbb I_\ell| \left(\mathbf{P}_\ell^\top {\mathbf{e}_1}_{|\mathbb{M}_{t, \ell}|} + \mathbf{\Xi}_\ell^\top \mathbf{b}_{|\mathbb{M}_{t, \ell}|}\right) - \sum_{\ell \in \mathbb A} \sum_{u \in \mathbb{M}_{t, \ell}} |\mathbb I_\ell| \mathbf{z}_{\ell, u}\right\|_2 \\
        &= \frac{1}{\sum_{l \in \mathbb A} |\mathbb I_l| |\mathbb{M}_{t, l}|} \left\|\sum_{\ell \in \mathbb A} |\mathbb I_\ell| \left(\mathbf{Z}_\ell^\top {\mathbf{e}_1}_{|\mathbb{M}_{t, \ell}|} + \mathbf{\Xi}_\ell^\top \mathbf{b}_{|\mathbb{M}_{t, \ell}|}\right) - \sum_{\ell \in \mathbb A} \sum_{u \in \mathbb{M}_{t, \ell}} |\mathbb I_\ell| \mathbf{z}_{\ell, u}\right\|_2 \\
        &= \frac{1}{\sum_{l \in \mathbb A} |\mathbb I_l| |\mathbb{M}_{t, l}|} \left\|\sum_{\ell \in \mathbb A} |\mathbb I_\ell| \left(  \sum_{u \in \mathbb{M}_{t, \ell}} \mathbf{z}_{\ell, u} + \mathbf{\Xi}_\ell^\top \mathbf{b}_{|\mathbb{M}_{t, \ell}|}\!\right) \! - \! \sum_{\ell \in \mathbb A} \sum_{u \in \mathbb{M}_{t, \ell}} \! |\mathbb I_\ell| \mathbf{z}_{\ell, u}\right\|_2 \\
        &= \frac{1}{\sum_{l \in \mathbb A} |\mathbb I_l| |\mathbb{M}_{t, l}|} \left\|\sum_{\ell \in \mathbb A} |\mathbb I_\ell| \mathbf{\Xi}_\ell^\top \mathbf{b}_{|\mathbb{M}_{t, \ell}|}\right\|_2
        \leq \frac{1}{N_t} \sum_{\ell \in \mathbb A} |\mathbb I_\ell| \|\mathbf{\Xi}_\ell^\top \mathbf{b}_{|\mathbb{M}_{t, \ell}|}\|_2 \\
        &\leq \frac{2}{t} \sum_{\ell \in \mathbb A} |\mathbb I_\ell| \|\mathbf{\Xi}_\ell^\top \mathbf{b}_{|\mathbb{M}_{t, \ell}|}\|_2.
    \end{align}

    Above, ${\mathbf{e}_1}_t$ denotes the $t$-th row of $\mathbf{E}_1$. To control the norms, note that it holds that 
    \begin{align}
        \mathbf{\Xi}_\ell^\top \mathbf{b}_{|\mathbb{M}_{t, \ell}|} \sim \mathcal N(0, \sigma_\ell^2 \|\mathbf{b}_{|\mathbb{M}_{t, \ell}|}\|_2^2 I_d)
        \quad \text{and} \quad 
        \mathbb{E}[\|\mathbf{\Xi}_\ell^\top \mathbf{b}_{|\mathbb{M}_{t, \ell}|}\|_2] \leq \sigma_\ell \|\mathbf{b}_{|\mathbb{M}_{t, \ell}|}\|_2 \sqrt d. 
    \end{align}
    
    Thus, by Gaussian Lipschitz concentration applied to the $\ell_2$-norm we have 
    \begin{align}
        \mathbb{P}&\left[\|\mathbf{\Xi}_\ell^\top \mathbf{b}_{|\mathbb{M}_{t, \ell}|}\|_2 \geq \sigma_\ell \|\mathbf{b}_{|\mathbb{M}_{t, \ell}|}\|_2 \left(\sqrt d + \sqrt{2\ln(2/\alpha)}\right)\right] \\
        &\leq \mathbb{P}\left(\|\mathbf{\Xi}_\ell^\top \mathbf{b}_{|\mathbb{M}_{t, \ell}|}\|_2 - \mathbb{E}\left[\|\mathbf{\Xi}_\ell^\top \mathbf{b}_{|\mathbb{M}_{t, \ell}|}\|_2\right] \geq \sigma_\ell \|\mathbf{b}_{|\mathbb{M}_{t, \ell}|}\|_2\sqrt{2\ln(2/\alpha)}\right)
        \leq \alpha. 
    \end{align}

    A union bound over $\ell \in \mathbb A$, using that $|\mathbb A| \leq L \leq \log_2(k)$ and setting $\alpha \gets \alpha /L$ yield that with probability at least $1-\alpha$, 
    \begin{align}
        \|\mathbf{\Xi}_\ell^\top \mathbf{b}_{|\mathbb{M}_{t, \ell}|}\|_2 
        \leq \sigma_\ell \|\mathbf{b}_{|\mathbb{M}_{t, \ell}|}\|_2 \left(\sqrt d + \sqrt{2\ln(2\log_2(k)/\alpha)}\right)
        \quad \text{for all $\ell \in \mathbb A$}. 
    \end{align}

    Hence, on $\mathcal E$ with probability at least $1-\alpha$ it holds that
    \begin{align}
        \|\tilde{\boldsymbol{\mu}}_t - \boldsymbol{\mu}_t\|_2
        &\leq \frac{2}{t} \sum_{\ell \in \mathbb A} |\mathbb I_\ell| \|\mathbf{\Xi}_\ell^\top \mathbf{b}_{|\mathbb{M}_{t, \ell}|}\|_2 \\
        &\leq \frac{2}{t} \sum_{\ell \in \mathbb A} |\mathbb I_\ell| \sigma_\ell \|\mathbf{b}_{|\mathbb{M}_{t, \ell}|}\|_2 \left(\sqrt d + \sqrt{2\ln(2\log_2(k)/\alpha)}\right) \\
        &\leq \frac{1}{t}  \sigma_{\varepsilon^\prime, \delta^\prime} \operatorname{sens}(\mathbf{C}) \!  \left(\sqrt d \! + \!\sqrt{2\ln(2\log_2(k)/\alpha)}\right) \!  \sum_{\ell \in \mathbb A} |\mathbb I_\ell| \operatorname{diam}_{\ell_2}(\hat \Pi_\ell) \|\mathbf{b}_{|\mathbb{M}_{t, \ell}|}\|_2.  
    \end{align}

    The set $\hat\Pi_\ell$ is an $L_\infty$-ball in $\mathbb{R}^d$. We can bound its diameter in $\ell_2$-norm as follows: 
    \begin{align}
        \operatorname{diam}_{\ell_2}(\hat \Pi_\ell) 
        &\leq \sqrt d \operatorname{diam}_{L_\infty}(\hat \Pi_\ell) 
        = \sqrt d \left(4\tau^\prime_\ell + 2\tau_\ell\right)
        \lesssim \sqrt{{d\zeta^2 \ln(2\log_2(k)db/\gamma)}/{|\mathbb I_\ell|}}. 
    \end{align}
    
    Thus, using that $|\mathbb I_\ell| \leq k_t$ and setting $\alpha \gets d\gamma$, on $\mathcal E$ with probability at least $1-d\gamma$, 
    \begin{align}
        \|\tilde{\boldsymbol{\mu}}_t - \boldsymbol{\mu}_t\|_2
        &\leq \frac{1}{t} \cdot \sigma_{\varepsilon^\prime, \delta^\prime} \operatorname{sens}(\mathbf{C}) \left(\sqrt d + \sqrt{2\ln(2\log_2(k)/(d\gamma))}\right) 
        \\ & \qquad \qquad \qquad \qquad \times \sum_{\ell \in \mathbb A} |\mathbb I_\ell| \operatorname{diam}_{\ell_2}(\hat \Pi_\ell) \|\mathbf{b}_{|\mathbb{M}_{t, \ell}|}\|_2 \\
        &\leq \frac{1}{t}  \sigma_{\varepsilon^\prime, \delta^\prime} \operatorname{sens}(\mathbf{C})  \left(\sqrt d + \sqrt{2\ln(2\log_2(k)db/\gamma)}\right) 
        \\ &
         \qquad \qquad \qquad \qquad \times \sum_{\ell \in \mathbb A} |\mathbb I_\ell| \operatorname{diam}_{\ell_2}(\hat \Pi_\ell) \|\mathbf{b}_{|\mathbb{M}_{t, \ell}|}\|_2 \\
        &\lesssim \frac{1}{t}  \sigma_{\varepsilon^\prime, \delta^\prime} \operatorname{sens}(\mathbf{C})  \left(\sqrt d + \sqrt{2\ln(2\log_2(k)db/\gamma)}\right)  \\ &
         \qquad \qquad \qquad \qquad \times \sum_{\ell \in \mathbb A} |\mathbb I_\ell|^{1/2} \sqrt{d\zeta^2 \ln(2\log_2(k)db/\gamma)} \|\mathbf{b}_{|\mathbb{M}_{t, \ell}|}\|_2 \\
        &\lesssim \frac{\sqrt{k_t}}{t}  \sigma_{\varepsilon^\prime, \delta^\prime} \operatorname{sens}(\mathbf{C})  \left(\sqrt{d^2\zeta^2 \ln(2\log_2(k)db/\gamma)} \! + \! \sqrt{d\zeta^2}\ln(2\log_2(k)db/\gamma)\right)  \\ &
         \qquad \qquad \qquad \qquad  \times \sum_{\ell \in \mathbb A} \|\mathbf{b}_{|\mathbb{M}_{t, \ell}|}\|_2 \\
        &\lesssim \frac{\sqrt{k_t}}{t}  \sigma_{\varepsilon^\prime, \delta^\prime} \operatorname{sens}(\mathbf{C})  \sqrt{d^2\zeta^2}\ln(2\log_2(k)db/\gamma)  \sum_{\ell \in \mathbb A} \|\mathbf{b}_{|\mathbb{M}_{t, \ell}|}\|_2.
    \end{align}

    By a union bound over $\mathcal E$ and the concentration of the $\|\mathbf{\Xi}_\ell^\top \mathbf{b}_{|\mathbb{M}_{t, \ell}|}\|_2$ we obtain
    \begin{align}
        \mathbb{P}\left[
        \|\tilde{\boldsymbol{\mu}}_t - \boldsymbol{\mu}_t\|_2
        \leq \tilde O\left(\frac{1}{t} \cdot \sqrt{d^2\zeta^2k_t} \cdot \sigma_{\varepsilon^\prime, \delta^\prime} \operatorname{sens}(\mathbf{C}) \cdot \sum_{\ell \in \mathbb A} \|\mathbf{b}_{|\mathbb{M}_{t, \ell}|}\|_2\right)\right] 
        \geq 1 - 4d\gamma.
    \end{align}

    Combining this with the bound on $\|\boldsymbol{\mu}_t - \boldsymbol{\mu}\|_2$ via a union bound recovers the statement.
\end{proof}

\AlgTwoRootSquaredError*

\begin{proof}

    We may apply Lemma \ref{lm:squarederror} because we make Assumption \ref{ass:diversity} for all $t \in \{\tau_l,...,\tau_u\}$. Combining this with $(a+b)^2 \leq 2a^2 + 2b^2$ for all $a,b \in \mathbb{R}$, for all $t \in \{\tau_l,...,\tau_u\}$ with probability at least $1-5d\gamma$,
    \begin{align}
        \|\tilde{\boldsymbol{\mu}}_t \!- \! \boldsymbol{\mu}\|_2^2
        \lesssim \frac{d\zeta^2 \ln(2/\gamma)}{t} \! + \!\frac{d^2\zeta^2 k_t}{t^2}  \sigma^2_{\varepsilon^\prime, \delta^\prime} \operatorname{sens}^2(\mathbf{C})  \ln(2\log_2(k)db/\gamma)^2  \left(\sum_{\ell \in \mathbb A} \|\mathbf{b}_{|\mathbb{M}_{t, \ell}|}\|_2 \!\right)^2 \! .
    \end{align}

    Let the two terms on the right hand side be $T_1(t, \gamma)$ and $T_2(t, \gamma)$. By a union bound over the $t \in \{\tau_l,...,\tau_u\}$ and setting $\gamma \gets \gamma/\tau$, with probability at least $1-5d\gamma$ we have
    \begin{align}
        \sum_{t = \tau_l}^{\tau_u} \|\tilde{\boldsymbol{\mu}}_t - \boldsymbol{\mu}\|_2^2
        \lesssim \sum_{t = \tau_l}^{\tau_u} T_1(t, \gamma/\tau) + T_2(t, \gamma/\tau). 
    \end{align}

    We bound both terms separately, starting with the sum over the $T_1(t, \gamma/\tau)$:     
    \begin{align}
        \sum_{t = \tau_l}^{\tau_u} T_1(t, \gamma/\tau)
        \lesssim d\zeta^2 \ln(2\tau/\gamma) \sum_{t = \tau_l}^{\tau_u} \frac{1}{t}
        \leq d\zeta^2 \ln(2\tau/\gamma) H_\tau
        \leq d\zeta^2 \ln(2\tau/\gamma) (1+\ln(\tau)). 
    \end{align} 

    Here, $H_\tau$ is the $\tau$-th Harmonic number. For the sum over the $T_2(t, \gamma/\tau)$ we get
    \begin{align}
    \sum_{t = \tau_l}^{\tau_u} T_2(t, \gamma/\tau)
        &\lesssim \sum_{t = \tau_l}^{\tau_u} \frac{d^2\zeta^2 k_t}{t^2}  \sigma^2_{\varepsilon^\prime, \delta^\prime} \operatorname{sens}^2(\mathbf{C})  \ln(2\log_2(k)db\tau/\gamma)^2  \left(\sum_{\ell \in \mathbb A} \|\mathbf{b}_{|\mathbb{M}_{t, \ell}|}\|_2\right)^2 \\
        &= d^2\zeta^2  \sigma^2_{\varepsilon^\prime, \delta^\prime} \operatorname{sens}^2(\mathbf{C})  \ln(2\log_2(k)db\tau/\gamma)^2  \sum_{t = \tau_l}^{\tau_u} \frac{k_t}{t^2}  \left(\sum_{\ell \in \mathbb A} \|\mathbf{b}_{|\mathbb{M}_{t, \ell}|}\|_2\right)^2.
    \end{align}

    Collecting terms recovers the statement. 
\end{proof}

\AlgTwoRootSquaredErrorSimplified*

\begin{proof}

    The idea of the proof is to split the analysis at a critical timestep $t_c$. 

    For bounded observations $\|\mathbf{x}_i\|_\infty \leq \zeta$ we avoid the diversity condition for $t < 2K_c$ by activating the first two mechanisms $\ell \in \{0,1\}$ at initialization of Algorithm \ref{alg:continualestimator}, setting their $\hat \Pi_\ell = [\pm \zeta]^d$ and defining the $\tau_\ell, \tau_\ell^\prime, \mathbf{\Xi}_\ell$ as usual. Then, Assumption \ref{ass:sufficientdiversity} only has to hold for $t \geq 2K_c$ and  $\mathbf{z}_{\ell,u} \in \hat \Pi_\ell$ holds deterministically for all $\ell \in \{0,1\}$ and $u \in [b]$.
    
    To ensure that $\ell \in \{2,...,L\}$ are activated in time, we still rely on Assumption \ref{ass:sufficientdiversity}. The mechanism $\ell = 2$ should be activated once $k_t = 4$. For the round-robin arrival pattern where $\mathbf{U} := (1,2,3...,b, 1,2,3,...) \in \mathbb{N}^n$ that happens when $t = t_c := 3b+1$. For $t \geq t_c$, the pattern and $b \geq 2K_c$ ensure that $k_{t,u} \geq k_t - 1 \geq k_t/2$ and we get
    \begin{align}
        \sum_{u \in [b]} k_{t,u} \wedge k_t/2
        = \sum_{u \in [b]} k_t/2
        = b \cdot k_t/2
        \geq k_t/2 \cdot 2K_c. 
    \end{align}

    Hence, Assumption \ref{ass:sufficientdiversity} holds for $t \geq t_c$. Moreover, the $\mathbf{x}_{ij} \in [\pm \zeta]$ are $\zeta^2$-sub-Gaussian due to their boundedness. Now, by reasoning analogous to the proof of Lemma \ref{lm:squarederror}, with probability $1-5d\gamma$
    \begin{align}
        \|\tilde{\boldsymbol{\mu}}_t - \boldsymbol{\mu}\|_2
        &\leq \tilde O\left(\sqrt{\frac{d\zeta^2}{t}} + \frac{d\zeta}{t} \cdot \sqrt{k_t} \cdot \sigma_{\varepsilon^\prime, \delta^\prime} \operatorname{sens}(\mathbf{C}) \cdot \sum_{\ell \in \mathbb A} \|\mathbf{b}_{|\mathbb{M}_{t, \ell}|}\|_2\right). 
    \end{align}
    
    By squaring and summing these terms like in the proof of Theorem \ref{thm:rmse}, with probability $1-5d\gamma$, 
    \begin{align}
        \sum_{t = 1}^{\tau} \|\tilde{\boldsymbol{\mu}}_t - \boldsymbol{\mu}\|_2^2
        &\leq \tilde O\left(d\zeta^2 + d^2\zeta^2 \cdot \sigma^2_{\varepsilon^\prime, \delta^\prime} \operatorname{sens}^2(\mathbf{C}) \cdot \sum_{t = 1}^{\tau} \frac{k_t}{t^2} \cdot \left(\sum_{\ell \in \mathbb A} \|\mathbf{b}_{|\mathbb{M}_{t, \ell}|}\|_2\right)^2\right) \\
        &\leq \tilde O\left(d\zeta^2 + d^2\zeta^2 \cdot \sigma^2_{\varepsilon^\prime, \delta^\prime} \operatorname{sens}^2(\mathbf{C}) \cdot \sum_{t = 1}^{\tau} \frac{k_t}{t^2} \right). 
    \end{align}

    Above, the last inequality holds by the following upper bounds, which hold by reasoning as in the proof of Lemma \ref{lem:rmse_DA1sqrt_A1sqrt}: 
    \begin{align}
        \mathrm{sens}(\mathbf{C})  
        &= \| \mathbf{C} \|_{1\to2}
        = \|\mathbf{E}_1^{1/2}\|_{1\to 2} = \sqrt{\sum_{j=0}^{b-1} r_j^2}  \sim \sqrt{\frac{\ln b}{\pi}}, \\
        \|\mathbf{b}_{|\mathbb{U}_{t, \ell}|}\|_2
        &= \sqrt{\sum_{j = 0}^{|\mathbb{U}_{t, \ell}| - 1} r_j^2}
        \leq \sqrt{\sum_{j = 0}^{b - 1} r_j^2}
        \leq \sqrt{\frac{\ln b}{\pi} + \frac{\gamma + \ln(16)}{\pi}}.
    \end{align}

    Using that $k_t = \lceil t/b\rceil$ and $n = kb$ we may simplify the sum above
    \begin{align}
        \sum_{t = 1}^\tau \frac{k_t}{t^2}
        = \sum_{t = 1}^\tau \frac{\lceil t/b\rceil}{t^2}
        &\leq \sum_{t = 1}^\tau \frac{t/b + 1}{t^2}
        = \sum_{t = 1}^\tau \frac{1}{bt} + \frac{1}{t^2}
        = \frac 1 b \sum_{t=1}^\tau \frac 1 t + \sum_{t=1}^\tau \frac{1}{t^2} \\
        &< \frac{H_\tau}{b} + \frac{\pi^2}{6}
        = O\left(\frac{\ln \tau}{b} + 1\right)
        = O\left(\frac{k\ln \tau}{n} + 1\right). 
    \end{align}

    Substituting this expression, setting $\gamma \gets \beta/d$ and dividing by $\tau$ concludes the proof.     
\end{proof}

\begin{corollary}
    \label{cor:subgausstaugamma}
    Let $\mathbf{X} := (\mathbf{x}_1,...,\mathbf{x}_n)^\top \in \mathbb{R}^{n \times d}$ fulfill Assumption \ref{ass:iid}. Then, $\mathbf{X}$ is $(\tau, \gamma)$-concentrated around $\boldsymbol{\mu} \in \mathbb{R}^d$ with 
    \begin{align}
        \tau = \sqrt{2\zeta^2 \ln(2dn/\gamma)}. 
    \end{align}
    
\end{corollary}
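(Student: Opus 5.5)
The claim is that a matrix with \iid{} rows and $\zeta^2$-sub-Gaussian entries (Assumption~\ref{ass:iid}) is $(\tau,\gamma)$-concentrated with $\tau=\sqrt{2\zeta^2\ln(2dn/\gamma)}$. By Assumption~\ref{ass:taugammaconc}, it suffices to show that for each fixed row $i\in[n]$,
\[
\mathbb{P}\bigl[\|\mathbf{x}_i-\boldsymbol{\mu}\|_\infty>\tau\bigr]\le \gamma/n .
\]
The plan is to bound each coordinate with the standard sub-Gaussian tail inequality and then take a union bound over the $d$ coordinates.

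\textbf{Step 1: one coordinate.} Fix $i$ and $j\in[d]$. Since $(\mathbf{x}_i)_j-\boldsymbol{\mu}_j$ is centered and $\zeta^2$-sub-Gaussian, its moment generating function satisfies $\mathbb{E}[e^{\lambda((\mathbf{x}_i)_j-\boldsymbol{\mu}_j)}]\le e^{\lambda^2\zeta^2/2}$. Apply Markov's inequality to $e^{\lambda(\cdot)}$, optimize at $\lambda=s/\zeta^2$, and repeat for the lower tail. This gives
\[
\mathbb{P}\bigl[|(\mathbf{x}_i)_j-\boldsymbol{\mu}_j|>s\bigr]\le 2\exp\!\Bigl(-\frac{s^2}{2\zeta^2}\Bigr).
\]

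\textbf{Step 2: plug in $\tau$.} Setting $s=\tau$, the exponent becomes $-\ln(2dn/\gamma)$, so each coordinate fails with probability at most $2\cdot\gamma/(2dn)=\gamma/(dn)$.

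\textbf{Step 3: union bound over coordinates.} Summing over $j\in[d]$ gives
\[
\mathbb{P}\bigl[\|\mathbf{x}_i-\boldsymbol{\mu}\|_\infty>\tau\bigr]\le \gamma/n,
\]
which is exactly the per-row requirement. The \iid{} structure is not needed beyond each row having the same mean $\boldsymbol{\mu}$ and sub-Gaussian entries. If the event is instead read uniformly over all rows, one more union bound over the $n$ rows gives total failure probability $\gamma$, matching how the result is used in Lemma~\ref{lem:projectioninterval}.

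\textbf{Expected difficulty.} There is no real obstacle. The only points to check are the convention that ``$\zeta^2$-sub-Gaussian'' means the centered MGF bound with variance proxy $\zeta^2$, and that the factor $2$ from the two-sided tail is absorbed by the $2$ inside the logarithm.
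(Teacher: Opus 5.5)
Your proposal is correct and matches the paper's proof: a two-sided $\zeta^2$-sub-Gaussian tail bound for each coordinate, a union bound over the $d$ coordinates, and the choice of $\tau$ that makes the per-row failure probability exactly $\gamma/n$. The only difference is that you derive the tail bound from the moment generating function via Markov's inequality, whereas the paper simply states it.
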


\begin{proof}

    By the $\zeta^2$-sub-Gaussianity of the $\mathbf{x}_{ij}$ we have 
    \begin{align}
        \mathbb{P}\left[|\mathbf{x}_{ij} - \boldsymbol{\mu}_j| \geq t\right]
        \leq 2 \exp\left(-\frac{t^2}{2\zeta^2}\right). 
    \end{align}
    
    For all $i \in [n]$, a union bound over $j \in [d]$ then results in 
    \begin{align}
        \mathbb{P}\left[\|\mathbf{x}_i - \boldsymbol{\mu}\|_\infty \geq t\right]
        \leq 2d \exp\left(-\frac{t^2}{2\zeta^2}\right).
    \end{align}
    
    We conclude by choosing $\tau = t = \sqrt{2\zeta^2 \ln(2nd/\gamma)}$ such that the right hand side above becomes $\gamma/n$. 
\end{proof}

\begin{corollary}
    \label{cor:concsubgaussmean}
    Let $\mathbf{X} := (\mathbf{x}_1,...,\mathbf{x}_n)^\top \in \mathbb{R}^{n \times d}$ fulfill Assumption \ref{ass:iid}. Then, 
    \begin{align}
        \mathbb{P}\left[\|\bar{\mathbf{X}}_n - \boldsymbol{\mu}\|_2 \leq \sqrt{\frac{2d\zeta^2 \ln(2d/\alpha)}{n}}\right] 
        \geq 1 - \alpha. 
    \end{align}

\end{corollary}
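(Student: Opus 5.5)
The plan is to reduce the statement to a coordinatewise sub-Gaussian tail bound and then combine the coordinates by a union bound. Since the rows $\mathbf{x}_1,\dots,\mathbf{x}_n$ are \iid{} with $\zeta^2$-sub-Gaussian entries and mean $\boldsymbol{\mu}$, fix a coordinate $j\in[d]$ and look at $(\bar{\mathbf{X}}_n-\boldsymbol{\mu})_j=\frac1n\sum_{i=1}^n(\mathbf{x}_{ij}-\boldsymbol{\mu}_j)$. This is an average of $n$ independent, centered, $\zeta^2$-sub-Gaussian variables, so I will first show that it is $\zeta^2/n$-sub-Gaussian.

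To do this, use the moment generating function definition of sub-Gaussianity, $\mathbb{E}[e^{\lambda(\mathbf{x}_{ij}-\boldsymbol{\mu}_j)}]\le e^{\lambda^2\zeta^2/2}$. By independence the MGFs multiply, which gives $\mathbb{E}[e^{\lambda(\bar{\mathbf{X}}_n-\boldsymbol{\mu})_j}]\le e^{\lambda^2\zeta^2/(2n)}$. Applying Chernoff's bound to both tails then yields
\begin{align}
\mathbb{P}\left[|(\bar{\mathbf{X}}_n-\boldsymbol{\mu})_j|\ge s\right]\le 2\exp\left(-\frac{ns^2}{2\zeta^2}\right).
\end{align}
This is the same tail form already used in the proof of Corollary~\ref{cor:subgausstaugamma}, now with variance proxy $\zeta^2/n$.

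Next, set $s=\sqrt{2\zeta^2\ln(2d/\alpha)/n}$, so that each coordinate fails with probability at most $\alpha/d$. A union bound over $j\in[d]$ shows that, with probability at least $1-\alpha$, every coordinate satisfies $|(\bar{\mathbf{X}}_n-\boldsymbol{\mu})_j|\le s$. On this event,
\begin{align}
\|\bar{\mathbf{X}}_n-\boldsymbol{\mu}\|_2\le\sqrt d\,\|\bar{\mathbf{X}}_n-\boldsymbol{\mu}\|_\infty\le\sqrt{\frac{2d\zeta^2\ln(2d/\alpha)}{n}},
\end{align}
which is exactly the claimed bound.

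I do not expect any step to be a real obstacle. The only point needing care is the convention for ``$\zeta^2$-sub-Gaussian''. If it is defined through tails rather than through the MGF, I would first pass to the MGF bound, which costs only absolute constants. The stated constant $2$ matches the MGF convention, so I will adopt that convention explicitly.
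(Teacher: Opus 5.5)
Your proof is correct and follows the paper's argument: each coordinate of $\bar{\mathbf{X}}_n-\boldsymbol{\mu}$ is shown to be $\zeta^2/n$-sub-Gaussian, a two-sided tail bound is applied, a union bound is taken over $j\in[d]$, and the step from $\|\cdot\|_\infty$ to $\|\cdot\|_2$ costs the factor $\sqrt d$. Your per-coordinate threshold $s=\sqrt{2\zeta^2\ln(2d/\alpha)/n}$ is the consistent choice; the paper writes $t=\sqrt{2d\zeta^2\ln(2d/\alpha)/n}$, which is really the resulting $\ell_2$ threshold $\sqrt d\,s$, not the $\ell_\infty$ one.
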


\begin{proof}

    Since the $\mathbf{x}_{ij}$ are \iid{} $\zeta^2$-sub-Gaussian with mean $\mathbb{E}[\mathbf{x}_{ij}] = \boldsymbol{\mu}_j$, their average $(\bar{\mathbf{X}}_n)_j$ is $\zeta^2/n$-sub-Gaussian with mean $\boldsymbol{\mu}_j$. From this we get that for all $j \in [d]$, 
    \begin{align}
        \mathbb{P}\left[|(\bar{\mathbf{X}}_n)_j - \boldsymbol{\mu}_j| \geq t\right]
        \leq 2 \exp\left(-\frac{t^2}{2\zeta^2/n}\right). 
    \end{align}
    Through a union bound over $j \in [d]$ and using that $\|x\|_2 \leq \sqrt d \|x\|_\infty$ for all $x \in \mathbb{R}^d$ we then obtain 
    \begin{align}
        \mathbb{P}\left[\|\bar{\mathbf{X}}_n - \boldsymbol{\mu}\|_2 \geq \sqrt d t\right]
        &\leq \mathbb{P}\left[\|\bar{\mathbf{X}}_n - \boldsymbol{\mu}\|_\infty \geq t\right] \\
        &\leq \mathbb{P}\left[\exists j \in [d] : |(\bar{\mathbf{X}}_n)_j - \boldsymbol{\mu}_j| \geq t\right]
        \leq 2d \exp\left(-\frac{t^2}{2\zeta^2/n}\right).
    \end{align}

    We recover the statement by setting $t = \sqrt{2d\zeta^2\ln(2d/\alpha)/n}$. 
\end{proof}

\vfill

\end{document}